\newcounter{numrel}
\def\dWp#1{d_{\mathrm{W}^{#1}}}
\def\dWlinpmu#1#2{d_{\mathrm{LW}^{#1},#2}}
\def\TLp#1{\mathrm{TL}^{#1}}
\def\Probac{\cP_{\mathrm{a.c.}}}
\title{Laplace Learning on Wasserstein Space}
\author[1]{Mary Chriselda Antony Oliver}
\author[1,2]{Michael Roberts}
\author[1]{Carola-Bibiane Sch\"{o}nlieb}
\author[3]{Matthew Thorpe}
\affil[1]{Department of Applied Mathematics and Theoretical Physics, The University of Cambridge, Wilberforce Rd, CB3 0WA, Cambridge, UK,}
\affil[2]{Department of Medicine, The University of Cambridge, Cambridge
 Biomedical Campus, CB2 0AW, Cambridge, UK,}
\affil[3]{Department of Statistics, The University of Warwick, Coventry, CV4 7AL UK.}
\date{}
\begin{document}

\maketitle

\begin{abstract}
The manifold hypothesis posits that high-dimensional data typically resides on low-dimensional subspaces. In this paper, we assume manifold hypothesis to investigate graph-based semi-supervised learning methods. In particular, we examine Laplace Learning in the Wasserstein space, extending the classical notion of graph-based semi-supervised learning algorithms from finite-dimensional Euclidean spaces to an infinite-dimensional setting. To achieve this, we prove variational convergence of a discrete graph $p$-Dirichlet energy to its continuum counterpart. In addition, we characterize the Laplace-Beltrami operator on a submanifold of the Wasserstein space. 
Finally, we validate the proposed theoretical framework through numerical experiments conducted on benchmark datasets, demonstrating the consistency of our classification performance in high-dimensional settings.
\end{abstract}

\noindent
\keywords{Manifold hypothesis, $p$-Laplacian, semi-supervised learning, Wasserstein space, graph-based algorithms, asymptotic consistency, Gamma-convergence}

\noindent
\subjclass{49J55, 49J45, 62G20, 35J20, 65N12}

\section{Introduction}

The curation of large-scale, fully annotated training datasets remains a major bottleneck due to the high cost and expertise required for manual labelling. For example, in biomedical imaging applications such as flow cytometry \cite{Jun_2021,Bruggner_2014}, gene expression microarrays \cite{Dudoit,Fan_2006}, and proteomic assays \cite{Clarke_2008}, modern technologies generate high-dimensional data far faster than it can be annotated. As a result, only a small fraction of samples receive reliable labels, despite their routine use in classification tasks. This motivates graph-based semi-supervised methods, which exploits the geometric structure of the data to improve predictions with limited supervision. In this paper, we focus on a special class of graph-based semi-supervised methods, namely Laplace Learning \cite{Zhu_2003}, to study classification in high-dimensional settings. This method exploits the geometric structure inherent in large quantities of unlabelled data to improve label predictions. However, leveraging the underlying geometry in high-dimensional datasets presents substantial challenges, including the well-known curse of dimensionality \cite{Koppen,Duda} and poor generalization capacity \cite{Clarke_2008}. 

In theory, a well-established trend in statistics suggests that high-dimensional data often possess an intrinsic low-dimensional structure, a concept formalized by the \textit{manifold hypothesis} \cite{Charles_2016}. This hypothesis asserts that data are supported (or nearly supported) on a low-dimensional manifold with a small intrinsic dimension. Formally, the general setup of this hypothesis motivates us to study Laplace Learning in more complex spaces, such as the Wasserstein space which we denote by $\Probac{(\Omega)}$ for $\Omega \subset \bbR^k$. Here, $\Probac{(\Omega)}$ denotes the set of probability measures on $\Omega$ that are absolutely continuous with respect to the Lebesgue measure. Its infinite-dimensional Riemannian geometry was introduced in \cite{otto2001geometry}. In particular, we work `close' to a submanifold $\Lambda \subset \Probac{(\Omega)}$ embedded in the Wasserstein space with a finite-dimensional parameterisation, and denote samples by $\mu_i \iid \bbP_\Lambda \in\mathcal{P}(\Lambda)$. For our specific setting, we are interested in the finite approximation, $\Lambda_n = \{\mu_i\}_{i=1}^n$, of the submanifold $\Lambda \subset \Probac{(\Omega)}$. In practice, we have access to $m$ data points for each of the $n$ probability measures $\mu_i$, and we denote the feature vectors by $\Lambda^{(m)}_n = \{\mu^{(m)}_i\}_{i=1}^n$ with 
$\mu^{(m)}_i= \frac{1}{m}\sum_{j=1}^m \delta_{x^{(i)}_j}$ where $\Omega \ni x^{(i)}_j \iid \mu_i$ for all $j=1,2,\dots,m$ and $i=1,2,\dots,n$. Note that $\Lambda_n^{(m)} \not\subset \Probac(\Omega)$; it does not lie in $\Lambda$, but  serves as an empirical (perturbed) approximation of the submanifold.

Our submanifold assumption implies that we consider a subset $\Lambda\subset\Probac{(\Omega)}$ of the Wasserstein space which is parameterised by a bi-Lipschitz embedding $E: \mathcal{S} \to \Lambda$ where $\mathcal{S} \subset \bbR^d$ is a smooth, compact and finite-dimensional manifold. We consider probability measures $\bbP_\cS \in\mathcal{P}(\cS)$ and $\bbP_\Lambda\in\mathcal{P}(\Lambda)$ that are related by $\bbP_\Lambda = E_{\#}\bbP_\cS$. This allows us to assume $\mathcal{S} \ni \theta_i \iid \bbP_\mathcal{S}$ and define $\mu_i = E(\theta_i)$ which implies $\mu_i\iid\bbP_\Lambda$. We let $\mathcal{S}_n=\{\theta_i\}_{i=1}^n$ be the finite sample approximation of the parameter space $\mathcal{S}$. The empirical distributions using $\{\theta_i\}_{i=1}^n$, $\{\mu_i\}_{i=1}^n$, and $\{\mu_i^{(m)}\}_{i=1}^n$ are denoted by $\bbP_{\cS_n} = \frac{1}{n}\sum_{i=1}^n \delta_{\theta_i}$, $\bbP_{\Lambda_n} = \frac{1}{n}\sum_{i=1}^n \delta_{\mu_i}$ and $\bbP_{\Lambda_n^{(m)}} = \frac{1}{n}\sum_{i=1}^n \delta_{\mu_i^{(m)}}$, respectively.

In the semi-supervised learning setting, we are given the feature vectors $\Lambda_n^{(m)} = \{\mu^{(m)}_i\}_{i=1}^n \subset \cP{(\Omega)}$ and $N < n$ labels $\{y_i\}_{i=1}^N$ with the objective of finding missing labels for the unlabeled feature vectors $\{\mu_i^{(m)}\}_{i=N+1}^n$.
Formally, we seek a function $f_n: \Lambda^{(m)}_n \to \bbR$ that satisfies $f_n(\mu^{(m)}_i)=y_i$ for all $i\leq N$ and is smooth with respect to
the geometry induced by the graph 
$(\Lambda^{(m)}_n,\mathbf{W}^{(m)}_n)$, whose nodes are $\Lambda^{(m)}_n=\{\mu^{(m)}_i\}_{i=1}^n$ and edge weights are $\mathbf{W}^{(m)}_n=({W}^{(m)}_{ij})_{i,j=1}^n$. 
The edge weights are defined using a decreasing function $\eta : [0, \infty) \to [0, \infty)$ (satisfying Assumptions \ref{ass:weight_one}-\ref{ass:weight_four}). For fixed scale $\varepsilon > 0$ we set the weights to be,
\begin{align}\label{eqn:weights}
W^{(m)}_{ij}=\frac{1}{\varepsilon^d}\eta\bigg(\frac{\dWp{2}(\mu^{(m)}_i,\mu^{(m)}_j)}{\varepsilon}\bigg)   
\end{align}
where $\dWp{2}(\mu^{(m)}_i,\mu^{(m)}_j)$ is the $2$-Wasserstein metric between the empirical probability measures $\mu^{(m)}_i, \mu^{(m)}_j \in \Lambda^{(m)}_n$. We only retain those edges with weights $W^{(m)}_{ij} > 0$. 
This is known as the random geometric graph model~\cite{Penrose}. 
Approximately, the parameter $\varepsilon$ determines the range within which two nodes are connected. Here, the nodes represent empirical probability measures. It is chosen to satisfy a lower bound for connectivity.
We will take $\eps\to 0$ as $n\to \infty$ motivated by: (a) the graph's geometry becomes increasingly localised making it easier to detect non-linear class boundaries, and  
(b) the sparsity of the weight matrix $W^{(m)}_{ij}$ affects the algorithm's scalability, with typically smaller computation time required for fewer edges (equivalently, sparser $\mathbf{W}^{(m)}_n$).

In this context, we consider Laplace Learning \cite{Zhu_2003}, a variational machine learning method, defined by minimising the discrete energy $\mathcal{E}_{\varepsilon,m,n}:\Lp{p}(\bbP_{\Lambda_n^{(m)}}) \to [0,\infty)$ with label constraints where,
\begin{align}\label{eqn:discrete_energy_pertubed_intro}
\mathcal{E}_{\varepsilon,m,n}(f_n) =  
\frac{1}{n^2\varepsilon^p}\sum_{i,j=1}^n {W}^{(m)}_{ij}\la {f}_n(\mu^{(m)}_i) - {f}_n(\mu^{(m)}_j)\ra^p. 
\end{align}
This energy functional depends on the parameter (connectivity radius) $\varepsilon$, the number of data points per sample $m$, and the number of samples $n$. The minimization is over $f_n:\Lambda_n^{(m)}\to \bbR$ and subject to the constraints $f_n(\mu_i^{(m)}) = y_i$ for all $i=1,\dots, N$ (agreement with the known labels). That is, our motivation is the variational problem,
\begin{equation*}
\mbox{minimise }
\mathcal{E}_{\varepsilon,m,n}(f_n) \mbox{ over } f_n:\Lambda^{(m)}_n \to \bbR \mbox{ subject to $f_n(\mu^{(m)}_i)=y_i$ for all $i \leq N$}.    
\end{equation*} 

The limiting problem is described by a continuum functional  
which acts on real valued functions:
\begin{align} \label{eqn:continuum_energy_functional_intro}
\begin{aligned}
\mathcal{E}_\infty(f) &:=
\left\{\begin{array}{ll}
\int_{\bbR^d}\int_{\mathcal{S}}|\nabla (f\circ E)(\theta) \cdot h|^p \eta(\|B_\theta(h)\|_{\Lp{2}(E(\theta))})\rho^2_\mathcal{S}(\theta) \, \dd \theta \, \dd h &\quad\text{if } f \in \Wkp{1}{p}(\Lambda,\dWp{2},\bbP_\Lambda)  \\
+\infty &\quad\text{else.}
\end{array}\right.\\  
\end{aligned}
\end{align}

Here, $B_\theta: \bbR^d \to \Ck{1}(\bbR^k;\bbR^k)$ is a linear map such that in a formal sense it maps tangent vectors on $\cS$ to tangent vectors in the Wasserstein space (more precisely, see Assumption \ref{ass:B_map}), and
$\rho_\mathcal{S}$ is the density of the probability measure $\bbP_\cS$ on $\mathcal{S}$. The space $\Wkp{1}{p}(\Lambda,\dWp{2},\bbP_\Lambda)$, is the Sobolev space defined over $\Lambda$ which is equivalent to $\Wkp{1}{p}(\mathcal{S})$, the Sobolev space defined over $\mathcal{S}$ (see Subsection \ref{section:Sobolev_Spaces} for the definition and Proposition \ref{prop:Sobolev_Space_1} for a proof that the two Sobolev spaces are equivalent). The continuum functional can equivalently be written as 
$\mathcal{E}_\infty(f) = \frac{1}{p} \langle f, \Delta_\Lambda f \rangle_{\Lp{2}(\bbP_\Lambda)}$, where $\Delta_\Lambda$ denotes the Laplace-Beltrami operator, with the explicit formulation provided in Subsection~\ref{section:graph_laplacians}.

Our objective is to investigate the asymptotic consistency of Laplace Learning (in the variational sense) from discrete to continuum settings as the limits $n \to \infty$, $m \to \infty$, and $\varepsilon \to 0$, thereby extending the existing methodology from Euclidean spaces to infinite-dimensional settings (see Figure \ref{fig:Main_Idea}). The strategy of the proof of our main result, variational convergence between $\mathcal{E}_{\varepsilon,m,n}$ defined in \eqref{eqn:discrete_energy_pertubed_intro} to $\mathcal{E}_\infty$ defined in \eqref{eqn:continuum_energy_functional_intro}, requires two intermediary graph $p$-Dirichlet energies. The first is defined using the `exact' feature vectors $\{\mu_i\}_{i=1}^n\in\Lambda_n \subset \Lambda$. In particular,
$\mathcal{E}_{\varepsilon,n}:\Lp{p}(\bbP_{\Lambda_n}) \to [0,\infty)$ is defined by 
\begin{align}\label{eqn:discrete_energy_intro}
\mathcal{E}_{\varepsilon,n}(f_n) =
\frac{1}{n^2\varepsilon^p}\sum_{i,j=1}^n W_{ij}\la f_n(\mu_i)-f_n(\mu_j) \ra^p
\end{align}
where $W_{ij}=\frac{1}{\varepsilon^d}\eta\bigg(\frac{\dWp{2}(\mu_i,\mu_j)}{\varepsilon}\bigg)$ in the same spirit as \eqref{eqn:weights}.
And the second is defined on the parameter space using $\{\theta_i\}_{i=1}^n \in \cS_n \subset \cS$. In particular, $\hat{\mathcal{E}}_{\varepsilon_,n}:\Lp{p}(\bbP_{\mathcal{S}_n})\to [0,\infty)$ is defined by
\begin{align}\label{eqn:discrete_energy_parameter_intro}
\hat{\mathcal{E}}_{\varepsilon,n}(\hat{f}_n) =
\frac{1}{n^2\varepsilon^p}\sum_{i,j=1}^n \hat{W}_{ij}\la \hat{f}_n(\theta_i)-\hat{f}_n(\theta_j) \ra^p
\end{align}
where $\hat{W}_{ij}=\frac{1}{\varepsilon^d}\hat{\eta}_{\theta_i}\bigg(\frac{\theta_j-\theta_i}{\varepsilon}\bigg)$, for a particular choice of kernel $\hat{\eta}_\theta: \bbR^d \to [0,\infty)$ for $\theta\in\cS$.

The latter graph $p$-Dirichlet energy has a continuum limit $\hat{\cE}_\infty:\Lp{p}(\bbP_\cS) \to \bbR$ defined by
\begin{align}\label{eqn:continuum_energy_functional_discrete_intro}
\begin{aligned}
\hat{\mathcal{E}}_\infty(\hat{f}) &:=
\left\{\begin{array}{ll}
\int_{\bbR^d}\int_{\mathcal{S}}|\nabla (\hat{f})(\theta) \cdot h|^p \eta(\|B_\theta(h)\|_{\Lp{2}(E(\theta))})\rho^2_\mathcal{S}(\theta) \, \dd \theta \, \dd h &\quad\text{if } \hat{f} \in \Wkp{1}{p}(\mathcal{S})\,  \\
+\infty &\quad\text{else.}
\end{array}\right.
\end{aligned}
\end{align}
We observe that $\cE_\infty(f) = \hat{\cE}_\infty(f\circ E)$.

\subsection{Contributions}
To the best of our knowledge, this paper is the first to present an analysis of Laplace Learning on the Wasserstein submanifold; the results are discussed in greater detail in Section \ref{section:Assumptions_Main_Results}. The key contributions are as follows:
\begin{enumerate}
    \item \textbf{Variational convergence from discrete to continuum.} We rigorously prove the asymptotic consistency of the graph $p$-Dirichlet energy for Laplace Learning, showing convergence from~\eqref{eqn:discrete_energy_pertubed_intro} to~\eqref{eqn:continuum_energy_functional_intro} in Theorem~\ref{thm:gamma_convergence}, and establish compactness results in Theorem~\ref{thm:compactness}. Together, these results provide a theoretical guarantee (via $\Gamma$-convergence) of the existence of a well-defined continuum limit of Laplace Learning in the finite dimensional Wasserstein submanifold.
    \item \textbf{Characterization of the Laplace–Beltrami operator.} In Proposition~\ref{prop:LBO}, we present an explicit form of the Laplace–Beltrami operator on Wasserstein submanifolds. This establishes a precise correspondence between discrete graph operators and continuum differential operators, enabling their characterisation in finite dimensional Wasserstein submanifold.   
    \item \textbf{Numerical validation.} In Section \ref{section:Numerical_Experiments}, we provide numerical experiments demonstrating the robust performance of Laplace Learning in high-dimensional settings on benchmark datasets. These experiments illustrate the practical relevance and effectiveness of our theoretical results.
\end{enumerate}

\subsection{Related Works}
\paragraph{Consistency Results:} A key step in establishing asymptotic consistency is to characterize the suitable notion of convergence from~\eqref{eqn:discrete_energy_pertubed_intro} to~\eqref{eqn:continuum_energy_functional_intro} in this setting.
One relevant framework is that of pointwise consistency of energy functionals (see ~\cite{Mikhail_2007,Ronald_2006,Evarist_2006,Matthias_2006,Matthias_2005,Amir_2006,Ting_2011} for more details), where a fixed (smooth) function $f : \Lambda \to \bbR$ is considered, and the comparison is made between \(\mathcal{E}_{\varepsilon,n}(f\lfloor_{\Lambda_n})\) and \(\mathcal{E}_\infty(f)\). We note that in our setting it is unclear how to define $\mathcal{E}_{\eps,m,n}(f\lfloor_{\Lambda_n^{(m)}})$ since $f$ is not defined at $\mu_i^{(m)}$ (as $\Lambda_n^{(m)} \not\subseteq \Lambda$). Spectral convergence of the graph-Laplacian is sufficient (with a compactness result) to imply the convergence of minimisers when $p=2$, but this does not generalise to $p\neq 2$ \cite{Mikhail_2007,Nicolas_2020,Jeff_2022}. Bayesian consistency results such as in \cite{Nicolas_bayesian_2020} suggest that if the graph parameters are suitably scaled the graph-posteriors converge to a continuum limit as the size of the unlabeled data set increases. Variational consistency, implying the convergence of minimisers, has been established in the finite dimensional setting ~\cite{Nicolas_pointcloud}, for discrete-to-continuum convergence of Laplace Learning in Euclidean spaces \cite{Matt_p_Laplacian}. Furthermore, using PDE-based techniques, the authors in \cite{Flores_2019} explored large data limits for two related problems: the first being Lipschitz learning \cite{Rasmus_2015}, corresponding to the case $p=\infty$, and the second being the game-theoretic $p$-Dirichlet energy \cite{Jeff_2018}. 

Since $\cE_{\eps,m,n}$ and $\cE_{\infty}$ do not have a common domain, we adapt the $\TLp{p}$ topology, introduced in \cite{Nicolas_pointcloud} from the Euclidean settings to general metric spaces. The underlying intuition is to use an optimal transport map $T_n:\Lambda\to \Lambda_n^{(m)}$ such that ${T_n}_\# \mathbb{P}_{\Lambda} = \mathbb{P}_{\Lambda^{(m)}_n}$, maps regions of $\Lambda$ to points in $\Lambda_n^{(m)}$ (effectively creating a partition of $\Lambda$). For a discrete function $f_n$ on $\Lambda_n^{(m)}$ we can define an extended function $\tilde{f}_n = f_n \circ T_n$ on $\Lambda$.
To assess the difference between $\tilde{f}_n$ and $f$, we can apply a  $\Lp{p}$ norm to $\tilde{f}_n - f$, leading to the $\TL_{\dWp{2}}$ distance. This adapted metric and its topology are reviewed in detail in Section \ref{section:TLp}. 

\paragraph{Manifold Learning Results:} Many advanced manifold learning techniques leverage graphs constructed from data to facilitate low-dimensional embeddings. For instance, Isomap \cite{Tenenbaum} uses shortest paths on graphs to approximate geodesics on the manifold, while Diffusion Maps \cite{Ronald_2006} employ the graph Laplacian and diffusion coordinates for embedding. As demonstrated in \cite{Bernstein}, increasing the sampling density in Isomap causes the graph shortest paths in an $\varepsilon$-neighbourhood graph to converge to the manifold geodesics, thereby approximating the ideal embedding. Coifman and Lafon~\cite{Ronald_2006} showed the convergence of the graph Laplacian to the Laplace–Beltrami operator with increased sampling. The Wasserstein space analogue, Wassmap, was explored by \cite{Hamm_2}, though consistency results akin to those for Isomap or Diffusion Maps were not previously established. It is beyond the scope of the paper to prove convergence of the Laplace-Beltrami operator corresponding to~\eqref{eqn:continuum_energy_functional_intro}, but given we have established convergence at the level of the energy, it is reasonable to conjecture (with a possibly more restrictive lower bound on $\eps$) convergence of the graph Laplacian to a continuum Laplace-Beltrami operator in this setting.
\vspace{\baselineskip}

The structure of the paper is outlined as follows. 
In Section \ref{section:Assumptions_Main_Results}, we state the relevant assumptions required and list the main results. In Section \ref{section:Background}, we provide a brief overview of the related concepts relevant for our analysis. In Section \ref{section:Proofs} we present the proofs in two steps: first establishing compactness, and then proving the $\Gamma$-convergence of the discrete-to-continuum graph $p$-Dirichlet functional of Laplace Learning. We also provide the derivation with the explicit form of the graph Laplacian in the discrete setting and the Laplace-Beltrami operator in the continuum setting. Finally, in Section \ref{section:Numerical_Experiments} we provide numerical experiments to illustrate the performance of classification using the Laplace Learning algorithm for different datasets.

\begin{figure}[ht]
\centering
\includegraphics[width=1.\linewidth]{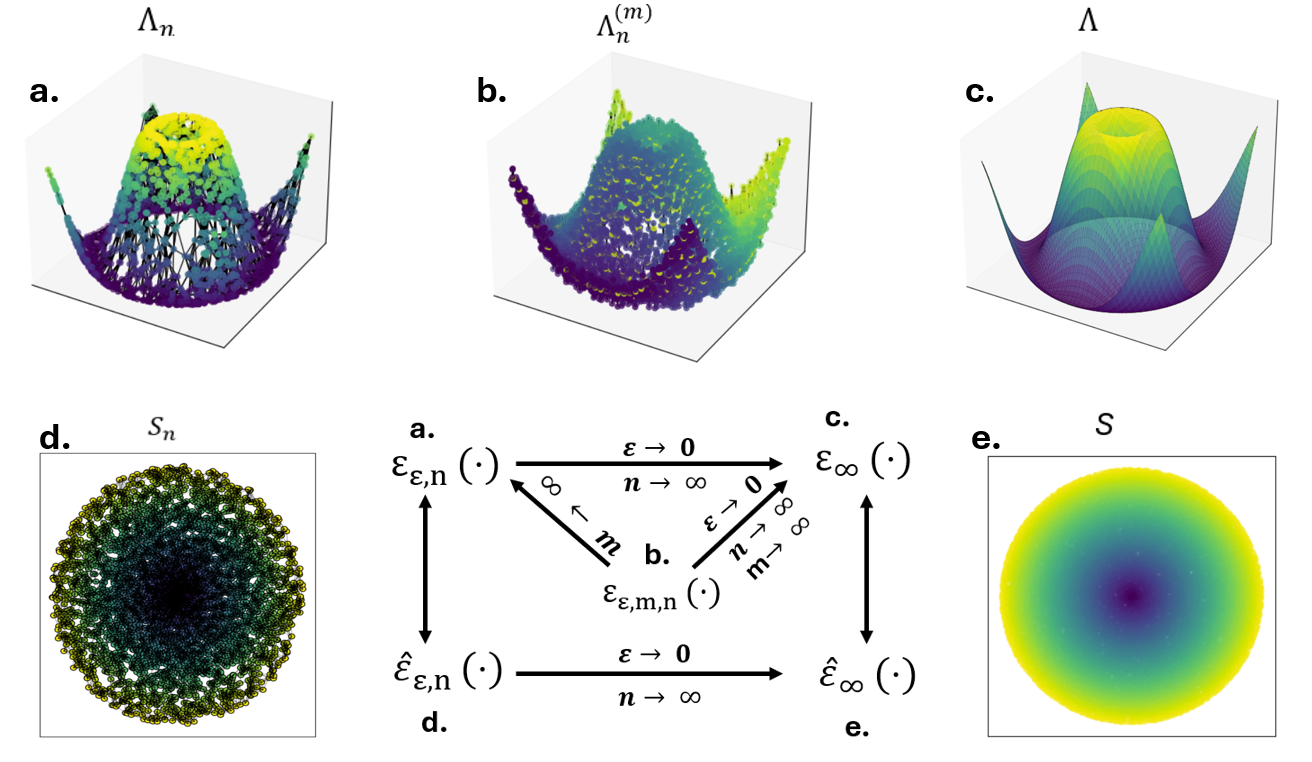}
\caption{A schematic diagram outlining the analytical framework of the study. The primary objective is to demonstrate the $\Gamma$-convergence of $\mathcal{E}_{\varepsilon,m,n}(\cdot)$ to $\mathcal{E}_\infty(\cdot)$ as $n \to \infty$, $m \to \infty$, and $\varepsilon \to 0$, with the latter two limits approaching at a suitable rate. Whilst our main result is to establish variational convergence from (b.) to (c.) our proof follows the sequence (b. $\rightarrow$ a. $\rightarrow$ d. $\rightarrow$ e. $\rightarrow$ c.). For clarity, the feature vectors depicted in the top row of the figure represent probability measures.
}
\label{fig:Main_Idea}
\end{figure}

\section{Assumptions and Main Results} \label{section:Assumptions_Main_Results}

In this section we list the set of assumptions for use in the sequel.
\begin{assumption}
Spaces.
\begin{enumerate}[label=$\bm{\mathrm{A.\arabic*}}$]
\item \label{ass:domain_ll_one}
The base space $\Omega \subset \bbR^k$ is an open, connected, and bounded domain with $k \geq 1$. We also assume that $\Omega$ has smooth boundary. 
\item \label{ass:domain_ll_two}
The Wasserstein submanifold is $\Lambda \subset \Probac(\Omega)$ where we endow $\Probac(\Omega)$, the space of probability measures that are absolutely continuous with respect to the Lebesgue measure, with the Wasserstein metric $\dWp{2}(\mu,\nu)$ defined later in  \eqref{eqn:wasserstein_metric} (see Subsection \ref{Section:Optimal_Transport} for more details).
\item \label{ass:domain_ll_three}  
The parameter space $\mathcal{S} \subset \bbR^d$ is an open, connected, and bounded domain with $d \geq 1$. We also assume that $\mathcal{S}$ has a smooth boundary endowed with the Euclidean metric $d_\mathcal{S}(\theta_i,\theta_j):=|\theta_j-\theta_i|$.
\end{enumerate}
\end{assumption}

\begin{assumption}
Parameterisation.   
\begin{enumerate}[label=$\bm{\mathrm{B.\arabic*}}$]
\item \label{ass:E_map} $E: \mathcal{S} \to \Lambda$ is a bi-Lipschitz  map satisfying $E(\mathcal{S})=\Lambda \subset \Probac(\Omega)$.
\item \label{ass:B_map} For any $\theta\in\cS$, let $s_\theta>0$ be the radius such that $B(\theta,s_\theta)\subset \cS$.
For $\theta\in\cS$ and $h\in B(0,s_\theta)$, let $\zeta:[0,1]\ni \Delta \mapsto \theta+\Delta h\in \cS$ and we assume there exists a linear map $B_\theta:B(0,s_\theta) \to \Ck{1}(\bbR^k;\bbR^k)$ and a constant $C\in (0,\infty)$ (independent of $\theta$) such that for all $h\in B(0,s_\theta)$ and $x\in\bbR^k$:
\begin{enumerate}
    \item\label{item:B.2_a} $\|B_\theta(h)\|_{\Lp{2}(E(\theta))} \in [1/C,C] \cdot \|h\|$;
    \item\label{item:B.2_b} $\|B_\theta(h)\|_{\Ck{1}(\bbR^k)} \leq C \cdot \|h\|$;
    \item\label{item:B.2_c} $|\partial_\Delta[B_{\zeta(\Delta)}(h)](x)| \leq C \cdot \|h\|^2$; and
    \item\label{item:B.2_d} The mapping $\theta \mapsto \| B_\theta(h) \|_{\Lp{2}(E(\theta))}$ is uniformly continuous for all $h \in B(0,s_\theta)$; that is, for every $\delta > 0$, there exists $\psi>0$ such that for all $\theta,\theta' \in \mathcal{S}$ with $|\theta-\theta'|<\psi$ we have
    \[\sup_{h \in B(0,s_\theta)} \left| \| B_\theta(h) \|_{\Lp{2}(E(\theta))} - \| B_{\theta'}(h) \|_{\Lp{2}(E(\theta'))} \right| < \delta.\]
    \item\label{item:B.2_e} The curve $\mu_{\zeta(\Delta)}=E(\zeta(\Delta))$ and the velocity field $B_{\zeta(\Delta)}(h)$ satisfy the continuity equation
    \begin{equation*}
    \partial_\Delta \mu_{\zeta(\Delta)}+\nabla( B_{\zeta(\Delta)}(h) \cdot \mu_{\zeta(\Delta)})=0
    \end{equation*}
    in a distributional sense.
\end{enumerate}
\end{enumerate}
\end{assumption}

We recall that the assumptions listed~\ref{item:B.2_a}-\ref{item:B.2_c} and~\ref{item:B.2_e} on $B_\theta$ are motivated by \cite[Assumption 2.1 (iii)]{Hamm} and \cite[Proposition 1.1, Proposition 2.12]{Hamm}.

\begin{assumption}
Probability Measures.
\begin{enumerate}[label=$\bm{\mathrm{C.\arabic*}}$]
\item \label{ass:prob_measures_S}
$\bbP_\mathcal{S} \in \mathcal{P}(\mathcal{S})$ is the probability measure on $\mathcal{S}$ with a strictly positive Lipschitz continuous Lebesgue density $\rho_\mathcal{S}: \cS \to (0,\infty)$. Moreover, there exists constants $0 < c < C < \infty$ 
such that
\begin{align}
0 < c < \inf_{\theta \in \mathcal{S}} \rho_\mathcal{S}(\theta) \leq \sup_{\theta \in \mathcal{S}} \rho_\mathcal{S}(\theta) < C < + \infty.    
\end{align}
\item \label{ass:prob_measures_Lambda} $\bbP_\Lambda \in \mathcal{P}(\Lambda)$ is the probability measure on $\Lambda \subset \Probac(\Omega)$.
We assume $\bbP_\Lambda = E_{\#}\bbP_\cS$. For all $\mu \in \mathrm{spt}(\mathbb{P}_\Lambda)$, $\mu$ admits a Lipschitz continuous density $\rho_\mu : \Omega \to (0,\infty)$.  
Furthermore, the family $\{\rho_\mu\}_{\mu \in \mathrm{spt}(\mathbb{P}_\Lambda)}$ is uniformly bounded above and below; i.e., there exist constants $0 < c < C < \infty$ independent of $\mu$ such that
\[
0 < c< \inf_{x \in \Omega} \rho_\mu(x) \leq \sup_{x \in \Omega} \rho_\mu(x) < C< + \infty. 
\]
\end{enumerate}
\end{assumption}

\begin{assumption}
Data.    
\begin{enumerate}[label=$\bm{\mathrm{D.\arabic*}}$]
    \item \label{ass:S_n}
 $\mathcal{S}_n=\{\theta_i\}_{i=1}^n \subset \mathcal{S}$ are the iid discrete samples on $\mathcal{S}$ drawn from the measure $\bbP_\mathcal{S}$. We denote the empirical measures associated to the discrete samples by $\bbP_{\mathcal{S}_n}=\frac{1}{n} \sum_{i=1}^n \delta_{\theta_i} \in \mathcal{P}(\mathcal{S})$.
 \item \label{ass:Lambda_n} $\Lambda_n=\{\mu_i\}_{i=1}^n \subset \Lambda$ are the iid discrete samples on $\Lambda \subset \Probac(\Omega)$ drawn from the measure $\bbP_\Lambda$, or equivalently $\mu_i=E(\theta_i)$ with $\theta_i$ satisfying \ref{ass:S_n}.
 We denote the empirical measures associated to the discrete samples by $\bbP_{\Lambda_n}=\frac{1}{n}\sum_{i=1}^n \delta_{\mu_i} \in \mathcal{P}(\Lambda)$.
 \item \label{ass:Lambda_m_n} $\Lambda^{(m)}_n=\{\mu^{(m)}_i\}_{i=1}^n \subset \mathcal{P}(\Omega)$ are the empirical approximation of $\Lambda_n$. 
 That is
$\mu^{(m)}_i=\frac{1}{m}\sum_{j=1}^m \delta_{x^{(i)}_j}$ where $x^{(i)}_j \iid \mu_i$ for $i\in\{1,\dots,n\}$ and $j\in\{1,\dots, m\}$.
We denote the empirical measures associated to the discrete samples by $\bbP_{\Lambda^{(m)}_n}=\frac{1}{n}\sum_{i=1}^n \delta_{\mu^{(m)}_i} \in \mathcal{P}(\cP(\Omega))$.
\end{enumerate}
\end{assumption}

\begin{assumption}
Weight function or kernel.    
\begin{enumerate}[label=$\bm{\mathrm{E.\arabic*}}$]
\item \label{ass:weight_one}
The function ${\eta}: [0, +\infty) \to [0, +\infty)$ is non-increasing. 
\item \label{ass:weight_two} The function $\eta$ is continuous.  
\item \label{ass:weight_three} The function $\eta$ is positive at zero i.e., ${\eta}(0) > 0$.
\item \label{ass:weight_four} The function $\eta$ has compact support in $[0,\tilde{R}]$. Without loss of generality we assume that $\tilde{R}\leq s_\theta$ where $s_\theta$ was defined in Assumption~\ref{ass:B_map}.
\end{enumerate}
\end{assumption}

The assumption that $\tilde{R}\leq s_\theta$ is without loss of generality since we can rescale the support of $\eta$ and absorb the constant into the scaling parameter $\eps$. In particular, say $\eta$ has compact support in $[0,R^\prime]$, then define $\eps^\prime = \frac{\tilde{R}\eps}{R^\prime}$ and $\eta^\prime = \eta\lp\frac{R^\prime}{\tilde{R}}\cdot\rp$. Then $\eta(t/\eps) = \eta^\prime(t/\eps^\prime)$ so $\eta^\prime$ has compact support in $[0,\tilde{R}]$. Our theory applies to the energies defined with weights given by $\eta^\prime$ and scaling $\eps^\prime$. The minimisers of the energies with $\eta,\eps$ and $\eta^\prime,\eps^\prime$ coincide (as they differ by only a multiplicative constant), hence our results hold for the energies defined with $\eta^\prime$ and $\eps^\prime$.

\begin{assumption}
Length scale.
\begin{enumerate}[label=$\bm{\mathrm{F.\arabic*}}$]
\item \label{ass:eps_scale} Let $\varepsilon=\varepsilon_{n}\to 0$ satisfy $\eps_n\gg q_d(n)$, where $q_d(n)$ is defined in \eqref{eqn:rates_qd} (see Theorem~\ref{thm:rates_convergence}).

\item \label{ass:eps_scale_2} Let $\varepsilon=\varepsilon_{n}\to 0$ satisfy $\eps_n\gg \tilde{q}_k(m_n)$, where $\tilde{q}_k(m_n)$ is defined in \eqref{eqn:rates_qd} (see Theorem~\ref{thm:rates_convergence}).
\end{enumerate}
\end{assumption}

\begin{remark}
If~\ref{ass:eps_scale} and~\ref{ass:eps_scale_2} hold, then with probability one there exists an integer $N < \infty$ such that for all $n \geq N$, the graph $(\Lambda_n^{(m)}, \bfW_n^{(m)})$ is connected. This follows from the observation that when $k \geq 5$, the condition $\dWp{2}(\mu_i^{(m)}, \mu_j^{(m)}) \lesssim \eps_n$ implies, with high probability,
\[
\dWp{2}(\mu_i, \mu_j) \leq \dWp{2}(\mu_i, \mu_i^{(m)}) + \dWp{2}(\mu_i^{(m)}, \mu_j^{(m)}) + \dWp{2}(\mu_j^{(m)}, \mu_j) \lesssim m^{-\frac{1}{k}} + \eps_n \lesssim \eps_n,
\]
where the estimate $\dWp{2}(\mu_i, \mu_i^{(m)}) \lesssim m^{-1/k}$ follows from Theorem~\ref{thm:rates_convergence}. 
Furthermore, by~\cite[Theorem 2.8 and Proposition 2.10]{Hamm}, the Wasserstein distance on $E(\cS)$ is equivalent to the Euclidean distance on $\cS$. Hence, the condition $\dWp{2}(\mu_i^{(m)}, \mu_j^{(m)}) \lesssim \eps_n$ implies that $|\theta_i - \theta_j| \lesssim \eps_n$. Finally, by~\cite{Penrose} and Assumption~\ref{ass:eps_scale}, the geometric random graph having nodes $\{\theta_i\}_{i=1}^n$, with edges between points at distance at most $O(\eps_n)$, is connected with high probability. It follows that the graph $(\Lambda_n^{(m)}, \bfW_n^{(m)})$ is also connected.
\end{remark}

Following the assumptions stated above, the main findings of the paper are as follows:
\begin{itemize}
\item \textbf{Compactness.}  
 Let $(f_n)_{n \in \mathbb{N}}$ be a sequence of functions satisfying
\[
\sup_{n \in \mathbb{N}} \|f_n\|_{\Lp{p}(\bbP_{\Lambda^{(m)}_n})} < +\infty
\quad \text{and} \quad
\sup_{n \in \mathbb{N}} \mathcal{E}_{\varepsilon_n, m_n, n}(f_n; \eta) < +\infty.
\]
Then, with probability one, there exists a subsequence $(f_{n_k})_{k \in \mathbb{N}}$ and a function $f \in \Lp{p}(\bbP_\Lambda)$ such that
\[
f_{n_k} \to f \quad \text{in} \quad \TL_{\dWp{2}}(\Probac(\Omega)) \quad \text{as } k \to \infty.
\]
For more details, we refer to Theorem~\ref{thm:compactness}.

\item \textbf{$\Gamma$-Convergence.}  
The sequence of discrete energy functional $\mathcal{E}_{\varepsilon_n, m_n, n}$ $\Gamma$-converges, with probability one, to the continuum functional $\mathcal{E}_\infty$ as $n \to \infty$, i.e.,
\[
    \mathcal{E}_{\varepsilon_n, m_n, n} \xrightarrow{\Gamma} \mathcal{E}_\infty.
\]
For more details, we refer to Theorem~\ref{thm:gamma_convergence}. 

\item \textbf{Laplace–Beltrami Operator.}  
For all $f \in \Wkp{1}{p}(\Lambda, \dWp{2}, \bbP_\Lambda)$, the continuum energy functional $\mathcal{E}_\infty$ admits the representation
    \[
    \mathcal{E}_\infty(f) = \frac{1}{p} \langle \Delta_\Lambda f, f \rangle_{\Lp{2}(\bbP_\Lambda)},
    \]
    where $\Delta_\Lambda$ denotes the Laplace–Beltrami operator associated with the limiting Wasserstein submanifold $(\Lambda, \dWp{2}, \bbP_\Lambda)$. An explicit form of $\Delta_\Lambda$ is given in Proposition~\ref{prop:LBO}.
\end{itemize}

\section{Background}\label{section:Background}
In this section, we review the preliminary concepts used in the main results. We first present optimal transport theory, a framework for efficiently transporting mass between distributions, and standard results on convergence rates for empirical measures in the $p$-Wasserstein distance. Next, we introduce the $\TL_d$ space, which defines the convergence of functions from discrete to continuum spaces in our setting, followed by a characterization of Wasserstein submanifolds in Sobolev spaces. Finally, we briefly review $\Gamma$-convergence, providing a notion of variational convergence.

\subsection{Optimal Transport}\label{Section:Optimal_Transport}

In this section, we recall the notion of optimal transportation between measures and its associated metric. For general background on this topic, see \cite{Cedric,Villani_Old,Rachev,Garling,Santambrogio}.

Let $\Omega \subset \bbR^k$ be an open, connected and bounded domain with smooth boundary. We denote the space of probability measures on $\Omega$  with bounded $p^\textrm{{th}}$ moment, as follows
\begin{equation*}\label{eqn:wasserstein_space}
\mathcal{P}_p(\Omega):= \Bigl\{\mu \in \cP{(\Omega)} : \int_{\Omega} \lvert {x} \lvert^p \,\dd\mu({x}) < +\infty \Bigl\}.
\end{equation*}

Let $\mu \in \cP(\Omega)$ and $T: \Omega \to \Omega$ be a measurable map, the pushforward of $\mu$ by $T$, denoted as $T_\#\mu$ is the measure 
\begin{equation*}
T_{\#}\mu(A)=\mu(T^{-1}(A)).    
\end{equation*} 
for all measurable set $A \subseteq \Omega$.

We briefly introduce some special cases of the optimal transport formulations where the cost function $c: \Omega \times \Omega \to (0,\infty]$ is the $p$-th power of the Euclidean distance i.e., $c(x,y)=|x-y|^p$.

Given the probability measures $\mu \in \cP(\Omega)$ and $\nu \in \mathcal{P}(\Omega)$ the $\textit{Monge formulation}$ of optimal transport is the variational problem
\begin{align}\label{eqn:Monge}
\mathbb{M}(\mu,\nu):=\inf_{T:T_{\#}\mu=\nu}{\int_{\Omega} |x - T(x) |^p \, \dd\mu({x})}.
\end{align}
A map $T$ satisfying $T_{\#}\mu = \nu$ is called a transport map, and the minimizer of the optimization problem in~\eqref{eqn:Monge} is referred to as the optimal transport map. The problem in \eqref{eqn:Monge} is often difficult to solve due to its non-convexity and the nonlinearity in $T$.

We define the set $\Pi(\mu,\nu)$ of couplings between measures $\mu$ and $\nu$ to be the set of probability measures on the product space $\mathcal{P}(\Omega \times \Omega)$ whose first marginal is $\mu$ and the second marginal is $\nu$. We call $\pi\in\Pi(\mu,\nu)$ a transport plan.
For any transport map $T$: $\Omega$ $\rightarrow$ $\Omega$ with $T_{\#}\mu=\nu$ the probability measure,
\begin{equation*}
\pi=(\mathrm{Id} \times T)_{\#}\mu,
\end{equation*}
where $\Id$ denotes the identity map satisfies $\pi\in\Pi(\mu,\nu)$. 

The $\textit{Kantorovich formulation}$ of optimal transport is the variational problem 
\begin{align}\label{eqn:Kantorovich}
\mathbb{K}(\mu,\nu):= \inf_{\pi \in \Pi(\mu,\nu)}{\int_{\Omega \times \Omega} |x-y|^p\,\dd\pi({x},{y})}.
\end{align}
The minimiser of \eqref{eqn:Kantorovich} is the optimal transport plan. When the source measure $\mu$ is absolutely continuous, the \emph{Kantorovich} and \emph{Monge} formulations are equivalent, as the optimal plan is concentrated on the graph of a measurable transport map. In contrast, for discrete measures, mass splitting may occur, so the Monge problem may fail to admit a solution, whereas the Kantorovich problem remains well-posed.

When $\Omega$ is bounded, we have $\mathcal{P}_p(\Omega) = \cP(\Omega)$. This allows us to define the $p$-Wasserstein distance \cite{Cedric}, which represents the minimum transportation cost (in terms of the $p^\textrm{th}$ power of the Euclidean distance) between $\mu \in \cP(\Omega)$ and $\nu \in \mathcal{P}(\Omega)$ for any $p \in [1, \infty]$. It is defined as follows:

\begin{align}\label{eqn:wasserstein_metric}
\begin{aligned}
\quad \dWp{p}(\mu,\nu) & =
\begin{cases}
    \inf_{\pi \in \Pi(\mu,\nu)} \bigg(\displaystyle\int_{\Omega \times \Omega} \lvert {x} - {y} \lvert^p \,\dd\pi({x},{y})\bigg)^{\frac{1}{p}} & \text{for $1\leq p < \infty$,} \\
    \inf_{\pi \in \Pi(\mu,\nu)} \pi $-$ \esssup_{(x,y)}|x-y| & \text{for $p=\infty$.}
\end{cases}\\
\end{aligned}
\end{align}

Recall that $\mu^{(m)}$ and $\nu^{(m)}$ are the empirical distributions supported on $m$ discrete points, i.e., $\mu^{(m)} = \frac{1}{m}\sum_{i=1}^m \delta_{x_i}$ and $\nu^{(m)} = \frac{1}{m}\sum_{j=1}^m \delta_{y_j}$ where $x_i\iid \mu$ and $y_j\iid \nu$. In the sequel we let 
$C, c > 0$  denotes a constant (large, small) that can arbitrarily vary from line to line.

\begin{theorem} \label{thm:rates_convergence} 
Let $\Omega \subset \mathbb{R}^k$ be an open, connected and bounded domain with Lipschitz boundary with $k \geq 1$. Let $\mu$ be a probability measure on $\Omega$ with density $\rho_\mu : \Omega \to (0, \infty)$ such that there exists $ C > c \geq 1$ for which,
\begin{align}
    c \leq \rho_\mu(x) \leq C.
\end{align}
Let $\alpha>2$, $p\in \{2,\infty\}$, and $S^{*}_m$ be an optimal transport map between $\mu$ and $\mu^{(m)}$ with respect to the $p$-Wasserstein distance. Define $q_k(m)$ and $\tilde{q}_k(m)$ by
\begin{align}\label{eqn:rates_qd}
q_{k}(m) & :=
\begin{cases}
\sqrt{\frac{\log \log m}{m}} & \text{for } k=1, \\
\frac{(\log m)^{3/4}}{m^{1/2}} & \text{for } k=2, \\
\lp\frac{\log m}{m}\rp^{1/k} & \text{for } k\geq3,
\end{cases}
\quad & \text{and} \quad \quad \quad
\tilde{q}_{k}(m) & :=
\begin{cases}
\sqrt{\frac{\log m}{m}} & \text{for } k=1 \\
\frac{\log(m)^{3/4}}{m^{1/2}} & \text{for } k=2 \\
\lp\frac{\log m}{m}\rp^{1/3} & \text{for } k=3 \\
\frac{(\log m)^{1/2}}{m^{1/4}} & \text{for } k=4 \\
m^{-1/k} & \text{for } k\geq 5.
\end{cases}
\end{align}
\begin{itemize}
\item If $p=+\infty$, then the $\infty$-transportation distance between the measure $\mu$
and the empirical measure $\mu^{(m)}$ scales with the rate $q_k(m)$, 
i.e.,
\begin{align}\label{eqn:rates_qd_wasserstein_distance}
\dWp{\infty}(\mu,\mu^{(m)}) = \|S^{*}_m - \Id\|_{\Lp{\infty}(\mu)} \leq \hat{C}q_k(m),
\end{align}
with probability at least $1-\mathrm{O}(m^{-\alpha/2})$
\item If $p=2$, then
the $2$-transportation distance between measure $\mu$
and the empirical measure $\mu^{(m)}$ scales with the rate $\tilde{q}_k(m)$, 
i.e.,
\begin{align}\label{eqn:rates_qd_tilde}
\dWp{2}(\mu,\mu^{(m)}) = \|S^{*}_m - \Id\|_{\Lp{2}(\mu)} \leq \hat{C}\tilde{q}_k(m),
\end{align}
with probability at least $1-\mathrm{O}(m^{-\alpha/2})$.  
\end{itemize}
The constant $\hat{C}$ depends only on $\alpha$, $c,C$ and $\Omega$. 
\end{theorem}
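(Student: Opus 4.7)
The plan is to treat the two cases separately, in both leveraging the established literature on convergence of empirical measures in Wasserstein distance. For $p=\infty$ the rates $q_k(m)$ are those of Garc\'ia Trillos and Slep\v{c}ev for the uniform distribution on a smooth domain; for $p=2$ the rates $\tilde q_k(m)$ are those of Fournier and Guillin (later refined by Weed and Bach). The common reduction is that because $\rho_\mu$ is bounded above and below by $c$ and $C$, the measure $\mu$ is comparable to the uniform measure on $\Omega$, so the rates for the uniform case transfer to the general case up to a constant depending only on $c$, $C$ and the geometry of $\Omega$.

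For the $p=\infty$ statement I would invoke the main theorem of Garc\'ia Trillos--Slep\v{c}ev, which states that if $\Omega$ has Lipschitz boundary and $\nu$ is the normalised Lebesgue measure on $\Omega$, then with probability at least $1-O(m^{-\alpha/2})$ there exists an optimal transport map $T_m^\nu$ between $\nu$ and its empirical version $\nu^{(m)}$ satisfying $\|T_m^\nu-\mathrm{Id}\|_{\Lp{\infty}(\nu)}\leq C q_k(m)$. To transfer this to $\mu$ with density $\rho_\mu\in[c,C]$ I would use a bi-Lipschitz deterministic map $\Phi:\Omega\to\Omega$ pushing $\nu$ onto $\mu$, whose bi-Lipschitz constants depend only on $c,C$; such a $\Phi$ can be obtained, for example, via a Knothe rearrangement on a cube containing $\Omega$, or from Caffarelli regularity when $\Omega$ is convex. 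Pushing $T_m^\nu$ through $\Phi$ produces a transport map between $\mu$ and $\mu^{(m)}$ whose $\Lp{\infty}$ displacement is controlled by that in the uniform case up to a multiplicative constant depending only on the bi-Lipschitz constants of $\Phi$, yielding \eqref{eqn:rates_qd_wasserstein_distance}.

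For the $p=2$ statement I would apply the Fournier--Guillin theorem directly: under the assumptions that $\mu$ has compact support and bounded density, one has the expectation bound $\mathbb{E}[\dWp{2}(\mu,\mu^{(m)})^2]\leq C\tilde q_k(m)^2$. To upgrade this to a high-probability statement I would apply concentration of measure. The mapping $(x_1,\dots,x_m)\mapsto \dWp{2}(\mu,\mu^{(m)})$ has bounded differences of order $\mathrm{diam}(\Omega)/\sqrt{m}$ (replacing a single sample changes the empirical measure by mass $1/m$ supported within bounded distance, and the Wasserstein distance absorbs this at cost $\mathrm{diam}(\Omega)/\sqrt{m}$), so McDiarmid's inequality provides a subgaussian tail of the form $\exp(-cmt^2)$ around the mean. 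Taking $t$ of order $\sqrt{\alpha\log m/m}$ gives a deviation bound at probability $1-O(m^{-\alpha/2})$, and the extra $\sqrt{\log m/m}$ factor is absorbed by $\tilde q_k(m)$ in every dimension regime listed.

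The main obstacle is bookkeeping constants carefully so that $\hat{C}$ depends only on $\alpha$, $c$, $C$ and $\Omega$, and not on finer features of $\rho_\mu$. For $p=\infty$ the cleanest route is to appeal to the formulation of Garc\'ia Trillos--Slep\v{c}ev that already accommodates densities bounded away from $0$ and $\infty$, sidestepping the change of variables altogether. For $p=2$ and $k\leq 4$ the subtle point is the interaction between the logarithmic prefactors already present in $\tilde q_k$ and the $\sqrt{\log m}$ term from McDiarmid; choosing the concentration threshold in line with $\alpha$ keeps the final rate consistent with \eqref{eqn:rates_qd_tilde} without losing additional log factors.
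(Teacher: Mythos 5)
Your $p=\infty$ step for $k\geq 2$ is essentially the paper's, which applies the Garc\'ia Trillos--Slep\v{c}ev rate directly in the version that already allows densities bounded above and below, so the bi-Lipschitz transfer you describe is unnecessary and you are right to note the cleaner route at the end. However you do not address $k=1$, where the paper invokes the law of the iterated logarithm to obtain the $\sqrt{\log\log m/m}$ rate; that case does not follow from the $k\geq 2$ theorem.

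The $p=2$ argument contains a concrete quantitative error in the concentration step. You correctly note that changing one sample perturbs $\dWp{2}(\mu,\mu^{(m)})$ by at most $c_i \sim \mathrm{diam}(\Omega)/\sqrt{m}$, but the denominator in McDiarmid's inequality is then $\sum_{i=1}^m c_i^2 = m\cdot(\mathrm{diam}(\Omega)/\sqrt{m})^2 = \mathrm{diam}(\Omega)^2$, which is independent of $m$. The resulting tail is $\exp(-2t^2/\mathrm{diam}(\Omega)^2)$, not $\exp(-cmt^2)$. To force this below $m^{-\alpha/2}$ you would need $t\gtrsim\sqrt{\log m}$, which does not vanish, so the argument yields no quantitative rate at all. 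A tail of the form $\exp(-cmt^2)$ from a bounded-differences count requires $c_i = O(1/m)$, which holds for the squared distance $\dWp{2}^2(\mu,\mu^{(m)})$ (replacing one atom changes the quadratic transport cost by $O(\mathrm{diam}(\Omega)^2/m)$) but not for $\dWp{2}$ itself. The paper cites Weed and Bach for precisely this concentration rather than rederiving it from worst-case bounded differences.

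There is a second gap downstream of the first. With the concentration applied to $\dWp{2}^2$, a deviation $t\sim\sqrt{\log m/m}$ translates after taking square roots into a contribution of order $(\log m/m)^{1/4}$ to $\dWp{2}$, and this exceeds $\tilde q_k(m)$ for $k\in\{1,2,3\}$. The paper therefore supplements the expectation-plus-concentration route with two dimension-specific reductions that your proposal lacks: the elementary domination $\dWp{2}\leq\dWp{\infty}$ combined with the already-established $\infty$-Wasserstein rates for $k\in\{2,3\}$, and the one-dimensional Bobkov--Ledoux expectation bound for $k=1$. Without these, the low-dimensional cases of the theorem are not reached.
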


\begin{proof}
For $k\geq 2$ the $\dWp{\infty}$ rates follow directly from~\cite[Theorem 1.1]{Nicolas_rate_convergence} and for $k=1$ the rates follow from the law of the iterated logarithm.

For $k\in \{2,3\}$ we use $\dWp{2}(\mu,\mu^{(m)}) \leq \dWp{\infty}(\mu,\mu^{(m)})$ and the first part.
We consider $k=1$ and $k\geq 4$ separately.

For any $k\geq 1$ we have by~\cite[Proposition 20]{weed2019sharp}
\begin{equation} \label{eq:W2EBound}
\bbP\lp \dWp{2}^2(\mu,\mu^{(m)}) \geq \bbE\ls \dWp{2}^2(\mu,\mu^{(m)})\rs + t \rp \leq e^{-2mt^2}.
\end{equation}
We choose $t=\sqrt{\frac{\alpha\log m}{4m}}$ so
\[ \dWp{2}^2(\mu,\mu^{(m)}) \leq \bbE\ls \dWp{2}^2(\mu,\mu^{(m)})\rs + \sqrt{\frac{\alpha\log m}{4m}} \]
with probability at least $1-m^{-\alpha/2}$.

For $k=1$, we have by~\cite[Theorem 5.1]{bobkov2019one} that
\[ \bbE\ls \dWp{2}^2(\mu,\mu^{(m)})\rs \leq \frac{2}{m+1} \int_{-\infty}^\infty \frac{F(x)(1-F(x))}{\rho_\mu(x)} \, \dd x \]
where $\rho_\mu$ is the density of $\mu$ and $F$ is the cumulative distribution function of $\mu$. Using the (uniform) lower bound on $\rho_\mu$ and the upper bound $F(x)\leq 1$ we have
\[ \bbE\ls \dWp{2}^2(\mu,\mu^{(m)})\rs \leq \frac{2}{c(m+1)} \int_{-\infty}^\infty \lp 1-F(x)\rp \, \dd x = \frac{2}{c(m+1)} \bbE[x]. \]
Since we can also uniformly bound the expectation we have $\bbE\ls \dWp{2}^2(\mu,\mu^{(m)})\rs \leq \frac{C}{m}$.
Combining with~\eqref{eq:W2EBound} we can conclude the result for $k=1$.

For $k\geq 4$, we have by~\cite[Theorem 1]{fournier2015rate} and ~\cite[Proposition 5]{weed2019sharp}, that there exists $C>0$ independent of $\mu$ such that
\[ \bbE\ls \dWp{2}^2(\mu,\mu^{(m)})\rs \leq C \lb \begin{array}{ll}   m^{-1/2} \log m & \text{ if } k=4 \\ m^{-2/k} & \text{ if } k \geq 5. \end{array} \rd \]
Combining with~\eqref{eq:W2EBound} completes the proof.
\end{proof}
\begin{remark}
For $k \in \{2,3,4\}$, we expect that the convergence rates in Theorem \ref{thm:rates_convergence} for $\dWp{2}(\mu^{(m)},\mu)$ could be improved to $O(m^{-1/k})$, but to the best of the authors’ knowledge, the rates stated in the theorem remain the best currently known in general. Improved rates are known in certain special cases. For instance, the classical AKT theorem \cite{ajtai1984optimal} establishes this rate when $\mu$ is a uniform distribution on the cube. More generally, on the cube, similar rates (up to logarithmic factors) hold for densities that are bounded above and below \cite[Corollary 8]{manole2024plugin}. A general treatment over the torus is also available \cite[Theorem 1]{Divol2021Short}. For general domains, finer results are known at least in dimension 2, but these require additional regularity assumptions on $\mu$ \cite{ambrosio2022quadratic}. For a comprehensive discussion, see \cite{hundrieser2024empirical} and references therein. We conjecture that similar improved rates could hold for general domains with Lipschitz boundary by adapting the arguments in \cite{bobkov2021simple}, and we expect this approach to be suboptimal requiring a more nuanced treatment.
\end{remark}

\subsection{Metric Transportation \texorpdfstring{$\Lp{p}$}{Lp} Space (\texorpdfstring{$\TL_d(A)$}{TLpd(A)})}\label{section:TLp}

We introduce the notion of $\TL_{d}(A)$ spaces, which have been adapted from \cite{Nicolas_pointcloud} to our context, as a metric to define the convergence from the discrete to the continuum space for point clouds.  Recent work motivated by the $\TL(A)$ space, includes \cite{Fonseca} and \cite{PTLp} which introduced the space of equivalence classes $\mathrm{CL}^p(A)$ to define the space of piecewise smooth segmentations from the Ambrosio-Tortorelli approximation of the Mumford-Shah segmentation model and introduce a new family of metrics for comparing generic signals, benefiting from the robustness of partial transport distances and sliced-partial transport distances \cite{Sliced_PTLp}.

In this subsection, we consider an arbitrary metric space $(A,d)$ in order to define the $\TL_{d}(A)$ distance. Let $A$ be a non-empty set with $p \geq $ 1, we use $\mathcal{P}(A)$ to denote the set of Borel probability measures defined on $(A,d)$.

\begin{mydef}\label{def:TL0_space}
Let $(A,d)$ be a metric space.
We define $\TLp{p}(A)$ to be the set of functions $f$ and probability measures $\bbP$ on $A$ such that $f$ is $\bbP$ measurable, i.e.,
\begin{equation*}
\TLp{p}(A):=\big\{(f,\bbP): f: A \to \bbR \in  \Lp{p}(\bbP) \mbox{ where } \bbP\in\mathcal{P}(A)  \big\}.
\end{equation*}
We define the transportation distance $d_{\TL_{d}(A)}: \TLp{p} \times \TLp{p} \to [0,\infty)$ between the pairs $(f,\bbP) \in \TLp{p}(A)$ and $(g,\bbQ) \in \TLp{p}(A)$ as follows,
\begin{align}\label{eqn:TLp_distance}
d_{\TL_{d}}((f,\bbP),(g,\bbQ))=\inf_{\pi \in \Pi(\bbP,\bbQ)}  \bigg(\int_{A \times A} d(x,y)^p+|f(x)-g(y)|^p \, \textnormal{d}\pi(x,y)\bigg)^{\frac{1}{p}}.    
\end{align}
where $\pi \in \Pi(\bbP,\bbQ)$ is the set of transport plans between $\bbP$ and $\bbQ$. 
\end{mydef}

In short, the $\TL_{d}$ optimal transport plans, which minimise the variational problem defined in  \eqref{eqn:TLp_distance}, strike a middle ground between two objectives: aligning spatial locations closely, as indicated by minimising the distance, i.e., mimizing $\int_{A \times A} d(x,y)^p \, \dd\pi(x,y)$ and matching signal features, as represented by minimising $\int_{A \times A} |f(x)-g(y)|^p \, \dd\pi(x,y)$. 

The original $\TL$ distance proposed in \cite{Nicolas_pointcloud} 
is a special case of the above definition when $A$ is a Euclidean space and $d(x,y):=|x-y|$. We deduce the following proposition below which will help us later in  understanding the limiting behaviour of Laplace Learning. With an abuse of notation, we denote $\|f(x)-g(y)\|_{\Lp{p}(\pi(x,y))}= \bigg(\int_{A \times A}|f(x)-g(y)|^p\,\dd\pi(x,y)\bigg)^{\frac{1}{p}}$ for ease of convenience.

\begin{proposition}\label{prop:TLp_distance}
Let $p \geq 1$ and $(A,d)$ be a metric space. The following statements hold.
\begin{enumerate}
\item $d_{\TL_{d}(A)}$ is a metric. \label{eqn1}
\item \label{(a)} Let $\{(f_n,\bbP_n)\}^\infty_{n=1} \subset \TLp{p}(A)$ and $(f,\bbP) \in \TLp{p}(A)$, then $d_{\TL_{d}(A)}((f_n,\bbP_n),(f,\bbP)) \to 0$ if and only if there exists a sequence of optimal transport plans $\{{\pi}^*_n\}_{n=1}^\infty \in \Pi(\bbP,\bbP_n)$ with $\| d \|_{\Lp{p}(\pi^*_n)} \to 0$ and $\|f(x)-f_n(y) \|_{\Lp{p}(\pi^*_n(x,y))} \to 0$.
\item \label{(b)} Let $\{(f_n,\bbP_n)\}^\infty_{n=1} \subset \TLp{p}(A)$, $(f,\bbP) \in \TLp{p}(A)$ and $d_{\TL_{d}(A)}((f_n,\bbP_n),(f,\bbP)) \to 0$. Then, 
for any sequence of transport plans $\{\pi_n\}_{n=1}^\infty \in \Pi(\bbP,\bbP_n)$ with $\|d\|_{\Lp{p}(\pi_n)} \to 0$ it follows that $\|f(x)-f_n(y)\|_{\Lp{p}(\pi_n(x,y))} \to 0$.
\end{enumerate}
\end{proposition}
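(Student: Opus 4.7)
The plan is to prove the three parts in order, with the substance of the work in~(3). The unifying tool is a lift of $\TLp{p}(A)$ into a genuine Wasserstein space.

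For~(1), I would identify each pair $(f,\bbP)\in\TLp{p}(A)$ with its graph measure $\tilde{\bbP}:=(\mathrm{Id},f)_{\#}\bbP\in\cP(A\times\bbR)$, equip $A\times\bbR$ with the $\ell^p$-product metric $\delta((x,a),(y,b)):=(d(x,y)^p+|a-b|^p)^{1/p}$ (a metric by Minkowski), and verify that every coupling of $\tilde{\bbP}$ and $\tilde{\bbQ}$ is concentrated on the graphs of $f$ and $g$ and thus corresponds via projection to $A\times A$ to a plan in $\Pi(\bbP,\bbQ)$ of identical cost, and vice versa. This identifies $d_{\TL_{d}(A)}$ with the $p$-Wasserstein distance on $(\cP(A\times\bbR),\delta)$, so the metric axioms transfer immediately.

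For~(2), the direction $(\Leftarrow)$ is immediate by plugging any such $\pi_n^*$ into~\eqref{eqn:TLp_distance}. For $(\Rightarrow)$, I need a minimiser $\pi_n^*$ to exist; this follows from the lift in~(1), since $\delta$ is continuous, so standard tightness and lower semicontinuity on $\cP(A\times\bbR)$ yield an optimal plan which descends to $\Pi(\bbP,\bbP_n)$. Once $\pi_n^*$ is optimal, nonnegativity of each summand in the integrand of~\eqref{eqn:TLp_distance} forces both $\|d\|_{\Lp{p}(\pi_n^*)}$ and $\|f(x)-f_n(y)\|_{\Lp{p}(\pi_n^*(x,y))}$ to vanish as $d_{\TL_{d}(A)}\to 0$.

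For~(3), given the plan $\pi_n$ from the hypothesis and an optimal $\pi_n^*$ from~(2), both in $\Pi(\bbP,\bbP_n)$, I would apply the gluing lemma along their common $\bbP_n$ marginal to construct $\gamma_n\in\cP(A\times A\times A)$ with $(1,2)$-marginal $\pi_n$ and $(3,2)$-marginal $\pi_n^*$. Minkowski's inequality in $\Lp{p}(\gamma_n)$ then gives
\[
\|f(x)-f_n(y)\|_{\Lp{p}(\pi_n)}\leq\|f(x)-f(x')\|_{\Lp{p}(\gamma_n)}+\|f(x')-f_n(y)\|_{\Lp{p}(\gamma_n)},
\]
whose second term equals $\|f(x')-f_n(y)\|_{\Lp{p}(\pi_n^*)}\to 0$ by~(2). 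For the first term, the triangle inequality for $d$ together with $\|d\|_{\Lp{p}(\pi_n)},\|d\|_{\Lp{p}(\pi_n^*)}\to 0$ shows that the $(1,3)$-marginal $\sigma_n\in\Pi(\bbP,\bbP)$ of $\gamma_n$ satisfies $\int d(x,x')^p\,d\sigma_n\to 0$. The main obstacle will be showing $\int|f(x)-f(x')|^p\,d\sigma_n\to 0$ when $f$ is only $\Lp{p}$; I would overcome this with an $\varepsilon$-density argument, approximating $f$ by a uniformly continuous $f^\varepsilon$ with $\|f-f^\varepsilon\|_{\Lp{p}(\bbP)}<\varepsilon$ (via Lusin--Tietze, available once $A$ is Polish and $\bbP$ is Radon, as in our applications), using the modulus of continuity of $f^\varepsilon$ together with $\int d^p\,d\sigma_n\to 0$ to make $\|f^\varepsilon(x)-f^\varepsilon(x')\|_{\Lp{p}(\sigma_n)}$ small, and absorbing the two $\varepsilon$-errors using the fact that both marginals of $\sigma_n$ equal $\bbP$.
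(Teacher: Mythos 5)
Your Parts~(1) and~(2) reproduce the paper's proof essentially verbatim: the paper also lifts each pair $(f,\bbP)$ to the graph measure $(\Id,f)_{\#}\bbP\in\cP(A\times\bbR)$, equips $A\times\bbR$ with the $\ell^p$ product metric, and identifies $d_{\TL_d(A)}$ with the resulting $p$-Wasserstein distance, from which the metric axioms and the characterization in~(2) fall out. Your Part~(3), however, takes a genuinely different route. The paper avoids gluing: it fixes a Lipschitz $g$ with $\|f-g\|_{\Lp{p}(\bbP)}<\delta/2$ and writes $\|f(x)-f_n(y)\|_{\Lp{p}(\pi_n)}\leq\|f-g\|_{\Lp{p}(\bbP)}+\Lip(g)\|d\|_{\Lp{p}(\pi_n)}+\|g-f_n\|_{\Lp{p}(\bbP_n)}$, then observes that the last term depends only on the marginal $\bbP_n$, so it can be re-expanded along the optimal plan $\pi_n^*$ and sent to zero by Part~(2). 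You instead glue $\pi_n$ and $\pi_n^*$ along their common $\bbP_n$ marginal, use Minkowski in $\Lp{p}(\gamma_n)$, and then must prove a separate lemma: that $\|f(x)-f(x')\|_{\Lp{p}(\sigma_n)}\to 0$ whenever $\sigma_n\in\Pi(\bbP,\bbP)$ has vanishing transport cost. That modularisation is conceptually cleaner — the continuity-in-transport statement is isolated and reusable — at the cost of a longer argument; both proofs ultimately rest on the same density fact. One small adjustment you should make: approximate $f$ by a \emph{bounded Lipschitz} function rather than a merely uniformly continuous one. Your Lusin--Tietze route needs $A$ Polish and $\bbP$ Radon, and the Tietze extension will generally fail to be uniformly continuous on all of $A$, forcing the Markov-inequality workaround you allude to. Bounded Lipschitz functions are dense in $\Lp{p}(\bbP)$ for any Borel probability measure on a metric space with no topological hypotheses, and a Lipschitz constant gives $\|f^\varepsilon(x)-f^\varepsilon(x')\|_{\Lp{p}(\sigma_n)}\leq\Lip(f^\varepsilon)\|d\|_{\Lp{p}(\sigma_n)}$ directly, which is exactly the simplification the paper exploits.
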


\begin{proof}
We generalize the proof in~\cite[Proposition 3.3]{Nicolas_pointcloud}.
\begin{enumerate}
\item
We remark that $d_{\TL_{d}(A)}$ is equal to the $p$-Wasserstein distance $d_{\mathrm{W}^p_{\tilde{d}}}$ between the graphs of functions where $\tilde{d}:(A\times \bbR) \times (A\times \bbR) \to [0,+\infty)$ is defined by $\tilde{d}((x,s),(y,t)) = \sqrt[p]{d(x,y)^p + |s-t|^p}$. In particular, 
let us consider $(f,\bbP),(g,\bbQ) \in \TLp{p}(A)$. For any $\pi \in \mathcal{P}(A \times A)$ we define $\tilde{\pi}=((\Id \times f)\times (\Id \times g))_{\#}\pi$. Then, $\pi \in \Pi(\bbP,\bbQ)$ implies $\tilde{\pi} \in \Pi(\tilde{\bbP},\tilde{\bbQ})$ where $\tilde{\bbP}=(\Id\times f)_{\#}\bbP$ and $\tilde{\bbQ}=(\Id\times g)_{\#} \bbQ$.
Hence, assuming $\pi^*\in\Pi(\bbP,\bbQ)$ is optimal, 
\begin{equation*}
\begin{aligned}
d_{\TL_d(A)}((f,\bbP),(g,\bbQ))& =  \bigg(\int_{A\times A}d(x,y)^p+|f(x)-g(y)|^p \,\dd\pi^*(x,y) \bigg)^{\frac{1}{p}} \\    
& =  \bigg(\int_{(A \times \bbR)\times (A \times \bbR)}d(x,y)^p + |s-t|^p \, \dd\tilde{\pi}^*((x,s),(y,t))\bigg)^{\frac{1}{p}} \\
& \geq \inf_{\tilde{\pi}\in \Pi(\tilde{\bbP},\tilde{\bbQ})} \bigg(\int_{(A \times \bbR) \times (A \times \bbR)} d(x,y)^p + |s-t|^p \,\dd\tilde{\pi}((x,s),(y,t)) \bigg)^\frac{1}{p} \\
& = d_{\mathrm{W}_{\tilde{d}}^p}(\tilde{\bbP},\tilde{\bbQ}).
\end{aligned}
\end{equation*}

For $\tilde{\pi} \in \mathcal{P}(A \times \bbR \times A \times \bbR)$ we define $\pi=(P^{X} \times P^{Y})_{\#}\tilde{\pi}$ where $P^{X}(x,s,y,t)=x$ and $P^{Y}(x,s,y,t)=y$. Then, $\tilde{\pi} \in \Pi(\tilde{\bbP},\tilde{\bbQ})$ implies $\pi \in \Pi(\bbP,\bbQ)$ and $\tilde{\pi}=((\Id \times f),(\Id \times g))_{\#}\pi$. 
Hence, assuming $\tilde{\pi}^*\in\Pi(\bbP,\bbQ)$ is optimal, 
\begin{equation*}
\begin{aligned}
d_{\mathrm{W}_{\tilde{d}}^p}(\tilde{\bbP},\tilde{\bbQ}) 
 & = \bigg(\int_{(A \times \bbR) \times (A \times \bbR)} d(x,y)^p + |s-t|^p \, \dd\tilde{\pi}^*((x,s),(y,t))\bigg)^{\frac{1}{p}}\\
 &= \bigg(\int_{A \times A} d(x,y)^p + |f(x)-g(y)|^p \, \dd\pi^*(x,y) \bigg)^{\frac{1}{p}}\\
 & \geq \inf_{\pi\in\Pi(\bbP,\bbQ)} \lp\int_{A \times A} d(x,y)^p + |f(x)-g(y)|^p \, \dd\pi(x,y) \rp^{\frac{1}{p}} \\
 & = d_{\TL_d(A)}((f,\bbP),(g,\bbQ)).
\end{aligned}    
\end{equation*}
Hence, we deduce that $d_{\TL_d(A)}((f,\bbP),(g,\bbQ)) = d_{\mathrm{W}^p_{\tilde{d}}}(\tilde{\bbP},\tilde{\bbQ})$.  
Since $\tilde{d}$ is a metric in $A \times \bbR$, this implies that $d_{\mathrm{W}^p_{\tilde{d}}}(\tilde{\bbP},\tilde{\bbQ})$ is a metric on $\mathcal{P}(A \times \bbR)$. 
Therefore, we conclude that  $d_{\TL_d(A)}((f,\bbP),(g,\bbQ))$ is a metric on $\mathcal{P}(A \times \bbR)$. 

\item 
Assume that $d_{\TL_{d}(A)}((f_n,\bbP_n),(f,\bbP)) \to 0$. Then, there exists an optimal $\pi^*_n \in \Pi(\bbP,\bbP_n)$ such that 
\[ \bigg(  \int_{A \times A}  d(x,y)^p+|f(x)-f_n(y)|^p \, \dd{\pi}^*_n(x,y) \bigg)^{\frac{1}{p}} \to 0. \]
This implies that $\|d\|_{\Lp{p}(\pi^*_n)} \to 0$ and $\|f(x)-f_n(y) \|_{\Lp{p}(\pi^*_n(x,y))} \to 0$. Alternatively, assume that there exists an optimal $\pi^*_n \in \Pi(\bbP,\bbP_n)$ such that $\|d\|_{\Lp{p}(\pi^*_n)} \to 0$ and $\|f(x)-f_n(y)\|_{\Lp{p}(\pi^*_n(x,y))} \to 0$, then  
\[ d_{\TL_{d}(A)}((f,\bbP),(f_n,\bbP_n)) \leq \bigg(\int_{A \times A} d(x,y)^p + |f(x)-f_n(y)|^p \,\dd\pi^*_n(x,y)\bigg)^{\frac{1}{p}} \to 0. \] 

\item  We assume that $d_{\TL_d(A)}((f_n,\bbP_n),(f,\bbP)) \to 0$ and $\pi_n \in \Pi(\bbP,\bbP_n)$ satisfies $\|d\|_{\Lp{p}(\pi_n)} \to 0$. We know that $\int_{A \times A} |f(x)-f_n(y)|^p + d(x,y)^p \,\dd\pi^*_n(x,y) \to 0$ and $\int_{A \times A} d(x,y)^p \,\dd\pi_n(x,y) \to 0$ where $\pi^{*}_n \in \Pi(\bbP,\bbP_n)$ is the sequence of optimal transport plans from Part~\ref{(a)}. For any $g: A \to \bbR$ we can write, 
\begin{align}\label{eqn:f(x)_f_n(y)}
\begin{aligned}
& \bigg(\int_{A \times A} |f(x)-f_n(y)|^p \, \dd\pi_n(x,y) \bigg)^{\frac{1}{p}} \\
& \qquad \qquad \leq \bigg(\int_{A \times A} |f(x)-g(x)|^p \, \dd\pi_n(x,y) \bigg)^{\frac{1}{p}} + \bigg(\int_{A \times A} |g(x)-g(y)|^p \, \dd\pi_n(x,y) \bigg)^{\frac{1}{p}}  \\
& \qquad \qquad \qquad \qquad + \bigg(\int_{A \times A} |g(y)-f_n(y)|^p \, \dd\pi_n(x,y) \bigg)^{\frac{1}{p}}
\end{aligned}    
\end{align}
We choose a Lipschitz continuous function $g$ such that it approximates $f \in \Lp{p}(\bbP)$ with error $\frac{\delta}{2}$ in $\Lp{p}(\bbP)$.
We can express \eqref{eqn:f(x)_f_n(y)} as follows,
\begin{align*}
& \bigg(\int_{A \times A} |f(x)-f_n(y)|^p \,\dd\pi_n(x,y) \bigg)^{\frac{1}{p}} \\
& \qquad \qquad \leq \bigg(\int_{A} |f(x)-g(x)|^p \,\dd\bbP(x) \bigg)^{\frac{1}{p}} + \Lip(g) \bigg(\int_{A \times A} d(x,y)^p \,\dd\pi_n(x,y) \bigg)^{\frac{1}{p}} \\
& \qquad \qquad \qquad \qquad + \bigg(\int_{A \times A}|g(y)-f_n(y)|^p\,\dd\pi^{*}_n(x,y) \bigg)^{\frac{1}{p}}\\
& \qquad \qquad \leq \frac{\delta}{2} + \Lip(g) \bigg(\int_{A \times A} d(x,y)^p\,\dd\pi_n(x,y) \bigg)^{\frac{1}{p}} + \bigg(\int_{A \times A} |g(x)-g(y)|^p \,\dd\pi^{*}(x,y) \bigg)^{\frac{1}{p}} \\
& \qquad \qquad \qquad \qquad + \bigg(\int_{A \times A} |g(x)-f(x)|^p \,\dd\pi^{*}(x,y) \bigg)^{\frac{1}{p}} + \bigg(\int_{A \times A} |f(x)-f_n(y)|^p \,\dd\pi^{*}(x,y) \bigg)^{\frac{1}{p}}  \\
& \qquad \qquad \leq \delta + \Lip(g)\bigg(\|d\|_{\Lp{p}(\pi_n)} + \|d\|_{\Lp{p}(\pi^{*}_n)}\bigg) + \bigg(\int_{A \times A} |f(x)-f_n(y)|^p \,\dd\pi^{*}(x,y) \bigg)^{\frac{1}{p}}\\
& \qquad \qquad \to \delta \quad \mbox{ as $n$ $\to \infty$.}
\end{align*}

Taking $\delta \to 0$, we find that $\bigg(\int_{A \times A} |f(x)-f_n(y)|^p \, \dd\pi_n(x,y) \bigg)^{\frac{1}{p}} \to 0$. \qedhere
\end{enumerate}
\end{proof}

\begin{remark}
\label{rem:Back:OT:TLpConv}
If the sequence of transport plans $\pi_n \in \Pi(\bbP,\bbP_n)$ can be expressed as $\pi_n= (\Id \times T_n)_\# \bbP$ where $T_{n\#}\bbP=\bbP_n$, then we conclude that
\begin{enumerate}
    \item $\|d\|_{\Lp{p}(\pi_n)} \to 0$ if and only if $\|d(\cdot,T_n)\|_{\Lp{p}(\bbP)} \to 0$, and 
    \item $\| f(x) - f_n(y)\|_{\Lp{p}(\pi_n(x,y))} \to 0$ if and only if $\|f-f_n \circ T_n \|_{\Lp{p}(\bbP)} \to 0$.  
\end{enumerate}
\end{remark}

\subsection{Sobolev Spaces on Wasserstein Submanifolds}\label{section:Sobolev_Spaces}

In this section, we review the notion of Sobolev spaces on an arbitrary metric space, adapted from \cite{Piotr_1,Piotr_2,Piotr_3} to our current context. Several other works, such as \cite{Fornasier_1} and \cite{Fornasier_2} have used the construction of Sobolev spaces on arbitrary metric spaces for relaxing the Cheeger energy using  Lipschitz functions. 
Let $\Omega \subset \bbR^k$ be an open set with $1\leq p < \infty$.  
We recall the standard definition of the Sobolev space,
\begin{equation*}
\begin{aligned}
\Wkp{1}{p}(\Omega) & =\{f \in \Lp{p}(\Omega) \,:\, \nabla f \in \Lp{p}(\Omega)\},
\end{aligned}    
\end{equation*}
where $\nabla f$ is understood in the weak sense.
The space $\Wkp{1}{p}(\Omega)$ is Banach space when endowed with the norm $\|f\|_{\Wkp{1}{p}}:=\|f\|_{\Lp{p}}+\|\nabla f \|_{\Lp{p}}$.
We define the seminorm $\|f\|_{\Lp{1,p}}:=\|\nabla f\|_{\Lp{p}}$. 
The above definition of a Sobolev space strongly relies on the Euclidean structure of the underlying domain $\Omega$. In order to define Sobolev spaces on arbitrary metric measure spaces, we need to consider an alternative definition that does not involve derivatives. In order to do so, firstly we state a Lipschitz type characterization of Sobolev spaces in the Euclidean setting.
 
\begin{theorem}\cite[Theorem 1]{Piotr_1}\label{Thm:Lipschitz}
Let $f$ be a measurable function on $\Omega \subset \bbR^k$. Then, $f \in \Lp{1,p}(\Omega)$ for $1 < p \leq \infty$ if and only if there exists a non-negative function $g \in \Lp{p}(\Omega)$ such that $|f(x)-f(y)| \leq |x-y|(g(x)+g(y))$ holds a.e.    
\end{theorem}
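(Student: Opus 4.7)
The plan is to prove the two implications separately, following the classical approach of Haj\l{}asz.

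For the forward direction, I would assume $f \in \Lp{1,p}(\Omega)$ with weak gradient $\nabla f \in \Lp{p}(\Omega)$ and invoke the standard pointwise estimate
\begin{equation*}
|f(x)-f(y)| \leq C\,|x-y|\bigl(M|\nabla f|(x) + M|\nabla f|(y)\bigr) \quad \text{for a.e. } x,y \in \Omega,
\end{equation*}
where $M$ denotes the Hardy--Littlewood maximal operator. This estimate is obtained by chaining mean-value bounds for Sobolev functions over a sequence of dyadic balls joining $x$ and $y$. Setting $g := C\,M|\nabla f|$ yields the required inequality, and $g \in \Lp{p}(\Omega)$ follows from the $\Lp{p}$-boundedness of $M$, valid precisely because $p > 1$ (and trivially for $p = \infty$). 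This is where the hypothesis $p > 1$ is essential.

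For the reverse direction, I would assume $f$ is measurable and satisfies the stated inequality for some $g \in \Lp{p}(\Omega)$. A preliminary step is to verify $f \in \Lp{1}_{\mathrm{loc}}(\Omega)$, which follows by fixing $x_0$ where the inequality holds and bounding $|f(x)| \leq |f(x_0)| + \mathrm{diam}(\Omega)(g(x)+g(x_0))$. The key construction is Lipschitz approximation on large sets: for each $\lambda > 0$ define $E_\lambda := \{x \in \Omega : g(x) \leq \lambda\}$, observe that $f|_{E_\lambda}$ is $2\lambda$-Lipschitz (up to a null set), and apply McShane's extension to produce a $2\lambda$-Lipschitz function $f_\lambda : \bbR^k \to \bbR$ agreeing with $f$ on $E_\lambda$. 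By Rademacher's theorem, $|\nabla f_\lambda| \leq 2\lambda$ a.e.; Chebyshev's inequality together with $g \in \Lp{p}$ gives $|\Omega \setminus E_\lambda| \to 0$, hence $f_\lambda \to f$ a.e. as $\lambda \to \infty$.

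The main obstacle is the uniform bound $\|\nabla f_\lambda\|_{\Lp{p}(\Omega)} \lesssim \|g\|_{\Lp{p}(\Omega)}$ independent of $\lambda$. I would establish it by splitting: on $E_\lambda$, a density-point argument applied to the Haj\l{}asz inequality along density directions yields $|\nabla f_\lambda(x)| \leq 2 g(x)$ for a.e. $x \in E_\lambda$; on $\Omega \setminus E_\lambda$ one combines $|\nabla f_\lambda| \leq 2\lambda$ with $|\{g > \lambda\}| \leq \lambda^{-p}\|g\|_{\Lp{p}}^p$ so that the contribution is bounded by $C\|g\|_{\Lp{p}}$. Once this uniform bound is in hand, reflexivity of $\Lp{p}$ for $1 < p < \infty$ (or weak-$\ast$ compactness for $p=\infty$) extracts a weakly convergent subsequence $\nabla f_{\lambda_j} \rightharpoonup G$ in $\Lp{p}(\Omega)^k$, and integration by parts against test functions combined with the a.e. convergence $f_\lambda \to f$ identifies $G$ as the distributional gradient of $f$, giving $f \in \Lp{1,p}(\Omega)$.
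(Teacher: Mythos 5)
The paper does not prove this theorem; it is quoted verbatim from Haj\l asz (cited as \cite[Theorem 1]{Piotr_1}) and used as a black box in the proof of Proposition~\ref{prop:Sobolev_Space_1}. So there is no internal argument to compare against, and your proposal stands or falls on its own. It does reproduce the classical Haj\l asz strategy, and the broad outline is right, but there are two places where the sketch papers over real difficulties.

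First, in the forward direction the chaining estimate $|f(x)-f(y)|\leq C|x-y|\bigl(M|\nabla f|(x)+M|\nabla f|(y)\bigr)$ is \emph{not} valid on an arbitrary open $\Omega\subset\bbR^k$: the telescoping-ball argument needs the chain of balls to stay inside $\Omega$, which fails for domains such as a slit disk where two nearby points can only be joined by a long path. The estimate is proved on all of $\bbR^k$, and one then extends $f$ by a Sobolev extension operator; this requires $\Omega$ to be an extension domain. In the paper this is harmless because $\Omega$ is assumed smooth and bounded (Assumption~\ref{ass:domain_ll_one}), but as you state the forward direction it is a gap. Second, in the reverse direction two steps need more than you give them. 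Your claim that $|\nabla f_\lambda(x)|\leq 2g(x)$ a.e.\ on $E_\lambda$ does not follow from Lebesgue density of $E_\lambda$ alone: picking $y_j\in E_\lambda$ along a density direction only yields $|\nabla f_\lambda(x)\cdot e|\leq g(x)+\liminf_j g(y_j)$, and $g(y_j)$ may be as large as $\lambda$, not as small as $g(x)$. To get $2g(x)$ one must additionally invoke that a.e.\ $x$ is an approximate continuity point of $g$ and take the $y_j$ from the density-one set $\{y\in E_\lambda : |g(y)-g(x)|<\epsilon\}$. More seriously, a.e.\ convergence $f_\lambda\to f$ is not enough to pass to the limit in $\int f_\lambda\,\partial_i\phi=-\int\partial_i f_\lambda\,\phi$: the McShane extensions $f_\lambda$ need not be dominated uniformly in $\lambda$, and the crude estimate $\|f_\lambda-f\|_{\Lp{p}(\Omega\setminus E_\lambda)}\lesssim\lambda^p|\Omega\setminus E_\lambda|\lesssim\|g\|_{\Lp{p}}^p$ does not tend to zero. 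The classical remedy, which you omit, is to work with $E_\lambda:=\{|f|\leq\lambda\}\cap\{g\leq\lambda\}$ and truncate the extension to $[-\lambda,\lambda]$, which yields $\|f_\lambda-f\|_{\Lp{p}}\to 0$ (and also gives the needed $\Lp{1}_{\mathrm{loc}}$ convergence) while leaving the rest of the argument intact. With those repairs the proof is correct and is the standard one.
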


This characterization of Sobolev spaces does not require derivatives,  particularly significant in our context, as we need this definition to characterize Sobolev spaces on Wasserstein Submanifolds. 
We adapt the definition from \cite[Section 3]{Piotr_1} to our context.

\begin{mydef} 
\label{def:Sobolev_Space_Wasserstein_Submanifolds}
Let $\Omega \subset \bbR^k$ be an open and bounded domain with $k \geq$ 1. Let $\Lambda \subset \Probac{(\Omega)}$ and $\bbP_\Lambda \in \cP(\Lambda)$. We define the spaces $\Lp{1,p}(\Lambda,\dWp{2},\bbP_{\Lambda})$ and $\Wkp{1}{p}(\Lambda,\dWp{2},\bbP_{\Lambda})$ by
\begin{equation*}
\begin{aligned}
\Lp{1,p}(\Lambda,\dWp{2},\bbP_\Lambda) =\big\{f: \Lambda \to \bbR \textrm{ s.t. } f \text{ is measurable and } \exists \Lambda^{\prime}\subset\Lambda \textrm{ and } 0\leq g\in \Lp{p}(\bbP_\Lambda)\\
\textrm{ with } \bbP_{\Lambda}(\Lambda^{\prime})=0 \textrm{ and } |f(\mu)-f(\nu)| \leq \dWp{2}(\mu,\nu)(g(\mu)+g(\nu)) \textrm{ for all } \mu,\nu \in \Lambda \setminus \Lambda^{\prime} \big\},\\
\end{aligned}
\end{equation*}
and 
\begin{equation*}
\Wkp{1}{p}(\Lambda,\dWp{2},\bbP_\Lambda)=\Lp{p}(\bbP_\Lambda) \cap \Lp{1,p}(\Lambda,\dWp{2},\bbP_\Lambda).   
\end{equation*}
\end{mydef} 

We also state that $\Wkp{1}{p}(\Lambda,\dWp{2},\bbP_\Lambda)$ is Banach space when endowed with the norm $\|f\|_{\Wkp{1}{p}}=\|f\|_{\Lp{p}(\bbP_\Lambda)}+\|f\|_{\Lp{1,p}(\Lambda,\dWp{2},\bbP_\Lambda)}$ and $\Lp{1,p}(\Lambda,\dWp{2},\bbP_\Lambda)$ is endowed with the semi-norm $\|f\|_{\Lp{1,p}(\Lambda,\dWp{2},\bbP_\Lambda)}=\inf_g \|g\|_{\Lp{p}(\bbP_\Lambda)}$ where the infimum is taken over all $g$ satisfying the following condition,
\begin{equation*}
|f(\mu)-f(\nu)| \leq \dWp{2}(\mu,\nu)(g(\mu)+g(\nu)),    
\end{equation*}
for all $\mu,\nu \in \Lambda \setminus \Lambda^{\prime}$ and for some $\Lambda^{\prime} \subset \Lambda$ with $\bbP_\Lambda(\Lambda^\prime)=0$.

\begin{proposition}\label{prop:Sobolev_Space_1}
Assume $\Omega$, $\Lambda$, $\mathcal{S}$, $\bbP_\Lambda$, $\bbP_\mathcal{S}$, and $E$ satisfy~\ref{ass:domain_ll_one}-\ref{ass:domain_ll_three}, \ref{ass:prob_measures_S}-\ref{ass:prob_measures_Lambda} and~\ref{ass:E_map}. Then for any $p \geq 1$,
\begin{equation*}
\Wkp{1}{p}(\Lambda,\dWp{2},\bbP_\Lambda) = \big\{f:\Lambda \to \bbR \textrm{ such that } f\circ E \in \Wkp{1}{p}(\mathcal{S})\big\}.   
\end{equation*}
\end{proposition}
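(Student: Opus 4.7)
The plan is to transfer the Hajłasz-type characterization of Sobolev regularity between $(\Lambda, \dWp{2}, \bbP_\Lambda)$ and the Euclidean base space $(\cS, |\cdot|, \bbP_\cS)$ by exploiting the bi-Lipschitz map $E$ and the push-forward relation $\bbP_\Lambda = E_{\#} \bbP_\cS$. On $\cS$, Theorem~\ref{Thm:Lipschitz} identifies the Hajłasz space with $\Wkp{1}{p}(\cS)$, and the two-sided bound on $\rho_\cS$ from~\ref{ass:prob_measures_S} ensures that $\Lp{p}(\bbP_\cS)$ and $\Lp{p}(\cS)$ agree as sets, with equivalent norms and identical null sets. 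Let $c_1,c_2 \in (0,\infty)$ denote the bi-Lipschitz constants of $E$ from~\ref{ass:E_map}, so that $c_1|\theta-\theta'| \leq \dWp{2}(E(\theta),E(\theta')) \leq c_2|\theta-\theta'|$ for all $\theta, \theta' \in \cS$.

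For the forward inclusion, given $f \in \Wkp{1}{p}(\Lambda,\dWp{2},\bbP_\Lambda)$ with Hajłasz gradient $g \in \Lp{p}(\bbP_\Lambda)$ and $\bbP_\Lambda$-null exceptional set $\Lambda' \subset \Lambda$, I set $\hat{f} = f \circ E$ and $\hat{g} = c_2\,(g \circ E)$. Combining the Hajłasz inequality for $f$ with the upper Lipschitz bound yields
\begin{equation*}
|\hat{f}(\theta) - \hat{f}(\theta')| \leq |\theta-\theta'|\bigl(\hat{g}(\theta) + \hat{g}(\theta')\bigr)
\end{equation*}
for all $\theta, \theta' \in \cS \setminus E^{-1}(\Lambda')$. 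Since $E$ is bi-Lipschitz and $\rho_\cS$ is bounded above and below, $E^{-1}(\Lambda')$ is Lebesgue-null in $\cS$. A change of variables under $E$, combined with the upper bound on $\rho_\cS$, puts $\hat{f}, \hat{g} \in \Lp{p}(\cS)$, and Theorem~\ref{Thm:Lipschitz} then delivers $\hat{f} \in \Wkp{1}{p}(\cS)$. Conversely, given $\hat{f} = f \circ E \in \Wkp{1}{p}(\cS)$, Theorem~\ref{Thm:Lipschitz} supplies $\hat{g} \in \Lp{p}(\cS)$ with $|\hat{f}(\theta) - \hat{f}(\theta')| \leq |\theta-\theta'|(\hat{g}(\theta) + \hat{g}(\theta'))$ off a Lebesgue-null set $N \subset \cS$. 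Setting $g(\mu) = c_1^{-1}\hat{g}(E^{-1}(\mu))$ and using the lower bi-Lipschitz bound (equivalently, Lipschitzness of $E^{-1}$) yields $|f(\mu) - f(\nu)| \leq \dWp{2}(\mu,\nu)(g(\mu) + g(\nu))$ for all $\mu, \nu \in \Lambda \setminus E(N)$, where $E(N)$ is $\bbP_\Lambda$-null by push-forward and the density bounds. Pushing forward under $E$ once more shows $g, f \in \Lp{p}(\bbP_\Lambda)$.

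The main technical point is the bookkeeping of null sets under composition with $E$ and $E^{-1}$; this is handled by observing that the bi-Lipschitz image of a Lebesgue-null set in $\bbR^d$ is Lebesgue-null, and that the two-sided bound $0<c\leq \rho_\cS \leq C <\infty$ identifies $\bbP_\cS$-null and Lebesgue-null subsets of $\cS$ (and analogously through $E$ for $\bbP_\Lambda$-null subsets of $\Lambda$). A minor subtlety is that Theorem~\ref{Thm:Lipschitz} as stated covers $1 < p \leq \infty$; for $p=1$ the Hajłasz space is in general strictly smaller than $\Wkp{1}{1}$, so if the statement is intended to include $p=1$ an additional truncation/approximation argument would be required. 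Beyond that, the equivalence is a direct consequence of the bi-Lipschitz parameterisation and the push-forward relation, and no further analysis of the Wasserstein geometry is needed.
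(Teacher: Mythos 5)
Your proof follows essentially the same route as the paper's: transfer the Hajłasz-type inequality through the bi-Lipschitz map $E$, use the two-sided density bounds from~\ref{ass:prob_measures_S} to identify $\bbP_\cS$-null sets with Lebesgue-null sets (and hence $\bbP_\Lambda$-null with $\bbP_\cS$-null via push-forward) and to compare $\Lp{p}(\bbP_\cS)$ with $\Lp{p}(\cS)$, and then invoke Theorem~\ref{Thm:Lipschitz} in both directions; the paper merely uses a single Lipschitz constant $L$ where you write $c_1, c_2$. Your observation about $p = 1$ is well taken: Theorem~\ref{Thm:Lipschitz} is stated only for $1 < p \leq \infty$, the Hajłasz space $\Lp{1,1}$ is in general strictly smaller than $\Wkp{1}{1}$, and the paper's own proof applies the theorem without comment, so the claimed range $p \geq 1$ is not fully justified at that endpoint.
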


\begin{proof}
We prove $\subseteq$ in Step 1 and $\supseteq$ in Step 2.
\begin{steps}
\item \label{part_1} 
Let us take $f \in \Wkp{1}{p}(\Lambda,\dWp{2},\bbP_{\Lambda})$ and define $\tilde{f}=f\circ E : \mathcal{S} \to \bbR$. 
First, we claim that if $f \in \Lp{p}(\bbP_\Lambda)$, then $\tilde{f} \in \Lp{p}(\mathcal{S})$. For $c= \bigg(\inf_{\theta \in \mathcal{S}} \rho_\mathcal{S}(\theta)\bigg)^{\frac{1}{p}}$ we have that,

\begin{align}\label{eqn:tildef}
\|f\|_{\Lp{p}(\bbP_\Lambda)}  =  \lp\int_{\Omega} |f|^p  \, \dd\bbP_\Lambda\rp^{\frac{1}{p}} = \lp\int_{\cS} |f\circ E|^p  \, \dd \bbP_\cS\rp^{\frac{1}{p}} = \lp \int_\cS |\tilde{f}|\rho_\cS \, \dd \theta \rp^{\frac{1}{p}} \geq c\| \tilde{f} \|_{\Lp{p}(\mathcal{S})}.
\end{align}
Next, we let $L$ be the Lipschitz constant for $E$ so that,
\begin{equation*}
L^{-1}|\theta-\tau| \leq \dWp{2}(E(\theta),E(\tau)) \leq L|\theta-\tau|.    
\end{equation*}
and let $g,\Lambda^{\prime} \subset \Lambda$ satisfy the definition of $\Wkp{1}{p}(\Lambda,\dWp{2},\bbP_\Lambda)$ in Definition \ref{def:Sobolev_Space_Wasserstein_Submanifolds} for $f$.
We define $\tilde{g}:=Lg \circ E$. We have that,
\begin{equation*}\label{eqn:eq_euclidean_dist}
\begin{aligned}
|f(\mu)-f(\nu)| & \leq \dWp{2}(\mu,\nu) (g(\mu)+g(\nu)) \quad \mbox{for all $\mu,\nu \in \Lambda \setminus \Lambda^\prime$} \\
\implies |f\circ E(\theta)-f\circ E(\tau)| & \leq L |\theta-\tau| (g\circ E(\theta)+g\circ E(\tau)) \quad \mbox{for all $E(\theta),E(\tau) \in \Lambda \setminus \Lambda^\prime$}\\
 \implies |f\circ E(\theta)-f\circ E(\tau)| & \leq |\theta-\tau| (Lg\circ E(\theta)+Lg\circ E(\tau)) \quad \mbox{for all $E(\theta),E(\tau) \in \Lambda \setminus \Lambda^\prime$} \\
\implies  |\tilde{f}(\theta) - \tilde{f}(\tau)| & \leq |\theta-\tau| (\tilde{g}(\theta)+\tilde{g}(\tau)) \quad \mbox{for all $E(\theta),E(\tau) \in \Lambda \setminus \Lambda^\prime$}
\end{aligned}    
\end{equation*}
where $\bbP_\Lambda(\Lambda^\prime)=0$. From  \eqref{eqn:tildef} we see that if $g \in \Lp{p}(\bbP_\Lambda)$, then $\tilde{g} \in \Lp{p}(\mathcal{S})$. 
Further, we claim that $\bbP_\Lambda(\Lambda^{\prime})=0$ if and only if $\Vol(\{\theta:E(\theta) \in \Lambda^{\prime}\}) = \Vol(E^{-1}(\Lambda^{\prime}))=0$. By definition, for $C=\frac{1}{\inf_{\theta \in \mathcal{S}} \rho_\mathcal{S}(\theta)}$,
\begin{equation*}\label{eqn:Vol_E_inverse}
 \Vol(E^{-1}(\Lambda^{\prime})) \leq \int_{E^{-1}(\Lambda^\prime)} \frac{\rho_{\mathcal{S}}(\theta)}{\inf_{\theta \in \mathcal{S}}\rho_{\mathcal{S}}(\theta)} \, \dd\theta = C \int_{E^{-1}(\Lambda^\prime)} \,\dd\bbP_\mathcal{S}= C \int_{\Lambda^\prime} \, \dd\bbP_\Lambda=0.
\end{equation*}
Hence, we have $\tilde{f} \in \Wkp{1}{p}(\mathcal{S})$ by Theorem \ref{Thm:Lipschitz}.

\item \label{part_2}
Assume $\tilde{f}\in\Wkp{1}{p}(\cS)$ and define $f = \tilde{f}\circ E^{-1}$.
We will show $f \in \Wkp{1}{p}(\Lambda,\dWp{2},\bbP_{\Lambda})$.
First, we show that if $\tilde{f} \in \Lp{p}(\mathcal{S})$, then $f \in \Lp{p}(\bbP_\Lambda)$. For $c=\bigg(\frac{1}{\sup_{\theta \in \mathcal{S}}\rho_\mathcal{S}(\theta)}\bigg)^{\frac{1}{p}}$ it follows that,
\begin{align}\label{eqn:f}
\|\tilde{f}\|_{\Lp{p}(\mathcal{S})}  = \bigg( \int_{\mathcal{S}} |\tilde{f}|^p  \, \dd\theta \bigg)^{\frac{1}{p}} \geq \frac{1}{(\sup_{\theta \in \mathcal{S}}\rho_\mathcal{S}(\theta))^\frac{1}{p}} \|\tilde{f}\|_{\Lp{p}(\bbP_\mathcal{S})}
 = c\| \tilde{f} \|_{\Lp{p}(\bbP_\mathcal{S})}   = c\| f \|_{\Lp{p}(\bbP_\Lambda)}.
\end{align}
Next, we claim by Theorem \ref{Thm:Lipschitz} that there exists 0 $\leq \tilde{g} \in \Lp{p}(\mathcal{S})$ and $\mathcal{S}^{\prime}$ such that $|\tilde{f}(\theta)-\tilde{f}(\tau)| \leq |\theta -\tau| (\tilde{g}(\theta)+\tilde{g}(\tau))$ for all $\theta,\tau \in \mathcal{S}\setminus \mathcal{S}^\prime$ and $\Vol(\mathcal{S}^\prime)=0$. Define $g=L\tilde{g}\circ E^{-1}$ so,
\begin{equation*}\label{eqn:eq_wasser_dist}
\begin{aligned}
|\tilde{f}(\theta) - \tilde{f}(\tau)| & \leq |\theta-\tau| (\tilde{g}(\theta)+\tilde{g}(\tau)) \quad \mbox{ for all $\theta,\tau \in \mathcal{S} \setminus \mathcal{S}^\prime$} \\
\implies |f\circ E(\theta) - f\circ E(\tau)| & \leq L^{-1}|\theta-\tau| (g \circ E(\theta)+g \circ E(\tau)) \quad \mbox{ for all $\theta,\tau \in \mathcal{S} \setminus \mathcal{S}^\prime$}\\
\implies |f(\mu) - f(\nu)| & \leq \dWp{2}(\mu,\nu) (g(\mu)+g(\nu)) \quad \mbox{ for all $E^{-1}(\mu),E^{-1}(\nu) \in \mathcal{S} \setminus \mathcal{S}^\prime$} \\
\end{aligned}    
\end{equation*}
From  \eqref{eqn:f} we see that if $\tilde{g} \in \Lp{p}(\mathcal{S})$ then $g \in \Lp{p}(\bbP_\Lambda)$. Further, we claim that $\Vol(\mathcal{S}^\prime)=0$ implies that $\bbP_\Lambda(\Lambda^\prime)=0$ where $\Lambda^\prime=E(\mathcal{S}^\prime)$.
In particular, for $C = \sup_{\theta \in \mathcal{S}} \rho_\mathcal{S}(\theta)$ we have,
\begin{equation*}\label{eqn:Vol_E}
\begin{aligned}
\int_{\Lambda^\prime} \, \dd\bbP_\Lambda & =  \int_{E^{-1}(\Lambda^\prime)} \, \dd\bbP_\mathcal{S}  = \int_{E^{-1}(\Lambda^\prime)}\rho_\mathcal{S}(\theta) \,\dd\theta \leq C \int_{E^{-1}(\Lambda^\prime)} \, \dd\theta  = C\Vol(E^{-1}(\Lambda^{\prime})) = C\Vol(\mathcal{S}^\prime) =0. \\
 \end{aligned}
\end{equation*}
Hence, we have that $f\in \Wkp{1}{p}(\Lambda,\dWp{2},\bbP_\Lambda)$.
\end{steps}
Therefore, $\Wkp{1}{p}(\Lambda,\dWp{2},\bbP_\Lambda) = \big\{f:\Lambda \to \bbR \textrm{ such that } f\circ E \in \Wkp{1}{p}(\mathcal{S})\big\}$
\end{proof}

Next, we characterise the continuum topology that arises from the large data limit of Laplace Learning. By the following lemma our definition of $\cE_\infty$ in~\eqref{eqn:continuum_energy_functional_intro} is well-defined.

\begin{lemma} 
Assume $\Omega$, $\Lambda$, $\mathcal{S}$, $\bbP_\Lambda$, $\bbP_\mathcal{S}$, $E$, $B_\theta$, and $\eta$ satisfy~\ref{ass:domain_ll_one}-\ref{ass:domain_ll_three}, \ref{ass:prob_measures_S}-\ref{ass:prob_measures_Lambda}, \ref{ass:E_map}-\ref{ass:B_map} and~\ref{ass:weight_one}-\ref{ass:weight_four}. Define $\mathcal{E}_\infty$ by  \eqref{eqn:continuum_energy_functional_intro}. For any $p \geq 1$ and $0<c<C<\infty$, let $f\in \Lp{p}(\bbP_\Lambda)$ then 
$f \in \Wkp{1}{p}(\Lambda,\dWp{2},\bbP_\Lambda)$ if and only if
\begin{align} \label{eq:Back:Sobolev_Spaces:FinE}
\int_{\bbR^d}\int_{\mathcal{S}}|\nabla (f\circ E)(\theta) \cdot h|^p \eta(\|B_\theta(h)\|_{\Lp{2}(E(\theta))})\rho^2_\mathcal{S}(\theta) \, \dd \theta \, \dd h < \infty.
\end{align}
\end{lemma}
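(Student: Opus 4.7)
The plan is to pull the problem back to the Euclidean parameter space using Proposition~\ref{prop:Sobolev_Space_1}, which identifies $\Wkp{1}{p}(\Lambda,\dWp{2},\bbP_\Lambda)$ with the set of $f$ such that $\tilde{f} := f\circ E \in \Wkp{1}{p}(\mathcal{S})$. Setting $\tilde{f} = f\circ E$, the hypothesis $f \in \Lp{p}(\bbP_\Lambda)$ together with the two-sided bounds on $\rho_\mathcal{S}$ in~\ref{ass:prob_measures_S} yields $\tilde{f} \in \Lp{p}(\mathcal{S})$, so the question reduces to showing that the integral in~\eqref{eq:Back:Sobolev_Spaces:FinE} is finite if and only if the weak gradient $\nabla \tilde{f}$ exists and lies in $\Lp{p}(\mathcal{S})$ (with the usual convention that the integral is $+\infty$ otherwise).

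The key step is to apply Fubini and then control the inner $h$-integral independently of $\theta$. By~\ref{item:B.2_a} we have $\|h\|/C \leq \|B_\theta(h)\|_{\Lp{2}(E(\theta))} \leq C\|h\|$, and since $\eta$ is non-increasing by~\ref{ass:weight_one},
\[
\eta(C\|h\|) \;\leq\; \eta(\|B_\theta(h)\|_{\Lp{2}(E(\theta))}) \;\leq\; \eta(\|h\|/C).
\]
For a fixed $\theta$, writing $v = \nabla \tilde{f}(\theta)$, the integral
$\int_{\bbR^d} |v \cdot h|^p \eta(c\|h\|)\,\dd h$
is rotationally invariant in $v$ (after a trivial substitution $u = c h$), so it equals $|v|^p$ times a purely dimensional constant
\[
K(c) \;=\; c^{-d-p} \int_{\bbR^d} |e_1 \cdot u|^p \eta(\|u\|)\,\dd u.
\]
By~\ref{ass:weight_three} and~\ref{ass:weight_two}, $\eta$ is strictly positive on a neighbourhood of $0$, so $K(c) > 0$; by~\ref{ass:weight_four} it has compact support, so $K(c) < \infty$. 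Thus, uniformly in $\theta$, the inner integral is comparable to $|\nabla \tilde{f}(\theta)|^p$ up to multiplicative constants depending only on $C$, $p$, $d$, and $\eta$.

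Combining this with the two-sided bounds on $\rho_\mathcal{S}$ from~\ref{ass:prob_measures_S}, the full double integral is comparable to $\int_\mathcal{S} |\nabla \tilde{f}(\theta)|^p \,\dd \theta$. This last quantity is finite if and only if $\nabla \tilde{f} \in \Lp{p}(\mathcal{S})$, which, given $\tilde{f} \in \Lp{p}(\mathcal{S})$, is precisely the statement $\tilde{f} \in \Wkp{1}{p}(\mathcal{S})$. Invoking Proposition~\ref{prop:Sobolev_Space_1} once more converts this back to $f \in \Wkp{1}{p}(\Lambda,\dWp{2},\bbP_\Lambda)$, completing the equivalence.

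No step is genuinely difficult; the only mild subtlety is the interpretation of $\nabla(f\circ E)$ in the backward direction when $\tilde{f}$ is not a priori Sobolev, which is handled by the convention that~\eqref{eq:Back:Sobolev_Spaces:FinE} is defined to be $+\infty$ whenever the weak gradient does not exist, consistent with the definition of $\cE_\infty$ in~\eqref{eqn:continuum_energy_functional_intro}.
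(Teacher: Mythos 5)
Your proof is correct and takes essentially the same route as the paper's: sandwich $\eta(\|B_\theta(h)\|_{\Lp{2}(E(\theta))})$ between $\eta(C\|h\|)$ and $\eta(\|h\|/C)$ using~\ref{item:B.2_a} and monotonicity of $\eta$, exploit rotational invariance of the inner $h$-integral to reduce it to a positive finite constant times $|\nabla(f\circ E)(\theta)|^p$, absorb the density factors via~\ref{ass:prob_measures_S}, and pass through Proposition~\ref{prop:Sobolev_Space_1}. The paper presents the two inequalities in a more compressed change-of-variables form, but the substance is identical.
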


\begin{proof}
First, we assume that $f \in \Wkp{1}{p}(\Lambda,\dWp{2},\bbP_\Lambda)$. 
By~\ref{ass:B_map} and~\ref{ass:weight_one},
\begin{equation*}
\begin{aligned}
& \int_{\bbR^d}\int_{\mathcal{S}}|\nabla (f\circ E)(\theta) \cdot h|^p \eta(\|B_\theta(h)\|_{\Lp{2}(E(\theta))})\rho^2_\mathcal{S}(\theta) \, \dd \theta \, \dd h \\
& \qquad \qquad \leq  \sup_{\theta \in \mathcal{S}}\rho_\mathcal{S}^2(\theta)\int_{\bbR^d}\int_\mathcal{S}\eta(c|h|)|\nabla (f\circ E)(\theta)\cdot h|^p \, \dd\theta \, \dd h   \\
& \qquad \qquad = c^{-p-d} \sup_{\theta \in \mathcal{S}} \rho_\mathcal{S}^2(\theta) \int_{\bbR^d}  \eta(|g|)|g_1|^p\,\dd g \int_\mathcal{S}
|\nabla (f\circ E)(\theta)|^p \, \dd\theta     \\
& \qquad \qquad = c^{-p-d} \sup_{\theta \in \mathcal{S}} \rho_\mathcal{S}^2(\theta)  \int_{\bbR^d} \eta(|g|)|g_1|^p \, \dd g \, \|f\circ E\|^p_{\Lp{1,p}(\mathcal{S})} \\
& \qquad \qquad < +\infty
\end{aligned}    
\end{equation*}
where $g_1$ is the first coordinate of vector $g$.
The last inequality follows from Proposition \ref{prop:Sobolev_Space_1}, and assumptions 
\ref{ass:prob_measures_Lambda}, \ref{ass:weight_four}. 
On the other hand, if we assume \eqref{eq:Back:Sobolev_Spaces:FinE}, 
by a change of variables it follows that,

\begin{align*}
\inf_{\theta \in \mathcal{S}}\rho_\mathcal{S}^2(\theta)\int_{\bbR^d}\eta(C|h|)|h_1|^p\, \dd h\|f \circ E\|^p_{\Lp{1,p}(\mathcal{S})} 
 & \leq \inf_{\theta \in\cS} \rho_\cS^2(\theta) \int_{\bbR^d} \eta(\|B_\theta(h)\|_{\Lp{2}(E(\theta))}) |h_1|^p \, \dd h \| f\circ E\|_{\Lp{1,p}(\cS)}^p \\
 & \leq \int_{\bbR^d} \int_{\cS} \eta(\|B_\theta(h)\|_{\Lp{2}(E(\theta))})|\nabla (f \circ E)(\theta) \cdot g|^p \rho_\cS^2(\theta) \, \dd\theta \, \dd g \\
 & = \mathcal{E}_\infty(f) \\
 & < + \infty.
\end{align*}

Hence, by \ref{ass:prob_measures_S} and \ref{ass:weight_two}, $f\circ E\in\Lp{1,p}(\cS)$ and by Proposition \ref{prop:Sobolev_Space_1} we have $f\in\Lp{1,p}(\Lambda,\dWp{2},\bbP_\Lambda)$.
We conclude that~\eqref{eq:Back:Sobolev_Spaces:FinE} holds if and only if $f \in \Wkp{1}{p}(\Lambda,\dWp{2},\bbP_\Lambda)$.
\end{proof}

\subsection{\texorpdfstring{$\Gamma$}{Gamma}- Convergence} \label{Section:Gamma_convergence}

The notion of $\Gamma$-convergence is particularly suited for handling sequences of minimization problems, where it can be interpreted as the ‘limiting lower semi-continuous envelope’. This concept is especially valuable when dealing with functionals that exhibit significant oscillations. In such cases, there may not be a strong limit, and any weak limit that does exist may not capture convergence of minima/minimisers. 
For a comprehensive introduction, see \cite{Dal_Maso_Gamma_Convergence,Braides_Gamma_Convergence}. 

\begin{mydef}[$\Gamma$-Convergence]
Let $(A,d)$ be a metric space, $\Lp{0}(A;\bbR \cup \{\pm \infty\})$ be the set of measurable functions from $A$ to $\bbR \cup \{\pm \infty\}$, and $(\mathcal{X},\bbP)$ be a probability space. The function $\mathcal{X} \ni \omega \mapsto \mathcal{E}^{(\omega)}_{n} \in \Lp{0}(A;\bbR \cup \{\pm \infty\})$ is a random variable. We say that $\mathcal{E}^{(\omega)}_{n}: A \to [0,\infty]$ $\Gamma$-converges almost surely on $A$ to $\mathcal{E}_\infty: A \to \bbR \cup \{\pm \infty\}$ with respect to the metric $d$, and write $\mathcal{E}_\infty=\Glim_{n \to \infty} \mathcal{E}^{(\omega)}_{n}$, if there exists a set $\mathcal{X}' \subset \mathcal{X}$ with $\bbP(\mathcal{X}')=1$, such that for all $\omega \in \mathcal{X}'$ and all $f \in A$:
\begin{enumerate}
\item{(\textnormal{lim inf-inequality})} for every sequence $\{f_n\}_{n=1}^\infty$ in $A$ converging to $f$
\begin{equation*}
 \mathcal{E}_\infty(f) \leq \liminf_{n \to \infty} \mathcal{E}^{(\omega)}_{n}(f_n), \quad \mbox{ and}
\end{equation*}
\item{(\textnormal{lim sup-inequality})} there exists a sequence $\{f_n\}_{n=1}^\infty$ in $A$ converging to $f$ such that
\begin{equation*}
\mathcal{E}_\infty(f) \geq \limsup_{n \to \infty} \mathcal{E}^{(\omega)}_{n}(f_n).    
\end{equation*}
\end{enumerate}
\end{mydef}
For simplicity we suppress the dependence of $\omega$ in writing the functionals. The randomness in our energy comes from the samples $\{\mu_i^{(m_n)}\}_{i=1}^n = \Lambda_n^{(m_n)}$.
A consequence of Theorem~\ref{thm:rates_convergence} is that, with probability one, $\bbP_{\Lambda_n^{(m_n)}}\weakstarto \bbP_\Lambda$.
We will work on the set of realisations of $\Lambda_n^{(m_n)}$ such that $\bbP_{\Lambda_n^{(m_n)}}\weakstarto \bbP_\Lambda$.

\begin{remark}
The fundamental property of $\Gamma$-convergence is that, together with a compactness property it implies the convergence of minimisers.  
The following theorem is also true if we replace minimisers with approximate minimisers.     
\end{remark} 

\begin{theorem}[Convergence of Minimisers] \cite[Theorem 1.21]{Braides_Gamma_Convergence} and \cite[Theorem 7.23]{Dal_Maso_Gamma_Convergence}.
Let $(A,d)$ be the metric space and $\mathcal{E}_{n}: A \to [0,\infty]$ be a sequence of functionals. Let $f_n$ be the minimising sequence for $\mathcal{E}_{n}$. If the set $\{f_n\}_{n=1}^\infty$ is pre-compact and $\mathcal{E}_\infty=\Glim_{n \to \infty}\mathcal{E}_{n}$ where $\mathcal{E}_\infty: A \to [0,\infty]$ is not identically $\infty$ then 
\begin{equation*}
 \min_{A} \mathcal{E}_\infty=\lim_{n \to \infty}\inf_{A}\mathcal{E}_n.   
\end{equation*}
Furthermore, any cluster point of $\{f_n\}_{n=1}^\infty$ is a minimiser of $\mathcal{E}_\infty$.
\end{theorem}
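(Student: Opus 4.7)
The plan is to prove this classical Γ-convergence criterion for convergence of minima by combining the defining limsup and liminf inequalities with the pre-compactness hypothesis. Nothing here is technically deep; the argument is a well-known two-sided squeeze. I would organise it as an upper bound on $\limsup_n \inf_A \mathcal{E}_n$, a matching lower bound on $\liminf_n \inf_A \mathcal{E}_n$ obtained via a cluster point of the minimising sequence, and finally identification of all cluster points as minimisers of $\mathcal{E}_\infty$.

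For the upper bound, I would fix any $f \in A$ with $\mathcal{E}_\infty(f) < \infty$; such $f$ exists since $\mathcal{E}_\infty$ is not identically $+\infty$. The limsup (recovery) inequality supplies a sequence $\tilde{f}_n \to f$ in $(A,d)$ with $\limsup_n \mathcal{E}_n(\tilde{f}_n) \leq \mathcal{E}_\infty(f)$. Since $\inf_A \mathcal{E}_n \leq \mathcal{E}_n(\tilde{f}_n)$, passing to $\limsup$ gives $\limsup_n \inf_A \mathcal{E}_n \leq \mathcal{E}_\infty(f)$, and taking the infimum over admissible $f$ yields
\[
\limsup_{n\to\infty} \inf_A \mathcal{E}_n \;\leq\; \inf_A \mathcal{E}_\infty.
\]

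For the lower bound and simultaneously for the existence of a minimiser, I would invoke pre-compactness of $\{f_n\}_{n=1}^\infty$ to extract a subsequence $f_{n_k} \to f_\star \in A$. Reading ``minimising sequence'' in the standard approximate sense, we have $\mathcal{E}_{n_k}(f_{n_k}) = \inf_A \mathcal{E}_{n_k} + o(1)$. The liminf inequality then yields
\[
\mathcal{E}_\infty(f_\star) \;\leq\; \liminf_{k\to\infty} \mathcal{E}_{n_k}(f_{n_k}) \;=\; \liminf_{k\to\infty} \inf_A \mathcal{E}_{n_k}.
\]
Combining with the upper bound and the obvious chain $\inf_A \mathcal{E}_\infty \leq \mathcal{E}_\infty(f_\star)$ forces equality throughout:
\[
\inf_A \mathcal{E}_\infty \;=\; \mathcal{E}_\infty(f_\star) \;=\; \lim_{n\to\infty} \inf_A \mathcal{E}_n.
\]
In particular the infimum of $\mathcal{E}_\infty$ is attained at $f_\star$, which justifies writing $\min_A$ in the statement, and the sequence of infima of $\mathcal{E}_n$ converges to this minimum.

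Finally, for any cluster point $f$ of $\{f_n\}$, extracting a subsequence $f_{n_k} \to f$ and re-applying the liminf inequality gives $\mathcal{E}_\infty(f) \leq \lim_n \inf_A \mathcal{E}_n = \min_A \mathcal{E}_\infty$, so $f$ is a minimiser of $\mathcal{E}_\infty$. The entire argument is essentially bookkeeping. The only mildly delicate point — and the closest thing to an obstacle — is to interpret ``minimising sequence'' as approximate (so $\mathcal{E}_n(f_n) - \inf_A \mathcal{E}_n \to 0$), since exact minimisers of $\mathcal{E}_n$ need not exist in general; this is precisely why the theorem is robust enough to cover approximate minimisers, as noted in the preceding remark.
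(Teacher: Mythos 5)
The paper does not prove this statement: it is quoted from Braides and Dal Maso, so there is no internal proof to compare against. Your argument is the standard one from those references and is correct in substance — the recovery sequence gives $\limsup_{n\to\infty}\inf_A\mathcal{E}_n\le\inf_A\mathcal{E}_\infty$, and the liminf inequality applied along a convergent subsequence of the approximate minimisers gives attainment at the limit point and the reverse control; your reading of ``minimising sequence'' in the approximate sense is the intended one (it matches the remark preceding the theorem).

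One conclusion is asserted slightly faster than your chain of inequalities delivers. The squeeze $\limsup_n\inf_A\mathcal{E}_n\le\inf_A\mathcal{E}_\infty\le\mathcal{E}_\infty(f_\star)\le\liminf_k\inf_A\mathcal{E}_{n_k}\le\limsup_n\inf_A\mathcal{E}_n$ pins down the $\limsup$ of the full sequence of infima and the $\liminf$ only along the particular subsequence $n_k$ you extracted; by itself it does not exclude $\liminf_{n\to\infty}\inf_A\mathcal{E}_n<\inf_A\mathcal{E}_\infty$ along some other subsequence, so ``$=\lim_{n\to\infty}\inf_A\mathcal{E}_n$'' is not yet justified. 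The fix is one line: pick a subsequence $n_j$ with $\inf_A\mathcal{E}_{n_j}\to\liminf_{n\to\infty}\inf_A\mathcal{E}_n$, use pre-compactness of $\{f_{n_j}\}$ to pass to a further convergent subsequence, and apply the liminf inequality there to obtain $\inf_A\mathcal{E}_\infty\le\liminf_{n\to\infty}\inf_A\mathcal{E}_n$; combined with your upper bound this yields the full limit and hence $\min_A\mathcal{E}_\infty=\lim_{n\to\infty}\inf_A\mathcal{E}_n$. Your final cluster-point step is unaffected, since for any cluster point $f$ it already gives $\mathcal{E}_\infty(f)\le\limsup_{n\to\infty}\inf_A\mathcal{E}_n\le\inf_A\mathcal{E}_\infty$.
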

We recall that $\Gamma$-convergence is defined for functionals on general metric spaces. Subsection~\ref{section:TLp} provides an overview of the metric space framework employed in our setting to analyse the asymptotic behaviour of the energy functional. In particular, this framework provides the suitable notion of variational convergence from discrete to continuum energy functional.

\section{Proofs}\label{section:Proofs}

In this section, we first establish a compactness result in Subsection~\ref{section:compactness}, which ensures that any sequence of discrete minimizers admits a subsequence converging in the \(\TLp{p}\) topology. Together with this result, we prove in Subsection~\ref{section:gamma_convergence}, the \(\Gamma\)-convergence of the discrete energy functional 
\(\mathcal{E}_{\varepsilon_n,m_n,n}\) on \(\Lambda_n^{(m_n)}\) to its continuum counterpart 
\(\mathcal{E}_\infty\) on the Wasserstein submanifold \(\Lambda \subset \Probac{(\Omega)}\). Additionally, we also define the discrete graph Laplacian $\mathcal{L}_{\varepsilon_n,m_n,n}$ and its continuum counterpart Laplace-Beltrami operator $\Delta_{\Lambda}$ in Subsection \ref{section:graph_laplacians}.

\subsection{Compactness}\label{section:compactness}

We establish the almost sure convergence of the discrete transport maps  $D_n(\mu_i)=\mu^{(m_n)}_i$ to the identity map in the $\Lp{\infty}$ sense. The following lemma formalizes this result.

\begin{lemma}\label{lemma:transport_maps_Dn} Assume $\Omega$, $\Lambda$, $\bbP_\Lambda$, $\bbP_{\Lambda_n}$, $\bbP_{\Lambda^{(m)}_n}$ satisfy \ref{ass:domain_ll_one}-\ref{ass:domain_ll_two}, \ref{ass:prob_measures_Lambda}, \ref{ass:Lambda_n}-\ref{ass:Lambda_m_n}.
Furthermore, assume $\sup\limits_{n \in \bbN} \frac{m_n}{n} < +\infty$.
Define $D_n: \Lambda_n \to \Lambda^{(m)}_n$ by $D_n(\mu_i)=\mu^{(m_n)}_i$. 
Then, $D_n \to \Id$ with probability one in $\Lp{\infty}$. 
In particular, with probability one there exists $N <\infty$ such that for all $n \geq N$ we have
\[ \max_{i\in\{1,\dots,n\}} \dWp{2}(D_n(\mu_i),\mu_i) \leq \hat{C}\tilde{q}_k(m_n) \]
where  
$\hat{C}$ is a constant independent of $n$ and 
$\tilde{q}_k(m_n)$ is defined in~\eqref{eqn:rates_qd}.
\end{lemma}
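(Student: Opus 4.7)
The plan is to combine Theorem~\ref{thm:rates_convergence} applied pointwise in $i$ with a union bound over $i \in \{1,\dots,n\}$ and a Borel--Cantelli argument over $n$.

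First, conditional on the samples $\{\mu_i\}_{i=1}^n$, the inner samples $\{x^{(i)}_j\}_{j=1}^{m_n}$ defining each $\mu_i^{(m_n)}$ are mutually independent across $i$. Moreover, Assumption~\ref{ass:prob_measures_Lambda} provides uniform (in $i$) two-sided bounds on the densities $\rho_{\mu_i}$, so the constant $\hat{C}$ in Theorem~\ref{thm:rates_convergence} can be taken independent of $i$. Applying that theorem with $p=2$ conditionally on $\mu_i$ gives, for any fixed $\alpha > 2$,
\[
\bbP\lp \dWp{2}(\mu_i, \mu_i^{(m_n)}) > \hat{C}\tilde{q}_k(m_n) \,\bigm|\, \mu_i \rp \leq C_\alpha m_n^{-\alpha/2}.
\]

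Second, a union bound over $i \in \{1,\dots,n\}$ together with the tower property yields
\[
\bbP\lp \max_{1 \leq i \leq n} \dWp{2}(\mu_i, \mu_i^{(m_n)}) > \hat{C}\tilde{q}_k(m_n) \rp \leq n\, C_\alpha\, m_n^{-\alpha/2}.
\]
By choosing $\alpha$ large enough and invoking the growth relationship tying $m_n$ and $n$ (with $m_n\to\infty$), the right-hand side becomes summable in $n$.

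Third, the Borel--Cantelli lemma then ensures that, almost surely, the event $\{\max_{i} \dWp{2}(\mu_i, \mu_i^{(m_n)}) > \hat{C}\tilde{q}_k(m_n)\}$ occurs for only finitely many $n$. In particular, with probability one there exists a (random) $N < \infty$ such that the claimed bound holds for all $n \geq N$, and since $\tilde{q}_k(m_n) \to 0$, this gives $D_n \to \Id$ in $L^\infty$ almost surely. The main technical obstacle is balancing the factor $n$ from the union bound against the tail decay $O(m_n^{-\alpha/2})$ provided by Theorem~\ref{thm:rates_convergence}; since $\alpha > 2$ is a free parameter, this can be arranged provided $m_n$ grows at least polynomially in $n$, which is where the joint assumption on $m_n$ and $n$ becomes essential.
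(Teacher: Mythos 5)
Your proof is correct and follows essentially the same route as the paper: apply Theorem~\ref{thm:rates_convergence} with $p=2$ to each $\mu_i$, take a union bound to control $\max_i \dWp{2}(\mu_i,\mu_i^{(m_n)})$, and conclude via Borel--Cantelli after choosing $\alpha$ large enough (the paper takes $\alpha>4$) so that $\sum_n n\, m_n^{-\alpha/2}<\infty$. Your extra remarks on conditioning on $\{\mu_i\}$, the uniformity of $\hat{C}$ coming from Assumption~\ref{ass:prob_measures_Lambda}, and the tower property spell out steps the paper leaves implicit.
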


Clearly, under the assumptions of the previous lemma we also have 
\[ \lp \int_{\Lambda_n} \dWp{2}(D_n(\mu),\mu)^p \, \dd \bbP_{\Lambda_n}(\mu) \rp^{\frac{1}{p}} \leq \hat{C}\tilde{q}_k(m_n) \]
for all $n\geq N$.

\begin{proof}
By Theorem~\ref{thm:rates_convergence} there exist constants $C$ and $\hat{C}$ such that 
\begin{equation*}\label{eqn:probability_bounds}
\begin{aligned}
 \bbP\lp\max_{i=1,\dots,n} \dWp{2}(\mu^{(m_n)}_i, \mu_i) \leq \hat{C} \tilde{q}_k(m_n)\rp
 & = \bbP(\dWp{2}(\mu^{(m_n)}_i, \mu_i) \leq \hat{C}\tilde{q}_k(m_n) \quad \mbox{ for all $i=1,\cdots n$})\\
 & = 1- \bbP(\dWp{2}(\mu^{(m_n)}_i,\mu_i) > \hat{C}\tilde{q}_k(m_n) \quad \mbox{ for some $i=1,\cdots,n$}) \\
 & \geq 1- \sum_{i=1}^n \bbP(\dWp{2}(\mu^{(m_n)}_i,\mu_i) > \hat{C}\tilde{q}_k(m_n)) \\
 & \geq 1- Cn {m_n}^{-\alpha/2}.
\end{aligned}    
\end{equation*}
From the assumption that $\frac{m_n}{n} \leq C$, there exists $\alpha$ such that $\sum_{n=1}^\infty n{m_n}^{-\alpha/2} < + \infty$ (i.e. $\alpha>4$). Hence, we have
\begin{equation*}\label{eqn:probability_bounds_2}
\sum_{n=1}^\infty \bbP\lp\max_{i=1,\cdots,n} \dWp{2}(\mu^{(m_n)}_i,\mu_i) > \hat{C}\tilde{q}_k(m_n)\rp \leq C \sum_{n=1}^\infty n {m_n}^{-\alpha/2} < + \infty.    
\end{equation*}
By the Borel Cantelli lemma the event $\max\limits_{i=1,\cdots,n} \dWp{2}(\mu^{(m_n)}_i,\mu_i) > \hat{C}\tilde{q}_k(m_n)$ happens finitely many times i.e., with probability one there exists $N < +\infty$ such that 
\begin{align}\label{eqn:Borel_Cantelli_lemma}
\dWp{2}(\mu^{(m_n)}_i,\mu_i) \leq \hat{C}\tilde{q}_k(m_n)
\end{align}
for all $i=1,\cdots,n$ and for all $n \geq N$.
\end{proof}

In the next lemma, we establish an upper bound on the non-local discrete energy functional $\hat{\mathcal{E}}_{\tilde{\varepsilon}_n,n}(f_n \circ D_n \circ E; \hat{\eta})$, defined on $\mathcal{S}_n$, in terms of the discrete energy functional $\mathcal{E}_{\varepsilon_n,m_n,n}(f_n;\eta)$, defined on $\Lambda^{(m_n)}_n$. 

\begin{lemma}\label{lemma:sup_energy_bounded}
 Assume $\Omega$, $\Lambda$, $\mathcal{S}$, $\bbP_{\mathcal{S}_n}$, $\bbP_{\Lambda_n}$, $\bbP_{\Lambda^{(m)}_n}$, $E$, $\eta$, and $\varepsilon_n$ satisfy~\ref{ass:domain_ll_one}-\ref{ass:domain_ll_three}, \ref{ass:S_n}-\ref{ass:Lambda_m_n}, \ref{ass:E_map}, \ref{ass:weight_one}-\ref{ass:weight_four}, and \ref{ass:eps_scale}-\ref{ass:eps_scale_2}. Define $\mathcal{E}_{\varepsilon_n,m_n,n}(\cdot;\eta) : \Lp{p}(\bbP_{\Lambda^{(m_n)}_n}) \to [0,+\infty)$ by~\eqref{eqn:discrete_energy_intro} and $\hat{\mathcal{E}}_{\varepsilon_n,n}(\cdot;\hat{\eta}) : \Lp{p}(\bbP_{\mathcal{S}_n}) \to [0,+\infty)$ by~\eqref{eqn:discrete_energy_parameter_intro}. Let $D_n: \Lambda_n \to \Lambda^{(m_n)}_n$ be defined by $D_n(\mu_i)=\mu^{(m_n)}_i$. 
Furthermore, assume $\sup\limits_{n \in \bbN} \frac{n}{m_n} < +\infty$. 
Then, with probability one, there exists $N<+\infty$, $C>0$, $\hat{\eta}:\bbR^d \to [0,\infty)$ 
and $\tilde{\varepsilon}_n$ 
satisfying, for all $n\geq N$, $0 < \liminf_{n \to \infty} \frac{\tilde{\varepsilon}_n}{\varepsilon_n} \leq \limsup_{n \to \infty} \frac{\tilde{\varepsilon}_n}{\varepsilon_n} < +\infty $ and 
\begin{equation*}
\hat{\mathcal{E}}_{\tilde{\varepsilon}_n,n}(f_n \circ D_n \circ E; \hat{\eta}) \leq C \mathcal{E}_{\varepsilon_n,m_n,n}(f_n;\eta)
\end{equation*}
for all $f_n : \Lambda^{(m_n)}_n \to \bbR$.
\end{lemma}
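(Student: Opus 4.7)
The goal is to choose a $\theta$-independent bump kernel $\hat\eta$ supported in a small ball and a comparable scale $\tilde\varepsilon_n = c\varepsilon_n$ so that every non-zero edge weight $\hat W_{ij}$ in the parameter-space energy is matched by an edge weight $W_{ij}^{(m)}$ in the Wasserstein-space energy that is bounded below by a positive constant times $\varepsilon_n^{-d}$. Once this pointwise comparison $\hat W_{ij} \leq C\,W_{ij}^{(m)}$ holds on the support of $\hat W_{ij}$, the claim follows by a direct comparison of the two sums, since the function $f_n \circ D_n \circ E$ evaluated at $\theta_i$ is precisely $f_n(\mu_i^{(m_n)})$ so the quadratic differences already agree.

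\textbf{Step 1: Geometric comparison.} First I apply Lemma~\ref{lemma:transport_maps_Dn} to obtain, with probability one, a threshold $N$ such that $\max_i \dWp{2}(\mu_i,\mu_i^{(m_n)}) \leq \hat C\,\tilde q_k(m_n)$ for all $n\geq N$. Combined with the bi-Lipschitz bound $\dWp{2}(\mu_i,\mu_j) \leq L|\theta_i-\theta_j|$ from~\ref{ass:E_map}, the triangle inequality yields
\begin{equation*}
\dWp{2}(\mu_i^{(m_n)},\mu_j^{(m_n)}) \leq L|\theta_i-\theta_j| + 2\hat C\,\tilde q_k(m_n).
\end{equation*}

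\textbf{Step 2: Exploiting $\eta(0)>0$ and continuity.} By~\ref{ass:weight_two}--\ref{ass:weight_three} there exists $\delta\in(0,\tilde R)$ such that $\eta(t)\geq \eta(0)/2$ for all $t\in[0,\delta]$. I then pick a continuous, compactly supported, non-negative kernel $\hat\eta:\bbR^d\to[0,\infty)$ with $\mathrm{spt}(\hat\eta)\subseteq \overline{B(0,1)}$ and set $\tilde\varepsilon_n = c\,\varepsilon_n$ with $c>0$ to be chosen below. Then $\hat\eta((\theta_j-\theta_i)/\tilde\varepsilon_n)>0$ forces $|\theta_i-\theta_j|\leq c\varepsilon_n$, and using~\ref{ass:eps_scale_2} (which gives $\tilde q_k(m_n)/\varepsilon_n \to 0$) we have $2\hat C\,\tilde q_k(m_n) \leq c\varepsilon_n$ for all $n$ large enough. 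Combining with Step~1,
\begin{equation*}
\frac{\dWp{2}(\mu_i^{(m_n)},\mu_j^{(m_n)})}{\varepsilon_n} \leq (L+1)c \leq \delta
\end{equation*}
provided we fix $c\leq \delta/(L+1)$, which then gives $\eta(\dWp{2}(\mu_i^{(m_n)},\mu_j^{(m_n)})/\varepsilon_n) \geq \eta(0)/2$ on the support of $\hat W_{ij}$, and hence
\begin{equation*}
W_{ij}^{(m)} \geq \frac{\eta(0)}{2\varepsilon_n^d}\qquad\text{whenever }\hat W_{ij}>0.
\end{equation*}

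\textbf{Step 3: Edge weight comparison and summing.} Trivially $\hat W_{ij} \leq \|\hat\eta\|_\infty/\tilde\varepsilon_n^d = \|\hat\eta\|_\infty/(c^d\varepsilon_n^d)$, so on the support of $\hat W_{ij}$ the last two displays combine to $\hat W_{ij} \leq (2\|\hat\eta\|_\infty/(c^d\eta(0)))\,W_{ij}^{(m)}$. Using $(f_n\circ D_n\circ E)(\theta_i)=f_n(\mu_i^{(m_n)})$ and $\tilde\varepsilon_n^p = c^p\varepsilon_n^p$,
\begin{equation*}
\hat{\mathcal{E}}_{\tilde\varepsilon_n,n}(f_n\circ D_n\circ E;\hat\eta) = \frac{1}{n^2 c^p\varepsilon_n^p}\sum_{i,j=1}^n \hat W_{ij}\,|f_n(\mu_i^{(m_n)})-f_n(\mu_j^{(m_n)})|^p \leq \frac{2\|\hat\eta\|_\infty}{c^{p+d}\,\eta(0)}\,\mathcal{E}_{\varepsilon_n,m_n,n}(f_n;\eta),
\end{equation*}
which is the desired bound with $C=2\|\hat\eta\|_\infty/(c^{p+d}\eta(0))$. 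The scaling $\tilde\varepsilon_n/\varepsilon_n = c$ is constant, so both $\liminf$ and $\limsup$ equal $c\in(0,\infty)$.

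\textbf{Expected obstacle.} The proof is essentially bookkeeping once the kernel and scale are fixed; the only delicate point is matching the supports of $\hat\eta$ and $\eta$ so that every non-zero $\hat W_{ij}$ lies in a regime where $\eta$ is bounded below. This is handled by choosing $c$ small enough (depending on $L$, $\delta$, $\eta(0)$) to absorb the Lipschitz distortion of $E$ and the empirical perturbation $\tilde q_k(m_n)/\varepsilon_n$. No other nontrivial ingredient is required: the argument is purely a pointwise envelope estimate, and the randomness enters only through Lemma~\ref{lemma:transport_maps_Dn}.
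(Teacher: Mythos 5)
Your proof is correct and is essentially the same comparison argument as the paper's: both reduce the claim to a pointwise envelope estimate by showing that on the support of the parameter-space kernel $\hat\eta((\theta_i-\theta_j)/\tilde\varepsilon_n)$, the Wasserstein weight $\eta(\dWp{2}(\mu_i^{(m_n)},\mu_j^{(m_n)})/\varepsilon_n)$ is uniformly bounded below, using Lemma~\ref{lemma:transport_maps_Dn}, the bi-Lipschitz bound on $E$, and Assumption~\ref{ass:eps_scale_2} to absorb the empirical perturbation $\tilde q_k(m_n)$. The only stylistic differences are that the paper takes $\hat\eta$ to be a rescaled indicator step with $\hat\eta\leq\eta$ pointwise and a mildly $n$-dependent rescaling $\tilde\varepsilon_n=\varepsilon_n/L-\hat C\tilde q_k(m_n)/(Lr)$, so the comparison constant emerges as $(\varepsilon_n/\tilde\varepsilon_n)^{p+d}\to L^{p+d}$, whereas you fix $\tilde\varepsilon_n=c\varepsilon_n$ with a constant $c$ and an arbitrary bump, paying a multiplicative factor $2\|\hat\eta\|_\infty/\eta(0)$; both are valid and yours is marginally cleaner in bookkeeping. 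One small omission to flag: for the lemma to feed into Theorem~\ref{thm:compactness} (via Proposition~\ref{prop:compactness_parameter_manifold}), the produced $\hat\eta$ must be bounded below by a positive constant in a neighbourhood of the origin; your ``continuous compactly supported bump'' should be chosen with $\hat\eta(0)>0$ (as any standard bump does), which you state implicitly but should make explicit since it is not part of the lemma's conclusion as written.
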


\begin{proof}
First we claim that there exists $\tilde{\eps}_n$ and $\hat{\eta}$ satisfying the conditions in the statement and such that
\begin{align} \label{eq:Proofs:Compact:etalowerbound}
\hat{\eta}\lp\frac{|\theta_i-\theta_j|}{\tilde{\eps}_n}\rp \leq \eta \lp \frac{\dWp{2}(\mu_i^{(m_n)},\mu_j^{(m_n)})}{\eps_n}\rp.
\end{align}

By Lemma~\ref{lemma:transport_maps_Dn}, $\dWp{2}(\mu_i^{(m_n)},\mu_i)\leq \hat{C}\tilde{q}_k(m_n)$.
Using the triangle inequality we can bound,
\begin{align}\label{eqn:rates_dwp}
\begin{aligned}
|\dWp{2}(\mu^{(m_n)}_i,\mu^{(m_n)}_j) - \dWp{2}(\mu_i,\mu_j) | & \leq |\dWp{2}(\mu^{(m_n)}_i,\mu^{(m_n)}_j) - \dWp{2}(\mu_i,\mu_j^{(m_n)})| \\
& \qquad + |\dWp{2}(\mu_i,\mu^{(m_n)}_j) - \dWp{2}(\mu_i,\mu_j)|  \\
&\leq \dWp{2}(\mu^{(m_n)}_i,\mu_i)+\dWp{2}(\mu^{(m_n)}_j,\mu_j)   \\
&\leq 2\max_{i=1,\cdots,n} \dWp{2}(\mu^{(m_n)}_i,\mu_i)  \\
& \leq \hat{C} \tilde{q}_k(m_n).
\end{aligned}
\end{align}
By \ref{ass:weight_two} and~\ref{ass:weight_three} there exists $a,r>0$ such that
\begin{align}\label{eqn:eta_hat}
\begin{aligned}
\hat{\eta}(t) &:=
\left\{\begin{array}{ll}
a & \text{if } |t| \leq r  \\ 
0 & \text{otherwise} 
\end{array}\right.\\
\end{aligned}    
\end{align}
satisfies $\hat{\eta}\leq \eta$.
Let $L$ be the Lipschitz constant for $E: \mathcal{S} \to \Lambda$ 
so for all $\theta_i,\theta_j \in \mathcal{S}_n$ and $\mu_i,\mu_j \in \Lambda_n$ we have,
\begin{align}\label{eqn:bounds_Wp_S}
\frac{1}{L} |\theta_i-\theta_j| \leq \dWp{2}(E(\theta_i),E(\theta_j)) \leq L |\theta_i-\theta_j|.
\end{align}
Assume $\frac{|\theta_i-\theta_j|}{\tilde{\varepsilon}_n} \leq r$ where $\tilde{\varepsilon}_n:=\frac{\varepsilon_n}{L}- \frac{\hat{C}\tilde{q}_k(m_n)}{Lr}$, then 
\begin{equation*}\label{eqn:tilde_varepsilon}
\begin{aligned}
\dWp{2}(\mu^{(m_n)}_i,\mu^{(m_n)}_j)    
 & \leq \hat{C}\tilde{q}_k(m_n) + \dWp{2}(\mu_i,\mu_j) \\
 & \leq \hat{C}\tilde{q}_k(m_n) + L|\theta_i-\theta_j| \\
& \leq \hat{C}\tilde{q}_k(m_n) + L\tilde{\varepsilon}_n r \\
& = \varepsilon_n r.
\end{aligned}    
\end{equation*}
Since, $\varepsilon_n \gg \tilde{q}_k(m_n)$ then $\tilde{\varepsilon}_n \geq 0$ and $\frac{\varepsilon_n}{\tilde{\varepsilon}_n} \to L$. 
Hence, $\frac{|\theta_i-\theta_j|}{\tilde{\varepsilon}_n} \leq r$ implies $\frac{\dWp{2}(\mu^{(m_n)}_i,\mu^{(m_n)}_j)}{\eps_n}\leq r$.
In particular, if $\frac{|\theta_i-\theta_j|}{\tilde{\varepsilon}_n} \leq r$ then
\begin{equation*}
   \eta\bigg(\frac{\dWp{2}(\mu^{(m_n)}_i,\mu^{(m_n)}_j)}{\varepsilon_n}\bigg)\geq a = \hat{\eta}\bigg(\frac{|\theta_i-\theta_j|}{\tilde{\varepsilon}_n}\bigg).
\end{equation*}
If $\frac{|\theta_i-\theta_j|}{\tilde{\varepsilon}_n} > r$ then $\eta\bigg(\frac{\dWp{2}(\mu^{(m_n)}_i,\mu^{(m_n)}_j)}{\varepsilon_n}\bigg) \geq 0 = \hat{\eta}\bigg(\frac{|\theta_i-\theta_j|}{\tilde{\varepsilon}_n}\bigg)$.
Hence~\eqref{eq:Proofs:Compact:etalowerbound} holds.

Using the claim in \eqref{eq:Proofs:Compact:etalowerbound}, for $n\geq N$ 
\begin{equation*}
\begin{aligned}
\hat{\mathcal{E}}_{\tilde{\varepsilon}_n,n}(\hat{f}_n;\hat{\eta}) &= \frac{1}{{\tilde{\varepsilon}}^{d+p}_n n^2} \sum_{i,j=1}^n \hat{\eta}\bigg(\frac{|\theta_i-\theta_j|}{\tilde{\varepsilon}_n}\bigg)\la \hat{f}_n(\theta_i) - \hat{f}_n(\theta_j) \ra^p \\
& \leq \frac{\varepsilon^{d+p}_n}{\tilde{\varepsilon}^{d+p}_n}\frac{1}{{\varepsilon}^{d+p}_n n^2} \sum_{i,j=1}^n \eta\bigg(\frac{\dWp{2}(\mu^{(m_n)}_i,\mu^{(m_n)}_j)}{{\varepsilon}_n}\bigg)\la f_n(\mu^{(m_n)}_i) - f_n(\mu^{(m_n)}_j)\ra^p \\
& = \bigg(\frac{{\varepsilon}_n}{\tilde{\varepsilon}_n}\bigg)^{d+p} \mathcal{E}_{\varepsilon_n,m_n,n}(f_n;\eta).
\end{aligned}    
\end{equation*}

We also note that $\big(\frac{{\varepsilon}_n}{\tilde{\varepsilon}_n}\big)^{d+p} \to L$ as $n \to \infty$ which completes the proof.
\end{proof}

We will construct a transport map $T_n: \Lambda\to \Lambda_n^{(m)}$ using a transport map $\hat{T}_n:\cS\to\cS_n$ and $E:\cS \to \Lambda$ and $D_n:\Lambda_n\to\Lambda_n^{(m)}$.
In particular, we can control (via Theorem~\ref{thm:rates_convergence}) $\hat{T}_n$, $E$ and $D_n$.

The next proposition contains some simple bounds and properties of $T_n$. 

\begin{proposition}\label{prop:TLp_distance_Wasserstein} 
 Assume $\Omega$, $\Lambda$, $\mathcal{S}$, $\bbP_\Lambda$, $\bbP_\mathcal{S}$, $\bbP_{\Lambda_n}$, $\bbP_{\Lambda^{(m)}_n}$, $\bbP_{\mathcal{S}_n}$, and $E$ satisfy 
\ref{ass:domain_ll_one}-\ref{ass:domain_ll_three}, \ref{ass:prob_measures_S}-\ref{ass:prob_measures_Lambda}, \ref{ass:S_n}-\ref{ass:Lambda_m_n}, and \ref{ass:E_map}. Let $p \geq 1$ and define $D_n: \Lambda_n \to \Lambda^{(m)}_n$ by $D_n(\mu_i)=\mu^{(m_n)}_i$. Assume $\hat{T}_n: \mathcal{S} \to \mathcal{S}_n$ satisfies $\hat{T}_{n\#} \bbP_\mathcal{S}=\bbP_{\mathcal{S}_n}$. 
Define the transport map $T_n:= D_n \circ E \circ \hat{T}_n \circ E^{-1}$. 
Let $f_n: \Lambda^{(m)}_n \to \bbR$ and $f:\Lambda\to \bbR$ and define $\hat{f}_n = f_n\circ D_n \circ E: \mathcal{S}_n \to \bbR$ and $\hat{f}=f\circ E:\cS\to\bbR$.
Then, the following conditions hold true:
\begin{enumerate}
\item\label{item:hat_T_and_Rn}   $T_{n\#}\bbP_\Lambda=\bbP_{\Lambda^{(m)}_n}$;
\item $\|\dWp{2}(T_n,\Id)\|_{\Lp{p}(\bbP_\Lambda)}  \leq  L \|\hat{T}_n-\Id\|_{\Lp{p}(\bbP_\mathcal{S})} + \|\dWp{2}(D_n,\Id) \|_{\Lp{p}(\bbP_{\Lambda_n})}$ where $L>0$ is the Lipschitz constant of $E$;
\item \label{item:equivalence norms}  $\|f_n \circ T_n - f\|_{\Lp{p}(\bbP_\Lambda)} = \|\hat{f}_n\circ \hat{T}_n - \hat{f} \|_{\Lp{p}(\bbP_\mathcal{S})}$.
\end{enumerate}
\end{proposition}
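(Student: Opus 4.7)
The plan is to verify each of the three claims by unwinding the definition $T_n = D_n \circ E \circ \hat{T}_n \circ E^{-1}$ and using the pushforward properties of each factor. None of these parts is delicate; the bookkeeping is only about tracking which measure each term is integrated against after each change of variables.

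For part~\ref{item:hat_T_and_Rn}, I would compute $T_{n\#}\bbP_\Lambda$ by composing pushforwards. Since $E$ is bi-Lipschitz by~\ref{ass:E_map} (and hence a bijection onto $\Lambda$) and $\bbP_\Lambda = E_\#\bbP_\cS$ by~\ref{ass:prob_measures_Lambda}, we have $E^{-1}_\#\bbP_\Lambda = \bbP_\cS$. Successive application of $\hat{T}_{n\#}\bbP_\cS = \bbP_{\cS_n}$, then $E_\#\bbP_{\cS_n} = \bbP_{\Lambda_n}$ (which is immediate from $\mu_i = E(\theta_i)$ and the definition of the empirical measures in~\ref{ass:S_n}--\ref{ass:Lambda_n}), and finally $D_{n\#}\bbP_{\Lambda_n} = \bbP_{\Lambda_n^{(m)}}$ (since $D_n(\mu_i) = \mu_i^{(m_n)}$) yields the claim.

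For part~2, the plan is to set $\nu_\mu := E(\hat{T}_n(E^{-1}(\mu)))$ and apply the triangle inequality in $(\Probac(\Omega),\dWp{2})$ to split
\[
\dWp{2}(T_n(\mu),\mu) \leq \dWp{2}(D_n(\nu_\mu),\nu_\mu) + \dWp{2}(\nu_\mu,\mu).
\]
Taking the $\Lp{p}(\bbP_\Lambda)$ norm on both sides and using Minkowski's inequality then reduces the bound to two $\Lp{p}$ pieces. On the first piece, the pushforward computation from part~\ref{item:hat_T_and_Rn} (without the final $D_n$) gives $(E \circ \hat{T}_n \circ E^{-1})_\#\bbP_\Lambda = \bbP_{\Lambda_n}$, so the change of variables $\mu\mapsto \nu_\mu$ yields exactly $\|\dWp{2}(D_n,\Id)\|_{\Lp{p}(\bbP_{\Lambda_n})}$. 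On the second piece, the $L$-Lipschitz property of $E$ from~\ref{ass:E_map} gives $\dWp{2}(\nu_\mu,\mu) \leq L|\hat{T}_n(E^{-1}(\mu)) - E^{-1}(\mu)|$, and then the change of variables $\theta = E^{-1}(\mu)$ with $E^{-1}_\#\bbP_\Lambda = \bbP_\cS$ converts the resulting integral into $L\|\hat{T}_n - \Id\|_{\Lp{p}(\bbP_\cS)}$.

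For part~\ref{item:equivalence norms}, the identity is a direct change of variables: substituting $\mu = E(\theta)$ and using the definitions $\hat{f}_n = f_n \circ D_n \circ E$ and $\hat{f} = f \circ E$ gives
\[
f_n(T_n(E(\theta))) - f(E(\theta)) = \hat{f}_n(\hat{T}_n(\theta)) - \hat{f}(\theta),
\]
so pushing forward by $E$ (using $\bbP_\Lambda = E_\#\bbP_\cS$) converts the $\Lp{p}(\bbP_\Lambda)$ norm on the left into the $\Lp{p}(\bbP_\cS)$ norm on the right, with equality rather than inequality since no approximation is made.
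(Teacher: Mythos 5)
Your proposal is correct and follows essentially the same route as the paper: the pushforward chain for part~1, the triangle inequality decomposition through $\nu_\mu = E(\hat{T}_n(E^{-1}(\mu)))$ combined with the bi-Lipschitz bound on $E$ and the two changes of variables for part~2, and the direct change of variables for part~3. The only cosmetic difference is that you state the triangle inequality with the two terms in the opposite order from the paper and name Minkowski explicitly; the substance is identical.
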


\begin{proof}
\begin{enumerate}
\item
A direct computation shows that

\begin{align*}
T_{n\#}\bbP_\Lambda 
&= \bigl( D_n \circ E \circ \hat{T}_n \circ E^{-1} \bigr)_{\#}\bbP_\Lambda \\
&= \bigl( D_n \circ E \circ \hat{T}_n \bigr)_{\#} \bbP_\mathcal{S} \\
&= \bigl( D_n \circ E \bigr)_{\#} \bbP_{\mathcal{S}_n} \\
&= D_{n\#} \bbP_{\Lambda_n} \\
&= \bbP_{\Lambda^{(m)}_n}.
\end{align*}

Hence, we have $T_{n\#}\bbP_\Lambda = \bbP_{\Lambda^{(m)}_n}$.

\item By the triangle inequality, where $L>0$ is the Lipschitz constant of $E$, we have, 
\begin{equation*}
\begin{aligned}
\|\dWp{2}(T_n,\Id)\|_{\Lp{p}(\bbP_\Lambda)} & = \bigg(\int_{\Lambda}\dWp{2}(T_n(\mu),\mu)^p\,\dd\bbP_\Lambda(\mu) \bigg)^{\frac{1}{p}}  \\  
& \leq \bigg(\int_\Lambda \dWp{2}(\mu,E(\hat{T}_n(E^{-1}(\mu))))^p \, \dd\bbP_{\Lambda}(\mu)\bigg)^{\frac{1}{p}}\\
& \qquad \qquad + \bigg(\int_{\Lambda}\dWp{2}(E(\hat{T}_n(E^{-1}(\mu))),D_n(E(\hat{T}_n(E^{-1}(\mu)))))^p \, \dd\bbP_{\Lambda}(\mu) \bigg)^{\frac{1}{p}}\\
& \leq L \bigg(\int_\Lambda |E^{-1}(\mu)-\hat{T}_n(E^{-1}(\mu))|^p \, \dd\bbP_{\Lambda}(\mu)\bigg)^{\frac{1}{p}}\\
& \qquad \qquad + \bigg(\int_{\cS}\dWp{2}(E(\hat{T}_n(\theta)),D_n(E(\hat{T}_n(\theta))))^p \, \dd\bbP_{\cS}(\theta) \bigg)^{\frac{1}{p}}\\
& = L \bigg(\int_{\mathcal{S}} |\theta - \hat{T}_n(\theta)|^p \, \dd\bbP_\mathcal{S}(\theta) \bigg)^\frac{1}{p} + \bigg(\int_{\mathcal{S}_n}\dWp{2}(E(\theta),D_n(E(\theta)))^p \,  \dd\bbP_{\mathcal{S}_n}(\theta) \bigg)^{\frac{1}{p}}\\
& = L \bigg(\int_{\mathcal{S}} |\theta-\hat{T}_n(\theta)|^p \, \dd\bbP_\mathcal{S}(\theta) \bigg)^\frac{1}{p} + \bigg(\int_{\Lambda_n}\dWp{2}(\mu,D_n(\mu))^p \, \dd\bbP_{\Lambda_n}(\mu)\bigg)^{\frac{1}{p}}.
\end{aligned}    
\end{equation*}
\item 
By a change of variables we have,
\begin{align*}
\|f_n \circ T_n - f \|^p_{\Lp{p}(\bbP_\Lambda)} = &\int_{\Lambda}|f_n \circ T_n(\mu) - f(\mu) |^p \, \dd\bbP_\Lambda(\mu) \\
=& \int_{\Lambda}|f_n \circ D_n \circ E \circ \hat{T}_n \circ E^{-1}(\mu) - f(\mu)|^p \, \dd\bbP_\Lambda(\mu)\\
=& \int_{\mathcal{S}}|f_n \circ D_n \circ E \circ \hat{T}_n(\theta) - f\circ E(\theta)|^p \, \dd\bbP_\mathcal{S}(\theta) \\
=& \int_{\mathcal{S}} |\hat{f}_n\circ \hat{T}_n(\theta) - \hat{f}(\theta) |^p \, \dd\bbP_\mathcal{S}(\theta) \\
=& \|\hat{f}_n \circ \hat{T}_n - \hat{f} \|^p_{\Lp{p}(\bbP_\mathcal{S})}. \qedhere
\end{align*}
\end{enumerate}
\end{proof}

Finally, we prove the main result of the subsection by showing that under the given assumptions, there exists a subsequence of $f_n$ that converges to a limit function $f$ in the topology of weak convergence of measures, with $f$ belonging to the Sobolev space $\Wkp{1}{p}(\Lambda, \dWp{2}, \bbP_\Lambda)$ defined in Subsection \ref{section:Sobolev_Spaces}.

\begin{theorem}[Compactness]\label{thm:compactness}
 Assume $\Omega$, $\Lambda$, $\bbP_\Lambda$, $\bbP_{\Lambda_n}$, $\bbP_{\Lambda^{(m)}_n}$, $\eta$ and $\varepsilon_n$ satisfy 
\ref{ass:domain_ll_one}-\ref{ass:domain_ll_two}, \ref{ass:prob_measures_Lambda}, \ref{ass:Lambda_n}-\ref{ass:Lambda_m_n}, \ref{ass:weight_one}-\ref{ass:weight_four} and \ref{ass:eps_scale}-\ref{ass:eps_scale_2}. Let $f_n: \Lambda^{(m)}_n \to \bbR$ satisfy $\sup_{n \in \bbN} \|f_n\|_{\Lp{p}(\bbP_{\Lambda^{(m)}_n})} < +\infty$ and $\sup_{n \in \bbN} \mathcal{E}_{\varepsilon_n,m_n,n}(f_n;\eta) < + \infty$. Then, with probability one there exists subsequence $\{f_{n_k}\}_{k=1}^\infty$ and $f \in \Wkp{1}{p}(\Lambda,\dWp{2},\bbP_\Lambda)$ such that $f_{n_k} \to f$ in $\TL_{\dWp{2}}(\Probac{(\Omega)})$ as $k \to \infty$.
\end{theorem}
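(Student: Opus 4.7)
The plan is to reduce compactness on the Wasserstein submanifold $\Lambda$ to the known Euclidean non-local compactness result on the parameter space $\cS$ by pulling everything back through the bi-Lipschitz map $E$, and then to control the perturbation introduced by $D_n:\Lambda_n\to\Lambda_n^{(m)}$ using Lemma~\ref{lemma:transport_maps_Dn}.

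\emph{Step 1: Pull back to $\cS$.} Define $\hat{f}_n := f_n \circ D_n \circ E : \cS_n \to \bbR$. From the identities $D_{n\#}\bbP_{\Lambda_n}=\bbP_{\Lambda_n^{(m)}}$ and $E_{\#}\bbP_{\cS_n}=\bbP_{\Lambda_n}$, a change of variables gives $\|\hat{f}_n\|_{\Lp{p}(\bbP_{\cS_n})} = \|f_n\|_{\Lp{p}(\bbP_{\Lambda_n^{(m)}})}$, which is uniformly bounded. By Lemma~\ref{lemma:sup_energy_bounded}, there exist $\tilde{\eps}_n$ with $\tilde{\eps}_n \asymp \eps_n$ and a kernel $\hat{\eta}$ (the one in~\eqref{eqn:eta_hat}) such that, on an event of probability one,
\begin{equation*}
\hat{\mathcal{E}}_{\tilde{\eps}_n,n}(\hat{f}_n;\hat{\eta}) \leq C\,\mathcal{E}_{\eps_n,m_n,n}(f_n;\eta),
\end{equation*}
so $\sup_n \hat{\mathcal{E}}_{\tilde{\eps}_n,n}(\hat{f}_n;\hat{\eta}) < \infty$. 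Since $\tilde{\eps}_n \asymp \eps_n$, Assumption~\ref{ass:eps_scale} gives $\tilde{\eps}_n \gg q_d(n)$ as well.

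\emph{Step 2: Apply Euclidean non-local compactness.} On the bounded, smooth Euclidean domain $\cS\subset\bbR^d$, with density $\rho_\cS$ bounded away from $0$ and $\infty$ (\ref{ass:prob_measures_S}), iid samples $\cS_n\sim\bbP_\cS$ at scale $\tilde{\eps}_n\gg q_d(n)$, and admissible kernel $\hat{\eta}$, the standard Euclidean non-local compactness theorem (see e.g.~\cite{Nicolas_pointcloud,Matt_p_Laplacian}) yields, with probability one, a subsequence $\{\hat{f}_{n_k}\}$ and a limit $\hat{f}\in\Wkp{1}{p}(\cS)$, together with transport maps $\hat{T}_{n_k}:\cS\to\cS_{n_k}$ satisfying
\begin{equation*}
\hat{T}_{n_k\#}\bbP_\cS = \bbP_{\cS_{n_k}},\qquad \|\hat{T}_{n_k}-\Id\|_{\Lp{p}(\bbP_\cS)}\to 0,\qquad \|\hat{f}_{n_k}\circ \hat{T}_{n_k}-\hat{f}\|_{\Lp{p}(\bbP_\cS)}\to 0.
\end{equation*}

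\emph{Step 3: Push forward to $\Lambda$.} Set $f := \hat{f}\circ E^{-1}$; by Proposition~\ref{prop:Sobolev_Space_1}, $f \in \Wkp{1}{p}(\Lambda,\dWp{2},\bbP_\Lambda)$. Define the transport map $T_{n_k} := D_{n_k}\circ E \circ \hat{T}_{n_k}\circ E^{-1}:\Lambda\to \Lambda_{n_k}^{(m_{n_k})}$. By Proposition~\ref{prop:TLp_distance_Wasserstein}\ref{item:hat_T_and_Rn}, $T_{n_k\#}\bbP_\Lambda = \bbP_{\Lambda_{n_k}^{(m_{n_k})}}$. Combining the triangle inequality from Proposition~\ref{prop:TLp_distance_Wasserstein} with the $\Lp{\infty}$ bound of Lemma~\ref{lemma:transport_maps_Dn},
\begin{equation*}
\|\dWp{2}(T_{n_k},\Id)\|_{\Lp{p}(\bbP_\Lambda)} \leq L\,\|\hat{T}_{n_k}-\Id\|_{\Lp{p}(\bbP_\cS)} + \hat{C}\,\tilde{q}_k(m_{n_k}) \longrightarrow 0,
\end{equation*}
and by Proposition~\ref{prop:TLp_distance_Wasserstein}\ref{item:equivalence norms},
\begin{equation*}
\|f_{n_k}\circ T_{n_k} - f\|_{\Lp{p}(\bbP_\Lambda)} = \|\hat{f}_{n_k}\circ \hat{T}_{n_k} - \hat{f}\|_{\Lp{p}(\bbP_\cS)} \longrightarrow 0.
\end{equation*}
Using the transport plans $\pi_{n_k}=(\Id\times T_{n_k})_\#\bbP_\Lambda$ together with Remark~\ref{rem:Back:OT:TLpConv} and Proposition~\ref{prop:TLp_distance}\ref{(a)}, we conclude $f_{n_k}\to f$ in $\TL_{\dWp{2}}(\Probac(\Omega))$.

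The main obstacle is Step 2: one must verify that the (possibly discontinuous) kernel $\hat{\eta}$ from~\eqref{eqn:eta_hat} and the scaling $\tilde{\eps}_n$ satisfy the hypotheses of the cited Euclidean compactness theorem. If continuity of the kernel is required, one replaces $\hat{\eta}$ by a continuous minorant supported in a slightly smaller ball, which still bounds the energy from below and retains enough mass at the origin for compactness. A secondary bookkeeping issue is to intersect the almost-sure events arising in Theorem~\ref{thm:rates_convergence}, Lemma~\ref{lemma:transport_maps_Dn}, Lemma~\ref{lemma:sup_energy_bounded}, and the Euclidean compactness result, so that all conclusions hold simultaneously on a single full-probability event.
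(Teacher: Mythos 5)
Your proof follows exactly the paper's strategy: pull back to $\cS$ via $\hat{f}_n = f_n\circ D_n\circ E$ and Lemma~\ref{lemma:sup_energy_bounded}, invoke the Euclidean non-local compactness result for the graph $p$-Dirichlet energy on $\cS$, and push forward to $\Lambda$ using Propositions~\ref{prop:TLp_distance_Wasserstein} and~\ref{prop:Sobolev_Space_1} together with Remark~\ref{rem:Back:OT:TLpConv}. Your caveat about the kernel is well placed---the $\hat{\eta}$ produced by Lemma~\ref{lemma:sup_energy_bounded} (see~\eqref{eqn:eta_hat}) is an indicator and hence not continuous as the paper's proof loosely asserts---but this is harmless: the Euclidean compactness result only requires the kernel to be bounded below by a positive constant on a ball around the origin, which is exactly the reduction carried out in Proposition~\ref{prop:compactness_parameter_manifold}, so your suggested fix agrees with what the paper actually does.
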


\begin{proof}
Define $\hat{f}_n := f_n \circ D_n \circ E : \mathcal{S}_n \to \mathbb{R}$.  
By Lemma~\ref{lemma:sup_energy_bounded}, there exists a function $\hat{\eta}$ that is continuous, non-increasing, positive at zero, and compactly supported, such that $\hat{\mathcal{E}}_{\tilde{\varepsilon}_n,n}(\hat{f}_n;\hat{\eta}) < +\infty$.
A change of variables gives $\|\hat{f}_n\|_{\Lp{p}(\mathbb{P}_{\mathcal{S}_n})} = \|f_n\|_{\Lp{p}(\mathbb{P}_{\Lambda^{(m)}_n})}$. By \cite[Proposition 3.12]{Nicolas_pointcloud}, there exists a subsequence $\{\hat{f}_{n_k}\}_{k=1}^\infty$ and a limit $\hat{f} \in \Wkp{1}{p}(\mathcal{S})$ such that  
\[
\hat{f}_{n_k} \to \hat{f} \quad \text{in } \TLp{p}(\mathcal{S})
\]  
with probability one.
Equivalently, there exists a sequence of transport maps $\hat{T}_{n_k} : \mathcal{S} \to \mathcal{S}_{n_k}$ with $\hat{T}_{n_k\#} \mathbb{P}_{\mathcal{S}} = \mathbb{P}_{\mathcal{S}_{n_k}}$ and $\hat{T}_{n_k} \to \mathrm{Id}$ in $\Lp{p}(\mathbb{P}_{\mathcal{S}})$ such that
$\hat{f}_{n_k} \circ \hat{T}_{n_k} \to \hat{f} \quad \text{in } \Lp{p}(\mathbb{P}_{\mathcal{S}})$.  

Define $T_{n_k} := D_{n_k} \circ E \circ \hat{T}_{n_k} \circ E^{-1}$. 
Then $T_{n_k} \to \mathrm{Id}$ and, by Proposition~\ref{prop:TLp_distance_Wasserstein} we have,
\[
\|f_{n_k} \circ T_{n_k} - f\|_{\Lp{p}(\mathbb{P}_\Lambda)} 
= \|\hat{f}_{n_k} \circ \hat{T}_{n_k} - \hat{f}\|_{\Lp{p}(\mathbb{P}_{\mathcal{S}})} \to 0,
\]  
with probability one, where $f := \hat{f} \circ E^{-1}$.  
Hence, by Proposition~\ref{prop:TLp_distance} (see Remark~\ref{rem:Back:OT:TLpConv}), $f_{n_k} \to f$ in $\TLp{p}_{\dWp{2}}(\mathbb{P}_\Lambda)$, and by Proposition~\ref{prop:Sobolev_Space_1}, $f \in \Wkp{1}{p}(\Lambda, \dWp{2}, \mathbb{P}_\Lambda)$.
\end{proof}

\subsection{\texorpdfstring{$\Gamma$}{Gamma}- convergence}\label{section:gamma_convergence}

In the first lemma of this section we show that $\frac{\dWp{2}(E(\theta+\Delta h), E(\theta))}{\Delta}\approx \|B_\theta(h)\|_{\Lp{2}(E(\theta))}$.

\begin{lemma}\label{lemma:linear_operator_B}  
Let $\Omega, \Lambda, \mathcal{S}, E$ and $B_\theta(h)$ satisfy assumptions \ref{ass:domain_ll_one}-\ref{ass:domain_ll_three}, and \ref{ass:E_map}-\ref{ass:B_map}. 
Let $M>0$. There exists $C>0$ such that
\[ \la \frac{\dWp{2}(E(\theta+\Delta h), E(\theta))}{\Delta} -\|B_\theta(h)\|_{\Lp{2}(E(\theta))} \ra \leq C\Delta \|h\|^2 \]
for all $h \in B(0,s_\theta\wedge M)$, $\Delta\in (0,1]$ and $\theta\in\cS$.
\end{lemma}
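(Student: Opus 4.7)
The plan hinges on Assumption~\ref{item:B.2_e}: the curve $s\mapsto \mu_s := E(\theta+sh)$ solves the continuity equation with velocity $v_s := B_{\theta+sh}(h)$. By the Benamou--Brenier action bound,
\[
\dWp{2}(E(\theta), E(\theta+\Delta h)) \leq \int_0^\Delta \|v_s\|_{\Lp{2}(\mu_s)} \, \dd s,
\]
so the upper half of the lemma will follow as soon as the map $s\mapsto \|v_s\|_{\Lp{2}(\mu_s)}$ is shown to be $C\|h\|^2$-Lipschitz near $s=0$: averaging over $[0,\Delta]$ and dividing by $\Delta$ then produces $\|B_\theta(h)\|_{\Lp{2}(E(\theta))} + C\Delta\|h\|^2$.

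To obtain that Lipschitz estimate I would set $F(s):= \|v_s\|_{\Lp{2}(\mu_s)}^2$, differentiate under the integral, substitute $\partial_s\mu_s = -\nabla\cdot(v_s\mu_s)$ from~\ref{item:B.2_e}, and integrate by parts (boundary terms vanish by the smoothness of $\partial\Omega$ in~\ref{ass:domain_ll_one} together with the distributional sense of the continuity equation) to produce
\[
F'(s) = 2\int v_s\cdot(\partial_s v_s)\,\dd\mu_s + 2\int v_s\cdot\bigl((v_s\cdot\nabla)v_s\bigr)\,\dd\mu_s.
\]
Assumption~\ref{item:B.2_b} yields $\|v_s\|_{\Ck{1}(\bbR^k)}\leq C\|h\|$ and~\ref{item:B.2_c} yields $|\partial_s v_s(x)|\leq C\|h\|^2$ pointwise, so each integrand is pointwise $O(\|h\|^3)$ and $|F'(s)|\leq C\|h\|^3$. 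Combined with the lower bound $F(0)\geq c^2\|h\|^2$ coming from~\ref{item:B.2_a}, the elementary identity $\sqrt{F(s)}-\sqrt{F(0)} = (F(s)-F(0))/(\sqrt{F(s)}+\sqrt{F(0)})$ upgrades this to $\bigl|\|v_s\|_{\Lp{2}(\mu_s)} - \|B_\theta(h)\|_{\Lp{2}(E(\theta))}\bigr|\leq Cs\|h\|^2$, which is the required Lipschitz bound.

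For the matching lower bound the plan is to compare $\mu_\Delta$ to the Euler step $T^\Delta(x):= x + \Delta B_\theta(h)(x)$. Controlling the difference between the flow of the time-dependent field $v_s$ and the autonomous flow of $B_\theta(h)$ by Gr\"onwall's inequality and Assumptions~\ref{item:B.2_b}--\ref{item:B.2_c}, together with the Euler truncation error, should give $\dWp{2}(T^\Delta_\# E(\theta), E(\theta+\Delta h))\leq C\Delta^2\|h\|^2$. The reverse triangle inequality then reduces the task to showing $\dWp{2}(E(\theta), T^\Delta_\# E(\theta)) \geq \Delta\|B_\theta(h)\|_{\Lp{2}(E(\theta))} - C\Delta^2\|h\|^2$. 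I expect this to be the main obstacle: for $\Delta\|h\|$ sufficiently small (guaranteed by $\|h\|\leq M$ and the $\Ck{1}$ bound in~\ref{item:B.2_b}) the map $T^\Delta$ is a diffeomorphism, and under the Wasserstein-tangent structure implicit in Assumption~\ref{ass:B_map} and motivated by~\cite{Hamm}, $T^\Delta$ is the Brenier optimal transport between its endpoints up to an $O(\Delta^2)$ correction, at which point the estimate closes.
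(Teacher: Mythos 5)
Your route is genuinely different from the paper's. The paper reduces the whole lemma to two cited facts: it introduces the intrinsic submanifold distance $d_{\mathrm{W}^2_\Lambda}$ (shortest length of curves inside $\Lambda$), splits the error by triangle inequality into the gap between the ambient and intrinsic Wasserstein distances and the gap between the intrinsic distance and its linearisation, and bounds these by quoting \cite[Proposition 2.13]{Hamm} and \cite[Theorem 2.14]{Hamm}. Your proposal instead tries to work directly from Assumptions~\ref{item:B.2_a}--\ref{item:B.2_e}. The upper half of your argument — Benamou--Brenier giving $\dWp{2}(E(\theta),E(\theta+\Delta h))\le\int_0^\Delta\|v_s\|_{\Lp{2}(\mu_s)}\,\dd s$, then showing $F(s)=\|v_s\|_{\Lp{2}(\mu_s)}^2$ satisfies $|F'(s)|\le C\|h\|^3$ via the continuity equation, integration by parts, and~\ref{item:B.2_b}--\ref{item:B.2_c}, combined with $F(0)\ge\|h\|^2/C^2$ from~\ref{item:B.2_a} — is sound, and is actually more self-contained than what the paper writes.

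The lower half, however, has a genuine gap that you correctly flag but do not close, and it is not merely a technical obstacle you can Gr\"onwall away. Your final step asserts that $T^\Delta(x)=x+\Delta B_\theta(h)(x)$ is the Brenier optimal map between $E(\theta)$ and $T^\Delta_\# E(\theta)$ up to $O(\Delta^2)$. That requires $B_\theta(h)$ to be, to leading order, a gradient vector field — an element of the Wasserstein tangent space at $E(\theta)$ — and none of~\ref{item:B.2_a}--\ref{item:B.2_e} assert this: Assumption~\ref{item:B.2_a} gives only norm comparability with $\|h\|$, not tangency, and the continuity equation~\ref{item:B.2_e} determines the velocity only modulo fields $w$ with $\nabla\cdot(w\,\rho_{E(\theta)})=0$. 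Replacing $B_\theta(h)$ by $B_\theta(h)+w$ for such a $w$ of comparable norm leaves the curve $\Delta\mapsto E(\theta+\Delta h)$, and hence the left-hand side of the lemma, unchanged while strictly inflating $\|B_\theta(h)+w\|_{\Lp{2}(E(\theta))}$, so the claimed inequality would fail. The paper sidesteps this by importing the tangent-space structure wholesale through $d_{\mathrm{W}^2_\Lambda}$ and \cite[Theorem 2.14]{Hamm}, which already encode that $B_\theta(h)$ is the minimal-norm velocity. To make your more elementary argument close you would need to add, and then use, an explicit hypothesis that $B_\theta(h)$ lies in (or is within $O(\|h\|^2)$ of) the $\Lp{2}(E(\theta))$-closure of $\{\nabla\varphi:\varphi\in C_c^\infty(\Omega)\}$, and prove first-order optimality of $T^\Delta$ from that — which is essentially the content of the Hamm results the paper cites.
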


\begin{proof}
Let $d_{\mathrm{W}^{2}_\Lambda}$ be the 2-Wasserstein metric on $\Lambda \subset \Probac(\Omega)$ defined by taking the shortest distance of curves in $\Lambda$ (see \cite[Eq. (2.8)]{Hamm} for more details). For $C > 0$ (independent of $\theta$), we have
\begin{equation*}\label{eqn:linear_operator_equivalence}
\begin{aligned}
\bigg|\frac{\dWp{2}(E(\theta+\Delta h), E(\theta))}{\Delta} - \|B_\theta(h)\|_{\Lp{2}(E(\theta))}\bigg| & \leq \frac{1}{\Delta} \left|\dWp{2}(E(\theta+\Delta h), E(\theta)) - d_{\mathrm{W}^{2}_\Lambda}(E(\theta+\Delta h), E(\theta))\right| \\
& + \left|\frac{d_{\mathrm{W}^{2}_\Lambda}(E(\theta+\Delta h),E(\theta))}{\Delta} -\|B_\theta(h)\|_{\Lp{2}(E(\theta))} \right| \\
& \stackrel{(a.)}{\leq} \frac{1}{\Delta} Cd_{\mathrm{W}^{2}_\Lambda}^2(E(\theta+\Delta h),E(\theta)) + C\Delta\|h\|^2 \\
& \stackrel{(b.)}\leq \frac{1}{\Delta}C\bigg(C\Delta^2\|h\|^2+\Delta\|B_\theta(h)\|_{\Lp{2}(E(\theta))}\bigg)^2 + C\Delta\|h\|^2 \\
& \leq C \lp \Delta \|h\|^2 + \Delta^2 \|h\|^3 + \Delta^3 \|h\|^4 \rp \\
\end{aligned}    
\end{equation*}
where the first term in (a.) follows from \cite[Proposition 2.13]{Hamm} and the second term follows from \cite[Theorem 2.14]{Hamm}. Similarly, the first term in (b) follows from \cite[Theorem 2.14]{Hamm} and the last inequality uses the Assumption \ref{item:B.2_a}.
\end{proof}

In the next lemma we define $\hat{\eta}_\theta(h) = \eta(\|B_\theta(h)\|_{\Lp{2}(E(\theta))})$ and show that $\hat{\eta}_\theta$ satisfies the conditions for the energy $\hat{\cE}_{\eps_n,n}$ defined by~\eqref{eqn:discrete_energy_parameter_intro} to $\Gamma$-converge to $\hat{\cE}_\infty$ defined by~\eqref{eqn:continuum_energy_functional_discrete_intro} (see Theorem~\ref{thm:gamma_convergence_parameter_manifold}). 

\begin{lemma}\label{lemma:assumptions_eta_hat}
Assume $\Omega$, $\Lambda$, $\mathcal{S}$, $E$, $B_\theta(h)$, and $\eta$  satisfy 
\ref{ass:domain_ll_one}-\ref{ass:domain_ll_three}, \ref{ass:E_map}-\ref{ass:B_map}, \ref{ass:weight_one}-\ref{ass:weight_four}. For all $\theta \in \mathcal{S}$, define $\hat{\eta}_\theta(h)=\eta(\|B_\theta(h)\|_{\Lp{2}(E(\theta))})$. Then, $\hat{\eta}_\theta$ satisfies the following conditions: 
\begin{enumerate}
\item \label{item:eta_hat_compact_support} there exists $R>0$ such that $\hat{\eta}_\theta$ has compact support in $B(0,R)$ (for all $\theta\in\cS)$;
 \item \label{item:eta_hat_even_function} $\hat{\eta}_\theta(h)=\hat{\eta}_\theta(-h)$ for all $h\in\bbR^d$; 
 \item \label{item:eta_hat_pointwise_equicontinuos} $\theta\mapsto \hat{\eta}_\theta(h)$ is pointwise equicontinuous; \item \label{item:eta_hat_existence_of_constants} there exist constants $a,r>0$ such that $\hat{\eta}_\theta(h) \geq a$ for $|h|<r$; and
  \item \label{item:eta_hat_continuity} for all $\xi, \psi > 0$ there exist constants $\alpha_{\xi,\psi} >0$, $c_{\xi,\psi} >0$ such that if $\|h_1-h_2\|\leq\xi$ and $\|\theta_1-\theta_2\|\leq\psi$ then $\hat{\eta}_{\theta_1}(h_1)\geq c_{\xi,\psi}\hat{\eta}_{\theta_2}(\alpha_\xi h_2)$ with $c_{\xi,\psi} \to 1$, $\alpha_{\xi,\psi} \to 1$ as $\xi \to 0$ and $\psi \to 0$.
\end{enumerate}
\end{lemma}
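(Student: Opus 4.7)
The plan is to verify the five properties in order. Properties (i) and (ii) follow directly from the bounds in Assumption~\ref{ass:B_map}(a) and from linearity of $B_\theta$: the lower bound $\|B_\theta(h)\|_{\Lp{2}(E(\theta))}\geq \|h\|/C$ combined with $\mathrm{supp}(\eta)\subset[0,\tilde R]$ forces $\hat{\eta}_\theta(h)=0$ whenever $\|h\|>C\tilde R$, so (i) holds with $R=C\tilde R$; linearity gives $B_\theta(-h)=-B_\theta(h)$, and the $\Lp{2}(E(\theta))$ norm is invariant under negation, yielding (ii).

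For (iii), interpreting pointwise equicontinuity as the family $\{\theta\mapsto \hat{\eta}_\theta(h)\}_{h}$ being equicontinuous in $\theta$, I would use that $\eta$ is uniformly continuous (being continuous with compact support) together with Assumption~\ref{ass:B_map}(d): for any $\delta>0$ pick $\psi>0$ so that $|\theta-\theta'|<\psi$ implies $\sup_h|\|B_\theta(h)\|_{\Lp{2}(E(\theta))}-\|B_{\theta'}(h)\|_{\Lp{2}(E(\theta'))}|$ is small, and then the uniform continuity of $\eta$ controls $|\hat{\eta}_\theta(h)-\hat{\eta}_{\theta'}(h)|$ uniformly in $h$. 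Property (iv) follows from Assumption~\ref{ass:weight_three} and continuity of $\eta$: there exists $r'>0$ with $\eta(t)\geq \eta(0)/2=:a$ for $t\leq r'$, and then the upper bound $\|B_\theta(h)\|_{\Lp{2}(E(\theta))}\leq C\|h\|$ from~\ref{ass:B_map}(a) gives $\hat{\eta}_\theta(h)\geq a$ for $\|h\|<r'/C=:r$.

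The main obstacle is property (v). Writing $u_i:=\|B_{\theta_i}(h_i)\|_{\Lp{2}(E(\theta_i))}$, I would first establish $|u_1-u_2|\leq \beta_{\xi,\psi}$ with $\beta_{\xi,\psi}\to 0$, by splitting
\[
|u_1-u_2|\leq \bigl|\|B_{\theta_1}(h_1)\|_{\Lp{2}(E(\theta_1))}-\|B_{\theta_1}(h_2)\|_{\Lp{2}(E(\theta_1))}\bigr|+\bigl|\|B_{\theta_1}(h_2)\|_{\Lp{2}(E(\theta_1))}-\|B_{\theta_2}(h_2)\|_{\Lp{2}(E(\theta_2))}\bigr|,
\]
where the first term is at most $C\xi$ by linearity of $B_{\theta_1}$ and~\ref{ass:B_map}(a), and the second is bounded by a modulus $\delta(\psi)\to 0$ by~\ref{ass:B_map}(d). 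Since $\eta$ is non-increasing, the natural goal is to pick $\alpha$ with $\alpha u_2\geq u_1$, giving $\hat{\eta}_{\theta_1}(h_1)\geq \hat{\eta}_{\theta_2}(\alpha h_2)$ and $c=1$; however this fails when $u_2\approx 0$, so I would split cases using $\alpha_{\xi,\psi}:=1+\sqrt{\beta_{\xi,\psi}}$. If $u_2\geq \sqrt{\beta_{\xi,\psi}}$, then $\alpha u_2\geq u_2+\beta_{\xi,\psi}\geq u_1$, so monotonicity of $\eta$ gives $\hat{\eta}_{\theta_1}(h_1)\geq \hat{\eta}_{\theta_2}(\alpha h_2)$. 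Otherwise both $u_1,\alpha u_2$ lie in $[0,2\sqrt{\beta_{\xi,\psi}}]$ (for $\beta\leq 1$), and setting $c_{\xi,\psi}:=\eta(2\sqrt{\beta_{\xi,\psi}})/\eta(0)$ yields
\[
\hat{\eta}_{\theta_1}(h_1)\geq \eta\bigl(2\sqrt{\beta_{\xi,\psi}}\bigr)\geq c_{\xi,\psi}\,\eta(0)\geq c_{\xi,\psi}\,\hat{\eta}_{\theta_2}(\alpha_{\xi,\psi} h_2).
\]
Finally, both $\alpha_{\xi,\psi}\to 1$ and $c_{\xi,\psi}\to 1$ as $\xi,\psi\to 0$ by continuity of $\eta$ at $0$ together with $\eta(0)>0$.
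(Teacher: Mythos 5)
Your proof is correct. Parts (i)--(iv) coincide with the paper's argument, with the mild improvement that in (iii) you explicitly invoke uniform continuity of $\eta$ (which it enjoys as a continuous, compactly supported function), whereas the paper's choice of $\delta$ as written depends on $t = \|B_\theta(h)\|_{\Lp{2}(E(\theta))}$ and hence on $h$, so the paper implicitly needs the same fact to get a bound uniform in $h$. For (v) your route is genuinely different. The paper splits on the fixed scale $\|h_2\| \geq r$ versus $\|h_2\| < r$, with $r$ the radius from (iv): in the first regime it builds $\alpha_{\xi,\psi}$ multiplicatively, exploiting that $\|B_{\theta_2}(h_2)\|_{\Lp{2}(E(\theta_2))}$ is bounded away from zero; in the second it converts an additive modulus-of-continuity estimate for $\eta$ into a multiplicative one via the lower bound $\hat{\eta}_{\theta_2}(h_2) \geq a$ from (iv). You instead work directly in the transported variable $u_i = \|B_{\theta_i}(h_i)\|_{\Lp{2}(E(\theta_i))}$ and split on whether $u_2$ exceeds the adaptive scale $\sqrt{\beta_{\xi,\psi}}$, where $\beta_{\xi,\psi}$ is the additive error extracted from~\ref{item:B.2_a} and~\ref{item:B.2_d}; above the threshold $\alpha = 1+\sqrt{\beta}$ absorbs the error, and below it $u_1$ and $\alpha u_2$ both lie near $0$ so continuity of $\eta$ at $0$ yields $c = \eta(2\sqrt{\beta})/\eta(0)\to 1$. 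Your version is a little leaner---it reasons in the variable that $\eta$ actually sees, and a single moving scale replaces the paper's two ingredients ($r$ and the modulus $\omega_B$)---while both arguments share the same implicit restriction to $\xi,\psi$ small enough to keep $c_{\xi,\psi}$ positive (the paper needs $\omega(C\xi + r\omega_B(\psi)) < a$; you need $2\sqrt{\beta_{\xi,\psi}}$ inside the support of $\eta$).
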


\begin{proof} We prove \ref{item:eta_hat_compact_support}-\ref{item:eta_hat_continuity} in turn.
\begin{enumerate}
\item\label{item:compact_support} Let $\tilde{R}>0$ be such that $\eta$ is compactly supported in $[0,\tilde{R}]$. From Assumption \ref{item:B.2_a}, there exist $C>0$ such that $\|B_\theta(h)\|_{\Lp{2}(E(\theta))} \geq \frac{1}{C}\|h\|$. Define $R=C\tilde{R}$. Now, if $\|h\| > R$ then
$\|B_\theta(h)\|_{\Lp{2}(E(\theta))} \geq \tilde{R}$. This implies that $\hat{\eta}_\theta(h)=\eta(\|B_\theta(h)\|_{\Lp{2}(E(\theta))}) = 0$. 

\item For all $h \in \bbR^d$,  by definition 
\[ \hat{\eta}_\theta(h)=\eta(\|B_\theta(h)\|_{\Lp{2}(E(\theta))})=\eta(\|-B_\theta(h)\|_{\Lp{2}(E(\theta))})=\eta(\|B_\theta(-h)\|_{\Lp{2}(E(\theta))})=\hat{\eta}_\theta(-h) \]
is an even function.

\item We choose $\theta \in \mathcal{S}$ and $\omega > 0$. 
Let $t= \|B_\theta(h)\|_{\Lp{2}(E(\theta))}$.
There exists $\delta>0$ such that for any $t' \in \bbR^d$ if $|t-t'| < \delta$, then 
\begin{align}\label{eqn:eta_difference}
 \left| \eta(t) - \eta(t') \right| < \omega.
\end{align}
Next, we set $t' = \|B_{\theta'}(h)\|_{\Lp{2}(E(\theta'))}$.  
By Assumption \ref{item:B.2_d} there exists $\psi > 0$ such that if $|\theta-\theta'|<\psi$, then 
\begin{align}\label{eqn:B_theta_difference}
\sup_{h\in B(0,R)}\left| \|B_\theta(h)\|_{\Lp{2}(E(\theta))} - \|B_{\theta'}(h)\|_{\Lp{2}(E(\theta'))} \right| < \delta.
\end{align}
Hence,
\begin{align}
 \left| \hat{\eta}_\theta(h) - \hat{\eta}_{\theta'}(h) \right|= \left| \eta(\|B_\theta(h)\|_{\Lp{2}(E(\theta))}) - \eta(\|B_{\theta'}(h)\|_{\Lp{2}(E(\theta'))}) \right| < \omega.
\end{align}
Therefore, $\theta \mapsto \hat{\eta}_\theta(h)$ is pointwise equicontinuous for $h\in B(0,R)$. For $\|h\| > R$, $\hat{\eta}_\theta(h)=0$, so we have pointwise equicontinuity for all $h$.

\item By \ref{ass:weight_one}-\ref{ass:weight_three} we assume that $\eta$ is decreasing and continuous with $\eta(0)>0$. There exists $\chi>0$  such that if $\|B_\theta(h)\|_{\Lp{2}(E(\theta))}<\chi$, then  $\eta(\|B_\theta(h)\|_{\Lp{2}(E(\theta))}) \geq \frac{\eta(0)}{2}=:a$. Let $r=\frac{\chi}{C}$.
If $\|h\| \leq r$, then using \ref{item:B.2_a} we have $\|B_\theta(h)\|_{\Lp{2}(E(\theta))} \leq C \|h\| < Cr=\chi$. Hence, if $\|h\|\leq r$, then  $\hat{\eta}_\theta(h) =\eta(\|B_\theta(h)\|_{\Lp{2}(E(\theta))}) \geq a$.

\item Fix $\xi > 0$, $\psi > 0$ and let $\|h_2\| \geq r$ for some $r>0$, which will be chosen appropriately in the sequel. If $\|h_1-h_2\| \leq \xi$ and $\|\theta_1-\theta_2\|\leq \psi$ then,
\begin{align}\label{eqn:proof_eta_bounds_greater_r}
\begin{aligned}
\|B_{\theta_1}(h_1)\|_{\Lp{2}(E(\theta_1))} & \leq \|B_{\theta_1}(h_2)\|_{\Lp{2}(E(\theta_1))}+\|B_{\theta_1}(h_1-h_2)\|_{\Lp{2}(E(\theta_1))} \\
& \leq \|B_{\theta_1}(h_2)\|_{\Lp{2}(E(\theta_1))}+C\|h_1-h_2\| \\
& \leq \|B_{\theta_1}(h_2)\|_{\Lp{2}(E(\theta_1))}+\frac{C\xi \|B_{\theta_1}(h_2)\|_{\Lp{2}(E(\theta_1))}}{\|B_{\theta_1}(h_2)\|_{\Lp{2}(E(\theta_1))}} \\
& \leq \|B_{\theta_1}(h_2)\|_{\Lp{2}(E(\theta_1))}\left(1+\frac{C^2\xi}{\|h_2\|} \right)  \\
& \leq \|B_{\theta_1}(h_2)\|_{\Lp{2}(E(\theta_1))}\underbrace{\left(1+\frac{C^2 \xi}{r}\right)}_{=:\tilde{\alpha}_\xi} \\
& \leq \tilde{\alpha}_\xi\left(\|B_{\theta_2}(h_2)\|_{\Lp{2}(E(\theta_2))}+\left(\|B_{\theta_1}(h_2)\|_{\Lp{2}(E(\theta_1))}-\|B_{\theta_2}(h_2)\|_{\Lp{2}(E(\theta_2))}\right)\right)\\
& = \tilde{\alpha}_\xi\|B_{\theta_2}(h_2)\|_{\Lp{2}(E(\theta_2))}\left(1+\frac{\|B_{\theta_1}(h_2)\|_{\Lp{2}(E(\theta_1))}-\|B_{\theta_2}(h_2)\|_{\Lp{2}(E(\theta_2))}}{\|B_{\theta_2}(h_2)\|_{\Lp{2}(E(\theta_2))}}\right)\\
&\leq \tilde{\alpha}_\xi\|B_{\theta_2}(h_2)\|_{\Lp{2}(E(\theta_2))}\left(1+C\omega_B(\|\theta_1-\theta_2\|)\right) \\
&\leq \underbrace{\tilde{\alpha}_\xi\left(1+C\omega_B(\psi)\right)}_{=:\alpha_{\xi,\psi}}\|B_{\theta_2}(h_2)\|_{\Lp{2}(E(\theta_2))} \\
\end{aligned}    
\end{align}
The first inequality follows from the triangle inequality;  
the second and fourth inequalities use Assumption~\ref{item:B.2_a}; 
the third inequality uses the fact that \(\|h_1 - h_2\| \leq \xi\); 
the fifth inequality relies on \(\|h_2\| \geq r\);  
the sixth inequality defines \(\tilde{\alpha}_\xi := \left(1 + \frac{C^2 \xi}{r}\right)\);  
the seventh inequality uses Assumption~\ref{item:B.2_d} with the modulus of continuity $\omega_B$.

Using \eqref{eqn:proof_eta_bounds_greater_r} and Assumption \ref{ass:weight_one}, implies that for $\|h_2\| \geq r$ we have, for any $c_{\xi,\psi}\in (0,1]$,
\begin{align}\label{eqn:eta_bounds_greater_r}
\hat{\eta}_{\theta_1}(h_1)=\eta(\|B_{\theta_1}(h_1)\|_{\Lp{2}(E(\theta_1))}) \geq c_{\xi,\psi} \eta\bigg(\alpha_{\xi,\psi}\|B_{\theta_2}(h_2)\|_{\Lp{2}(E(\theta_2))}\bigg)= c_{\xi,\psi}\hat{\eta}_{\theta_2}\left(\alpha_{\xi,\psi} h_2\right). 
\end{align}

Next 
we consider the case $\|h_2\|< r$. Let $\omega(\cdot)$ be a modulus of continuity for $\eta$ since it is continuous by Assumption \ref{ass:weight_two}. Then,
\begin{align}\label{eqn:modulus_continuity}
\begin{aligned}
|\hat{\eta}_{\theta_1}(h_1)-\hat{\eta}_{\theta_2}(h_2)| & = |\eta(\|B_{\theta_1}(h_1)\|_{\Lp{2}(E(\theta_1))})-\eta(\|B_{\theta_2}(h_2)\|_{\Lp{2}(E(\theta_2))})| \\
 & \leq \omega\Big(\|B_{\theta_1}(h_1)\|_{\Lp{2}(E(\theta_1))}-\|B_{\theta_1}(h_2)\|_{\Lp{2}(E(\theta_1))}\\
 & \qquad\qquad +\|B_{\theta_1}(h_2)\|_{\Lp{2}(E(\theta_1))}-\|B_{\theta_2}(h_2)\|_{\Lp{2}(E(\theta_2))}\Big)\\
 & \leq \omega\Big(\|B_{\theta_1}(h_1-h_2)\|_{\Lp{2}(E(\theta_1))}+\|h_2\|\omega_B(\|\theta_1-\theta_2\|)\Big)\\
 & \leq \omega\Big(C\|h_1-h_2\|+\|h_2\|\omega_B(\|\theta_1-\theta_2\|)\Big) 
\end{aligned}    
\end{align}
where the second inequality comes from reverse triangle inequality and the final inequality comes from Assumption \ref{item:B.2_a}.

Now, if $\|h_1 - h_2\| \leq \xi$ and $\|\theta_1-\theta_2\| \leq \psi$, then using \eqref{eqn:modulus_continuity} we have
\begin{align}\label{eqn:bounds_eta_continuity}
\begin{aligned}
\hat{\eta}_{\theta_1}(h_1) &\geq \hat{\eta}_{\theta_2}(h_2)-\omega(C\|h_1-h_2\|+\|h_2\|\omega_B(\|\theta_1-\theta_2\|)) \\
& = \hat{\eta}_{\theta_2}(h_2)\bigg(1-\frac{\omega(C\|h_1-h_2\|+\|h_2\|\omega_B(\|\theta_1-\theta_2\|))}{\hat{\eta}_{\theta_2}(h_2)}\bigg)\\
& \geq \hat{\eta}_{\theta_2}(h_2)\bigg(1-\frac{\omega(C\xi+r\omega_B(\psi))}{a}\bigg) \\
\end{aligned}
\end{align}
where the last inequality uses Part \ref{item:eta_hat_existence_of_constants} by choosing $r$ such that if $\|h_2\| < r$ then $\hat{\eta}_{\theta_2}(h_2)> a$. Define $c_{\xi,\psi}=:\bigg(1-\frac{\omega(C\xi+r\omega_B(\psi))}{a}\bigg)\leq 1$. 
Hence, using \eqref{eqn:bounds_eta_continuity} and Assumption \ref{ass:weight_one} we have for $\|h_2\| < r$, 
\begin{align}\label{eqn:eta_bounds_lesser_r}
\hat{\eta}_{\theta_1}(h_1) \geq c_{\xi,\psi} \hat{\eta}_{\theta_2}( h_2)   \geq c_{\xi,\psi} \hat{\eta}_{\theta_2}(\alpha_{\xi,\psi} h_2).   
\end{align}

Finally, using \eqref{eqn:eta_bounds_greater_r} and \eqref{eqn:eta_bounds_lesser_r} we have 
$\hat{\eta}_{\theta_1}(h_1) \geq c_{\xi,\psi} \hat{\eta}_{\theta_2}(\alpha_{\xi,\psi} h_2)$ 
where $c_{\xi,\psi} \to 1$ and $\alpha_{\xi,\psi} \to 1$ as $\xi,\psi \to 0$. \qedhere
\end{enumerate}
\end{proof}

Finally, we now prove the $\Gamma$-convergence of the discrete energy functional defined on $\Lambda^{(m)}_n$ to the continuum energy functional on $\Lambda$ i.e., $\mathcal{E}_\infty=\Glim_{n \to \infty}\mathcal{E}_{\varepsilon_n,m_n,n}$.

\begin{theorem}[$\Gamma$-Convergence]\label{thm:gamma_convergence}
 Assume $\Omega$, $\Lambda$, $\mathcal{S}$, $\bbP_\Lambda$, $\bbP_\mathcal{S}$, $\bbP_{\Lambda_n}$, $\bbP_{\Lambda^{(m)}_n}$, $\bbP_{\mathcal{S}_n}$, $E$, $B_\theta(h)$, $\eta$ and $\varepsilon_n$ satisfy
\ref{ass:domain_ll_one}-\ref{ass:domain_ll_three}, \ref{ass:prob_measures_S}-\ref{ass:prob_measures_Lambda}, \ref{ass:S_n}-\ref{ass:Lambda_m_n}, \ref{ass:E_map}-\ref{ass:B_map}, \ref{ass:weight_one}-\ref{ass:weight_four} and \ref{ass:eps_scale}-\ref{ass:eps_scale_2}. Define $\mathcal{E}_{\varepsilon_n,m_n,n}$ by  \eqref{eqn:discrete_energy_pertubed_intro} and $\cE_\infty$ by \eqref{eqn:continuum_energy_functional_intro}. Then, for $p \geq 1$ and any $M>0$, we have that $\mathcal{E}_{\varepsilon_n,m_n,n}$  $\Gamma-$ converges to $\mathcal{E}_\infty$ as $n \to \infty$ with probability one on the set $\lb (f,\bbP_\Lambda) \in\TLp{p}(\Lambda) \,:\, \|f\|_{\Lp{\infty}(\bbP_\Lambda)}\leq M\rb$.
\end{theorem}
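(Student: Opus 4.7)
The plan is to transfer the variational problem from the Wasserstein submanifold $\Lambda$ to the Euclidean parameter space $\cS$ via the bi-Lipschitz embedding $E$, then apply the parameter-space $\Gamma$-convergence result (Theorem~\ref{thm:gamma_convergence_parameter_manifold}) with carefully chosen sandwiching kernels that accommodate three sources of error: the empirical-to-population perturbation $\mu_i \mapsto \mu_i^{(m_n)}$, the local linearization of $E$ in the direction $B_\theta$, and the continuity modulus of $\eta$.

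First I would establish a uniform sandwich estimate for the edge weights. For $\theta_i,\theta_j\in\cS_n$ with $\theta_j=\theta_i+\eps_n h$ and $\|h\|\leq \tilde R$, combining Lemma~\ref{lemma:transport_maps_Dn} (which gives $\dWp{2}(\mu_i^{(m_n)},\mu_i)\lesssim \tilde q_k(m_n)$) with Lemma~\ref{lemma:linear_operator_B} (which gives $\dWp{2}(E(\theta_j),E(\theta_i))=\eps_n\|B_{\theta_i}(h)\|_{\Lp{2}(E(\theta_i))}+O(\eps_n^2\|h\|^2)$) and using \ref{ass:eps_scale}--\ref{ass:eps_scale_2}, yields
\begin{equation*}
\frac{\dWp{2}(\mu_i^{(m_n)},\mu_j^{(m_n)})}{\eps_n}=\|B_{\theta_i}(h)\|_{\Lp{2}(E(\theta_i))}+o(1)
\end{equation*}
uniformly in $\|h\|\leq \tilde R$. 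For any $\delta>0$, define lower/upper kernels $\hat\eta^{\pm,\delta}_\theta(h):=\eta\bigl((\|B_\theta(h)\|_{\Lp{2}(E(\theta))}\mp\delta)_+\bigr)$. By monotonicity and continuity of $\eta$ (\ref{ass:weight_one}--\ref{ass:weight_two}), for $n$ large enough
\begin{equation*}
\hat\eta^{-,\delta}_{\theta_i}(h)\;\leq\;\eta\!\lp\frac{\dWp{2}(\mu_i^{(m_n)},\mu_j^{(m_n)})}{\eps_n}\rp\;\leq\;\hat\eta^{+,\delta}_{\theta_i}(h).
\end{equation*}
Lemma~\ref{lemma:assumptions_eta_hat} guarantees these kernels satisfy the structural hypotheses needed for the parameter-space $\Gamma$-convergence.

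For the lim-inf inequality, let $(f_n,\bbP_{\Lambda_n^{(m_n)}})\to (f,\bbP_\Lambda)$ in $\TL_{\dWp{2}}$. Setting $\hat f_n:=f_n\circ D_n\circ E$ and $\hat f:=f\circ E$, Proposition~\ref{prop:TLp_distance_Wasserstein}, Lemma~\ref{lemma:transport_maps_Dn}, and the transport-map factorization $T_n=D_n\circ E\circ \hat T_n\circ E^{-1}$ show $\hat f_n\to\hat f$ in $\TLp{p}(\cS)$. The lower sandwich bound, together with the scaling $\tilde\eps_n/\eps_n\to L^{-1}$ from Lemma~\ref{lemma:sup_energy_bounded}, gives
\begin{equation*}
\mathcal{E}_{\eps_n,m_n,n}(f_n;\eta)\;\geq\;(1+o(1))\,\hat{\mathcal{E}}_{\tilde\eps_n,n}(\hat f_n;\hat\eta^{-,\delta}).
\end{equation*}
Applying the $\Gamma$-liminf from Theorem~\ref{thm:gamma_convergence_parameter_manifold} on $\cS$ with kernel $\hat\eta^{-,\delta}$ and then sending $\delta\downarrow 0$ by dominated convergence (using \ref{ass:weight_two} and \ref{ass:weight_four}) yields $\mathcal{E}_\infty(f)\leq\liminf_n \mathcal{E}_{\eps_n,m_n,n}(f_n)$.

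For the lim-sup inequality, take $f\in\Wkp{1}{p}(\Lambda,\dWp{2},\bbP_\Lambda)$ with $\|f\|_{\Lp{\infty}(\bbP_\Lambda)}\leq M$ (else the bound is trivial). Set $\hat f=f\circ E\in\Wkp{1}{p}(\cS)$ by Proposition~\ref{prop:Sobolev_Space_1}, and let $\hat f_n:\cS_n\to\bbR$ be the recovery sequence from Theorem~\ref{thm:gamma_convergence_parameter_manifold} applied with the upper kernel $\hat\eta^{+,\delta}$, truncated at level $M$ (truncation does not increase the discrete energy because $|a\wedge M-b\wedge M|\leq|a-b|$). Define $f_n(\mu_i^{(m_n)}):=\hat f_n(\theta_i)$. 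Proposition~\ref{prop:TLp_distance_Wasserstein}\ref{item:equivalence norms} converts $\hat f_n\circ\hat T_n\to \hat f$ in $\Lp{p}(\bbP_\cS)$ into $f_n\circ T_n\to f$ in $\Lp{p}(\bbP_\Lambda)$, which together with $\|\dWp{2}(T_n,\Id)\|_{\Lp{p}(\bbP_\Lambda)}\to 0$ (Proposition~\ref{prop:TLp_distance_Wasserstein} and Lemma~\ref{lemma:transport_maps_Dn}) gives $\TL_{\dWp{2}}$-convergence via Remark~\ref{rem:Back:OT:TLpConv}. The upper sandwich inequality provides
\begin{equation*}
\mathcal{E}_{\eps_n,m_n,n}(f_n;\eta)\;\leq\;(1+o(1))\,\hat{\mathcal{E}}_{\tilde\eps_n,n}(\hat f_n;\hat\eta^{+,\delta}),
\end{equation*}
so $\limsup_n\mathcal{E}_{\eps_n,m_n,n}(f_n)\leq\hat{\mathcal{E}}^{+,\delta}_\infty(\hat f)$, and $\delta\downarrow 0$ yields the bound by $\mathcal{E}_\infty(f)$.

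The main obstacle is Step~1: justifying that the Wasserstein edge weight can be uniformly replaced by $\eta(\|B_\theta(h)\|_{\Lp{2}(E(\theta))})$, uniformly over pairs of indices $i,j$ and over $h$ in a ball whose radius is of order $\tilde R$ (which the edge-counting argument ranges over). This requires coupling the $\infty$-Wasserstein rate $\tilde q_k(m_n)$ from Theorem~\ref{thm:rates_convergence} with the quadratic Taylor remainder from Lemma~\ref{lemma:linear_operator_B}, and then controlling the composition through the continuity modulus of $\eta$ in such a way that the resulting kernels $\hat\eta^{\pm,\delta}$ remain in the hypothesis class of Lemma~\ref{lemma:assumptions_eta_hat}. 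A subsidiary difficulty is the ``non-local'' scaling mismatch between $\eps_n$ and $\tilde\eps_n$, which must converge to the Lipschitz factor $L$ at a rate faster than the energy $\Gamma$-limit is sensitive to.
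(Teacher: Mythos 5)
Your sandwich strategy is sound and reaches the same conclusion as the paper, but via a genuinely different implementation. The paper controls the three error sources you identify by perturbing the \emph{bandwidth}: it reduces to indicator functions $\eta=a$ on $[0,r]$ via monotone approximation, and for each such indicator it inflates (resp.\ deflates) $\eps_n$ to $\eps'_n$ and then $\tilde\eps_n$ so that the implication chain $\dWp{2}(\mu_i^{(m_n)},\mu_j^{(m_n)})\leq r\eps_n \implies \dWp{2}(E(\theta_i),E(\theta_j))\leq r\eps'_n \implies \|B_{\theta_i}(\theta_j-\theta_i)\|_{\Lp{2}(E(\theta_i))}\leq r\tilde\eps_n$ holds; the ratios $\eps'_n/\eps_n$ and $\tilde\eps_n/\eps'_n$ converge to $1$ and absorb the error, and general $\eta$ is recovered by monotone convergence in Steps~2--3 of Lemmas~\ref{lemma:lim-sup} and~\ref{lemma:lim-inf}. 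You instead keep $\eps_n$ fixed and shift the \emph{kernel argument} by $\pm\delta$, reducing the error control to the uniform estimate $\dWp{2}(\mu_i^{(m_n)},\mu_j^{(m_n)})/\eps_n=\|B_{\theta_i}(h)\|_{\Lp{2}(E(\theta_i))}+o(1)$, and then send $\delta\downarrow 0$ by dominated convergence. Your route avoids the indicator decomposition and is arguably cleaner for a general non-increasing $\eta$; the paper's keeps the kernel unchanged, so the hypotheses verified in Lemma~\ref{lemma:assumptions_eta_hat} apply without modification.

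Two points need repair. First, you cite Lemma~\ref{lemma:assumptions_eta_hat} to conclude that $\hat\eta^{\pm,\delta}_\theta$ satisfies the hypotheses of Theorem~\ref{thm:gamma_convergence_parameter_manifold}, but that lemma is stated for the \emph{unshifted} kernel $\hat\eta_\theta(h)=\eta(\|B_\theta(h)\|_{\Lp{2}(E(\theta))})$. You must re-verify the five structural conditions for the shifted kernels: compact support and the positivity bound near zero require $\delta$ small enough that $\eta(\delta)>0$ and that the enlarged support of $\hat\eta^{+,\delta}_\theta$ stays inside $B(0,s_\theta)$; the key condition (v) then follows from the same triangle-inequality and modulus-of-continuity estimates in the proof of Lemma~\ref{lemma:assumptions_eta_hat}, since the shift $t\mapsto(t\mp\delta)_+$ commutes with the monotonicity and $h\mapsto\alpha h$ arguments, but this must be spelled out. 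Second, the factor $(1+o(1))$ together with the citation of $\tilde\eps_n/\eps_n\to L^{-1}$ from Lemma~\ref{lemma:sup_energy_bounded} is a misstep: that lemma builds an indicator kernel whose radius is parameterized through the Euclidean distance $|\theta_i-\theta_j|$, which is where the bi-Lipschitz constant $L$ of $E$ enters. In your construction the sandwich kernel is parameterized directly through $\|B_{\theta_i}(h)\|_{\Lp{2}(E(\theta_i))}$, the same $\eps_n$ appears on both sides of the inequality, and no rescaling of the bandwidth is required; dropping the $(1+o(1))$ factor and replacing $\tilde\eps_n$ by $\eps_n$ makes the step correct as you intend it.
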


We divide the proofs for $\Gamma$-convergence into lim-sup and lim-inf inequalities below.

\begin{lemma}[lim sup-inequality]\label{lemma:lim-sup} 
 Under the same conditions as Theorem \ref{thm:gamma_convergence} with probability one, for any function $f \in \Lp{p}(\bbP_\Lambda)$ with $\|f\|_{\Lp{\infty}(\bbP_\Lambda)} < + \infty$, there exists a sequence $f_n$ satisfying $f_n \to f$ in $\TL_{\dWp{2}}(\Probac{(\Omega)})$ with $\|f_n\|_{\Lp{\infty}(\bbP_{\Lambda^{(m)}_n})} < + \infty$ and 
\begin{align}\label{eqn:lim_sup_inequality}
 \limsup_{n \to \infty} \mathcal{E}_{\varepsilon_n,m_n,n}(f_n) \leq \mathcal{{E}}_\infty({f})   
\end{align}
\end{lemma}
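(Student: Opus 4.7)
The strategy is to lift a smooth recovery sequence from the parameter space $\mathcal{S}$ back to the perturbed feature space $\Lambda_n^{(m_n)}$ via $E$ and $D_n$, exploiting the identification $\mathcal{E}_\infty(f)=\hat{\mathcal{E}}_\infty(\hat{f})$ with $\hat{f}:=f\circ E$. The estimate is trivial when $\mathcal{E}_\infty(f)=+\infty$, so I assume $\hat{f}\in\Wkp{1}{p}(\mathcal{S})\cap\Lp{\infty}(\mathcal{S})$ (finiteness of $\mathcal{E}_\infty(f)$ forces $f\in\Wkp{1}{p}(\Lambda,\dWp{2},\bbP_\Lambda)$ by the lemma preceding Theorem~\ref{thm:gamma_convergence}, which is equivalent to $\hat{f}\in\Wkp{1}{p}(\mathcal{S})$ by Proposition~\ref{prop:Sobolev_Space_1}). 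By mollification and truncation at level $M:=\|f\|_{\Lp{\infty}(\bbP_\Lambda)}$, I approximate $\hat{f}$ by $\hat{f}^{(k)}\in\mathrm{C}^1(\bar{\mathcal{S}})$ with $\|\hat{f}^{(k)}\|_{\Lp{\infty}}\leq M$, $\hat{f}^{(k)}\to\hat{f}$ in $\Lp{p}(\bbP_\mathcal{S})$, and $\hat{\mathcal{E}}_\infty(\hat{f}^{(k)})\to\hat{\mathcal{E}}_\infty(\hat{f})$. A diagonal extraction at the end reduces the task to the smooth case.

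For smooth $\hat{f}\in\mathrm{C}^1(\bar{\mathcal{S}})$, define $f_n(\mu^{(m_n)}_i):=\hat{f}(\theta_i)$, giving $\|f_n\|_{\Lp{\infty}(\bbP_{\Lambda_n^{(m_n)}})}\leq M$ automatically. To verify $\TL_{\dWp{2}}$-convergence, use the transport map $T_n=D_n\circ E\circ \hat{T}_n\circ E^{-1}$ from Proposition~\ref{prop:TLp_distance_Wasserstein}, where $\hat{T}_n$ pushes $\bbP_\mathcal{S}$ forward to $\bbP_{\mathcal{S}_n}$ with $\|\hat{T}_n-\Id\|_{\Lp{p}(\bbP_\mathcal{S})}\to 0$. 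Proposition~\ref{prop:TLp_distance_Wasserstein}(2) combined with Lemma~\ref{lemma:transport_maps_Dn} yields $\|\dWp{2}(T_n,\Id)\|_{\Lp{p}(\bbP_\Lambda)}\to 0$, while uniform continuity of $\hat{f}$ and Proposition~\ref{prop:TLp_distance_Wasserstein}(3) give $\|f_n\circ T_n-f\|_{\Lp{p}(\bbP_\Lambda)}\to 0$; Proposition~\ref{prop:TLp_distance} then concludes $f_n\to f$ in $\TL_{\dWp{2}}(\Probac(\Omega))$.

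The crucial energy estimate exploits the compact support of $\eta$: only pairs $(i,j)$ with $\dWp{2}(\mu^{(m_n)}_i,\mu^{(m_n)}_j)\leq\tilde{R}\varepsilon_n$ contribute, and for these Lemma~\ref{lemma:transport_maps_Dn} together with the bi-Lipschitz property of $E$ force $|\theta_j-\theta_i|\lesssim\varepsilon_n$. On this range, Lemma~\ref{lemma:linear_operator_B} gives
\[
\frac{\dWp{2}(\mu^{(m_n)}_i,\mu^{(m_n)}_j)}{\varepsilon_n}=\left\|B_{\theta_i}\!\left(\tfrac{\theta_j-\theta_i}{\varepsilon_n}\right)\right\|_{\Lp{2}(E(\theta_i))}+o(1),
\]
where the $o(1)$ uses $\varepsilon_n\gg\tilde{q}_k(m_n)$ (Assumption~\ref{ass:eps_scale_2}). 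Applying Lemma~\ref{lemma:assumptions_eta_hat}(v) with indices arranged to give an upper bound yields $\eta(\dWp{2}(\mu^{(m_n)}_i,\mu^{(m_n)}_j)/\varepsilon_n)\leq c_n^{-1}\hat{\eta}_{\theta_i}(\alpha_n(\theta_j-\theta_i)/\varepsilon_n)$ with $c_n,\alpha_n\to 1$. A $\mathrm{C}^1$-Taylor expansion of $|\hat{f}(\theta_j)-\hat{f}(\theta_i)|^p$ then reduces $\mathcal{E}_{\varepsilon_n,m_n,n}(f_n)$ to $(1+o(1))\hat{\mathcal{E}}_{\tilde{\varepsilon}_n,n}(\hat{f};\hat{\eta}_\theta)$ for a scale $\tilde{\varepsilon}_n\sim\varepsilon_n$, and the parameter-space $\Gamma$-$\limsup$ (the step d$\to$e in Figure~\ref{fig:Main_Idea}, adapted from \cite[Thm.~1.4]{Matt_p_Laplacian} with the kernel hypotheses verified by Lemma~\ref{lemma:assumptions_eta_hat}) closes the bound at $\hat{\mathcal{E}}_\infty(\hat{f})=\mathcal{E}_\infty(f)$.

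The main obstacle is preserving the sharp constant across the two compounded perturbations ($\mu_i\to\mu^{(m_n)}_i$ via empirical approximation, and $\dWp{2}\to\|B_\theta(\cdot)\|_{\Lp{2}(E(\theta))}$ via linearisation of the Wasserstein metric), both of which interact non-linearly through $\eta$. The rescaling factors $c_n,\alpha_n$ inherited from Lemma~\ref{lemma:assumptions_eta_hat}(v) must tend to $1$ quickly enough that the limiting integral is exactly $\hat{\mathcal{E}}_\infty(\hat{f})$ rather than a multiple of it; this is delivered by Assumption~\ref{ass:eps_scale_2} to absorb the empirical error into $o(\varepsilon_n)$ and by the uniform continuity in Assumption~\ref{item:B.2_d} to control the $\theta$-dependence of $B_\theta$.
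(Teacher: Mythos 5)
Your strategy coincides with the paper's: reduce to $\hat f\in\mathrm C^2(\cS)$ by density and a diagonal argument, set $\hat f_n=\hat f\lfloor_{\cS_n}$, lift to $f_n=\hat f_n\circ E^{-1}\circ D_n^{-1}$, verify $\TL_{\dWp{2}}$-convergence via Proposition~\ref{prop:TLp_distance_Wasserstein}, and close the estimate with the parameter-space $\limsup$ bound $\limsup_n\hat{\cE}_{\tilde\eps_n,n}(\hat f_n)\le\hat{\cE}_\infty(\hat f)=\cE_\infty(f)$ from Lemma~\ref{lemma:lim-sup_parameter_manifold}.

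The one place where you part ways with the paper is the kernel comparison that converts $\cE_{\eps_n,m_n,n}(f_n)$ into $(1+o(1))\hat{\cE}_{\tilde\eps_n,n}(\hat f_n)$. The paper does not invoke Lemma~\ref{lemma:assumptions_eta_hat}\ref{item:eta_hat_continuity} here; instead it proves the chain
\[
\eta\!\left(\tfrac{\dWp{2}(D_nE(\theta),D_nE(\tau))}{\eps_n}\right)\le\eta\!\left(\tfrac{\dWp{2}(E(\theta),E(\tau))}{\eps_n'}\right)\le\eta\!\left(\tfrac{\|B_\theta(\tau-\theta)\|_{\Lp2(E(\theta))}}{\tilde\eps_n}\right)
\]
first for indicator $\eta$ (choosing $\eps_n',\tilde\eps_n$ explicitly), then for piecewise-constant $\eta$ by superposition, then for general continuous $\eta$ via monotone convergence. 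Your shortcut of citing Lemma~\ref{lemma:assumptions_eta_hat}\ref{item:eta_hat_continuity} directly is defensible but not immediately valid as written: that lemma compares $\hat\eta_{\theta_1}(h_1)$ with $\hat\eta_{\theta_2}(h_2)$ for $h_1,h_2\in\bbR^d$ close in Euclidean distance, whereas the quantity you want to bound, $\eta(\dWp{2}(\mu_i^{(m_n)},\mu_j^{(m_n)})/\eps_n)$, is $\eta$ evaluated at a real number that is \emph{not} a priori of the form $\|B_{\theta_i}(h_1)\|_{\Lp2(E(\theta_i))}$ for a vector $h_1$ close to $(\theta_j-\theta_i)/\eps_n$. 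You would first need to recast it as $\hat\eta_{\theta_i}(h_1)$ by a rescaling $h_1=\lambda(\theta_j-\theta_i)/\eps_n$ with $\lambda$ chosen so that $\|B_{\theta_i}(h_1)\|_{\Lp2(E(\theta_i))}$ matches the Wasserstein ratio, then bound $\|h_1-(\theta_j-\theta_i)/\eps_n\|$ using Assumption~\ref{item:B.2_a}, Lemma~\ref{lemma:linear_operator_B}, and the empirical estimate $\tilde q_k(m_n)/\eps_n\to0$ before Lemma~\ref{lemma:assumptions_eta_hat}\ref{item:eta_hat_continuity} applies. This is exactly the content the paper's indicator argument packages more economically. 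Two further small points: the Taylor expansion of $|\hat f(\theta_j)-\hat f(\theta_i)|^p$ does not belong in the reduction to $\hat{\cE}_{\tilde\eps_n,n}$ — it lives inside the parameter-space $\limsup$ proof (Lemma~\ref{lemma:lim-sup_parameter_manifold}, Step~4) — and that step requires $\mathrm C^2$, not $\mathrm C^1$, since a second-order expansion is used.
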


\begin{proof}
Firstly, let us fix $f \in \Lp{p}(\bbP_\Lambda)$ and we assume $\mathcal{{E}}_\infty({f}) < + \infty$; otherwise the inequality in  \eqref{eqn:lim_sup_inequality} is trivial. Now  $\mathcal{{E}}_\infty({f}) < + \infty$ if and only if, $f \in \Wkp{1}{p}(\Lambda,\dWp{2},\bbP_\Lambda)$ and we define $\hat{f}=f \circ E \in \Wkp{1}{p}(\mathcal{S})$ using Proposition~\ref{prop:Sobolev_Space_1}. Assume that the result holds for $\hat{f} \in \Ck{2}(\cS)$; that is, there exists a sequence $f_n \to f=\hat{f} \circ E^{-1}$ such that \eqref{eqn:lim_sup_inequality} is satisfied. Then, for $\hat{f}\notin \Ck{2}(\cS)$ we can take a sequence $\hat{f}^{(k)}\in\Ck{2}(\cS)$ such that $\hat{f}^{(k)}\to \hat{f}$ in $\Lp{p}(\cS)$ and $f^{(k)}=\hat{f}^{(k)}\circ E^{-1} \to f$ in $\Lp{p}(\Lambda,\dWp{2},\bbP_\Lambda)$.
For each $\hat{f}^{(k)}$ we can find a recovery sequence for $f$ as $k \to \infty$ by a diagonalisation argument. 

We assume that $\hat{f}\in\Ck{2}(\cS)$.
By the previous lemma, the assumptions of Lemma~\ref{lemma:lim-sup_parameter_manifold} are satisfied for $\hat{\cE}_{\eps,n}$ defined by~\eqref{eqn:discrete_energy_parameter_intro} with $\hat{W}_{ij} = \frac{1}{\eps^d_n} \hat{\eta}_{\theta_i}\lp\frac{\theta_j-\theta_i}{\eps_n}\rp$ and $\hat{\eta}_\theta(h) = \eta(\|B_\theta(h)\|_{\Lp{2}(E(\theta))})$.
Hence, $\hat{f}_n:= \hat{f}\lfloor_{\cS_n}\in\Lp{p}(\bbP_{\cS_n})$ satisfies $\hat{f}_n\to \hat{f}$ in $\TLp{p}(\cS)$ and $\limsup\limits_{n\to\infty} \hat{\cE}_{{\eps}_n,n}(\hat{f}_n) \leq \hat{\cE}_\infty(\hat{f}) = \cE_\infty(f)$ for any sequence ${\eps}_n\to 0$ satisfying Assumption \ref{ass:eps_scale}.

Define $D_n$ as in Lemma~\ref{lemma:transport_maps_Dn} and let $f_n = \hat{f}_n \circ E^{-1} \circ D_n^{-1}$. Let $\hat{T}_n$ be a sequence of transport maps satisfying $\hat{T}_{n\#} \bbP_{\cS} = \bbP_{\cS_n}$ and $\|\hat{T}_n -\Id\|_{\Lp{p}(\bbP_\cS)}\to 0$ (see Theorem~\ref{thm:rates_convergence}).
Define $T_n=D_n \circ E \circ \hat{T}_n\circ E^{-1}$. 
By Proposition~\ref{prop:TLp_distance_Wasserstein} we have $f_n\to f$ in  $\TL_{\dWp{2}}(\Probac{(\Omega)})$.

We claim that the inequality in~\eqref{eqn:lim_sup_inequality} holds for this choice of $f_n$. We have,
\begin{align}\label{eqn:energy_functional_bounds_gamma_convergence_lsup}
\begin{aligned}
\mathcal{E}_{\varepsilon_n,m_n,n}(f_n) & = \frac{1}{n^2\varepsilon^{p+d}_n} \sum_{i,j=1}^n \eta\bigg(\frac{\dWp{2}(\mu^{(m_n)}_i,\mu^{(m_n)}_j)}{\varepsilon_n}\bigg) |f_n(\mu^{(m_n)}_i)-f_n(\mu^{(m_n)}_j)|^p  \\
& = \frac{1}{\varepsilon^{p+d}_n} \int_{\Lambda^{(m)}_n} \int_{\Lambda^{(m)}_n} \eta\bigg(\frac{\dWp{2}(\mu,\nu)}{\varepsilon_n} \bigg)|f_n(\mu) - f_n(\nu)|^p \, \dd\bbP_{\Lambda^{(m)}_n}(\mu) \,\dd\bbP_{\Lambda^{(m)}_n}(\nu) \\
& = \frac{1}{\varepsilon^{p+d}_n} \int_{\Lambda^{(m)}_n} \int_{\Lambda^{(m)}_n} \eta\bigg(\frac{\dWp{2}(\mu,\nu)}{\varepsilon_n} \bigg)|\hat{f}_n\circ E^{-1} \circ D^{-1}_n(\mu) -\hat{f}_n\circ E^{-1} \circ D^{-1}_n(\nu)|^p \\
& \qquad \qquad  \dd(D_n \circ E)_\#\bbP_{\mathcal{S}_n}(\mu)\,\dd (D_n \circ E)_\#\bbP_{\mathcal{S}_n}(\nu)\\
& = \frac{1}{\varepsilon^{p+d}_n} \int_{\mathcal{S}_n} \int_{\mathcal{S}_n} \eta\bigg(\frac{\dWp{2}(D_n \circ E(\theta), D_n \circ E(\tau))}{\varepsilon_n} \bigg)|\hat{f}_n(\theta) -\hat{f}_n(\tau)|^p \,\dd\bbP_{\mathcal{S}_n}(\theta)\,\dd\bbP_{\mathcal{S}_n}(\tau).
\end{aligned}    
\end{align}
We complete the proof in three steps.
In the first step we assume that $\eta$ is an indicator function.
In the second step we assume that $\eta$ is piecewise constant.
And in the third step we treat the general case.

\paragraph{Step 1.}

We will show that $\eta\bigg(\frac{\dWp{2}(D_n \circ E(\theta), D_n \circ E(\tau))}{\varepsilon_n}\bigg) \leq \eta\bigg(\frac{\dWp{2}(E(\theta),E(\tau))}{\varepsilon'_n}\bigg) \leq \eta\bigg(\frac{\|B_\theta(\tau-\theta)\|_{\Lp{2}(E(\theta))}}{\tilde{\varepsilon}_n}\bigg)$ for an appropriate choice of  $\tilde{\varepsilon}_n, \varepsilon'_n$ with $\frac{\tilde{\varepsilon}_n}{\varepsilon_n} \to 1$, $\frac{\varepsilon'_n}{\varepsilon_n} \to 1$ as $n \to \infty$.

Using~\eqref{eqn:rates_dwp}
and Theorem~\ref{thm:rates_convergence} there exists $\hat{C}>0$
\begin{align}\label{eqn:wasserstein_distance_bounds2}
\begin{aligned}
 \dWp{2}(E(\theta),E(\tau)) & \leq \dWp{2}(D_n \circ E(\theta), D_n \circ E(\tau)) +\hat{C}\tilde{q}_k(m_n)
\end{aligned}
\end{align}
for all $m_n$ sufficiently large with probability one. 

Next, for a suitable choice of $\varepsilon'_n$ (appropriately selected later), we have by Lemma~\ref{lemma:linear_operator_B},
\begin{align}\label{eqn:wasserstein_distance_bounds3}
\la \frac{\dWp{2}(E(\tau),E(\theta))}{\eps_n^\prime} - \lda B_\theta\lp\frac{\tau-\theta}{\eps_n^\prime}\rp \rda_{\Lp{2}(E(\theta))} \ra \leq \frac{C\|\tau-\theta\|^2}{\eps_n^\prime} \leq \frac{CL^2 \dWp{2}^2(E(\tau),E(\theta))}{\eps_n^\prime}.
\end{align}

Let us consider the case when $\eta$ is defined as follows,
\begin{align}\label{eqn:eta_indicator_function}
\begin{aligned}
\quad \eta(t) &:=
\left\{\begin{array}{rcll}
a &\quad\text{if } t \leq r,  \\
0 &\quad\text{else } .  \\[1.2ex]
\end{array}\right.\\
\end{aligned} 
\end{align}
for any two positive constants $a,r$. 

To show $\eta\bigg(\frac{\dWp{2}(D_n \circ E(\theta), D_n \circ E(\tau))}{\varepsilon_n} \bigg) \leq \eta\bigg(\frac{\dWp{2}(E(\theta),E(\tau))}{\varepsilon'_n}\bigg)$, it is sufficient to show that
\begin{align}\label{eqn:eta_equality}
\eta\bigg(\frac{\dWp{2}(D_n \circ E(\theta), D_n \circ E(\tau))}{\varepsilon_n}\bigg)=a \implies \eta\bigg(\frac{\dWp{2}(E(\theta),E(\tau))}{\varepsilon'_n}\bigg)=a.
\end{align}
which is equivalent to $\frac{\dWp{2}(D_n \circ E(\theta), D_n \circ E(\tau))}{\varepsilon_n} \leq r$ implies $\frac{\dWp{2}(E(\theta_i),E(\theta_j))}{\varepsilon'_n} \leq r$. 

Assume that $\frac{\dWp{2}(D_n \circ E(\theta), D_n \circ E(\tau))}{\varepsilon_n} \leq r$. Now, by \eqref{eqn:wasserstein_distance_bounds2} for all $m_n$ sufficiently large 
\begin{align}
\begin{aligned}
\dWp{2}(E(\theta),E(\tau)) & \leq \dWp{2}(D_n \circ E(\theta), D_n \circ E(\tau)))+\hat{C}\tilde{q}_k(m_n)\\
& \leq r\varepsilon_n+\hat{C}\tilde{q}_k(m_n) \\
& = r \varepsilon'_n\\
\end{aligned}    
\end{align}
by choosing  $\varepsilon'_n=\varepsilon_n+\frac{\hat{C}\tilde{q}_k(m_n)}{r}$.
Hence, 
\begin{align}\label{eqn:w2_n_w2}
\eta\bigg(\frac{\dWp{2}(D_n \circ E(\theta), D_n \circ E(\tau))}{\varepsilon_n}\bigg) \leq \eta\bigg(\frac{\dWp{2}(E(\theta),E(\tau))}{\varepsilon'_n}\bigg)    
\end{align}

Similarly, to show $\eta\bigg(\frac{\dWp{2}(E(\theta),E(\tau))}{\varepsilon'_n}\bigg) \leq  \eta\bigg(\frac{\|B_{\theta}({\tau-\theta})\|_{\Lp{2}(E(\theta))}}{\tilde{\varepsilon}_n}\bigg)$, it is sufficient to show that
\begin{align}\label{eqn:eta_prime_equality}
\eta\bigg(\frac{\dWp{2}(E(\theta),E(\tau))}{\varepsilon'_n}\bigg)=a \implies \eta\bigg(\frac{\|B_{\theta}({\tau-\theta})\|_{\Lp{2}(E(\theta))}}{\tilde{\varepsilon}_n}\bigg)=a.
\end{align}
which is equivalent to $\frac{\dWp{2}(E(\theta),E(\tau))}{\varepsilon'_n} \leq r$ implies $\frac{\|B_{\theta}({\tau-\theta})\|_{\Lp{2}(E(\theta))}}{\tilde{\varepsilon}_n} \leq r$. 

Assume that $\frac{\dWp{2}(E(\theta),E(\tau))}{\varepsilon'_n} \leq r$. Now, using \eqref{eqn:wasserstein_distance_bounds3},
\begin{align}\label{eqn:linear_operator_1}
 \begin{aligned}
\|B_{\theta}(\tau-\theta)\|_{\Lp{2}(E(\theta))} & \leq \dWp{2}(E(\theta),E(\tau)) + CL^2 \dWp{2}^2(E(\theta),E(\tau)) \\
& \leq r\varepsilon'_n + r^2CL^2\varepsilon'^2_n \\
& =r\tilde{\varepsilon}_n\\
 \end{aligned}   
\end{align}
by choosing $\tilde{\varepsilon}_n=\varepsilon'_n+rCL^2\varepsilon'^2_n$.
Hence, 
\begin{align}\label{eqn:w2_b_theta}
\eta\bigg(\frac{\dWp{2}(E(\theta),E(\tau))}{\varepsilon'_n}\bigg) \leq \eta\bigg(\frac{\|B_{\theta}({\tau-\theta})\|_{\Lp{2}(E(\theta))}}{\tilde{\varepsilon}_n}\bigg).
\end{align}

Substituting the bound in~\eqref{eqn:energy_functional_bounds_gamma_convergence_lsup}, and using \eqref{eqn:w2_n_w2}, \eqref{eqn:w2_b_theta} with $\left(\frac{\tilde{\varepsilon}_n}{\varepsilon'_n} \right)\to 1$, $\left(\frac{\varepsilon'_n}{\varepsilon_n} \right) \to 1$ as $n \to \infty$ we have
\begin{equation*}
\begin{aligned}
 \limsup_{n \to \infty}\mathcal{E}_{\varepsilon_n,m_n,n}(f_n) 
& 
 \leq \limsup_{n \to \infty}\frac{1}{\varepsilon^{p+d}_n} \int_{\mathcal{S}_n \times \mathcal{S}_n}\eta\bigg(\frac{\|B_{\theta}(\tau-\theta)\|_{\Lp{2}(E(\theta))}}{\tilde{\varepsilon}_n}\bigg)|\hat{f}_n(\theta) - \hat{f}_n(\tau)|^p \,\dd\bbP_{\mathcal{S}_n}(\theta)\,\dd\bbP_{\mathcal{S}_n}(\tau)   \\
 & =  \limsup_{n \to \infty}\frac{1}{\varepsilon^{p+d}_n} \int_{\mathcal{S}_n \times\mathcal{S}_n}\hat{\eta}_\theta\bigg(\frac{\tau-\theta}{\tilde{\varepsilon}_n}\bigg)|\hat{f}_n(\theta) - \hat{f}_n(\tau)|^p \,\dd\bbP_{\mathcal{S}_n}(\theta)\,\dd\bbP_{\mathcal{S}_n}(\tau)   \\
 & = \limsup_{n\to\infty} \lp\frac{\tilde{\eps}_n}{\eps_n}\rp^{p+d} \hat{\cE}_{\tilde{\eps}_n,n} (\hat{f}_n) \\
 & \leq \cE_\infty(f).
\end{aligned}    
\end{equation*}
where 
the final inequality follows from Lemma \ref{lemma:lim-sup_parameter_manifold}.

\paragraph{Step 2.}
Let us now consider the case when $\eta$ is piecewise constant, decreasing and with compact support.  
In this case, we let $\eta=\sum_{k=1}^l\eta_k$ for some $l$ and functions $\eta_k$ as defined in \eqref{eqn:eta_indicator_function}. 
For clarity let us denote the dependence of $\eta$ on $\cE_{\eps,m,n}$ and $\cE_\infty$ by $\cE_{\eps,m,n}(\cdot;\eta)$ and $\cE_\infty(\cdot;\eta)$.
Note, $\cE_{\eps,m,n}(\cdot;\eta) = \sum_{k=1}^l \cE_{\eps,m,n}(\cdot;\eta_k)$ and $\cE_\infty(\cdot;\eta) = \sum_{k=1}^l \cE_\infty(\cdot;\eta_k)$.

Therefore,
\begin{equation*}
\begin{aligned}
\limsup_{n\to \infty}\mathcal{E}_{\varepsilon_n,m_n,n}(f_n;\eta)& =\limsup_{n \to \infty} \sum_{k=1}^l \mathcal{E}_{\varepsilon_n,m,n}(f_n;\eta_k) \leq \sum_{k=1}^l \limsup_{n \to \infty} \mathcal{E}_{\varepsilon_n,m,n}(f_n;\eta_k) \\
 & \leq \sum_{k=1}^l\mathcal{{E}}_\infty({f};\eta_k) =\mathcal{E}_\infty(f).
\end{aligned} 
\end{equation*}
where the first inequality holds by subadditivity of $\limsup$.

\paragraph{Step 3.}
We assume that $\eta$ satisfies the assumptions in the statement of the theorem. We can find a decreasing sequence of piecewise constant functions $\eta_k:[0,\infty) \to [0,\infty)$ with $\eta_k \searrow \eta$ as $k \to \infty$ a.e. 
By the previous step $\limsup\limits_{n\to\infty} \cE_{\eps_n,m_n,n}(f_n;\eta) \leq \limsup\limits_{n\to\infty} \cE_{\eps_n,m_n,n}(f_n;\eta_k) \leq \cE_\infty(f;\eta_k)$.
By the monotone convergence theorem $\cE_\infty(f;\eta_k)\to \cE_\infty(f;\eta)$ as $k\to\infty$.
\end{proof}

\begin{lemma}[lim inf-inequality]\label{lemma:lim-inf} 
 Under the same conditions as Theorem \ref{thm:gamma_convergence} with probability one, for any function $f \in \Lp{p}(\bbP_\Lambda)$ with $\|f\|_{\Lp{\infty}(\bbP_\Lambda)} < + \infty$, and any sequences $f_n$ satisfying $f_n \to f$ in $\TL_{\dWp{2}}(\Probac{(\Omega)})$ with $\|f_n\|_{\Lp{\infty}(\bbP_{\Lambda^{(m)}_n})} < + \infty$ we have, 
\begin{align}\label{eqn:lim_inf_inequality}
 \liminf_{n \to \infty} \mathcal{E}_{\varepsilon_n,m_n,n}(f_n) \geq \mathcal{{E}}_\infty({f})   
\end{align}
\end{lemma}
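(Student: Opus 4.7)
The plan is to reduce to the parameter-space lim-inf inequality via the change of variables $\hat{f}_n := f_n \circ D_n \circ E$, then exploit the tight geometric approximations $\dWp{2}(\mu_i^{(m_n)},\mu_j^{(m_n)})\approx \dWp{2}(E(\theta_i),E(\theta_j)) \approx \|B_{\theta_i}(\theta_j-\theta_i)\|_{\Lp{2}(E(\theta_i))}$ that hold below the scale $\eps_n$. I would first assume $\liminf_n\mathcal{E}_{\eps_n,m_n,n}(f_n)<+\infty$ (otherwise the claim is trivial) and pass to a non-relabelled subsequence achieving the liminf with a uniform energy bound. With $\hat{f}_n:=f_n\circ D_n\circ E$, Proposition~\ref{prop:TLp_distance_Wasserstein} together with the hypothesis $f_n\to f$ in $\TL_{\dWp{2}}$ gives $\hat{f}_n\to \hat{f}:=f\circ E$ in $\TLp{p}(\mathcal{S})$, and Theorem~\ref{thm:compactness} combined with Proposition~\ref{prop:Sobolev_Space_1} places $\hat{f}$ in $\Wkp{1}{p}(\mathcal{S})$.

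The core step is a pointwise lower bound on each kernel weight. For any pair $(i,j)$ contributing to the sum, $\dWp{2}(\mu_i^{(m_n)},\mu_j^{(m_n)})\leq \tilde{R}\eps_n$; Lemma~\ref{lemma:transport_maps_Dn} and Assumption~\ref{ass:eps_scale_2} make the perturbation $\tilde{q}_k(m_n)=o(\eps_n)$, so $\dWp{2}(E(\theta_i),E(\theta_j))\leq (\tilde{R}+o(1))\eps_n$ and, by the bi-Lipschitz property of $E$, $\|\theta_i-\theta_j\|\lesssim \eps_n$. Lemma~\ref{lemma:linear_operator_B} then provides
\begin{equation*}
\left|\dWp{2}(E(\theta_i),E(\theta_j))-\|B_{\theta_i}(\theta_j-\theta_i)\|_{\Lp{2}(E(\theta_i))}\right| \leq C\|\theta_j-\theta_i\|^2,
\end{equation*}
which is $o(\eps_n)$ on the support. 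Combining these two approximations with the multiplicative perturbation bound in property~\ref{item:eta_hat_continuity} of Lemma~\ref{lemma:assumptions_eta_hat}, I derive
\begin{equation*}
\eta\!\left(\frac{\dWp{2}(\mu_i^{(m_n)},\mu_j^{(m_n)})}{\eps_n}\right)\geq c_n\,\hat{\eta}_{\theta_i}\!\left(\alpha_n\frac{\theta_j-\theta_i}{\eps_n}\right),
\end{equation*}
with $c_n,\alpha_n\to 1$, where $\hat{\eta}_\theta(h)=\eta(\|B_\theta(h)\|_{\Lp{2}(E(\theta))})$. Setting $\tilde{\eps}_n:=\eps_n/\alpha_n$ upgrades this to $\mathcal{E}_{\eps_n,m_n,n}(f_n)\geq c_n\alpha_n^{p+d}\hat{\mathcal{E}}_{\tilde{\eps}_n,n}(\hat{f}_n;\hat{\eta})$. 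Since Lemma~\ref{lemma:assumptions_eta_hat} verifies all the admissibility conditions for $\hat{\eta}_\theta$, the parameter-space lim-inf inequality (the counterpart of Lemma~\ref{lemma:lim-sup_parameter_manifold}) applies to $(\hat{f}_n,\tilde{\eps}_n)$ and yields $\liminf_n \hat{\mathcal{E}}_{\tilde{\eps}_n,n}(\hat{f}_n;\hat{\eta})\geq \hat{\mathcal{E}}_\infty(\hat{f})=\mathcal{E}_\infty(f)$, which concludes the proof.

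The main obstacle is obtaining the kernel lower bound uniformly over contributing pairs $(i,j)$ without losing a multiplicative factor larger than $1+o(1)$: because $\eta$ is only continuous (not Lipschitz), a naive additive estimate $\eta(t+o(1))\geq \eta(t)-o(1)$ is quantitatively too weak for a variational lower bound, which is precisely why the \emph{multiplicative} perturbation statement in property~\ref{item:eta_hat_continuity} of Lemma~\ref{lemma:assumptions_eta_hat} is formulated the way it is. A secondary technicality is that the quadratic error from Lemma~\ref{lemma:linear_operator_B} is only $o(\eps_n)$ after restricting to the compact support of $\eta$, so the support restriction, the geometric approximations, and the kernel perturbation must be interleaved carefully to ensure the constants $c_n$ and $\alpha_n$ genuinely converge to $1$.
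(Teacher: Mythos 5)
Your proposal takes a genuinely different route from the paper's. The paper handles the kernel perturbation $\dWp{2}(\mu_i^{(m_n)},\mu_j^{(m_n)}) \rightsquigarrow \dWp{2}(E(\theta_i),E(\theta_j)) \rightsquigarrow \|B_{\theta_i}(\theta_j-\theta_i)\|_{\Lp{2}(E(\theta_i))}$ by a three-step decomposition: first for indicator kernels $\eta = a\mathbbm{1}_{[0,r]}$ (where the $o(\eps_n)$ additive error in the argument of $\eta$ can be absorbed into slightly shifted scales $\eps'_n,\tilde{\eps}_n$, which is trivial by monotonicity of the indicator); then for piecewise-constant $\eta$ by linearity; then for general continuous $\eta$ by choosing $\eta_k\nearrow\eta$ and invoking monotone convergence. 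You instead propose a direct \emph{multiplicative} kernel lower bound $\eta(\dWp{2}(\mu_i^{(m_n)},\mu_j^{(m_n)})/\eps_n)\geq c_n\,\hat{\eta}_{\theta_i}(\alpha_n(\theta_j-\theta_i)/\eps_n)$ valid for all continuous $\eta$ at once, which you then feed into the parameter-space lim-inf (Lemma~\ref{lemma:lim-inf_parameter_manifold}) after rescaling $\tilde{\eps}_n = \eps_n/\alpha_n$. Both routes go through Lemma~\ref{lemma:lim-inf_parameter_manifold}, so the downstream machinery is the same.

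One caveat: property~\ref{item:eta_hat_continuity} of Lemma~\ref{lemma:assumptions_eta_hat} is stated as a comparison between $\hat{\eta}_{\theta_1}(h_1)$ and $\hat{\eta}_{\theta_2}(h_2)$ for nearby \emph{vectors} $h_1,h_2$ and nearby parameters $\theta_1,\theta_2$, whereas your bound compares $\eta$ evaluated at the scalar $t=\dWp{2}(\mu_i^{(m_n)},\mu_j^{(m_n)})/\eps_n$ against $\hat{\eta}_{\theta_i}$ at $h=(\theta_j-\theta_i)/\eps_n$. These are not literally the same shape of statement, so a direct citation is not quite right. It can be repaired: since $B_\theta$ is linear and $\|B_\theta(\cdot)\|_{\Lp{2}(E(\theta))}$ is positively homogeneous, you can write $t = \|B_{\theta_i}(\lambda h)\|_{\Lp{2}(E(\theta_i))}$ with $\lambda = t/\|B_{\theta_i}(h)\|_{\Lp{2}(E(\theta_i))}$, and assumption \ref{item:B.2_a} together with the uniform estimate $|t - \|B_{\theta_i}(h)\|_{\Lp{2}(E(\theta_i))}| = o(1)$ gives $\|\lambda h - h\| = o(1)$, after which property~\ref{item:eta_hat_continuity} applies with $h_1=\lambda h$, $h_2=h$, $\theta_1=\theta_2=\theta_i$. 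Alternatively you can re-run the two-case argument ($\|B_{\theta_i}(h)\|\geq r$ vs.\ $<r$, using monotonicity in the first case and the lower bound $\eta\geq a$ near the origin plus the modulus of continuity in the second) verbatim for the scalar comparison. Either way the bound holds, but the step deserves explicit justification rather than a one-line appeal to Lemma~\ref{lemma:assumptions_eta_hat}(v). A secondary minor point: the preliminary invocation of Theorem~\ref{thm:compactness} to place $\hat{f}$ in $\Wkp{1}{p}(\cS)$ is unnecessary — the lower bound $\liminf_n\hat{\cE}_{\tilde{\eps}_n,n}(\hat{f}_n)\geq\hat{\cE}_\infty(\hat{f})$ holds whether or not $\hat{\cE}_\infty(\hat{f})$ is finite, so no separate regularity argument is needed. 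What your approach buys is a shorter, one-pass argument avoiding the decomposition and monotone-convergence steps; what the paper's approach buys is that the only kernel perturbation estimate it needs is the trivial one for indicators, making each individual step elementary.
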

\begin{proof}

We assume that $f_n \to f \in \TL_{\dWp{2}}(\Probac{(\Omega)})$. 

We complete the proof in three steps. In the first step we assume that $\eta$ is an indicator function. In the second step we assume that $\eta$ is piecewise constant. And in the third step we treat the general case.
 
\paragraph{Step 1.} 
We will show that $\eta\bigg(\frac{\dWp{2}(D_n \circ E(\theta), D_n \circ E(\tau))}{\varepsilon_n}\bigg) \geq \eta\bigg(\frac{\dWp{2}(E(\theta),E(\tau))}{\varepsilon'_n}\bigg) \geq \eta\bigg(\frac{\|B_\theta(\tau-\theta)\|_{\Lp{2}(E(\theta))}}{\tilde{\varepsilon}_n}\bigg)$ for an appropriate choice of $\varepsilon'_n,\tilde{\varepsilon}_n$ such that $\frac{\varepsilon'_n}{\varepsilon_n} \to 1$, $\frac{\tilde{\varepsilon}_n}{\varepsilon'_n} \to 1$ as $n \to \infty$.

Using~\eqref{eqn:rates_dwp} 
and Theorem~\ref{thm:rates_convergence} there exists $\hat{C}>0$
\begin{align}\label{eqn:wasserstein_distance_bounds4}
\begin{aligned}
\dWp{2}(D_n \circ E(\theta), D_n \circ E(\tau))  & \leq  \dWp{2}(E(\theta),E(\tau))+\hat{C}\tilde{q}_k(m_n)
\end{aligned}
\end{align}
for all $m_n$ sufficiently large with probability one. 

Next, for a suitable choice of $\varepsilon'_n$ (appropriately selected later), we have by Lemma~\ref{lemma:linear_operator_B},
\begin{align}\label{eqn:wasserstein_distance_bounds5}   
\la \frac{\dWp{2}(E(\tau),E(\theta))}{\eps_n^\prime} - \lda B_\theta\lp\frac{\tau-\theta}{\eps_n^\prime}\rp \rda_{\Lp{2}(E(\theta))} \ra \leq \frac{C\|\tau-\theta\|^2}{\eps_n^\prime} \leq \frac{C^3 \|B_\theta(\tau-\theta)\|^2_{\Lp{2}(E(\theta))}}{\eps_n^\prime}.
\end{align}

Let us first consider the case when $\eta$ is defined as in \eqref{eqn:eta_indicator_function}. Now, to show $\eta\bigg(\frac{\dWp{2}(D_n \circ E(\theta), D_n \circ E(\tau))}{\varepsilon_n}\bigg) \geq \eta\bigg(\frac{\dWp{2}(E(\theta),E(\tau))}{\varepsilon'_n}\bigg)$,
it is sufficient to show that
\begin{equation*}\label{eqn:eta_hat_eta}
   {\eta}\bigg(\frac{\dWp{2}(E(\theta),E(\tau))}{\varepsilon'_n}\bigg)=a \implies  \eta\bigg(\frac{\dWp{2}(D_n \circ E(\theta), D_n \circ E(\tau))}{\varepsilon_n}\bigg) = a 
\end{equation*}
which is equivalent to $\frac{\dWp{2}(E(\theta),E(\tau))}{\varepsilon'_n} \leq r$ implies $\frac{\dWp{2}(D_n \circ E(\theta), D_n \circ E(\tau))}{\varepsilon_n} \leq r$.

Assume that $\frac{\dWp{2}(E(\theta),E(\tau))}{\varepsilon'_n} \leq r$. Now, by \eqref{eqn:wasserstein_distance_bounds4} for $m_n$ sufficiently large
\begin{equation*}
\begin{aligned}
\dWp{2}(D_n\circ E(\theta),D_n\circ E(\tau)) & \leq \dWp{2}(E(\theta),E(\tau))+\hat{C}\tilde{q}_k(m_n) \\    
& \leq r\varepsilon'_n + \hat{C}\tilde{q}_k(m_n)\\
&  = r{\varepsilon}_n \\
\end{aligned}    
\end{equation*}
by choosing $ \varepsilon'_n=\varepsilon_n-\frac{\hat{C}\tilde{q}_k(m_n)}{r}$. Hence,
\begin{align}\label{eqn:eta_m_n_eta_liminf}
\eta\bigg(\frac{\dWp{2}(D_n \circ E(\theta), D_n \circ E(\tau))}{\varepsilon_n}\bigg) \geq \eta\bigg(\frac{\dWp{2}(E(\theta),E(\tau))}{\varepsilon'_n}\bigg)    
\end{align}

Similarly, to show $ \eta\bigg(\frac{\|B_\theta(\tau-\theta)\|_{\Lp{2}(E(\theta))}}{\tilde{\varepsilon}_n}\bigg) \leq \eta\bigg(\frac{\dWp{2}(E(\theta),E(\tau))}{\varepsilon'_n}\bigg)$ it is sufficient to show that
\begin{align}\label{eqn:eta_liminf_equality}
 \eta\bigg(\frac{\|B_\theta(\tau-\theta)\|_{\Lp{2}(E(\theta))}}{\tilde{\varepsilon}_n}\bigg)  = a \implies \eta\bigg(\frac{\dWp{2}(E(\theta),E(\tau))}{\varepsilon'_n}\bigg) = a   
\end{align}
which is equivalent to $\frac{\|B_\theta(\tau-\theta)\|_{\Lp{2}(E(\theta))}}{\tilde{\varepsilon}_n} \leq r$ implies $\frac{\dWp{2}(E(\theta),E(\tau))}{\varepsilon'_n}\leq r$.

Assume that $\frac{\|B_\theta(\tau-\theta)\|_{\Lp{2}(E(\theta))}}{\tilde{\varepsilon}_n} \leq r$. Now, using  \eqref{eqn:wasserstein_distance_bounds5}, 
\begin{equation*}
\begin{aligned}
\dWp{2}(E(\theta),E(\tau)) & \leq \|B_\theta(\tau-\theta)\|_{\Lp{2}(E(\theta))}+ C^3\|B_\theta(\tau-\theta)\|^2_{\Lp{2}(E(\theta))}\\
& \leq r\tilde{\varepsilon}_n + r^2C^3\tilde{\varepsilon}^{2}_n \\
& = r\varepsilon'_n\\
\end{aligned}    
\end{equation*}
by choosing $\tilde{\varepsilon}_n = \frac{-1 + \sqrt{1 + 4 C^3 r \varepsilon'_n}}{2 C^3 r}$ ($>0$). Hence,
\begin{align}\label{eqn:eta_B_theta_liminf}
\eta\bigg(\frac{\dWp{2}(E(\theta),E(\tau))}{\varepsilon'_n}\bigg) \geq \eta\bigg(\frac{\|B_\theta(\tau-\theta)\|_{\Lp{2}(E(\theta))}}{\tilde{\varepsilon}_n}\bigg)    
\end{align}

Continuing the bound from \eqref{eqn:energy_functional_bounds_gamma_convergence_lsup} using \eqref{eqn:eta_m_n_eta_liminf}, \eqref{eqn:eta_B_theta_liminf} with $\frac{\varepsilon'_n}{\varepsilon_n} \to 1$, $\frac{\tilde{\varepsilon}_n}{\varepsilon'_n} \to 1$ as $n \to \infty$ we have 
\begin{equation*}
\begin{aligned}
\liminf_{n \to \infty} \mathcal{E}_{\varepsilon_n,m_n,n}(f_n)    \\ 
& \geq \liminf_{n \to \infty} \frac{1}{\varepsilon^{p+d}_n} \int_{\mathcal{S}_n} \int_{\mathcal{S}_n} \eta\bigg(\frac{\|B_\theta(\tau-\theta)\|_{\Lp{2}(E(\theta))}}{\tilde{\varepsilon}_n}\bigg) |\hat{f}_n(\theta) -\hat{f}_n(\tau)|^p\, \dd\bbP_{\mathcal{S}_n}(\theta)\,\dd\bbP_{\mathcal{S}_n}(\tau)   \\ 
& =  \liminf_{n \to \infty} \frac{1}{\varepsilon^{p+d}_n} \int_{\mathcal{S}_n} \int_{\mathcal{S}_n} \hat{\eta}_\theta\bigg(\frac{\tau-\theta}{\tilde{\varepsilon}_n}\bigg)|\hat{f}_n(\theta) -\hat{f}_n(\tau)|^p \,\dd\bbP_{\mathcal{S}_n}(\theta)\,\dd\bbP_{\mathcal{S}_n}(\tau)\\
& = \liminf_{n \to \infty} \bigg(\frac{\tilde{\varepsilon}_n} {\varepsilon'_n}\bigg)^{p+d} \hat{\mathcal{E}}_{\tilde{\varepsilon}_n,n}(\hat{f}_n) \\
& \geq \hat{\mathcal{E}}_\infty(\hat{f})=\mathcal{E}_\infty(f).\\
\end{aligned}    
\end{equation*}
where the last inequality follows from Lemma \ref{lemma:lim-inf_parameter_manifold} which we can apply by Lemma
\ref{lemma:assumptions_eta_hat}.

\paragraph{Step 2.}
Let us consider the case when $\eta$ is piecewise constant, decreasing and with compact support. In this case, we  let $\eta=\sum_{k=1}^l\eta_k$ for some $l$ and functions $\eta_k$ as defined in \eqref{eqn:eta_indicator_function}. For clarity let us denote the dependence of $\eta$ on $\cE_{\eps,m,n}$ and $\cE_\infty$ by $\cE_{\eps,m,n}(\cdot;\eta)$ and $\cE_\infty(\cdot;\eta)$.
Note, $\cE_{\eps,m,n}(\cdot;\eta) = \sum_{k=1}^l \cE_{\eps,m,n}(\cdot;\eta_k)$ and $\cE_\infty(\cdot;\eta) = \sum_{k=1}^l \cE_\infty(\cdot;\eta_k)$. Therefore,
\begin{equation*}
\begin{aligned}
\liminf_{n\to \infty}\mathcal{E}_{\varepsilon_n,m_n,n}(f_n;\eta)& =\liminf_{n \to \infty} \sum_{k=1}^l \mathcal{E}_{\varepsilon_n,m,n}(f_n;\eta_k) \geq \sum_{k=1}^l \liminf_{n \to \infty} \mathcal{E}_{\varepsilon_n,m,n}(f_n;\eta_k) \\
& \geq \sum_{k=1}^l\mathcal{{E}}_\infty({f};\eta_k) = \mathcal{E}_\infty(f). 
\end{aligned}    
\end{equation*}
\paragraph{Step 3.} 
We assume that $\eta$ satisfies the assumptions in the statement of the theorem. We can find an increasing sequence of piecewise constant function $\eta_k:[0,\infty) \to [0,\infty)$ with $\eta_k \nearrow \eta$ as $k \to \infty$ a.e. By the previous step $\liminf\limits_{n \to \infty} \mathcal{E}_{\varepsilon_n,m_n,n}(f_n) \geq  \liminf\limits_{n \to \infty} \mathcal{E}_{\varepsilon_n,m,n}(f_n;\eta_k) \geq \mathcal{E}_\infty(f;\eta_k)$. By the monotone convergence theorem $\mathcal{E}_\infty(f;\eta_k) \to \mathcal{E}_\infty(f;\eta)$ as $k \to \infty$. 
\end{proof}

\subsection{Laplace--Beltrami Operators on Wasserstein Submanifolds}\label{section:graph_laplacians}

Graph Laplacians offer a rigorous framework for capturing both local and global geometric structure in data and form the basis of numerous learning methods \cite{Ronald_2006,Hein_2007}. Their connections to diffusion dynamics and density normalization via degree-based and partial Laplacian operators are well established \cite{Ronald_2006, Chow_2019, Nadler_2006}. Convergence analyses in the large-sample limit include results on pointwise consistency \cite{Amir_2006, Hein_2007, calder2022lipschitz} and spectral convergence~\cite{Matthias_2005, Belkin_2004}.  

The Laplace Learning problem in \eqref{eqn:discrete_energy_intro} can be posed variationally or via the discrete Euler–Lagrange equation 
$\mathcal{L}_{\varepsilon_n, m_n, n} f_n = 0$,
where $\mathcal{L}_{\varepsilon_n, m_n, n}$ is defined below in \eqref{eqn:discrete_graph_laplacian}. Specifically, for $p > 1$, the discrete graph Laplacian operator $\mathcal{L}_{\varepsilon_n,m_n,n}:\Lp{2}(\bbP_{\Lambda^{(m)}_n}) \to \Lp{2}(\bbP_{\Lambda^{(m)}_n})$ is given by, 
\begin{align}\label{eqn:discrete_graph_laplacian}
\mathcal{L}_{\varepsilon_n,m_n,n}f_n(\mu_i):=\frac{p}{2n\varepsilon^p_n}\sum_{j=1}^n W^{(m_n)}_{ij}\la f_n(\mu^{(m_n)}_i-f_n(\mu^{(m_n)}_j)\ra^{p-2}\lp f_n(\mu^{(m_n)}_i)-f_n(\mu^{(m_n)}_j)\rp.
\end{align}

Similarly, for $p > 1$, the Laplace-Beltrami operator $\Delta_{\Lambda}: \Lp{2}(\bbP_\Lambda)\to \Lp{2}(\bbP_\Lambda)$ can be defined as the operator associated with the first variation of $\mathcal{E}_\infty$ which is explicitly stated in the proposition below. 

\begin{proposition}\label{prop:LBO} 
Assume $\Omega$, $\Lambda$, $\mathcal{S}$, $\bbP_\Lambda$, $\bbP_\mathcal{S}$, $E$, $B_\theta(h)$, and $\eta$ satisfy 
\ref{ass:domain_ll_one}-\ref{ass:domain_ll_three}, \ref{ass:prob_measures_S}-\ref{ass:prob_measures_Lambda}, \ref{ass:E_map}-\ref{ass:B_map}, \ref{ass:weight_one}-\ref{ass:weight_four}. For $p \geq 2$ define $\cE_\infty$ by~\eqref{eqn:continuum_energy_functional_intro}.
For $\mu\in\Lambda$ and $f\in\Wkp{1}{p}(\Lambda,\dWp{2},\bbP_{\Lambda})$ we define 
\begin{align*}
\Delta_{\Lambda} f(\mu) &= 
- \frac{p}{\rho_\mathcal{S}(E^{-1}(\mu))} 
\int_{\mathbb{R}^d} 
h \cdot \nabla_\theta \Bigl(
\eta\bigl(\|B_\theta(h)\|_{\Lp{2}(E(\theta))}\bigr) 
\rho_\mathcal{S}^2(\theta) \\
&\qquad \times | \nabla(f \circ E)(\theta) \cdot h |^{p-2} 
\nabla(f \circ E)(\theta) \cdot h
\Bigr) \, \dd h \biggr\vert_{\theta = E^{-1}(\mu)}.
\end{align*}
Then $\partial\cE_\infty(f;g) = \langle\Delta_\Lambda f, g\rangle_{\Lp{2}(\bbP_\Lambda)}$ for all $f,g\in\Wkp{1}{p}(\Lambda,\dWp{2},\bbP_{\Lambda})$ with $g=0$ on $\partial\Lambda$.

\end{proposition}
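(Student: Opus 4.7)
The plan is to compute the Gâteaux derivative $\partial\mathcal{E}_\infty(f;g) := \frac{d}{dt}\mathcal{E}_\infty(f+tg)\bigr|_{t=0}$ directly from the integral representation, then use integration by parts in $\theta$ for each fixed $h$ to move the gradient from $g$ onto the remaining factors, and finally rewrite the resulting integral against $\mathbb{P}_\mathcal{S}$ as one against $\mathbb{P}_\Lambda$. Throughout, set $\tilde{f}=f\circ E$ and $\tilde{g}=g\circ E$, which lie in $W^{1,p}(\mathcal{S})$ by Proposition~\ref{prop:Sobolev_Space_1}, and write $\hat{\eta}_\theta(h):=\eta(\|B_\theta(h)\|_{\Lp{2}(E(\theta))})$.

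First, since $p\geq 2$ the map $s\mapsto|s|^p$ is $C^1$ with derivative $p|s|^{p-2}s$. Assumption~\ref{ass:weight_four} gives compact support of $\hat{\eta}_\theta$ (uniformly in $\theta$, by Lemma~\ref{lemma:assumptions_eta_hat}), and $\rho_\mathcal{S}$ is bounded (Assumption~\ref{ass:prob_measures_S}), so dominated convergence yields
\begin{equation*}
\partial\mathcal{E}_\infty(f;g) = p\int_{\mathbb{R}^d}\!\!\int_{\mathcal{S}} |\nabla\tilde{f}(\theta)\cdot h|^{p-2}(\nabla\tilde{f}(\theta)\cdot h)(\nabla\tilde{g}(\theta)\cdot h)\,\hat{\eta}_\theta(h)\rho_\mathcal{S}^2(\theta)\,\mathrm{d}\theta\,\mathrm{d}h.
\end{equation*}

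Second, for each fixed $h$ I recognize $\nabla\tilde{g}(\theta)\cdot h = h\cdot\nabla_\theta\tilde{g}(\theta)$ as the directional derivative of $\tilde{g}$ in direction $h$, and integrate by parts in $\theta$ on $\mathcal{S}$. The domain $\mathcal{S}$ has smooth boundary by~\ref{ass:domain_ll_three}, and $g=0$ on $\partial\Lambda$ combined with the bi-Lipschitz equivalence $\partial\Lambda = E(\partial\mathcal{S})$ from~\ref{ass:E_map} forces $\tilde{g}=0$ on $\partial\mathcal{S}$, so the boundary contribution vanishes and
\begin{equation*}
\partial\mathcal{E}_\infty(f;g) = -p\int_{\mathbb{R}^d}\!\!\int_{\mathcal{S}} \tilde{g}(\theta)\,h\cdot\nabla_\theta\!\Bigl[\hat{\eta}_\theta(h)\rho_\mathcal{S}^2(\theta)|\nabla\tilde{f}(\theta)\cdot h|^{p-2}(\nabla\tilde{f}(\theta)\cdot h)\Bigr]\,\mathrm{d}\theta\,\mathrm{d}h.
\end{equation*}
Fubini is justified since the $h$-integrand is compactly supported and the $\theta$-bracket depends smoothly on $\theta$ via $B_\theta$ (Assumption~\ref{ass:B_map}).

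Third, using $\mathbb{P}_\Lambda=E_\#\mathbb{P}_\mathcal{S}$ and $\mathrm{d}\mathbb{P}_\mathcal{S}(\theta)=\rho_\mathcal{S}(\theta)\mathrm{d}\theta$, I extract a single factor $\rho_\mathcal{S}(\theta)$ and change variables $\mu=E(\theta)$:
\begin{equation*}
\partial\mathcal{E}_\infty(f;g) = \int_{\mathcal{S}}\tilde{g}(\theta)\!\left[-\frac{p}{\rho_\mathcal{S}(\theta)}\!\int_{\mathbb{R}^d}\!h\cdot\nabla_\theta[\,\cdots\,]\,\mathrm{d}h\right]\!\rho_\mathcal{S}(\theta)\,\mathrm{d}\theta = \int_\Lambda g(\mu)\,\Delta_\Lambda f(\mu)\,\mathrm{d}\mathbb{P}_\Lambda(\mu),
\end{equation*}
which is exactly the stated identity.

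The main obstacle is that the pointwise expression for $\Delta_\Lambda f(\mu)$ presupposes that one can differentiate $\nabla\tilde{f}$ once more in $\theta$ and differentiate $\hat{\eta}_\theta(h)$ in $\theta$, which in turn requires the $\theta$-regularity of $B_\theta$ provided by assumptions~\ref{item:B.2_c} and~\ref{item:B.2_d}, together with additional smoothness of $f$ beyond $W^{1,p}$. I would therefore first establish the identity for $\tilde{f}\in C^2(\overline{\mathcal{S}})$ (equivalently $f = \tilde{f}\circ E^{-1}$), where the integration by parts is classical and $\Delta_\Lambda f$ makes sense pointwise. For general $f\in W^{1,p}(\Lambda,\dWp{2},\bbP_\Lambda)$, $\Delta_\Lambda f$ is interpreted as the distribution acting on test functions $g$ via the left-hand side $\partial\mathcal{E}_\infty(f;g)$, and the formula is extended by density, using that $C^2$ functions are dense in $W^{1,p}(\mathcal{S})$ and that $f\mapsto\partial\mathcal{E}_\infty(f;g)$ is continuous on $W^{1,p}$ (by Hölder's inequality, since $p\geq 2$). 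A minor subtlety is verifying that $\nabla_\theta\hat{\eta}_\theta(h)$ is integrable in $h$ with $\theta$-bounds uniform in $\theta\in\mathcal{S}$, which follows from compact support in $h$ and the continuity/Lipschitz properties of $\theta\mapsto B_\theta$ inherited from~\ref{item:B.2_c}--\ref{item:B.2_d}.
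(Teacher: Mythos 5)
Your proposal follows the same strategy as the paper's proof: compute the G\^ateaux derivative of $\mathcal{E}_\infty$, integrate by parts in $\theta$ to move the derivative off $g\circ E$ (the paper does this coordinate-by-coordinate via $\sum_i h_i\partial_{\theta_i}$, you phrase it as a single directional integration along $h$ — the same computation), and change variables from $\mathbb{P}_\mathcal{S}$ to $\mathbb{P}_\Lambda$ by extracting one factor of $\rho_\mathcal{S}$. Your additional remarks — justifying differentiation under the integral via dominated convergence, and noting that the pointwise formula for $\Delta_\Lambda f$ requires first working with $\tilde{f}\in C^2(\overline{\mathcal{S}})$ and then extending by density with $\Delta_\Lambda f$ read distributionally for general $W^{1,p}$ — correctly address regularity points that the paper's formal calculation leaves implicit.
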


\begin{proof} 
We let $\theta = (\theta_i, \theta_{-i}) \in \mathcal{S} \subset \mathbb{R}^d$, where 
$\theta_i \in \mathbb{R}$ is the $i$-th coordinate, and $\theta_{-i} \in \mathbb{R}^{d-1}$ are the remaining coordinates.
With an abuse, but simplification, of notation we ignore the order and write $(\theta_i,\theta_{-i})$.
Define 
\[ \mathcal{S}_{-i} := \big\{ \theta_{-i} \in \mathbb{R}^{d-1} \text{ such that there exists } \theta_i \text{ with } \theta=(\theta_i,\theta_{-i}) \in \mathcal{S} \big\}, \]
and 
\[ \mathcal{S}_i := \big\{ \theta_i \in \bbR \text{ such that there exists } \theta_{-i} \text{ with } \theta=(\theta_i,\theta_{-i}) \in \mathcal{S} \big\}. \]
For $f \in \Wkp{1}{p}(\Lambda,\dWp{2},\bbP_\Lambda)$ and $g \in \Wkp{1}{p}(\Lambda,\dWp{2},\bbP_\Lambda)$ when $g = 0$ in $\partial \Lambda$, we can compute 

\begin{align}
\partial \mathcal{E}_\infty(f;g) 
&= \lim_{\delta \to 0} \frac{1}{\delta} \big(\mathcal{E}_\infty(f + \delta g) - \mathcal{E}_\infty(f)\big) \notag \\
&= \lim_{\delta \to 0} \frac{1}{\delta} \int_{\mathbb{R}^d} \int_{\mathcal{S}} 
\lp
\big| \big(\nabla (f \circ E)(\theta) + \delta \nabla (g \circ E)(\theta) \big) \cdot h \big|^p 
- |\nabla (f \circ E)(\theta) \cdot h|^p \rp \notag \\
 & \qquad \qquad \times \eta\big(\|B_\theta(h)\|_{\Lp{2}(E(\theta))}\big)
 \rho^2_\mathcal{S}(\theta) \, \dd\theta \, \dd h \notag \\
&= p \int_{\mathbb{R}^d} \int_{\mathcal{S}} 
|\nabla (f \circ E)(\theta) \cdot h|^{p-2} (\nabla (f \circ E)(\theta) \cdot h) 
(h \cdot \nabla (g \circ E)(\theta)) \notag \\
& \qquad\quad \times \eta\big(\|B_\theta(h)\|_{\Lp{2}(E(\theta))}\big) \rho^2_\mathcal{S}(\theta) \, \dd\theta \, \dd h \label{eq:proofs:LB:Quad} \\
&= p \sum_{i=1}^d \int_{\mathbb{R}^d} \int_{\mathcal{S}} 
h_i \frac{\partial}{\partial \theta_i} (g \circ E)(\theta) 
|\nabla (f \circ E)(\theta) \cdot h|^{p-2} (\nabla (f \circ E)(\theta) \cdot h) \notag \\
& \qquad\quad \times \eta\big(\|B_\theta(h)\|_{\Lp{2}(E(\theta))}\big) \rho^2_\mathcal{S}(\theta) \, \dd\theta \, \dd h \notag \\
&= -p \sum_{i=1}^d \int_{\mathbb{R}^d} \int_{\mathcal{S}_{-i}} 
\bigg( \int_{\mathcal{S}_i} h_i (g \circ E)(\theta) \frac{\partial}{\partial \theta_i} \Big[
|\nabla (f \circ E)(\theta) \cdot h|^{p-2} (\nabla (f \circ E)(\theta) \cdot h) \notag \\
& \qquad\quad \times \eta\big(\|B_\theta(h)\|_{\Lp{2}(E(\theta))}\big) \rho^2_\mathcal{S}(\theta) \Big] \,\dd\theta_i \bigg) \,\dd\theta_{-i} \, \dd h \notag \\
&= -p \sum_{i=1}^d \int_{\mathbb{R}^d} \int_{\mathcal{S}} 
(g \circ E)(\theta) \frac{h_i}{\rho_\mathcal{S}(\theta)} 
\frac{\partial}{\partial \theta_i} \Big(
|\nabla (f \circ E)(\theta) \cdot h|^{p-2} \nabla (f \circ E)(\theta) \cdot h \notag \\
& \qquad\quad \times \eta\big(\|B_\theta(h)\|_{\Lp{2}(E(\theta))}\big) \rho^2_\mathcal{S}(\theta) \Big) 
\rho_\mathcal{S}(\theta) \, \dd\theta \, \dd h \notag \\
 & = \langle \Delta_\Lambda f,g\rangle_{\Lp{2}(\bbP_\Lambda)}. \notag \qedhere
\end{align}
\end{proof}

\begin{remark}
Both the graph Laplacian $\mathcal{L}_{\varepsilon_n,m_n,n}$ and the Laplace--Beltrami operator $\Delta_\Lambda$ can be characterized as the first variations of their respective Dirichlet energies. 
In particular, Proposition~\ref{prop:LBO} shows that the directional derivative of $\mathcal{E}_\infty$ at $f$ in the direction $g$ satisfies
\[
\partial \mathcal{E}_\infty(f; g) = \langle \Delta_\Lambda f, g \rangle_{\Lp{2}(\bbP_\Lambda)}.
\]
Consequently, by setting $g=f$, we obtain the following identity  (see also ~\eqref{eq:proofs:LB:Quad}),
\[
\mathcal{E}_\infty(f) = \frac{1}{p}\, \langle \Delta_\Lambda f, f \rangle_{\Lp{2}(\bbP_\Lambda)}.
\]
An analogous relation holds in the discrete setting,
\[
\mathcal{E}_{\varepsilon_n,m_n,n}(f_n) = \frac{1}{p}\, 
\langle \mathcal{L}_{\varepsilon_n,m_n,n} f_n, f_n \rangle_{\Lp{2}(\bbP_{\Lambda^{(m)}_n})}.
\]
This provides a variational interpretation of the discrete operator. It also suggests that, formally, 
$\Delta_\Lambda$ is the continuum limit of $\mathcal{L}_{\varepsilon_n,m_n,n}$.
\end{remark}

\section{Numerical Experiments}\label{section:Numerical_Experiments}
In this section, we evaluate the performance of Laplace Learning based on the fraction of training labels. These numerical experiments were conducted on two datasets: (a) a 2D dataset of synthetic Gaussian samples (See Figure~\ref{fig:synthetic_gaussian}) and (b) a 3D dataset (images of different furnitures) from ModelNet10~\cite{Wu_2015} (See Figure~\ref{fig:3D_image}). 

\begin{enumerate}\label{algorithm}
\item \textbf{Data Representation:}  
Each sample in the dataset is represented as an empirical probability measure. Specifically, for a fixed number of points \( m \), the \( i \)-th sample is represented as,  
\[
\mu_i^{(m)} = \frac{1}{m} \sum_{j=1}^{m} \delta_{x_j^{(i)}}.
\]

\item \textbf{Construction of the Weight Matrix:}  

We define the weights based on the distance between the nodes $\{\mu_i^{(m)}\}_{i=1}^n$.
For computationally efficiency we use the linear Wasserstein distance (see ~\cite{Wang_2012}, ~\cite[Section 3.4]{Oliver_2024} for more details) rather than the 2-Wasserstein distance. This makes the following approximation
\begin{align}\label{eqn:linear_wasserstein_distance}
\dWp{2}(\mu_i^{(m)},\mu^{(m)}_j) \approx \dWlinpmu{2}{\mu_1^{(m)}}(\mu_i^{(m)},\mu_j^{(m)}) = \| T_i^{(m)} - T_j^{(m)}\|_{\Lp{2}(\mu_1^{(m)})} \end{align}
where $T_i^{(m)}$ is the optimal transport map from $\mu_1^{(m)}$ to $\mu_i^{(m)}$.

In the first experiment, we employ an $\varepsilon$-connected graph, while in the second experiment, we make use of a $k$-nearest neighbor (kNN) graph. Further details are provided in the corresponding sections.

\item \textbf{Solve the Variational Problem:} For a classification problem with $c$ classes, the label $y_i$
is represented as a one-hot vector in $\bbR^c$, where the $l$-th element is $1$ if the samples belong to the class $l$, and the other elements are all zeros. 
Given the weights $(W_{ij}^{(m)})_{i,j=1}^n$ we minimise the discrete graph $p$-Dirichlet energy problem, that is we find the minimiser $f_n$ subject to the known labels
by solving:
\begin{align}\label{eqn:formulation}
\argmin_{f_n:\Lambda^{(m)}_n \to \bbR^c}\mathcal{E}_{\varepsilon_n,m,n}(f_n) \mbox{ subject to $f_n(\mu^{(m)}_i)=y_i$ for $i \leq N$.}    
\end{align}
Here, we denote $f_{n,l}(\mu^{(m)}_i)$ to represent each component of $f_n(\mu^{(m)}_i)$ associated with class $l$.

\item \textbf{Estimate the Predicted Labels:}
We threshold the solution $f_n:\Lambda^{(m)}_n \to \bbR^c$ in \eqref{eqn:formulation} to give a predicted class. 
More precisely, we predict the missing labels by
\begin{equation*}
\hat{y}_i=\argmax_{l\in {1,2,\dots,c}}\{f_{n,l}(\mu^{(m)}_i)\}   \mbox{ for all $i=\{N+1, \cdots,n\}$.}  
\end{equation*}
Based on the predicted missing labels on the unlabelled vertices, we find the accuracy for different fractions of training labels.
\end{enumerate}

For two classes, Steps 3 and 4 simplify by treating $y_i\in\{0,1\}$ and therefore $f_n: \{\mu_i^{(m)}\}_{i=1}^n\to [0,1]$ and the threshold is at 0.5, i.e. $\hat{y}_i = 1$ if $f_n(\mu_i^{(m)}) \geq 0.5$ and $\hat{y}_i = 0$ otherwise.

\subsection{Experiment 1: Synthetic Gaussian Samples}\label{section:2_D}

\begin{figure}[htbp]
    \centering
    \begin{subfigure}[b]{0.45\linewidth}
        \centering
        \includegraphics[width=\textwidth]{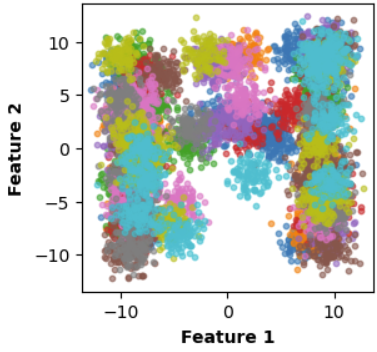}
        \caption{800 2-D synthetic empirical Gaussians (each represented as a feature vector).}
        \label{fig:gaussian-1}
    \end{subfigure}
    \hfill
    \begin{subfigure}[b]{0.45\linewidth}
        \centering
        \includegraphics[width=\textwidth]{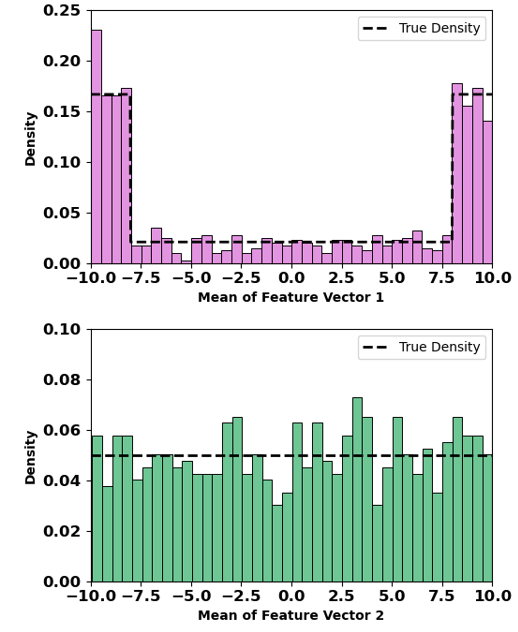}
        \caption{Density plots for 800 2-D synthetic empirical Gaussians.}
        \label{fig:density}
    \end{subfigure}

    \caption{The synthetic dataset comprising 800 Gaussian samples (left, each depicted by different colours), along with the corresponding density plots (right).}
    \label{fig:synthetic_gaussian}
\end{figure}

The synthetic dataset was generated by constructing \( n \) empirical Gaussian distributions, indexed by \( i = 1, \dots, n \) for $n=100,200,400,600,800,1600,3200,6400,12800$ with $m=100$ data points (see Figure~\ref{fig:synthetic_gaussian} for $n=800$). 
Each feature vector \( \mu_i^{(m)} = \frac{1}{m} \sum_{j=1}^m \delta_{x_j^{(i)}} \) is an empirical approximation of the Gaussian $\cN(c_i,\zeta^2 \Id_2)$ where \( \Id_2 \) is the \( 2 \times 2 \) identity matrix and we set \( \zeta^2 = 1 \) throughout the experiments. 
The mean $(c_{i1},c_{i2})$ is sampled for each feature vector.
The first coordinate $c_{i1}$ is a random variable with piecewise uniform density $\rho(c_{i1})$ given by
\[
\rho(c_{i1}) = 
\begin{cases} 
\frac{1}{6}, & \text{if } c_{i1} \in [-10, -8], \quad \text{high-density region I}  \\[4pt]
\frac{1}{48}, & \text{if } c_{i1} \in [-8, 8], \quad \hspace{1.25em} \text{low-density region} \\[4pt]
\frac{1}{6}, & \text{if } c_{i1} \in [8, 10], \quad \hspace{1.5em} \text{high-density region II} \\[2pt]
0, & \text{otherwise}.
\end{cases}
\]

 The second coordinate $c_{i2} \sim \mathrm{U}([-10,10])$ is an independent random variable with constant density. Each feature vector is assigned a binary label \( y_i \in \{0, 1\} \) based on the sign of the first coordinate of its true mean:
\[
y_i = \begin{cases} 
0 & \text{if } c_{i1} < 0, \\
1 & \text{if } c_{i1} \geq 0.
\end{cases}
\]

Whilst our theoretical results used the 2-Wasserstein distance to define the edge weights, this becomes computationally infeasible for large graphs. 
For computational ease we use the linear Wasserstein distance as in \eqref{eqn:linear_wasserstein_distance} and define the weights by

\[ W^{(m)}_{ij} = \lb \begin{array}{ll} 1 & \text{if } \dWlinpmu{2}{\mu_1^{(m)}}(\mu_i^{(m)},\mu^{(m)}_j)<\varepsilon_n \\ 0 & \text{else.} \end{array} \rd \]
We select $\varepsilon_n$ to be approximately the smallest value that ensures the graph is connected. 
For small $n$, using the 2-Wasserstein distance gives very similar accuracy; 
to maintain consistency, we use the linear Wasserstein distance instead for all $n$.

\begin{figure}
    \centering
    \includegraphics[width=0.5\linewidth]{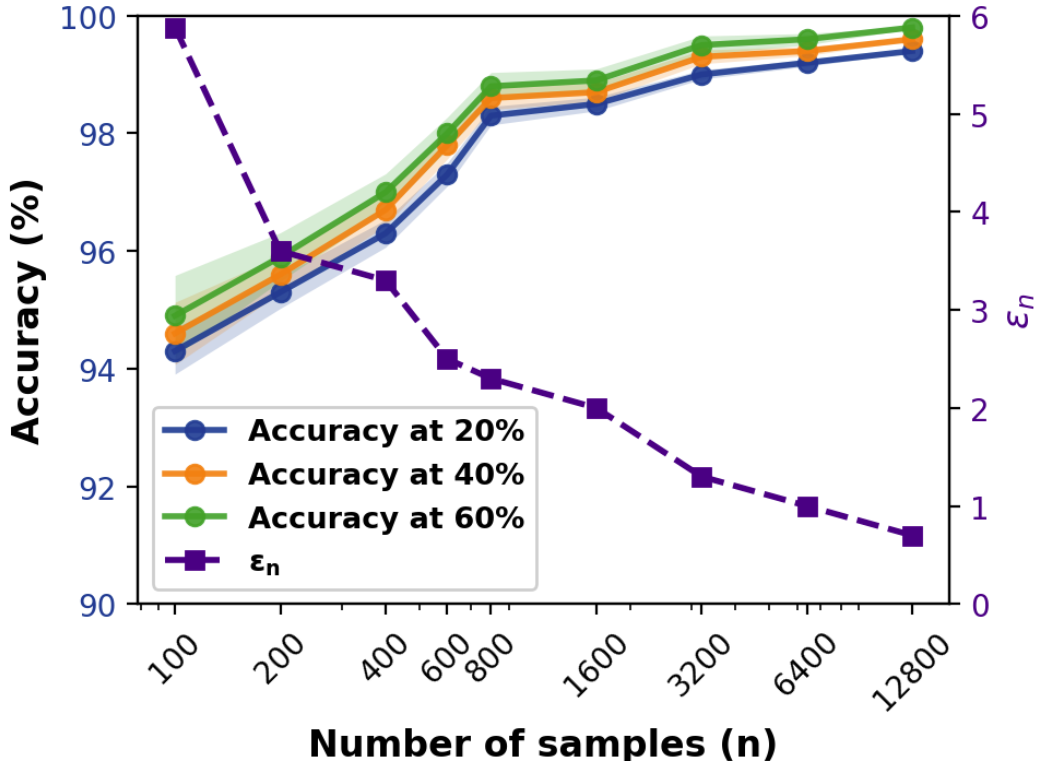}
    \caption{Accuracy (\%) plotted against the number of samples (log scale) in Experiment 1 for varying training label rates (20\%, 40\%, 60\%), represented by coloured lines with markers. The shaded regions around each line indicate the 95\% confidence intervals computed from the standard deviations across 100 runs. The secondary y-axis (purple) shows the corresponding values of the connectivity threshold ($\varepsilon_n$) across sample sizes. Accuracy increases with both the number of samples and training label rates, while $\varepsilon_n$ decreases as sample size grows.}
    \label{fig:accuracy}
\end{figure}

For the synthetic Gaussian dataset, Figure~\ref{fig:accuracy} shows that classification accuracy increases with the number of samples. At smaller sample sizes (100–800), accuracy starts around 92–97\% and rises steadily. At larger sample sizes (1600–12,800), performance begins at higher baselines 98–99\% and continues to improve modestly as more samples are added, demonstrating consistent gains in performance. The connectivity threshold parameter $\varepsilon_n$ was determined heuristically from local inter-cluster distances to guarantee adequate graph connectivity. Overall, these results highlight the model’s ability to exploit the underlying geometric structure of the data, achieving robust classification performance across a range of sample sizes.

\subsection{Experiment 2: ModelNet10 Dataset}\label{section:3_D}

\begin{figure}[htbp]
    \centering
    \begin{subfigure}[b]{0.5\linewidth}  
        \centering
        \includegraphics[width=6cm]{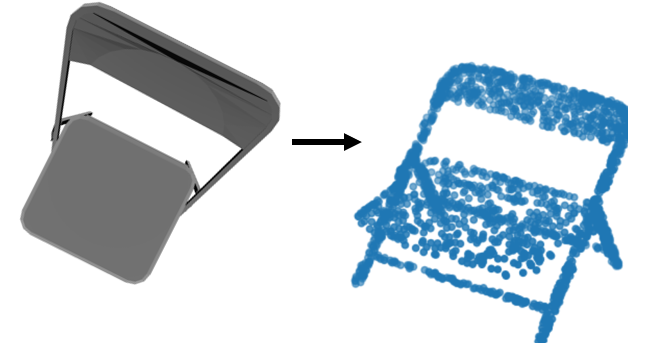}  
        \caption{Sample of ModelNet10 image.}
        \label{fig:sample_point_cloud}
    \end{subfigure}
    \hfill
    \begin{subfigure}[b]{0.8\linewidth}  
        \centering
        \includegraphics[scale=0.6]{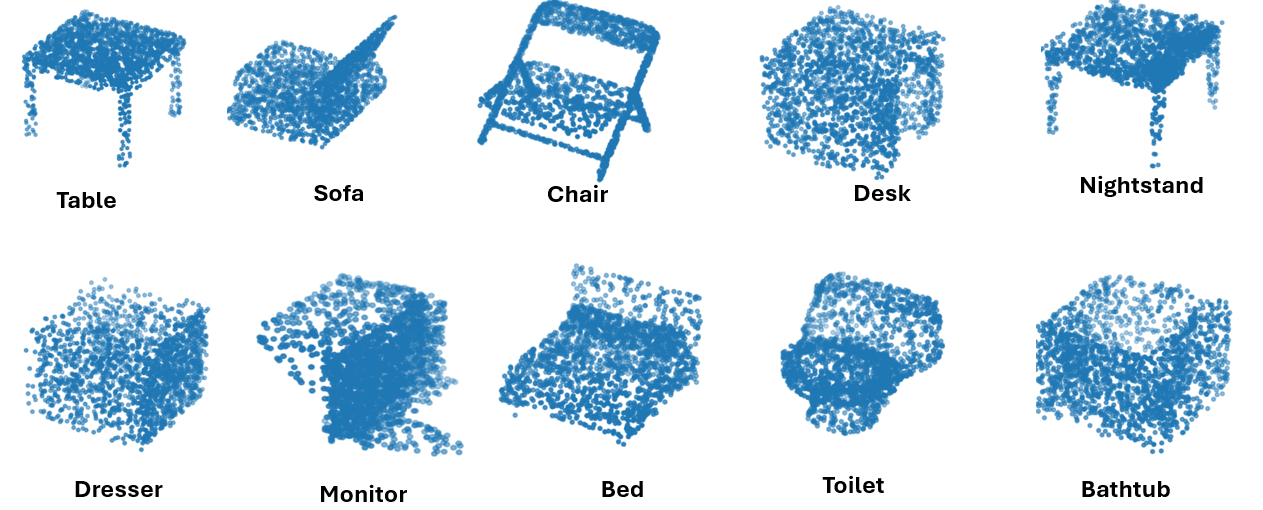}  
        \caption{Point cloud representation of the different classes of ModelNet10 dataset.}
        \label{fig:modelnet10_pc}
    \end{subfigure}
    \caption{ModelNet10 dataset.}
    \label{fig:3D_image}
\end{figure}

The ModelNet10 dataset \cite{Wu_2015} is a widely used benchmark for 3D object recognition and classification, containing 10 distinct categories of 3-D images of furnitures (number of images per class): "bathtub" (156), "bed" (615), "chair" (989), "night stand" (286), "sofa" (780), "table" (492), "dresser" (286), "monitor" (565), "desk" (286), and "toilet" (444). In total, we process 4899 3D objects, each represented by 2048 points. The learning problem consists in assigning each point cloud to its correct category, based on a labelled subset of point clouds

Each image (see Figure \ref{fig:sample_point_cloud}) is represented as a 3D mesh consisting of vertices, edges, and faces that define the object's geometric structure. By uniformly sampling points from the surface of each mesh, the images are converted into point clouds, where each point corresponds to a location on the object’s surface. A fixed number of points, $m = 2048$, are available for each point cloud, so that
\[
\mu_i^{(m)} = \frac{1}{m} \sum_{j=1}^{m} \delta_{x_j^{(i)}} \quad \text{for each } i = 1, \dots, n,
\]
up to relabelling of $x_j^{(i)} \in \mathbb{R}^k$.
 
Again, for computational efficiency we use the linear Wasserstein distance to compute the edge weights as in~\eqref{eqn:linear_wasserstein_distance}. Here, we choose the reference $\mu^{(m)}_1$ as the first point cloud, belonging to the class ‘table’. We also use a second approximation to reduce the number of edges by using a $k$ nearest neighbour ($k$-NN) graph. 
We define the weights 
by
\[ W^{(m)}_{ij} = 
\begin{cases} 
1 & \text{if } \mu_i^{(m)} \text{ is a $k$NN of } \mu_j^{(m)} \text{ or vice versa} \text{ based on } \dWlinpmu{2}{\mu_1^{(m)}}(\mu_i^{(m)},\mu_j^{(m)}) \\
0 & \text{otherwise.} 
\end{cases}
\]
 We choose $k=15, 20, 25$ in our experiment.

Figure \ref{fig:modelnet10} reports the mean classification accuracy over 100 iterations for varying proportions of labelled data (20\% to 80\%). As expected, increasing the fraction of labelled samples improves the model’s ability to capture the underlying data structure and propagate labels to unlabelled nodes. Smaller $k$ values were also associated with higher accuracy. For reference, we evaluated the fully supervised PointNet model \cite{Qi} under the same training ratios, observing a comparable trend, with accuracy increasing from 85\% (20:80 train:test ratio) to 89\% (80:20 train:test ratio). Notably, PointNet is a fully supervised method, whereas Laplace Learning operates in a semi-supervised setting, exploiting both labelled and unlabelled samples via the geometric structure of the data. Although direct quantitative comparisons are not strictly equivalent due to these differing paradigms, Laplace Learning achieves better performance while requiring fewer labelled instances, demonstrating its effectiveness in leveraging geometric relationships for label propagation under limited supervision.

A limitation of our framework is its restriction to discrete probability measures with equal total mass, each supported on the same number of points. While this simplifies the methodology, it limits the generality of the results and precludes application to broader scenarios. Extending the framework to unbalanced variants — such as the Hellinger–Kantorovich (Wasserstein–Fisher–Rao) distance \cite{liero2018optimal, cai2022linearized} to measure spaces, the Unbalanced Gromov–Wasserstein distance \cite{sejourne2021unbalanced} to measure spaces, or the Conic Gromov–Wasserstein and Conic Co-Optimal Transport distances \cite{MCAO_2025b} to measure networks and hyper-networks offers greater flexibility and constitutes a natural direction for future research.

\begin{figure}[ht]
    \centering
    \includegraphics[width=\textwidth]{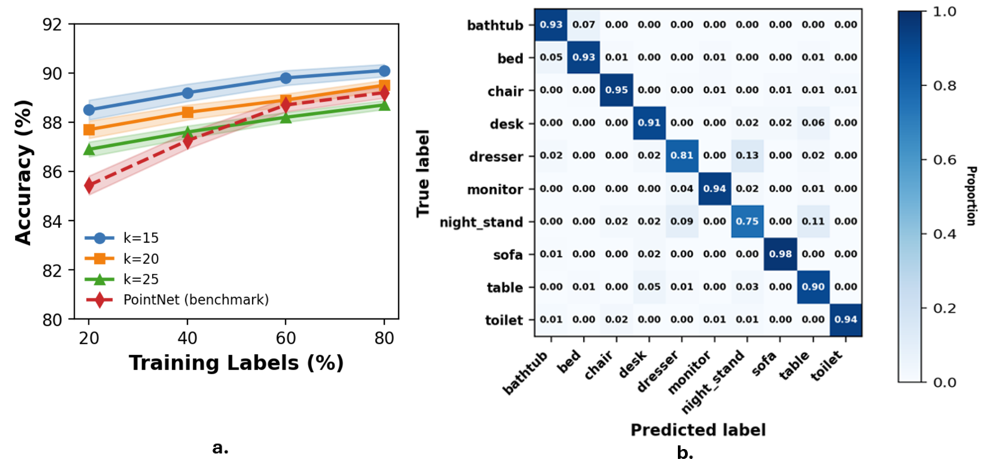}
    \caption{a. Mean classification accuracy on the ModelNet10 dataset using the linear Wasserstein distance with $k$-nearest-neighbour graphs for $k=15,20,25$. Results are averaged over 100 iterations for each training label rate (20–80\%), with 95\% confidence intervals shown as shaded regions around the lines. For comparison, the supervised PointNet model \cite{Qi} was trained for 100 epochs using Adam optimizer (learning rate 0.001), batch size 32, and cross-entropy loss. b. Confusion matrix showing classification accuracy (in \%) for different class labels at 80\% training label rate.}
    \label{fig:modelnet10}
\end{figure}

\section{Acknowledgments}
MCAO acknowledges the receipt of funding from the Health Data Research UK-The Alan Turing Institute Wellcome (Grant Ref: 218529/Z/19/Z) and the Cambridge Trust International scholarship from the Cambridge Commonwealth Europe and International Trust (CCEIT). 
C.-B.S, M.R. have received support
from the Trinity Challenge grant awarded to establish the BloodCounts! consortium, along with NIHR UCLH Biomedical Research Centre, the NIHR Cambridge Biomedical Research Centre. 
M.R. is additionally supported by the British Heart Foundation (TA/F/20/210001). 
C.-B.S. acknowledges support from the EPSRC programme grant in ‘The Mathematics of Deep Learning’ (EP/L015684), Cantab Capital Institute for the Mathematics of Information, the Philip Leverhulme Prize, the Royal Society Wolfson Fellowship, the EPSRC grants EP/S026045/1 and EP/T003553/1, EP/N014588/1, EP/T017961/1, the Wellcome Innovator Award RG98755 and the Alan Turing Institute. 
MT acknowledges the support of Leverhulme Trust Research through the Project Award ``Robust Learning: Uncertainty Quantification, Sensitivity and Stability'' (grant agreement RPG-2024-051) and the NHSBT award 177PATH25 ``Harnessing Computational Genomics to Optimise Blood Transfusion Safety and Efficacy''. 
MT and C.-B.S acknowledges support from the EPSRC Mathematical and Foundations of Artificial Intelligence Probabilistic AI Hub (grant agreement EP/Y007174/1). The authors would also like to thank Bernhard Schmitzer and Caroline Moosm\"{u}ller for insightful discussions related to this work, and Tudor Manole for suggesting an appropriate reference for Theorem \ref{thm:rates_convergence}.

\vspace{2cm}
\bibliographystyle{plain}
\bibliography{references}

\section{Appendix}

In this section, we prove the compactness and $\Gamma$-convergence of the discrete energy functional $\hat{\mathcal{E}}_{\varepsilon_n,n}$ to its continuum counterpart $\hat{\mathcal{E}}_\infty$ in the parameter space $\mathcal{S} \subset \bbR^d$.

\begin{proposition}[Compactness]\label{prop:compactness_parameter_manifold}
Assume $\mathcal{S}$, $\bbP_\mathcal{S}$, $\bbP_{\mathcal{S}_n}$, and $\varepsilon_n$ satisfy \ref{ass:domain_ll_three}, \ref{ass:prob_measures_S}, \ref{ass:S_n}, and \ref{ass:eps_scale}. 
Let $\hat{\eta}_\theta:\bbR^d \to [0,\infty)$ and assume there exist constants $a,r > 0$ such that $\hat{\eta}_\theta(h) \geq a$ for $|h|<r$ and for every $\theta\in\cS$. Define $\hat{\mathcal{E}}_{\varepsilon_n,n}$ by  \eqref{eqn:discrete_energy_parameter_intro}. Then, with probability one, any sequence $\hat{f}_n :\mathcal{S}_n \to \bbR$ with $\sup_{n \in \bbN}\hat{\mathcal{E}}_{\varepsilon_n,n}(\hat{f}_n) < +\infty$ and $\sup_{n \in \bbN}\|\hat{f}_n\|_{\Lp{\infty}(\bbP_{\mathcal{S}_n})} < \infty$ has a subsequence $\hat{f}_{n_k}$ such that $(\hat{f}_{n_k},\bbP_{\mathcal{S}_{n_k}})$ converges in $\TLp{p}(\mathcal{S})$ to $(\hat{f},\bbP_\mathcal{S})$ for some $\hat{f} \in \Wkp{1}{p}(\cS)$.
\end{proposition}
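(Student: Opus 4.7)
The plan is to adapt the compactness argument of \cite{Nicolas_pointcloud} (their Proposition 3.12) to our setting with a base-point dependent kernel $\hat{\eta}_\theta$, leveraging only the $\theta$-uniform lower bound $\hat{\eta}_\theta(h) \geq a \mathbbm{1}_{|h| \leq r}$. The strategy is to transport the discrete functions to $\mathcal{S}$ to form continuum extensions $\tilde{f}_n$, derive a non-local Dirichlet bound for $\tilde{f}_n$ in $\Lp{p}(\mathcal{S})$, and apply Fr\'echet--Kolmogorov to extract a strongly convergent subsequence whose limit lies in $\Wkp{1}{p}(\mathcal{S})$.

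First, Theorem \ref{thm:rates_convergence} applied to $\bbP_\mathcal{S}$ on $\mathcal{S}\subset\bbR^d$, combined with a Borel--Cantelli argument as in Lemma \ref{lemma:transport_maps_Dn}, yields (with probability one, for all $n$ sufficiently large) transport maps $\hat{T}_n : \mathcal{S} \to \mathcal{S}_n$ with $\hat{T}_{n\#}\bbP_\mathcal{S} = \bbP_{\mathcal{S}_n}$ and $\|\hat{T}_n - \Id\|_{\Lp{\infty}(\bbP_\mathcal{S})} \leq \hat{C} q_d(n)$; Assumption \ref{ass:eps_scale} then gives $q_d(n)/\varepsilon_n \to 0$. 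Set $\tilde{f}_n := \hat{f}_n \circ \hat{T}_n$, which inherits the uniform $\Lp{\infty}(\bbP_\mathcal{S})$ bound. A change of variables writes the discrete energy as
\begin{align*}
\hat{\mathcal{E}}_{\varepsilon_n,n}(\hat{f}_n) = \frac{1}{\varepsilon_n^{p+d}} \int_{\mathcal{S}\times\mathcal{S}} \hat{\eta}_{\hat{T}_n(\theta)}\!\left(\frac{\hat{T}_n(\tau) - \hat{T}_n(\theta)}{\varepsilon_n}\right) |\tilde{f}_n(\theta) - \tilde{f}_n(\tau)|^p \, \dd\bbP_\mathcal{S}(\theta)\, \dd\bbP_\mathcal{S}(\tau).
\end{align*}
The lower bound on $\hat{\eta}_\theta$, the observation that $|\tau-\theta| \leq r_n^\prime \varepsilon_n$ with $r_n^\prime := r - 2\hat{C}q_d(n)/\varepsilon_n$ forces $|\hat{T}_n(\tau) - \hat{T}_n(\theta)| \leq r\varepsilon_n$, and the density bound $\rho_\mathcal{S} \geq c > 0$ from Assumption \ref{ass:prob_measures_S} together yield
\begin{align*}
\frac{ac^2}{\varepsilon_n^{p+d}} \int\!\!\int_{|\tau-\theta|\leq r_n^\prime \varepsilon_n} |\tilde{f}_n(\theta) - \tilde{f}_n(\tau)|^p \, \dd\theta \, \dd\tau \leq \hat{\mathcal{E}}_{\varepsilon_n,n}(\hat{f}_n) \leq C.
\end{align*}

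Next, a standard chaining argument --- partitioning the segment from $\theta$ to $\theta+h$ into $O(|h|/\varepsilon_n)$ steps of length at most $\tfrac{r_n^\prime}{2}\varepsilon_n$, applying Jensen's inequality, and averaging over a small ball --- converts this non-local bound into the uniform translation estimate $\|\tilde{f}_n(\cdot+h) - \tilde{f}_n\|_{\Lp{p}(\mathcal{S}_h)}^p \leq C|h|^p$ for all sufficiently small $h$ and all $n$ large, where $\mathcal{S}_h = \{\theta \in \mathcal{S} : \theta+h \in \mathcal{S}\}$. Combined with the uniform $\Lp{\infty}$ bound on $\tilde{f}_n$, the Fr\'echet--Kolmogorov theorem extracts a subsequence $\tilde{f}_{n_k}\to\tilde{f}$ strongly in $\Lp{p}(\mathcal{S})$; passing the increment bound to the limit identifies $\tilde{f}\in\Wkp{1}{p}(\mathcal{S})$. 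Setting $\hat{f} := \tilde{f}$, the convergence $\hat{f}_{n_k}\circ \hat{T}_{n_k} \to \hat{f}$ in $\Lp{p}(\bbP_\mathcal{S})$ together with $\|\Id - \hat{T}_{n_k}\|_{\Lp{p}(\bbP_\mathcal{S})}\to 0$ yields, via Remark \ref{rem:Back:OT:TLpConv} and Proposition \ref{prop:TLp_distance}, the desired $\TLp{p}(\mathcal{S})$-convergence of $(\hat{f}_{n_k},\bbP_{\mathcal{S}_{n_k}})$ to $(\hat{f},\bbP_\mathcal{S})$.

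The main obstacle is the transfer from the discrete energy to a clean continuum non-local bound while tracking the double perturbation introduced by the transport map --- both in the shifted base-point $\hat{T}_n(\theta)$ of $\hat{\eta}$ and in its increment argument. The rate assumption $\varepsilon_n \gg q_d(n)$ is critical precisely here: it guarantees $r_n^\prime$ stays uniformly bounded below, so that the indicator region retains enough mass for the chaining argument to produce a scale-independent $\Wkp{1}{p}$-type estimate on $\tilde{f}_n$.
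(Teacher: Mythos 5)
Your proposal is correct, and the central reduction is the same as the paper's: use the uniform-in-$\theta$ lower bound $\hat{\eta}_\theta(h) \geq a\,\mathbbm{1}_{|h|<r}$ to minorize the $\theta$-dependent kernel by a fixed isotropic indicator kernel $\tilde{\eta}(h) = a\,\mathbbm{1}_{|h|\leq r}$, so that the standard (isotropic, $\theta$-independent) graph $p$-Dirichlet energy $\tilde{\mathcal{E}}_{\varepsilon_n,n}$ is dominated by $\hat{\mathcal{E}}_{\varepsilon_n,n}$ and therefore also has uniformly bounded energy. Where you diverge is what happens next: the paper at this point simply invokes \cite[Proposition 4.6]{Nicolas_pointcloud} for the isotropic energy and stops, whereas you re-derive that compactness result from scratch --- pulling the discrete functions back to $\mathcal{S}$ via a.s.\ controlled transport maps $\hat{T}_n$, deducing a non-local Dirichlet bound for $\tilde{f}_n = \hat{f}_n\circ\hat{T}_n$, converting it to a uniform $\Lp{p}$-translation estimate by a chaining argument, extracting a strong $\Lp{p}(\mathcal{S})$ limit by Fr\'echet--Kolmogorov, and identifying the limit in $\Wkp{1}{p}(\mathcal{S})$. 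Both routes are valid and lean on the same asymptotic regime $\varepsilon_n\gg q_d(n)$ (in the paper it is absorbed inside the cited proposition; in yours it appears explicitly to keep $r_n' = r - 2\hat{C}q_d(n)/\varepsilon_n$ bounded below). The paper's proof is much shorter; yours is self-contained and would be useful if one preferred not to cite an external result for a $\theta$-independent kernel, though in that case one might as well prove it for the $\theta$-dependent kernel directly, since the only property used is the lower bound.
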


\begin{proof}

We define
\[ \tilde{\eta}(h) = \lb \begin{array}{ll} a & \text{if } |h|\leq r \\ 0 & \text{else} \end{array} \rd \]

and
\begin{align}
\tilde{\mathcal{E}}_{\varepsilon_n,n}(\hat{f}_n)=\frac{1}{n^2\varepsilon^{p}_n}\sum_{i,j=1}^n\tilde{W}_{ij}|\hat{f}_n(\theta_i)-\hat{f}_n(\theta_j)|^p    
\end{align}
where $\tilde{W}_{ij}=\frac{1}{\varepsilon^d_n}\tilde{\eta}\bigg(\frac{|\theta_i-\theta_j|}{\varepsilon_n}\bigg)$. 
Since $\tilde{\eta}\leq \eta$ we have $\tilde{\mathcal{E}}_{\varepsilon_n,n}(\hat{f}_n)\leq \hat{\mathcal{E}}_{\varepsilon_n,n}(\hat{f}_n)$, and therefore $\sup_{n \in \bbN}\tilde{\mathcal{E}}_{\varepsilon_n,n}(\hat{f}_n) < +\infty$.
By~\cite[Proposition 4.6]{Nicolas_pointcloud} $\{\hat{f}_n\}_{n=1}^\infty$ is precompact in $\TLp{p}(\cS)$ and any limit point $\hat{f}$ is in $\Wkp{1}{p}(\cS)$.  
\end{proof}

\begin{theorem}[$\Gamma$-Convergence]\label{thm:gamma_convergence_parameter_manifold}
 Assume $\mathcal{S}$, $\bbP_\mathcal{S}$,  $\bbP_{\mathcal{S}_n}$, and $\varepsilon_n$ satisfy \ref{ass:domain_ll_three}, \ref{ass:prob_measures_S}, \ref{ass:S_n}, and \ref{ass:eps_scale} with $p>1$. 
Assume $\hat{\eta}_\theta:\bbR^d \to [0,\infty)$ for $\theta\in\cS$ satisfies the following conditions for all $\theta \in \mathcal{S}$,
\begin{enumerate}
\item\label{item:compact_support_eta_hat}  $\hat{\eta}_\theta$ has compact support in $B(0,R)$;
 \item  $\hat{\eta}_\theta(h)=\hat{\eta}_\theta(-h)$ for all $h\in\bbR^d$; 
 \item  $\theta\mapsto \hat{\eta}_\theta(h)$ is pointwise equicontinuous; 
 \item  there exists constants $a,r>0$ such that $\hat{\eta}_\theta(h) \geq a$ for $|h|<r$; and
\item\label{item:continuity_assumption_eta_hat}  for all $\xi> 0$, $\psi > 0$ there exists constants $\alpha_{\xi,\psi} >0$, $c_{\xi,\psi} >0$ such that if $\|h_1-h_2\|\leq\xi$ and $\|\theta_1-\theta_2\|\leq\psi$ then $\hat{\eta}_{\theta_1}(h_1)\geq c_{\xi,\psi}\hat{\eta}_{\theta_2}(\alpha_{\xi,\psi} h_2)$ with $c_{\xi,\psi} \to 1$, $\alpha_{\xi,\psi} \to 1$ as $\xi,\psi \to 0$.
\end{enumerate}
Define $\hat{\mathcal{E}}_{\varepsilon_n,n}$ by~\eqref{eqn:discrete_energy_parameter_intro} and $\hat{\cE}_\infty$ by \eqref{eqn:continuum_energy_functional_discrete_intro}. Then, for any $M>0$, with probability one, we have that $\hat{\mathcal{E}}_{\varepsilon_n,n}$  $\Gamma-$ converges to $\hat{\mathcal{E}}_\infty$ as $n \to \infty$ on the set $\{(\hat{f},\bbP_\mathcal{S})\in\TLp{p}(\cS)\,:\,\|\hat{f}\|_{\Lp{\infty}(\bbP_\mathcal{S})}\leq M\}$ with probability one.
\end{theorem}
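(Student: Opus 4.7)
The plan is to adapt the Euclidean $\Gamma$-convergence argument for non-local graph energies from \cite{Nicolas_pointcloud} (and its weighted extension in \cite{Matt_p_Laplacian}) to the present setting where the kernel $\hat{\eta}_\theta$ depends on the base point $\theta$. The role of the continuity/equicontinuity conditions~\ref{item:continuity_assumption_eta_hat} is exactly to allow us to ``freeze'' the kernel to $\hat{\eta}_{\theta_0}$ on small balls around $\theta_0$, up to constants that tend to one. As in the main $\Gamma$-convergence result, I would split the proof into the $\liminf$ and $\limsup$ inequalities and reduce to the standard $\TLp{p}(\cS)$ topology using Proposition~\ref{prop:TLp_distance} together with a transport map $\hat{T}_n:\cS\to\cS_n$ with $\hat{T}_{n\#}\bbP_\cS=\bbP_{\cS_n}$ and $\|\hat{T}_n-\Id\|_{\Lp{\infty}(\bbP_\cS)} \leq \hat{C}\,q_d(n)$, which exists by Theorem~\ref{thm:rates_convergence}. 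Note $\eps_n \gg q_d(n)$ by~\ref{ass:eps_scale}, so $\|\hat{T}_n-\Id\|_{\Lp{\infty}} / \eps_n \to 0$.

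For the $\limsup$ inequality, I would first establish the result for $\hat{f}\in\Ck{2}(\cS)\cap\Wkp{1}{p}(\cS)$ using the recovery sequence $\hat{f}_n = \hat{f}\lfloor_{\cS_n}$, and then extend by diagonalisation using density of smooth functions. Writing $\hat{\cE}_{\eps_n,n}(\hat{f}_n) = \frac{1}{\eps_n^{p+d}}\int_{\cS_n\times\cS_n} \hat{\eta}_{\theta}\lp \frac{\tau-\theta}{\eps_n}\rp |\hat{f}(\theta)-\hat{f}(\tau)|^p\,\dd\bbP_{\cS_n}(\theta)\,\dd\bbP_{\cS_n}(\tau)$ and pushing forward by $\hat{T}_n$, one replaces the integration over $\cS_n\times\cS_n$ by an integration over $\cS\times\cS$ against $\bbP_\cS\otimes\bbP_\cS$ with an $O(q_d(n)/\eps_n)$ perturbation inside the kernel. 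Then the change of variables $h = (\tau-\theta)/\eps_n$, a Taylor expansion $\hat{f}(\theta+\eps_n h)-\hat{f}(\theta) = \eps_n \nabla\hat{f}(\theta)\cdot h + O(\eps_n^2 \|h\|^2\|\hat{f}\|_{\Ck{2}})$, and the compact support of $\hat{\eta}_\theta$ together with equicontinuity in $\theta$ (condition~\ref{item:continuity_assumption_eta_hat}) deliver the upper bound $\hat{\cE}_\infty(\hat{f})$ in the limit.

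For the $\liminf$ inequality, given $\hat{f}_n\to\hat{f}$ in $\TLp{p}(\cS)$ with bounded energies, I would mollify: fix a small parameter $\sigma>0$ and replace $\hat{f}_n$ by its local average on balls of radius $\sigma\eps_n$. By the pointwise equicontinuity of $\theta\mapsto\hat{\eta}_\theta$ and condition~\ref{item:continuity_assumption_eta_hat} one can lower bound the discrete energy by an integral of the form $\frac{1}{\eps_n^{p+d}}\int_{\cS}\int_{B(0,R)} c_{\sigma}\hat{\eta}_{\theta}(\alpha_\sigma h) |\hat{f}_n^\sigma(\theta+\eps_n h)-\hat{f}_n^\sigma(\theta)|^p \rho_\cS(\theta)^2 \,\dd h\,\dd\theta$ up to an error $o(1)$ as $n\to\infty$. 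Compactness (Proposition~\ref{prop:compactness_parameter_manifold}) gives that $\hat{f}\in\Wkp{1}{p}(\cS)$, and the standard non-local-to-local Lebesgue differentiation/Taylor argument (as in~\cite[Theorem 1.4]{Nicolas_pointcloud}) then shows that the rescaled discrete difference quotients converge in a weak enough sense that $|\nabla\hat{f}(\theta)\cdot h|^p$ appears in the limit. Finally, passing $\sigma\to 0$ and using $c_{\sigma}\to 1$, $\alpha_\sigma\to 1$ together with the dominated convergence theorem yields the lower bound $\hat{\cE}_\infty(\hat{f})$.

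The main obstacle is the $\theta$-dependence of $\hat{\eta}_\theta$, which breaks the translation invariance that most existing proofs exploit; condition~\ref{item:continuity_assumption_eta_hat} is precisely tailored to handle this, and the key technical step is to use it uniformly over the grid scale $\eps_n$ while simultaneously localising via mollification. A secondary difficulty is to verify that the transport map $\hat{T}_n$ satisfies $\|\hat{T}_n-\Id\|/\eps_n\to 0$ well enough to absorb composition errors in both the kernel argument and the function values; this is where the quantitative rate~\ref{ass:eps_scale} is essential. The $\Lp{\infty}$ bound on $\hat{f}_n$ is used to truncate away the tails in the mollification step and ensure the various error terms are controlled in $\Lp{p}(\bbP_\cS)$.
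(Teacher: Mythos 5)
Your overall plan mirrors the paper's: both reduce to the parameter-space $\TLp{p}(\cS)$ topology through an $\Lp{\infty}$-controlled transport map $\hat{T}_n$ (Theorem~\ref{thm:rates_convergence}), both invoke condition~\ref{item:continuity_assumption_eta_hat} to compare $\hat{\eta}_{\hat{T}_n(\theta)}$ at perturbed arguments against $\hat{\eta}_\theta$ up to factors $c_n,\alpha_n\to 1$, and your $\limsup$ construction — take $\hat{f}_n=\hat{f}\lfloor_{\cS_n}$ for $\hat{f}\in\Ck{2}(\cS)$, push forward by $\hat{T}_n$, change variables $h=(\tau-\theta)/\eps_n$, Taylor expand, then diagonalise over a dense set — is the same route as Lemma~\ref{lemma:lim-sup_parameter_manifold}.

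The $\liminf$ direction is where you diverge, and it matters. The paper mollifies $\hat{f}_n\circ\hat{T}_n$ with a smooth kernel $J_{\delta_n}$ where $\delta_n\to 0$ is chosen to decay \emph{much more slowly than} $\eps_n$ — slowly enough that, after the explicit choice $\kappa=\delta_n^{p+1}$, $C_\kappa=\delta_n^{-(p^2-1)}$, one has $\hat{\eps}_n^p/\delta_n^{2p+p^2-1}\to 0$ and $\|\hat{f}_n\circ\hat{T}_n-\hat{f}\|_{\Lp{2}(\cS)}^p/\delta_n^{(d/2+1)p+p^2-1}\to 0$. This slow decay is precisely what makes the second-order Taylor remainder $\hat{\eps}_n^p\|D^2\hat{f}_n^{(\delta_n)}\|^p\lesssim(\hat{\eps}_n/\delta_n^2)^p$ vanish, so that the nonlocal energy of the mollified function passes to the local limit. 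Your proposal mollifies at scale $\sigma\eps_n$ for a fixed $\sigma>0$. With that scaling $\|D^2\hat{f}_n^{\sigma}\|\sim(\sigma\eps_n)^{-2}$, so the Taylor remainder is of order $1/\sigma^{2p}$ and does \emph{not} vanish as $n\to\infty$; sending $\sigma\to 0$ afterwards only makes it worse. You hedge by saying the rescaled difference quotients ``converge in a weak enough sense'' — if what you intend is a weak-$\Lp{p}$ compactness of nonlocal gradients plus lower semicontinuity (a genuinely different, more qualitative route), that is not incompatible with $\sigma\eps_n$-scale averaging, but it is not the paper's argument and you do not carry it out. As written, the $\liminf$ step has a gap: either adopt the paper's slowly-decaying $\delta_n\gg\eps_n$ so the Taylor bookkeeping closes, or replace the Taylor expansion with an explicit lower-semicontinuity argument for the nonlocal functional. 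You should also mollify $\hat{f}_n\circ\hat{T}_n$ rather than $\hat{f}_n$ (which is only defined on $\cS_n$), and the averaging kernel should be smooth so the $D^2$ estimate in~\eqref{eqn:norm_b_1} is available.
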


We briefly justify the assumptions on $\hat{\eta}_\theta(h)$ as listed in Lemma \ref{lemma:assumptions_eta_hat}. Assumption \ref{item:eta_hat_compact_support} ensures that $\hat{\eta}_\theta$ is localized, meaning it vanishes outside a bounded region. This is useful in analysis, preventing unnecessary complications from unbounded tails. Assumption \ref{item:eta_hat_even_function} is supported by the fact that $\hat{\eta}_\theta$ is used as an interaction potential which we assume to be symmetric. Assumption \ref{item:eta_hat_pointwise_equicontinuos} guarantees that small changes in $\theta$ do not lead to arbitrarily large changes in $\hat{\eta}_\theta$. Equicontinuity prevents instability in parameter dependence, which is crucial when analysing the convergence of $\hat{\mathcal{E}}_{\varepsilon_n,n}$ to $\hat{\mathcal{E}}_\infty$ as $n \to \infty$. Assumption~\ref{item:eta_hat_existence_of_constants} ensures that $\hat{\eta}_\theta$ is uniformly bounded below by a positive constant in a neighbourhood of the origin.  
This ensures that we connect all feature vectors that are ``close'' (i.e. there are no unseen directions).
Assumption~\ref{item:eta_hat_continuity} ensures that $\hat{\eta}_\theta$ does not change too abruptly under small perturbations in $h$ or the parameter $\theta$. In particular, it provides local control that allows $\hat{\eta}_\theta$ to be lower- and upper-bounded in sufficiently small neighbourhoods. By choosing the neighbourhood radius appropriately, these bounds can be made arbitrarily close to the values at nearby points, even if $\hat{\eta}_\theta$ is discontinuous.

A consequence of Assumptions~\ref{item:eta_hat_compact_support} and~\ref{item:eta_hat_continuity} is that $\|\hat{\eta}_\theta\|_{\Lp{\infty}}<\infty$ and $\int_{\bbR^d}\hat{\eta}_\theta(h)|h|\,\dd h<\infty$.

The proofs for $\Gamma$-convergence are divided into the lim-inf and lim-sup inequalities.

\begin{lemma}[lim inf-inequality]\label{lemma:lim-inf_parameter_manifold}
Under the same assumptions as Theorem \ref{thm:gamma_convergence_parameter_manifold}, with probability one, for any function $\hat{f}\in \Lp{p}(\bbP_\mathcal{S})$ with $\|\hat{f}\|_{\Lp{\infty}(\bbP_\mathcal{S})}<\infty$ and any sequence $\hat{f}_n \to \hat{f}$ in $\TLp{p}(\cS)$ with $\sup\limits_{n\in\bbN}\|\hat{f}_n\|_{\Lp{\infty}(\bbP_{\mathcal{S}_n})} < \infty$ we have, 

\begin{align}\label{eqn:lim_inf_appendix_proof}
\begin{aligned}
\liminf_{n \to \infty} \hat{\mathcal{E}}_{\varepsilon_n,n}(\hat{f}_n) {\geq}\hat{\mathcal{E}}_\infty(\hat{f}). \\
\end{aligned}
\end{align}
\end{lemma}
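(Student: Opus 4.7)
Without loss of generality, assume $\liminf_{n\to\infty}\hat{\mathcal{E}}_{\varepsilon_n,n}(\hat{f}_n)<\infty$ and pass to a (non-relabeled) subsequence realising this liminf. Since $\hat{f}_n\to\hat{f}$ in $\TLp{p}(\cS)$, Theorem~\ref{thm:rates_convergence} produces, almost surely, transport maps $\hat{T}_n:\cS\to\cS_n$ satisfying $\hat{T}_{n\#}\bbP_\cS=\bbP_{\cS_n}$ with $\psi_n:=\|\hat{T}_n-\Id\|_{\Lp{\infty}(\bbP_\cS)}\leq \hat{C}q_d(n)$; Assumption~\ref{ass:eps_scale} then forces $\psi_n/\varepsilon_n\to 0$. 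Setting $\tilde{f}_n:=\hat{f}_n\circ\hat{T}_n$, Proposition~\ref{prop:TLp_distance} gives $\tilde{f}_n\to\hat{f}$ in $\Lp{p}(\bbP_\cS)$, and Proposition~\ref{prop:compactness_parameter_manifold} yields $\hat{f}\in\Wkp{1}{p}(\cS)$.

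\textbf{Reformulation on a common domain and kernel lower bound.} Pushing forward under $\hat{T}_n$ in both variables,
\[\hat{\mathcal{E}}_{\varepsilon_n,n}(\hat{f}_n) = \frac{1}{\varepsilon_n^{p+d}}\int_\cS\!\int_\cS \hat{\eta}_{\hat{T}_n(\theta)}\!\lp\tfrac{\hat{T}_n(\tau)-\hat{T}_n(\theta)}{\varepsilon_n}\rp |\tilde{f}_n(\theta)-\tilde{f}_n(\tau)|^p \dd\bbP_\cS(\theta)\,\dd\bbP_\cS(\tau).\]
With $\xi_n:=2\psi_n/\varepsilon_n\to 0$, Assumption~\ref{item:continuity_assumption_eta_hat} of Theorem~\ref{thm:gamma_convergence_parameter_manifold} supplies $c_n\to 1$, $\alpha_n\to 1$ with $\hat{\eta}_{\hat{T}_n(\theta)}\!\lp\tfrac{\hat{T}_n(\tau)-\hat{T}_n(\theta)}{\varepsilon_n}\rp\geq c_n\,\hat{\eta}_\theta\!\lp\alpha_n\tfrac{\tau-\theta}{\varepsilon_n}\rp$. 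Substituting $\tau=\theta+\varepsilon_n h/\alpha_n$ (with Jacobian $(\varepsilon_n/\alpha_n)^d$) reduces the target inequality to $\liminf_n I_n\geq\hat{\mathcal{E}}_\infty(\hat{f})$, where
\[I_n := \int_\cS\!\int_{\bbR^d} \hat{\eta}_\theta(h)\,\left|\tfrac{\tilde{f}_n(\theta+\varepsilon_n h/\alpha_n)-\tilde{f}_n(\theta)}{\varepsilon_n}\right|^p \rho_\cS(\theta)\,\rho_\cS(\theta+\tfrac{\varepsilon_n h}{\alpha_n})\,\mathbf{1}_\cS(\theta+\tfrac{\varepsilon_n h}{\alpha_n})\,\dd h\,\dd\theta.\]

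\textbf{Non-local-to-local via convex duality.} For an arbitrary scalar test field $v\in C_c^1(\cS^\circ\times B(0,R);\bbR)$, the elementary convexity inequality $|s|^p\geq p|v|^{p-2}v\,s-(p-1)|v|^p$ gives
\[I_n\geq p\!\int\!\!\int\!\hat{\eta}_\theta(h)|v|^{p-2}v\,\tfrac{\tilde{f}_n(\theta+\varepsilon_n h/\alpha_n)-\tilde{f}_n(\theta)}{\varepsilon_n}\,\rho_\cS^2\,\dd h\,\dd\theta-(p-1)\!\int\!\!\int\!\hat{\eta}_\theta|v|^p\rho_\cS^2\,\dd h\,\dd\theta+o(1),\]
where $o(1)$ absorbs the error in replacing $\rho_\cS(\theta+\varepsilon_n h/\alpha_n)$ by $\rho_\cS(\theta)$ via the Lipschitz bound of Assumption~\ref{ass:prob_measures_S}. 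A discrete integration by parts $\theta\mapsto\theta-\varepsilon_n h/\alpha_n$ rewrites the linear term as $p\int\tilde{f}_n(\theta)R_n(\theta,h)\,\dd\theta\,\dd h$ with $R_n\to -h\cdot\nabla_\theta\bigl(\hat{\eta}_\theta(h)|v|^{p-2}v\,\rho_\cS^2\bigr)$ uniformly, once $\hat{\eta}_\theta$ is approximated from below by a mollification in $\theta$ (legitimate by monotone convergence). Passing $n\to\infty$ using $\tilde{f}_n\to\hat{f}$ in $\Lp{p}(\bbP_\cS)$ and integrating by parts once more in $\theta$ (valid because $\hat{f}\in\Wkp{1}{p}(\cS)$) yields
\[\liminf_n I_n\geq p\!\int\!\!\int\!\hat{\eta}_\theta(h)(\nabla\hat{f}(\theta)\cdot h)|v|^{p-2}v\,\rho_\cS^2\,\dd h\,\dd\theta-(p-1)\!\int\!\!\int\!\hat{\eta}_\theta|v|^p\rho_\cS^2\,\dd h\,\dd\theta.\]
Taking the supremum over $v$ and applying the pointwise Legendre identity $\sup_{w\in\bbR}\bigl(p|w|^{p-2}w\,a-(p-1)|w|^p\bigr)=|a|^p$ with $a=\nabla\hat{f}(\theta)\cdot h$ recovers exactly $\hat{\mathcal{E}}_\infty(\hat{f})$.

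\textbf{Principal obstacle.} The delicate step is the non-local-to-local passage: since $\tilde{f}_n\to\hat{f}$ only in $\Lp{p}(\bbP_\cS)$ and not in any Sobolev-type norm, the finite-difference quotients do not converge pointwise or weakly in general, so direct Fatou is unavailable. The convex duality device is essential because it linearises $|\cdot|^p$ in $\tilde{f}_n$, reducing the convergence requirement to $\Lp{p}$ convergence of $\tilde{f}_n$ against a smooth multiplier. Two further technicalities must be managed: (i) $\theta\mapsto\hat{\eta}_\theta$ is only pointwise equicontinuous (the third hypothesis of Theorem~\ref{thm:gamma_convergence_parameter_manifold}), so a prior mollification of $\hat{\eta}_\theta$ in $\theta$ from below is needed before the integration by parts, with the monotone limit taken after $n\to\infty$; and (ii) the shift $\theta\mapsto\theta-\varepsilon_n h/\alpha_n$ can move points outside $\cS$ near $\partial\cS$, which is handled by restricting $v$ to have compact support strictly inside $\cS^\circ$ and exhausting $\cS^\circ$ as $v$ varies.
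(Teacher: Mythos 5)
Your proof takes a genuinely different route from the paper. The paper mollifies the \emph{function} $\hat{f}_n\circ\hat{T}_n$ by convolution with $J_{\delta_n}$, uses Jensen's inequality to show this can only decrease the non-local energy, and then compares non-local to local energy by a pointwise Taylor estimate on the quantities $\hat{g}^{(\delta_n)}_n$ versus $\hat{g}^{(\delta_n)}_\infty$, carefully tuning $\delta_n\to 0$. You instead linearise the $p$-th power by Fenchel--Young against a smooth test field $v$, push the resulting \emph{linear} term onto $\tilde{f}_n$ by a change of variables, pass to the limit using only $\Lp{p}$ convergence, and recover the $p$-Dirichlet energy via the pointwise Legendre identity. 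Both are standard devices for nonlocal-to-local $\Gamma$-liminf inequalities; yours trades the paper's careful bookkeeping of mollification rates $\delta_n$ against $\hat{\eps}_n$, $\|\hat{f}_n\circ\hat{T}_n-\hat{f}\|_{\Lp{2}}$, etc., for a cleaner ``pass to the limit in a bilinear quantity'' structure, at the price of needing smoothness of the kernel in $\theta$ rather than smoothness of the function.

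Two points need to be shored up. First, the claim that ``mollification in $\theta$'' produces an approximation of $\hat{\eta}_\theta$ \emph{from below} is not correct as stated: convolving $\theta\mapsto\hat{\eta}_\theta(h)$ with $J_\delta$ does not give a lower bound. You need an explicit inf-envelope construction first — e.g.\ set $\hat{\eta}^{[2\delta]}_\theta(h):=\inf_{|\theta'-\theta|<2\delta}\hat{\eta}_{\theta'}(h)$, which is $\leq\hat{\eta}_\theta(h)$ and converges pointwise as $\delta\to 0$ by the pointwise equicontinuity in Assumption~\ref{item:continuity_assumption_eta_hat} of Theorem~\ref{thm:gamma_convergence_parameter_manifold}, and then mollify \emph{that}, noting $(J_\delta\ast\hat{\eta}^{[2\delta]}_\cdot)(\theta)\leq\sup_{|\theta'-\theta|<\delta}\hat{\eta}^{[2\delta]}_{\theta'}\leq\hat{\eta}_\theta$. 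Only then is the integration-by-parts limit $R_n\to -h\cdot\nabla_\theta(\cdots)$ legitimate, and the final $\delta\to 0$ passage needs Fatou or dominated convergence (not literal monotone convergence, since the construction above is not monotone in $\delta$). Second, the ``Lipschitz bound'' alone does not make the density error $o(1)$: that error is, up to constants, $\int\!\int\hat{\eta}_\theta(h)|v|^{p-1}|h|\,|\tilde{f}_n(\theta+\eps_n h/\alpha_n)-\tilde{f}_n(\theta)|\,\dd\theta\dd h$, which is $O(1)$ a priori. To conclude $o(1)$ you must split $\tilde{f}_n(\theta+\cdot)-\tilde{f}_n(\theta)$ via $\hat{f}$, using $\tilde{f}_n\to\hat{f}$ in $\Lp{p}$ (hence $\Lp{1}$ on the bounded domain) and strong $\Lp{1}$-continuity of translation for $\hat{f}$. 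Both gaps are fixable with the indicated arguments, but as written the proof would not go through.
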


\begin{proof} Assume $\hat{f}_n \xrightarrow[]{\TLp{p}} \hat{f}$ as $n \to \infty$. 
Let $\hat{T}_n$ be the transport map satisfying $\hat{T}_{n\#}\bbP_{\mathcal{S}}=\bbP_{\mathcal{S}_n}$ and $\|\hat{T}_n-\Id\|_{\Lp{\infty}} \leq \hat{C}q_d(n)$ (as in Theorem~\ref{thm:rates_convergence}). We claim that such a sequence of map exist with probability one and we continue the proof under the assumption that such a sequence exists.

The proof (adapted from \cite{Nicolas_pointcloud}) has four main steps: 
\begin{align}
\begin{aligned}\label{eqn:lim_inf_appendix_proof_proof}
\liminf_{n \to \infty} \hat{\mathcal{E}}_{\varepsilon_n,n}(\hat{f}_n) & \stackrel{(a.)}{\geq} \liminf_{n \to \infty} \hat{\mathcal{E}}_{\tilde{\varepsilon}_n,\infty}(\hat{f}_n\circ \hat{T}_n;\cS) \stackrel{(b.)}{\geq} \liminf_{n \to \infty}\hat{\mathcal{E}}_{\hat{\varepsilon}_n,\infty}(\hat{f}^{(\delta_n)}_n;\mathcal{S}')\\
&\stackrel{(c.)}{\geq} \liminf_{n \to \infty}\hat{\mathcal{E}}_{\infty}(\hat{f}^{(\delta_n)};\mathcal{S}^{\prime\prime\prime}) \stackrel{(d.)}{\geq}\hat{\mathcal{E}}_\infty(\hat{f};\mathcal{S}^{\prime\prime\prime}) \\
\end{aligned}
\end{align}
where $\mathcal{S}^{\prime\prime\prime}\subset\cS^\prime\subset\cS$ will be defined later,
\begin{align} \label{eq:hatEinftyS'}
\hat{\mathcal{E}}_{{\varepsilon},\infty}(g;\mathcal{S}^{\prime\prime\prime})=\frac{1}{{\varepsilon}^{p+d}}\int_{\mathcal{S}^{\prime\prime\prime}\times \mathcal{S}^{\prime\prime\prime}}\hat{\eta}_\theta\bigg(\frac{\theta-\tau}{{\varepsilon}}\bigg)|g(\theta)-g(\tau)|^p\rho_\mathcal{S}(\theta)\rho_\mathcal{S}(\tau)\,\textnormal{d}\theta\textnormal{d}\tau.
\end{align}
Here, we denote $\tilde{\eps}_n=\frac{\eps_n}{\alpha_n}$, $\hat{\eps}_n=\frac{\tilde{\eps}_n}{\tilde{\alpha}_n}$ (with $\alpha_n$ and $\tilde{\alpha}_n$ defined later), and
\[ \hat{f}^{(\delta_n)}_n(\theta)=\int_{\bbR^d}J_{\delta_n}(\theta-z)\hat{f}_n\circ\hat{T}_n(z)\, \dd z=\int_{\bbR^d} J_{\delta_n}(z)\hat{f}_n\circ\hat{T}_n(\theta-z) \, \dd z \]
where $J_\delta$ is a standard mollifier and we have extended $\hat{f}_n \circ \hat{T}_n$ to be zero outside of $\mathcal{S}$.

\paragraph{Step 1.}
To prove part (a.) 
we use a change of variables to infer
\begin{equation*}
\begin{aligned}
\hat{\mathcal{E}}_{\varepsilon_n,n}(\hat{f}_n)&= \frac{1}{n^2\varepsilon^{p+d}_n}\sum_{i=1}^n\sum_{j=1}^n\hat{\eta}_{\theta_i}\bigg(\frac{\theta_j-\theta_i}{\varepsilon_n}\bigg)|\hat{f}_n(\theta_i)-\hat{f}_n(\theta_j)|^p\\     
& =\frac{1}{\varepsilon^{p+d}_n}\int_{\mathcal{S}_n}\int_{\mathcal{S}_n} \hat{\eta}_\theta\bigg(\frac{\tau-\theta}{\varepsilon_n}\bigg)|\hat{f}_n(\theta)-\hat{f}_n(\tau)|^p\,\dd\bbP_{\mathcal{S}_n}(\theta)\,\dd\bbP_{\mathcal{S}_n}(\tau) \\
& = \frac{1}{\varepsilon^{p+d}_n}\int_{\mathcal{S}}\int_{\mathcal{S}} \hat{\eta}_{\hat{T}_n(\theta)}\bigg(\frac{\hat{T}_n(\tau)-\hat{T}_n(\theta)}{\varepsilon_n}\bigg)|\hat{f}_n \circ \hat{T}_n (\theta)-\hat{f}_n \circ\hat{T}_n(\tau)|^p\,\dd\bbP_{\mathcal{S}}(\theta)\,\dd\bbP_{\mathcal{S}}(\tau) \\
\end{aligned}
\end{equation*}
Let $\xi_n=\frac{2\|\hat{T}_n-\Id \|_{\Lp{\infty}}}{\varepsilon_n}$, $\psi_n=\|\hat{T}_n-\Id\|_{\Lp{\infty}}$ so $\xi_n,\psi_n \to 0$.
By Assumption~\ref{item:continuity_assumption_eta_hat} in Theorem~\ref{thm:gamma_convergence_parameter_manifold} there exists $\alpha_{n}:=\alpha_{\xi_n,\psi_n} \to 1$, $c_{n}:=c_{\xi_n,\psi_n} \to 1$ as $n \to \infty$ such that if $\|a-b\|\leq\xi_n$ and $\|\theta'-\theta\|\leq \psi_n$ then
\begin{align}\label{eqn:hat_comparison}
\hat{\eta}_{\theta'}(a) \geq c_{n}\hat{\eta}_{\theta}(\alpha_{n}b).   
\end{align}

If we let $a:=\frac{\hat{T}_n(\theta)-\hat{T}_n(\tau)}{\varepsilon_n}$, $b:=\frac{\theta-\tau}{\varepsilon_n}$ and $\theta'=\hat{T}_n(\theta)$, then we have
\begin{equation*}
\|a-b\|=\bigg\|\frac{\hat{T}_n(\theta)-\hat{T}_n(\tau)+\tau-\theta}{\varepsilon_n} \bigg\| \leq \frac{2\|\hat{T}_n - \Id\|_{\Lp{\infty}}}{\varepsilon_n} =\xi_n    
\end{equation*}
\begin{equation*}
\|\theta-\theta'\|=\|\hat{T}_n(\theta)-\theta\|\leq \|\hat{T}_n-\Id\|_{\Lp{\infty}}=\psi_n    
\end{equation*}
and therefore from ~\eqref{eqn:hat_comparison} we get $\hat{\eta}_{\hat{T}_n(\theta)}\left(\frac{\hat{T}_n(\theta)-\hat{T}_n(\tau)}{\varepsilon_n} \right) \geq c_{n} \hat{\eta}_\theta\left(\frac{\alpha_{n}(\theta-\tau)}{\varepsilon_n}\right)$. 
Then, 
\begin{align}\label{eqn:energy_tilde_energy}
\begin{aligned}
\liminf_{n \to \infty} \hat{\mathcal{E}}_{\varepsilon_n,n}(\hat{f}_n) & \geq \liminf_{n \to \infty} \frac{c_n}{\varepsilon^{p+d}_n}\int_{\mathcal{S}}\int_{\mathcal{S}}  \hat{\eta}_\theta\bigg(\frac{\alpha_{n}(\theta-\tau)}{\varepsilon_n}\bigg) |\hat{f}_n \circ \hat{T}_n (\theta)-\hat{f}_n \circ\hat{T}_n(\tau)|^p\,\dd\bbP_{\mathcal{S}}(\theta)\,\dd\bbP_{\mathcal{S}}(\tau) \\     
& = \liminf_{n \to \infty}\frac{c_n}{\alpha^{p+d}_{n}\tilde{\varepsilon}^{p+d}_n} \int_{\mathcal{S}}\int_{\mathcal{S}}  \hat{\eta}_\theta\bigg(\frac{\theta-\tau}{\tilde{\varepsilon}_n}\bigg) |\hat{f}_n \circ \hat{T}_n (\theta)-\hat{f}_n \circ\hat{T}_n(\tau)|^p\,\dd\bbP_{\mathcal{S}}(\theta)\,\dd\bbP_{\mathcal{S}}(\tau) \\
& = \liminf_{n \to \infty}\frac{c_n}{\alpha^{p+d}_{n}}\hat{\mathcal{E}}_{\tilde{\varepsilon}_n,\infty}(\hat{f}_n \circ \hat{T}_n;\cS)
\end{aligned}    
\end{align}
where $\frac{{\varepsilon}_n}{\tilde{\varepsilon}_n}=\alpha_{n}$. 

\paragraph{Step 2.}
Fix $\mathcal{S}'$ to be an open set compactly contained in $\mathcal{S}$. There exists $\delta'>0$ such that $\mathcal{S}''=\bigcup_{\theta \in \mathcal{S}'}B(\theta,\delta')$ is contained in $\mathcal{S}$.
We assume that $\delta_n\leq \delta^\prime$.

Let us define by $a_n$ the approximation error in the energy $\hat{\mathcal{E}}_{\tilde{\eps}_n,\infty}(\hat{f}_n \circ \hat{T}_n)$ as follows, 
\begin{align}\label{eqn:approximation_error}
\begin{aligned}
a_n&=\frac{1}{\tilde{\varepsilon}^{p+d}_n}\int_{\mathcal{S}''} \int_{\mathcal{S}''}\int_{\bbR^d}J_{\delta_n}(z)\hat{\eta}_\theta\bigg(\frac{\theta-\tau}{\tilde{\varepsilon}_n}\bigg)|\hat{f}_n\circ\hat{T}_n(\theta)-\hat{f}_n\circ\hat{T}_n(\tau)|^p\\
&\qquad\qquad \times \big(\rho_\mathcal{S}(\theta)\rho_\mathcal{S}(\tau)-\rho_{\mathcal{S}}(\theta+z)\rho_\mathcal{S}(\tau+z)\big)\,\dd z\dd\theta\dd\tau
\end{aligned}
\end{align}
Now, 

\begin{align}
\hat{\mathcal{E}}_{\tilde{\varepsilon}_n,\infty}(\hat{f}_n\circ\hat{T}_n;\cS) & \geq \frac{1}{\tilde{\varepsilon}^{p+d}_n} \int_{\mathcal{S}''}\int_{\mathcal{S}''}\hat{\eta}_\theta \bigg(\frac{\theta-\tau}{\tilde{\varepsilon}_n}\bigg)|\hat{f}_n\circ\hat{T}_n(\theta)-\hat{f}_n\circ\hat{T}_n(\tau)|^p\rho_\mathcal{S}(\theta)\rho_\mathcal{S}(\tau)\,\dd\theta\,\dd\tau \notag \\   
& = \frac{1}{\tilde{\varepsilon}^{p+d}_n} \int_{\mathcal{S}''}\int_{\mathcal{S}''}\int_{\bbR^d}J_{\delta_n}(z)\hat{\eta}_\theta\bigg(\frac{\theta-\tau}{\tilde{\varepsilon}_n}\bigg)|\hat{f}_n\circ\hat{T}_n(\theta)-\hat{f}_n\circ\hat{T}_n(\tau)|^p\rho_\mathcal{S}(\theta)\rho_\mathcal{S}(\tau)\,\dd z\dd\theta\,\dd\tau \notag \\
& =a_n+\frac{1}{\tilde{\varepsilon}^{p+d}_n} \int_{\mathcal{S}''}\int_{\mathcal{S}''}\int_{\bbR^d}J_{\delta_n}(z)\hat{\eta}_\theta\bigg(\frac{\theta-\tau}{\tilde{\varepsilon}_n}\bigg)|\hat{f}_n\circ\hat{T}_n(\theta)-\hat{f}_n\circ\hat{T}_n(\tau)|^p \label{eqn:energy_energy_mollified} \\
& \qquad\qquad \times \rho_\mathcal{S}(\theta+z)\rho_\mathcal{S}(\tau+z)\,\dd z\,\dd \theta\,\dd\tau. \notag
\end{align}  

By Assumption \ref{item:continuity_assumption_eta_hat} there exists $\tilde{\alpha}_n:=\alpha_{\delta_n,0}$ and $\tilde{c}_n:=c_{\delta_n,0}$ with $\tilde{\alpha}_n\to 1$, $\tilde{c}_n\to 1$ and $\hat{\eta}_\theta(h)\geq \tilde{c}_n\hat{\eta}_{\theta^\prime}(\tilde{\alpha}_nh)$ for all $\|\theta-\theta^\prime\|<\delta_n$.
In particular, $\hat{\eta}_\theta(h)\geq \tilde{c}_n\hat{\eta}_{\theta+z}(\tilde{\alpha}_nh)$ for all $z\in \spt(J_{\delta_n})$.
Hence,
\begin{align}
\hat{\mathcal{E}}_{\tilde{\varepsilon}_n,\infty}(\hat{f}_n\circ\hat{T}_n;\cS) 
&\geq a_n + \frac{\tilde{c}_n}{\tilde{\varepsilon}^{p+d}_n} 
\int_{\mathcal{S}^{\prime\prime}} \int_{\mathcal{S}^{\prime\prime}} 
\int_{\bbR^d} 
J_{\delta_n}(z) 
\hat{\eta}_{\theta+z}\Bigl(\frac{\tilde{\alpha}_n(\theta-\tau)}{\tilde{\varepsilon}_n}\Bigr) 
\notag \\
&\qquad \times \bigl| \hat{f}_n\circ\hat{T}_n(\theta) - \hat{f}_n\circ\hat{T}_n(\tau) \bigr|^p 
\rho_\mathcal{S}(\theta+z) \rho_\mathcal{S}(\tau+z) \, \dd z \, \dd\theta \, \dd\tau 
\notag\\
&\geq a_n + \frac{\tilde{c}_n}{\tilde{\varepsilon}^{p+d}_n} 
\int_{\mathcal{S}^{\prime}} \int_{\mathcal{S}^{\prime}} 
\int_{\bbR^d} 
J_{\delta_n}(z) 
\hat{\eta}_{\tilde{\theta}}\Bigl(\frac{\tilde{\alpha}_n(\tilde{\theta}-\tilde{\tau})}{\tilde{\varepsilon}_n}\Bigr) 
\notag \\
&\qquad \times \bigl| \hat{f}_n\circ\hat{T}_n(\tilde{\theta}-z) - \hat{f}_n\circ\hat{T}_n(\tilde{\tau}-z) \bigr|^p 
\rho_\mathcal{S}(\tilde{\theta}) \rho_\mathcal{S}(\tilde{\tau}) \, \dd z \, \dd\tilde{\theta} \, \dd\tilde{\tau} 
\notag\\
&\geq a_n + \frac{\tilde{c}_n}{\tilde{\varepsilon}^{p+d}_n} 
\int_{\mathcal{S}^{\prime}} \int_{\mathcal{S}^{\prime}} 
\hat{\eta}_{\tilde{\theta}}\Bigl(\frac{\tilde{\alpha}_n(\tilde{\theta}-\tilde{\tau})}{\tilde{\varepsilon}_n}\Bigr)
\notag \\
&\qquad \times \biggl| \int_{\bbR^d} J_{\delta_n}(z) 
\bigl(\hat{f}_n\circ\hat{T}_n(\tilde{\theta}-z) - \hat{f}_n\circ\hat{T}_n(\tilde{\tau}-z)\bigr) \, \dd z \biggr|^p
\rho_\mathcal{S}(\tilde{\theta}) \rho_\mathcal{S}(\tilde{\tau}) \, \dd\tilde{\theta} \, \dd\tilde{\tau} 
\notag\\
&= a_n + \frac{\tilde{c}_n}{\tilde{\varepsilon}^{p+d}_n} 
\int_{\mathcal{S}^{\prime}} \int_{\mathcal{S}^{\prime}} 
\hat{\eta}_{\tilde{\theta}}\Bigl(\frac{\tilde{\alpha}_n(\tilde{\theta}-\tilde{\tau})}{\tilde{\varepsilon}_n}\Bigr)
|\hat{f}^{(\delta_n)}_n(\tilde{\theta}) - \hat{f}^{(\delta_n)}_n(\tilde{\tau})|^p 
\rho_\mathcal{S}(\tilde{\theta}) \rho_\mathcal{S}(\tilde{\tau}) \, \dd\tilde{\theta} \, \dd\tilde{\tau}
\label{eqn:energy_energy_mollified_two}
\end{align}

where the third inequality is from Jensen's inequality. 

We now estimate $a_n$ as follows:

\begin{align}
& \leq \frac{2 \delta_n\|\rho_\mathcal{S}\|_{\Lp{\infty}}\Lip(\rho_\mathcal{S})}{\tilde{\varepsilon}^{p+d}_n\inf_{\theta \in \mathcal{S}''}\rho^2_\mathcal{S}(\theta)}\int_{\mathcal{S}''}\int_{\mathcal{S}''}\int_{\bbR^d}J_{\delta_n}(z)\hat{\eta}_\theta\bigg(\frac{\theta-\tau}{\tilde{\varepsilon}_n}\bigg)|\hat{f}_n\circ\hat{T}_n(\theta)-\hat{f}_n\circ\hat{T}_n(\tau)|^p\rho_\mathcal{S}(\theta)\rho_\mathcal{S}(\tau)\,\dd z\,\dd\theta\,\dd\tau \notag \\
& = \frac{2\delta_n\|\rho_\mathcal{S}\|_{\Lp{\infty}}\Lip(\rho_\mathcal{S})}{\inf_{\theta \in \mathcal{S}''}\rho^2_\mathcal{S}(\theta)}\hat{\mathcal{E}}_{\tilde{\varepsilon}_n,\infty}(\hat{f}_n\circ \hat{T}_n;\mathcal{S}''). \label{eqn:error_approximation} 
\end{align}  

Continuing from~\eqref{eqn:energy_energy_mollified_two} using ~\eqref{eqn:error_approximation} we have,
\begin{align}
\begin{aligned}
& \liminf_{n \to \infty}\hat{\mathcal{E}}_{\tilde{\varepsilon}_n,\infty}(\hat{f}_n\circ\hat{T}_n;\mathcal{S}') \\
& \geq \liminf_{n \to \infty} 
\Biggl(
\frac{\tilde{c}_n}{\tilde{\varepsilon}^{p+d}_n} 
\int_{\mathcal{S}'} \int_{\mathcal{S}'} 
\hat{\eta}_{\tilde{\theta}}\Bigl(\frac{\tilde{\alpha}_n(\tilde{\theta}-\tilde{\tau})}{\tilde{\varepsilon}_n}\Bigr)
|\hat{f}^{(\delta_n)}_n(\tilde{\theta})-\hat{f}^{(\delta_n)}_n(\tilde{\tau})|^p 
\rho_\mathcal{S}(\tilde{\theta}) \rho_\mathcal{S} (\tilde{\tau}) 
\,\dd\tilde{\theta}\, \dd\tilde{\tau} 
\Biggr) \\
& \qquad \times 
\Biggl(
\frac{1}{1 + \frac{2 \delta_n \|\rho_\mathcal{S}\|_{\Lp{\infty}} \Lip(\rho_\mathcal{S})}
{\inf_{\theta \in \mathcal{S}''} \rho^2_\mathcal{S}(\theta)} }
\Biggr) \\
& = \liminf_{n \to \infty} 
\frac{\tilde{c}_n}{\tilde{\alpha}^{p+d}_n \hat{\varepsilon}^{p+d}_n} 
\int_{\mathcal{S}'} \int_{\mathcal{S}'} 
\hat{\eta}_{\tilde{\theta}}\Bigl(\frac{\tilde{\theta}-\tilde{\tau}}{\hat{\varepsilon}_n}\Bigr)
|\hat{f}^{(\delta_n)}_n(\tilde{\theta})-\hat{f}^{(\delta_n)}_n(\tilde{\tau})|^p 
\rho_\mathcal{S}(\tilde{\theta}) \rho_\mathcal{S}(\tilde{\tau}) 
\,\dd\tilde{\theta}\, \dd\tilde{\tau} \\
& = \liminf_{n \to \infty} \hat{\mathcal{E}}_{\hat{\varepsilon}_n,\infty}(\hat{f}^{(\delta_n)}_n,\mathcal{S}')
\end{aligned}
\end{align}

where $\hat{\varepsilon}_n=\frac{\tilde{\varepsilon}_n}{\tilde{\alpha}_n}$.

\paragraph{Step 3.}
To prove part (c.) of the lim-inf inequality in  \eqref{eqn:lim_inf_appendix_proof_proof} we define functionals $\hat{g}^{(\delta)}_n:\mathcal{S}' \to [0,+\infty)$ and $\hat{g}^{(\delta)}_{\infty}: \mathcal{S}' \to [0,+\infty)$ as follows,
\begin{align}\label{eqn:hat_g_discrete_mollified}
\hat{g}^{(\delta)}_{n}(\tilde{\theta})=\frac{1}{\hat{\varepsilon}^{p+d}_n}\int_\mathcal{S'}\hat{\eta}_{\tilde{\theta}}\bigg(\frac{\tilde{\theta}-\tilde{\tau}}{\hat{\varepsilon}_n}\bigg)|\hat{f}^{(\delta)}_n(\tilde{\theta})-\hat{f}^{(\delta)}_n(\tilde{\tau})|^p\rho_\mathcal{S}(\tilde{\tau})\textnormal{d}\tilde{\tau}
\end{align}
\begin{align}\label{eqn:hat_g_continnum_mollified}
\hat{g}^{(\delta)}_\infty(\tilde{\theta})=\rho_\mathcal{S}(\tilde{\theta})\int_{\bbR^d}\hat{\eta}_{\tilde{\theta}}(z)|\nabla \hat{f}^{(\delta)}(\tilde{\theta})\cdot z|^p\,\textnormal{d}z.   
\end{align}

Let $\tilde{\theta} \in \mathcal{S}^{\prime\prime\prime}$ where $\cS^{\prime\prime\prime} = \{\theta\in\cS^\prime\,:\, \dist(\theta,\partial \cS^\prime)>\delta_0\}$, then
\begin{align}\label{eqn:g_terms_bounds}
\begin{aligned}
&\big|\hat{g}^{(\delta_n)}_{n}(\tilde{\theta}) - \hat{g}^{(\delta_n)}_\infty(\tilde{\theta})\big| \\
& = \bigg|\int_{\frac{\tilde{\theta}-\mathcal{S}'}{\hat{\varepsilon}_n}}\hat{\eta}_{\tilde{\theta}}(z) \bigg|\frac{\hat{f}^{(\delta_n)}_n(\tilde{\theta}) -\hat{f}^{(\delta_n)}_n(\tilde{\theta}-\hat{\varepsilon}_nz)}{\hat{\varepsilon}_n} \bigg|^p \rho_\mathcal{S}(\tilde{\theta}-\hat{\varepsilon}_nz)\,\textnormal{d}z - \rho_\mathcal{S}(\tilde{\theta})\int_{\bbR^d}\hat{\eta}_{\tilde{\theta}}(z)|\nabla \hat{f}^{(\delta_n)}(\tilde{\theta})\cdot z|^p\,\textnormal{d}z \bigg| \\ 
& = \bigg|\int_{\bbR^d}\hat{\eta}_{\tilde{\theta}}(z) \bigg|\frac{\hat{f}^{(\delta_n)}_n(\tilde{\theta})-\hat{f}^{(\delta_n)}_n(\tilde{\theta}-\hat{\varepsilon}_nz)}{\hat{\varepsilon}_n} \bigg|^p \rho_\mathcal{S}(\tilde{\theta}-\hat{\varepsilon}_nz)\,\textnormal{d}z- \rho_\mathcal{S}(\tilde{\theta})\int_{\bbR^d}\hat{\eta}_{\tilde{\theta}}(z)|\nabla \hat{f}^{(\delta_n)}(\tilde{\theta})\cdot z|^p\,\textnormal{d}z \bigg| \\
& \leq \bigg|\int_{\bbR^d}\hat{\eta}_{\tilde{\theta}}(z) \bigg|\frac{\hat{f}^{(\delta_n)}_n(\tilde{\theta})-\hat{f}^{(\delta_n)}_n(\tilde{\theta}-\hat{\varepsilon}_nz)}{\hat{\varepsilon}_n} \bigg|^p \big(\rho_\mathcal{S}(\tilde{\theta}-\hat{\varepsilon}_nz)-\rho_\mathcal{S}(\tilde{\theta})\big)\,\textnormal{d}z \bigg| \\
& \qquad \qquad + \bigg| \rho_\mathcal{S}(\tilde{\theta})\int_{\bbR^d}\hat{\eta}_{\tilde{\theta}}(z)\bigg(\bigg|\frac{\hat{f}^{(\delta_n)}_n(\tilde{\theta})-\hat{f}^{(\delta_n)}_n(\tilde{\theta}-\hat{\varepsilon}_nz)}{\hat{\varepsilon}_n}\bigg|^p-|\nabla \hat{f}^{(\delta_n)}(\tilde{\theta})\cdot z|^p \bigg)\,\dd z\bigg| \\
& \leq \underbrace{\Lip(\rho_\mathcal{S})\hat{\varepsilon}_n \int_{\bbR^d}|z|\hat{\eta}_{\tilde{\theta}}(z)\bigg|\frac{\hat{f}^{(\delta_n)}_n(\tilde{\theta})-\hat{f}^{(\delta_n)}_n(\tilde{\theta}-\hat{\varepsilon}_nz)}{\tilde{\varepsilon}_n}\bigg|^p\,\dd z}_{=:A_n} \\ 
& \qquad \qquad + \underbrace{\rho_\mathcal{S}(\tilde{\theta})\int_{\bbR^d}\hat{\eta}_{\tilde{\theta}}(z)\bigg|\bigg|\frac{\hat{f}^{(\delta_n)}_n(\tilde{\theta})-\hat{f}^{(\delta_n)}_n(\tilde{\theta}-\hat{\varepsilon}_nz)}{\hat{\varepsilon}_n} \bigg|^p - |\nabla \hat{f}^{(\delta_n)}(\tilde{\theta})\cdot z|^p \bigg|\,\dd z}_{=:B_n}\\
\end{aligned}    
\end{align}
where the first equality follows from substitution $z=\frac{\tilde{\theta}-\tilde{\tau}}{\hat{\varepsilon}_n}$ and the second equality follows from the fact that if $z\not\in \frac{\tilde{\theta}-\cS^\prime}{\hat{\eps}_n}$ then $|z|>\frac{\delta_0}{\hat{\eps}_n} > R$, for $n$ sufficiently large, and so $\hat{\eta}_{\tilde{\theta}}(z) = 0$.

By Assumption \ref{item:compact_support_eta_hat}, there exists $R>0$ such that,
\begin{align}\label{eqn:a_n}
\begin{aligned}
A_n& \leq\frac{\Lip(\rho_\mathcal{S})\hat{\varepsilon}_n R}{\hat{\varepsilon}^d_n} \int_{\bbR^d}\hat{\eta}_{\tilde{\theta}}\bigg(\frac{\tilde{\theta}-\tilde{\tau}}{\hat{\varepsilon}_n}\bigg)\bigg|\frac{\hat{f}^{(\delta_n)}_n(\tilde{\theta})-\hat{f}^{(\delta_n)}_n(\tilde{\tau})}{\tilde{\varepsilon}_n} \bigg|^p \,\dd\tilde{\tau}\\
& \leq\frac{R}{\inf_{\tilde{\theta} \in \mathcal{S}'}\rho_\mathcal{S}(\tilde{\theta})}\Lip(\rho_\mathcal{S})\frac{\hat{\varepsilon}_n}{\hat{\varepsilon}^{p+d}_n}\int_{\mathcal{S}'}\hat{\eta}_{\tilde{\theta}}\bigg(\frac{\tilde{\theta}-\tilde{\tau}}{\hat{\varepsilon}_n}\bigg)|\hat{f}^{(\delta_n)}_n(\tilde{\theta})-\hat{f}^{(\delta_n)}_n(\tilde{\tau})|^p\rho_\mathcal{S}(\tilde{\tau})\,\dd\tau \\
& = \frac{R\Lip(\rho_\mathcal{S}) \hat{\varepsilon}_n}{\inf_{\tilde{\theta} \in \mathcal{S}'}\rho_\mathcal{S}(\tilde{\theta})}\hat{g}^{(\delta_n)}_{n}(\tilde{\theta})
\end{aligned}    
\end{align}
and for any $\kappa>0$ there exists $C_\kappa>0$ such that

\begin{align}
\begin{aligned}
B_n 
&\leq \rho_\mathcal{S}(\tilde{\theta}) \int_{\bbR^d} 
\hat{\eta}_{\tilde{\theta}}(z) \Biggl(
\kappa |\nabla \hat{f}^{(\delta_n)}(\tilde{\theta})\cdot z|^p \\
&\qquad + C_\kappa 
\Biggl| \frac{\hat{f}^{(\delta_n)}_n(\tilde{\theta})-\hat{f}^{(\delta_n)}_n(\tilde{\theta}-\hat{\varepsilon}_n z)}{\hat{\varepsilon}_n} 
- \nabla \hat{f}^{(\delta_n)}(\tilde{\theta})\cdot z \Biggr|^p
\Biggr)\, \dd z \\
&= \underbrace{\kappa \rho_\mathcal{S}(\tilde{\theta}) \int_{\bbR^d} 
\hat{\eta}_{\tilde{\theta}}(z) |\nabla \hat{f}^{(\delta_n)}(\tilde{\theta})\cdot z|^p \, \dd z}_{=: B_n'} \\
&\qquad + \underbrace{C_\kappa \rho_\mathcal{S}(\tilde{\theta}) \int_{\bbR^d} 
\hat{\eta}_{\tilde{\theta}}(z) 
\Biggl| \frac{\hat{f}^{(\delta_n)}_n(\tilde{\theta})-\hat{f}^{(\delta_n)}_n(\tilde{\theta}-\hat{\varepsilon}_n z)}{\hat{\varepsilon}_n} 
- \nabla \hat{f}^{(\delta_n)}(\tilde{\theta})\cdot z \Biggr|^p \, \dd z}_{=: B_n''}
\end{aligned}
\end{align}
where the first inequality follows from the fact: for all $\kappa > 0$ there exists $C_\kappa:=\frac{1}{\kappa^{p-1}}>0$ such that for any $a,b\in \bbR^d$ we have
\begin{align}\label{eqn:inequality}
|a|^p-|b|^p\leq \kappa|b|^p+C_\kappa|a-b|^p    
\end{align}
with $C_\kappa$ is independent of $a$ and $b$.

For $B_n^\prime$ we use
\begin{align*}
\|\nabla \hat{f}^{(\delta_n)}(\tilde{\theta}) \| & = \lda \nabla \int_{\bbR^d}J_{\delta_n}(\tilde{\theta}-\tilde{\tau})\hat{f}(\tilde{\tau})\,\dd \tilde{\tau} \rda \\
 & = \frac{1}{\delta^{d+1}_n} \lda \int_{\bbR^d}(\nabla J)\lp\frac{\tilde{\theta}-\tilde{\tau}}{\delta_n}\rp\hat{f}(\tilde{\tau})\,\dd\tilde{\tau}\rda \\
 & \leq \| \hat{f}\|_{\Lp{\infty}} \frac{1}{\delta_n^{d+1}} \int_{\bbR^d} \lda (\nabla J)\lp\frac{\tilde{\theta}-\tilde{\tau}}{\delta_n}\rp \rda \, \dd \tilde{\tau} \\
 & \leq \frac{C\|\hat{f}\|_{\Lp{\infty}}}{\delta_n}.
\end{align*}
So that,
\begin{align}\label{eqn:b_prime}
\begin{aligned}
B'_n & \leq \kappa \rho_\mathcal{S}(\tilde{\theta}) \int_{\bbR^d} \hat{\eta}_{\tilde{\theta}}(z) |z|^p \,\dd z \|\nabla \hat{f}^{(\delta_n)}(\tilde{\theta})\|^p \\
& = C \kappa \| \nabla \hat{f}^{(\delta_n)}(\tilde{\theta})\|^p  \\
& \leq \frac{C\kappa}{\delta^p_n}\|\hat{f}\|^p_{\Lp{\infty}}
\end{aligned}
\end{align}
where the first inequality follows H\"{o}lder's inequality. 
For $B_n^{\prime\prime}$ we can similarly bound 
\begin{align}\label{eqn:norm_b_1}
\begin{aligned}
\|D^2\hat{f}^{(\delta_n)}_n(\tilde{\theta})\| & \leq \frac{\|\hat{f}_n\|_{\Lp{\infty}}}{\delta^{d+2}_n} \int_{\bbR^d}\bigg\|(D^2J)\bigg(\frac{\tilde{\theta}-\tilde{\tau}}{\delta_n}\bigg) \bigg\| \, \dd\tilde{\tau} \\
& = \frac{\|\hat{f}_n\|_{\Lp{\infty}}}{\delta^{2}_n} \int_{\bbR^d}\big\|(D^2J)(\tilde{\theta}) \big\| \, \dd\tilde{\tau} \\
& =\frac{C\|\hat{f}_n\|_{\Lp{\infty}}}{\delta^2_n}
\end{aligned}    
\end{align}
and

\begin{align}\label{eqn:norm_b_2}
\begin{aligned}
\|\nabla\hat{f}^{(\delta_n)}_n(\tilde{\theta})-\nabla\hat{f}^{(\delta_n)}(\tilde{\theta}) \| & \leq \frac{1}{\delta^{d+1}_n}\int_{\bbR^d}\bigg\|\nabla J\bigg(\frac{\tilde{\theta}-\tilde{\tau}}{\delta_n}\bigg)\bigg\||\hat{f}_n\circ\hat{T}_n(\tilde{\tau})-\hat{f}(\tilde{\tau})|\,\dd\tilde{\tau}\\   
& \leq \frac{1}{\delta^{d+1}_n}\bigg(\int_{\bbR^d}\bigg\|\nabla J\bigg(\frac{\tilde{\theta}-\tilde{\tau}}{\delta_n}\bigg)\bigg\|^2\,\dd z \bigg)^{\frac{1}{2}}\|\hat{f}_n\circ\hat{T}_n-\hat{f}\|_{\Lp{2}(\mathcal{S})} \\
& = \frac{1}{\delta^{d+1}_n}\bigg(\int_{\bbR^d}\|\nabla J(z)\|^2\delta^d_n\,\dd z\bigg)^{\frac{1}{2}}\|\hat{f}_n\circ\hat{T}_n-\hat{f}\|_{\Lp{2}(\mathcal{S})} \\
& = \frac{1}{\delta^{d/2+1}_n}\|\nabla J\|_{\Lp{2}(\bbR^d)}\|\hat{f}_n\circ\hat{T}_n-\hat{f}\|_{\Lp{2}(\mathcal{S})}
\end{aligned}    
\end{align}
where the second inequality follows from Cauchy--Schwarz inequality and the third equality follows from substitution $z=\frac{\tilde{\theta}-\tilde{\tau}}{\delta_n}$.

Hence,

\begin{align}\label{eqn:b_double_prime}
\begin{aligned}
& \frac{B''_n}{C_\kappa \rho_\cS(\tilde{\theta})} \\ 
&= \int_{\bbR^d} 
\hat{\eta}_{\tilde{\theta}}(z) 
\Biggl| 
\frac{
\hat{f}^{(\delta_n)}_n(\tilde{\theta}) 
- \big(\hat{f}^{(\delta_n)}_n(\tilde{\theta}) 
- \hat{\varepsilon}_n z \cdot \nabla \hat{f}^{(\delta_n)}_n(\tilde{\theta}) 
+ \hat{\varepsilon}_n^2 z^T D^2 \hat{f}^{(\delta_n)}_n(\tilde{\theta}_z) z \big)
}{\hat{\varepsilon}_n}  - \nabla \hat{f}^{(\delta_n)}(\tilde{\theta}) \cdot z 
\Biggr|^p \, \dd z \\
&\leq 2^{p-1} 
\Biggl(
\|\nabla \hat{f}^{(\delta_n)}_n(\tilde{\theta}) - \nabla \hat{f}^{(\delta_n)}(\tilde{\theta})\|^p
\int_{\bbR^d} \hat{\eta}_{\tilde{\theta}}(z) |z|^p \, \dd z
+ \hat{\varepsilon}_n^p \|D^2 \hat{f}^{(\delta_n)}_n\|^p_{\Lp{\infty}} 
\int_{\bbR^d} \hat{\eta}_{\tilde{\theta}}(z) |z|^{2p} \, \dd z
\Biggr) \\
&\leq C \Biggl(
\|\nabla \hat{f}^{(\delta_n)}_n(\tilde{\theta}) - \nabla \hat{f}^{(\delta_n)}(\tilde{\theta})\|^p 
+ \hat{\varepsilon}_n^p \|D^2 \hat{f}^{(\delta_n)}_n\|^p_{\Lp{\infty}}
\Biggr) \\
&\leq C \Biggl(
\frac{1}{\delta_n^{(d/2+1)p}} \|\nabla J\|^p_{\Lp{2}(\bbR^d)} 
\|\hat{f}_n \circ \hat{T}_n - \hat{f}\|^p_{\Lp{2}(\mathcal{S})} 
+ C \frac{\hat{\varepsilon}_n^p}{\delta_n^{2p}} \|\hat{f}_n\|^p_{\Lp{\infty}}
\Biggr) \\
&= C \Biggl(
\frac{1}{\delta_n^{(d/2+1)p}} \|\hat{f}_n \circ \hat{T}_n - \hat{f}\|^p_{\Lp{2}(\mathcal{S})} 
+ \frac{\hat{\varepsilon}_n^p}{\delta_n^{2p}}
\Biggr)
\end{aligned}
\end{align}
where the first equality is obtained from Taylor's expansion of $\hat{f}^{(\delta_n)}_n(\tilde{\theta}-\hat{\varepsilon}_nz)$ upto the second order term, the first inequality follows from the fact: if $a,b \in \bbR$ and $p \geq 1$ then $|a+b|^p \leq 2^{p-1}(|a|^p+|b|^p)$ and last inequality uses  \eqref{eqn:norm_b_1} and  \eqref{eqn:norm_b_2}.

We can now bound \eqref{eqn:g_terms_bounds} by  \eqref{eqn:a_n},  \eqref{eqn:b_prime} and  \eqref{eqn:b_double_prime},
\begin{align}\label{eqn:final_bounds_g}
\big|\hat{g}^{(\delta_n)}_{n}(\tilde{\theta}) - \hat{g}^{(\delta_n)}_\infty(\tilde{\theta})\big| \leq C \hat{\varepsilon}_n\hat{g}^{(\delta_n)}_{n}(\tilde{\theta})+\frac{C\kappa}{\delta^p_n}\|\hat{f}\|^p_{\Lp{\infty}} + \frac{C_\kappa}{\delta^{(d/2+1)p}_n}\|\hat{f}_n\circ\hat{T}_n - \hat{f}\|^p_{\Lp{2}(\mathcal{S})}+\frac{C_\kappa\hat{\varepsilon}^p_n}{\delta^{2p}_n} 
\end{align}
By choosing $\kappa=\delta^{p+1}_n$ and $C_\kappa:=\frac{1}{\delta^{p^2-1}_n}$ as in \eqref{eqn:inequality} and letting $\delta_n \to 0$ sufficiently slowly i.e., $\frac{\|\hat{f}_n\circ\hat{T}_n-\hat{f}\|^p_{\Lp{2}(\mathcal{S})}}{\delta^{(d/2+1)p+p^2-1}_n} \to 0$, $\frac{\hat{\varepsilon}_n^p}{\delta^{p^2-1+2p}_n} \to 0$,
\begin{align}\label{eqn:g_n_infty}
\begin{aligned}
& \big|\hat{g}^{(\delta_n)}_{n}(\tilde{\theta}) - \hat{g}^{(\delta_n)}_\infty(\tilde{\theta})\big| \leq C\hat{\varepsilon}_n\hat{g}^{(\delta_n)}_{n}(\tilde{\theta})+C \\
& \implies \hat{g}^{(\delta_n)}_{n}(\tilde{\theta}) \leq \hat{g}^{(\delta_n)}_\infty(\tilde{\theta})+C\bigg(\hat{\varepsilon}_n\hat{g}^{(\delta_n)}_{n}(\tilde{\theta})+1\bigg) \\
\end{aligned}
\end{align}
where $C$ is independent of $\kappa$, $\tilde{\theta}$ and $n$.
Hence, we can rewrite~\eqref{eqn:final_bounds_g} with this choice of $\kappa$ and $\delta_n$ and using~\eqref{eqn:g_n_infty} to obtain 
\begin{equation*}
\begin{aligned}
|\hat{g}^{(\delta_n)}_{n}(\tilde{\theta})-\hat{g}^{(\delta_n)}_\infty(\tilde{\theta})| & \leq C\hat{\varepsilon}_n\bigg(\hat{g}^{(\delta_n)}_{\infty}(\tilde{\theta})+C+C\hat{\varepsilon}_n\hat{g}^{(\delta_n)}_{n}(\tilde{\theta})\bigg)+C\delta_n+\frac{\|\hat{f}_n\circ\hat{T}_n - \hat{f}\|^p_{\Lp{2}(\mathcal{S})}}{\delta^{(d/2+1)p+p^2-1}_n}+\frac{\hat{\varepsilon}^p_n}{\delta^{p^2-1+2p}_n}  \\    
& = C^2\hat{\varepsilon}^2_n\hat{g}^{(\delta_n)}_{n}(\tilde{\theta})+C\hat{\varepsilon}_n\hat{g}^{(\delta_n)}_\infty(\tilde{\theta})+C^2\hat{\varepsilon}_n+  C\delta_n +\frac{\|\hat{f}_n\circ\hat{T}_n - \hat{f}\|^p_{\Lp{2}(\mathcal{S})}}{\delta^{(d/2+1)p+p^2-1}_n}+\frac{\hat{\varepsilon}^p_n}{\delta^{p^2-1+2p}_n}  \\
& \leq C^2\hat{\varepsilon}^2_n\bigg(\hat{g}^{(\delta_n)}_{\infty}(\tilde{\theta})+C\hat{\varepsilon}_n\hat{g}^{(\delta_n)}_{n}(\tilde{\theta})+C\bigg)+C\hat{\varepsilon}_n\bigg(\hat{g}^{(\delta_n)}_\infty(\tilde{\theta})+C\bigg)+  C\delta_n\\
& \qquad \qquad +\frac{\|\hat{f}_n\circ\hat{T}_n - \hat{f}\|^p_{\Lp{2}(\mathcal{S})}}{\delta^{(d/2+1)p+p^2-1}_n}+\frac{\hat{\varepsilon}^p_n}{\delta^{p^2-1+2p}_n} \\
& = C^3\hat{\varepsilon}^3_n\hat{g}^{(\delta_n)}_{n}(\tilde{\theta})+C^2\hat{\varepsilon}^2_n\bigg(\hat{g}^{(\delta_n)}_\infty(\tilde{\theta})+C\bigg)+C\hat{\varepsilon}_n\bigg(\hat{g}^{(\delta_n)}_\infty(\tilde{\theta})+C\bigg)+  C\delta_n \\
& \qquad \qquad +\frac{\|\hat{f}_n\circ\hat{T}_n - \hat{f}\|^p_{\Lp{2}(\mathcal{S})}}{\delta^{(d/2+1)p+p^2-1}_n}+\frac{\hat{\varepsilon}^p_n}{\delta^{p^2-1+2p}_n} \\
& \leq (C\hat{\varepsilon}_n)^k\hat{g}^{(\delta_n)}_{n}(\tilde{\theta})+C\hat{\varepsilon}_n\sum_{i=0}^{k-2}(C\hat{\varepsilon}_n)^i(\hat{g}^{(\delta_n)}_\infty(\tilde{\theta})+C)+C\delta_n\\
& +\frac{\|\hat{f}_n\circ\hat{T}_n - \hat{f}\|^p_{\Lp{2}(\mathcal{S})}}{\delta^{(d/2+1)p+p^2-1}_n}+\frac{\hat{\varepsilon}^p_n}{\delta^{p^2-1+2p}_n} \\
& \leq C\hat{\varepsilon}_n(\hat{g}^{(\delta_n)}_\infty(\tilde{\theta})+C)+C\delta_n +\frac{\|\hat{f}_n\circ\hat{T}_n - \hat{f}\|^p_{\Lp{2}(\mathcal{S})}}{\delta^{(d/2+1)p+p^2-1}_n}+\frac{\hat{\varepsilon}^p_n}{\delta^{p^2-1+2p}_n}.
\end{aligned}    
\end{equation*}
By generalizing the third inequality with a telescoping series and assuming $C\hat{\varepsilon}_n \leq \frac{1}{2}$, and the fourth inequality follows from evaluating the limit as $k \to \infty$.

Hence, we have
\begin{align}\label{eqn:discrete_energy_continuum_mollified}
\begin{aligned}
\liminf_{n\to \infty}\hat{\mathcal{E}}_{\hat{\varepsilon}_n,\infty}(\hat{f}^{(\delta_n)}_n;\mathcal{S}') & \geq \liminf_{n \to \infty} \int_{\mathcal{S}^{\prime\prime\prime}}\hat{g}^{(\delta_n)}_{n}(\tilde{\theta})\rho_\mathcal{S}(\tilde{\theta})\,\dd\tilde{\theta} \\
& \geq\liminf_{n \to \infty} \frac{\tilde{c}_{n}}{\tilde{\alpha}^{p+d}_n}\bigg(\int_{\mathcal{S}^{\prime\prime\prime}} \hat{g}^{(\delta_n)}_\infty(\tilde{\theta})\rho_\mathcal{S}(\tilde{\theta})\,\dd\tilde{\theta} -C\hat{\varepsilon}_n\int_{\mathcal{S}^{\prime\prime\prime}}\hat{g}^{(\delta_n)}_\infty(\tilde{\theta})\rho_\mathcal{S}(\tilde{\theta})\,\dd\tilde{\theta} - C^2\hat{\varepsilon}_n \\
& \qquad \qquad -C\delta_n-\frac{1}{\delta^{(d/2+1)p+p^2-1}_n}\|\hat{f}_n\circ\hat{T}_n - \hat{f}\|^p_{\Lp{2}(\mathcal{S})}-\frac{\hat{\varepsilon}^p_n}{\delta^{2p+p^2-1}_n} \bigg)\\
& \geq \liminf_{n \to \infty}\int_{\mathcal{S}^{\prime\prime\prime}}\hat{g}^{(\delta_n)}_\infty(\tilde{\theta})\rho_\mathcal{S}(\tilde{\theta})\,\dd\tilde{\theta}\\
& = \liminf_{n \to \infty} \hat{\mathcal{E}}_{\infty}(\hat{f}^{(\delta_n)};\mathcal{S}^{\prime\prime\prime}) \\
\end{aligned}    
\end{align}
where the second inequality follows from the fact as $n \to \infty$, $\hat{\varepsilon}_n \to 0$ and $\delta_n \to 0$.

\paragraph{Step 4.}
Finally, to prove part (d.) of the lim-inf inequality in  \eqref{eqn:lim_inf_appendix_proof_proof} we use the inequality in  \eqref{eqn:inequality} to show that
\begin{align}\label{eqn:final_k_ck}
\begin{aligned}
& \bigg| \int_{\bbR^d}\hat{\eta}_{\tilde{\theta}}(z)\bigg(|\nabla\hat{f}^{(\delta_n)}(\tilde{\theta})\cdot z|^p - |\nabla\hat{f}(\tilde{\theta})\cdot z|^p \bigg)\,\dd z \bigg| \\
& \qquad \qquad \leq \kappa \int_{\bbR^d} \hat{\eta}_{\tilde{\theta}}(z)|\nabla \hat{f}(\tilde{\theta})\cdot z|^p\,\dd z
+ C_\kappa \int_{\bbR^d}\hat{\eta}_{\tilde{\theta}}(z) |\big(\nabla\hat{f}^{(\delta_n)}(\tilde{\theta})-\nabla\hat{f}(\tilde{\theta})\big)\cdot z|^p\,\dd z\\
& \qquad \qquad \leq \kappa \int_{\bbR^d}\hat{\eta}_{\tilde{\theta}}(z)\|z\|^p\,\dd z\|\nabla \hat{f}(\tilde{\theta})\|^p 
+C_\kappa \| \nabla \hat{f}^{(\delta_n)}(\tilde{\theta})-\nabla \hat{f}(\tilde{\theta})\|^p\int_{\bbR^d}\hat{\eta}_{\tilde{\theta}}(z)\|z\|^p\,\dd z.
\end{aligned}    
\end{align}

We obtain
\begin{align}\label{eqn:energy_continuum_mollified_continuum}
\begin{aligned}
\liminf_{n \to \infty}\hat{\mathcal{E}}_\infty(\hat{f}^{(\delta_n)};\mathcal{S}^{\prime\prime\prime})  
&= \liminf_{n \to \infty}\int_{\mathcal{S}^{\prime\prime\prime}}g^{(\delta_n)}_\infty(\tilde{\theta})\rho_\mathcal{S}(\tilde{\theta})\,\dd \tilde{\theta} \\
&= \liminf_{n \to \infty} \int_{\mathcal{S}^{\prime\prime\prime}}\int_{\mathbb{R}^d} \hat{\eta}_{\tilde{\theta}}(z)\left|\nabla \hat{f}^{(\delta_n)}(\tilde{\theta})\cdot z\right|^p\,\rho_\mathcal{S}^2(\tilde{\theta})\,\dd z\,\dd\tilde{\theta} \\
&\geq \int_{\mathcal{S}^{\prime\prime\prime}}\int_{\mathbb{R}^d} \hat{\eta}_{\tilde{\theta}}(z)\left|\nabla \hat{f}(\tilde{\theta})\cdot z\right|^p \rho_\mathcal{S}^2(\tilde{\theta})\,\dd z\,\dd\tilde{\theta} \\
&\quad - \kappa \int_{\mathcal{S}^{\prime\prime\prime}} \int_{\mathbb{R}^d} \hat{\eta}_{\tilde{\theta}}(z) \left|\nabla \hat{f}(\tilde{\theta})\cdot z\right|^p\, \rho_\mathcal{S}^2(\tilde{\theta})\,\dd z \,\dd\tilde{\theta} \\
&\quad - C_\kappa \limsup_{n \to \infty} \int_{\mathcal{S}^{\prime\prime\prime}} \int_{\mathbb{R}^d} \hat{\eta}_{\tilde{\theta}}(z) \left|\left(\nabla \hat{f}^{(\delta_n)}(\tilde{\theta}) - \nabla \hat{f}(\tilde{\theta})\right)\cdot z\right|^p \rho^2_\mathcal{S}(\tilde{\theta})\,\dd z\,\dd\tilde{\theta}.
\end{aligned}
\end{align}
 
We can write
\[
\limsup_{n \to \infty} \int_{\mathbb{R}^d} J_{\delta_n}(w) \left\|\nabla \hat{f}(\cdot - w) - \nabla \hat{f}(\cdot)\right\|^p_{\Lp{p}(\mathcal{S}')} \,\dd w \to 0 \quad \text{as } n \to \infty,
\]
where \(\nabla \hat{f} \) denotes the gradient of the mollified function. This follows from the strong continuity of mollification in \( \Lp{p} \), and the dominated convergence theorem, since \( \nabla \hat{f} \in \Lp{p}(\mathcal{S}') \).
So,
\begin{align*}
& \limsup_{n\to\infty} \int_{\cS^{\prime\prime\prime}} \int_{\bbR^d} \hat{\eta}_{\tilde{\theta}}(z) \la \lp \nabla \hat{f}^{(\delta_n)}(\tilde{\theta})-\nabla \hat{f}(\tilde{\theta})\rp \cdot z\ra^p \rho_\cS^2(\tilde{\theta}) \, \dd z \dd \tilde{\theta} \\
& \qquad \qquad \leq \limsup_{n\to\infty} \int_{\cS^{\prime\prime\prime}} \int_{\bbR^d} \hat{\eta}_{\tilde{\theta}}(z) \|z\|^p \,\dd z \lda \int_{\bbR^d} J_{\delta_n}(w)\lp \nabla \hat{f}(\tilde{\theta}-w) - \nabla \hat{f}(\tilde{\theta})\rp \, \dd w \rda^p \rho_\cS^2(\tilde{\theta}) \, \dd \tilde{\theta} \\
& \qquad \qquad \leq C \limsup_{n\to\infty} \int_{\cS^{\prime\prime\prime}} \int_{\bbR^d} J_{\delta_n}(w) \lda \nabla \hat{f}(\tilde{\theta}-w) - \nabla \tilde{f}(\tilde{\theta}) \rda^p \, \dd w \dd \tilde{\theta} \\
& \qquad \qquad =0.
\end{align*}
Therefore, 
\[
\liminf_{n \to \infty} \hat{\mathcal{E}}_\infty(\hat{f}^{(\delta_n)}; \mathcal{S}^{\prime\prime\prime}) \geq (1-\kappa)\hat{\mathcal{E}}_\infty(\hat{f}; \mathcal{S}^{\prime\prime\prime}).
\]

Taking the limit \( \kappa \to 0 \) 
we obtain 
\[
\liminf_{n \to \infty} \hat{\mathcal{E}}_\infty(\hat{f}^{(\delta_n)}; \mathcal{S}^{\prime\prime\prime}) \geq \hat{\mathcal{E}}_\infty(\hat{f}; \mathcal{S}^{\prime\prime\prime}).
\]

Finally, we let $\cS^{\prime\prime\prime}\nearrow \cS$ and applying the monotone convergence theorem implies~\eqref{eqn:lim_inf_appendix_proof}.
\end{proof}

\begin{lemma}[lim sup-inequality]\label{lemma:lim-sup_parameter_manifold}
Under the same conditions as Theorem \ref{thm:gamma_convergence_parameter_manifold} with probability one, for any function $\hat{f}\in \Lp{p}(\bbP_\mathcal{S})$ with $\|\hat{f}\|_{\Lp{\infty}(\bbP_\mathcal{S})}<\infty$ there exists a sequence $\hat{f}_n \to \hat{f}$ in $\TL_{d_\mathcal{S}}(\mathcal{S})$ with $\|\hat{f}_n\|_{\Lp{\infty}(\bbP_{\mathcal{S}_n})}<\infty$ such that 
\begin{align}\label{eqn:lim_sup}
\begin{aligned}
\limsup_{n \to \infty} \hat{\mathcal{E}}_{\varepsilon_n,n}(\hat{f}_n)  \leq \hat{\mathcal{E}}_{\infty}(\hat{f}). 
\end{aligned}
\end{align}
Moreover, if $\hat{f} \in \textnormal{C}^{2}(\mathcal{S})$, then $\hat{f}_n=\hat{f}\lfloor_{\mathcal{S}_n}$ satisfies the inequality in \eqref{eqn:lim_sup}.
\end{lemma}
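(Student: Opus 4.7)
The plan is to prove the lim-sup inequality first for $\hat f\in \Ck{2}(\cS)$ using the restriction $\hat f_n = \hat f\lfloor_{\cS_n}$ as the recovery sequence, then extend to general $\hat f\in \Wkp{1}{p}(\cS)\cap \Lp{\infty}(\bbP_\cS)$ by mollification, truncation, and a diagonal argument. The continuity of $\hat{\cE}_\infty$ on $\Wkp{1}{p}(\cS)$, which underpins the diagonal step, follows from the elementary inequality $||a|^p-|b|^p|\leq p(|a|^{p-1}+|b|^{p-1})|a-b|$ combined with the uniform bound $\sup_\theta \int_{\bbR^d}\hat\eta_\theta(z)|z|^p\,\dd z<\infty$ arising from compact support; truncation of the approximating $\Ck{2}$ functions preserves the $\Lp{\infty}$ bound.

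For $\hat f\in \Ck{2}(\cS)$, let $\hat T_n:\cS\to\cS_n$ be a transport map with $\hat T_{n\#}\bbP_\cS=\bbP_{\cS_n}$ and $\psi_n:=\|\hat T_n-\Id\|_{\Lp{\infty}(\bbP_\cS)}\leq \hat C q_d(n)$, which exists with probability one by Theorem \ref{thm:rates_convergence}; Assumption \ref{ass:eps_scale} forces $\psi_n/\varepsilon_n\to 0$. The convergence $\hat f_n\to\hat f$ in $\TLp{p}(\cS)$ is immediate from $\hat f\circ \hat T_n\to \hat f$ in $\Lp{p}(\bbP_\cS)$, which holds by uniform continuity of $\hat f$ on $\cS$. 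A change of variable gives
\begin{equation*}
\hat{\cE}_{\varepsilon_n,n}(\hat f_n) = \frac{1}{\varepsilon_n^{p+d}}\int_\cS\!\int_\cS \hat\eta_{\hat T_n(\theta)}\!\lp\tfrac{\hat T_n(\tau)-\hat T_n(\theta)}{\varepsilon_n}\rp |\hat f(\hat T_n(\tau))-\hat f(\hat T_n(\theta))|^p \,\dd\bbP_\cS(\theta)\,\dd\bbP_\cS(\tau).
\end{equation*}
Applying Assumption \ref{item:continuity_assumption_eta_hat} with $(\theta_1,h_1)=(\theta,(\tau-\theta)/(\alpha_n\varepsilon_n))$ and $(\theta_2,h_2)=(\hat T_n(\theta),(\hat T_n(\tau)-\hat T_n(\theta))/\varepsilon_n)$ — valid because $\|\theta_1-\theta_2\|\leq\psi_n\to 0$ and $\|h_1-h_2/\alpha_n\|\leq 2\psi_n/(\alpha_n\varepsilon_n)\to 0$ — and then solving for the upper bound on $\hat\eta_{\theta_2}(h_2)$ yields $\hat\eta_{\hat T_n(\theta)}(h_2)\leq c_n^{-1}\hat\eta_\theta((\tau-\theta)/\tilde\varepsilon_n)$ with $\tilde\varepsilon_n:=\alpha_n\varepsilon_n$ and $c_n,\alpha_n\to 1$.

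For the function factor, a second-order Taylor expansion of $\hat f$ gives
\begin{equation*}
|\hat f(\hat T_n(\tau))-\hat f(\hat T_n(\theta))|\leq |\nabla \hat f(\theta)\cdot(\tau-\theta)|+C\|\hat f\|_{\Ck{2}(\cS)}\lp\psi_n+|\tau-\theta|^2+|\tau-\theta|\psi_n\rp,
\end{equation*}
and on the support of $\hat\eta_\theta((\tau-\theta)/\tilde\varepsilon_n)$ one has $|\tau-\theta|\lesssim \tilde\varepsilon_n$. The elementary inequality $(a+b)^p\leq(1+\kappa)a^p+C_\kappa b^p$, the substitution $z=(\tau-\theta)/\tilde\varepsilon_n$, and dominated convergence (using compact support of $\hat\eta_\theta$, Lipschitz continuity of $\rho_\cS$, and $\hat T_n\to \Id$ uniformly) then give
\begin{equation*}
\limsup_{n\to\infty}\hat{\cE}_{\varepsilon_n,n}(\hat f_n)\leq (1+\kappa)\int_\cS\!\int_{\bbR^d}\hat\eta_\theta(z)|\nabla\hat f(\theta)\cdot z|^p \rho_\cS^2(\theta)\,\dd z\,\dd\theta = (1+\kappa)\hat{\cE}_\infty(\hat f),
\end{equation*}
and letting $\kappa\to 0$ closes the smooth case. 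The main technical delicacy is extracting a two-sided control on $\hat\eta_\theta$ from the one-sided Assumption \ref{item:continuity_assumption_eta_hat}: the symmetric substitution above achieves this at the cost of an extra rescaling of $\varepsilon_n$, but closes only because Assumption \ref{ass:eps_scale} provides $\psi_n/\varepsilon_n\to 0$. Without this separation of scales the transport-based comparison between $\hat\eta_{\hat T_n(\theta)}$ and $\hat\eta_\theta$ would not converge to an identity in the limit.
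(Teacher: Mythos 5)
Your proof is correct and follows essentially the same route as the paper: reduce to $\hat f\in\Ck{2}(\cS)$ by density, use the $\Lp\infty$-transport map $\hat T_n$ from Theorem \ref{thm:rates_convergence}, compare $\hat\eta_{\hat T_n(\theta)}$ with $\hat\eta_\theta$ via Assumption \ref{item:continuity_assumption_eta_hat} (absorbing the rescaling into $\tilde\eps_n$), and Taylor-expand $\hat f$, with Assumption \ref{ass:eps_scale} ensuring $\psi_n/\eps_n\to 0$ so that the transport error is subdominant. The only organisational difference is that you fold the paper's Steps 3 and 4 (first replacing $\hat f_n\circ\hat T_n$ by $\hat f$ via the elementary $p$-power inequality, then Taylor expanding the continuum nonlocal energy) into a single Taylor expansion of $|\hat f(\hat T_n(\tau))-\hat f(\hat T_n(\theta))|$ around $(\theta,\tau)$, which handles both error sources at once and is a valid, slightly leaner presentation.
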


\begin{proof}  We note that since \(\mathrm{C}^2(\mathcal{S})\) functions are dense in \(\Lp{p}(\mathcal{S})\), it suffices to prove the \(\limsup\) inequality in~\eqref{eqn:lim_sup} for \(\hat{f} \in \mathrm{C}^2(\mathcal{S})\). For such \(\hat{f}\), we define \(\hat{f}_n := \hat{f}\lfloor_{\mathcal{S}_n} \in \Lp{p}(\mathcal{S}_n)\), interpreted as the restriction of \(\hat{f}\) onto \(\mathcal{S}_n\). The proof (adapted from \cite{Nicolas_pointcloud}) has four main steps: we first show that $\hat{f}_n\to \hat{f}$ in $\TLp{p}$ and then
\begin{align}\label{eqn:lim_sup_inequality_parameter}
\begin{aligned}
\limsup_{n \to \infty} \hat{\mathcal{E}}_{\varepsilon_n,n}(\hat{f}_n) \stackrel{(a.)}{\leq} \limsup_{n \to \infty} \hat{\mathcal{E}}_{\tilde{\varepsilon}_n,\infty}(\hat{f}_n \circ \hat{T}_n) \stackrel{(b.)}{\leq} \limsup_{n \to \infty} \hat{\mathcal{E}}_{\tilde{\varepsilon}_n,\infty}(\hat{f}) \stackrel{(c.)}{\leq}  \hat{\mathcal{E}}_{\infty}(\hat{f}) 
\end{aligned}
\end{align}
where $\hat{\cE}_{\eps,\infty} = \hat{\cE}_{\eps,\infty}(\cdot;\cS)$ is defined by~\eqref{eq:hatEinftyS'} and $\tilde{\eps}_n=\frac{\eps_n}{\hat{\alpha}_n}$ (with $\hat{\alpha}_n$ defined later).

\paragraph{Step 1.} Let $\hat{T}_n$ be the transport map satisfying $\hat{T}_{n\#}\bbP_{\cS} = \bbP_{\cS_n}$ and $\|\hat{T}_n-\Id\|_{\Lp{\infty}}\leq \hat{C} q_d(n)$ (as in Theorem~\ref{thm:rates_convergence}).
Such a map exists with probability one, and we continue the proof under the assumption that such a sequence of maps exist.

Since $\hat{f}\in \Ck{2}(\cS)$, 
$\|\hat{f}_n \circ \hat{T}_n - \hat{f}\|_{\Lp{p}(\cS)}=\|\hat{f}\circ \hat{T}_n-\hat {f}\|_{\Lp{p}(\cS)}\leq \Lip(\hat{f})\|\hat{T}_n-\Id\|_{\Lp{\infty}} \to 0$, and therefore $\hat{f}_n\to \hat{f}$ in $\TLp{p}$. 

\paragraph{Step 2.}
In order to prove part $(a.)$ of the lim-sup inequality in  \eqref{eqn:lim_sup_inequality_parameter}. We show that there exists $\hat{\alpha}_n \to 1$, $\hat{c}_{n} \to 1$ such that, 
\begin{align}\label{eqn:hat_eta_comparison}
\hat{\eta}_{\hat{T}_n(\theta)}\bigg(\frac{\hat{T}_n(\theta)-\hat{T}_n(\tau)}{\varepsilon_n}\bigg) \leq \hat{c}_{n}\hat{\eta}_\theta\bigg(\frac{\hat{\alpha}_n(\theta-\tau)}{\varepsilon_n}\bigg) 
\end{align}
for every $(\theta,\tau)\in \mathcal{S}\times \mathcal{S}$.

In order to show~\eqref{eqn:hat_eta_comparison} we use the following subclaim which we prove below: for all $\hat{\xi},\hat{\psi}>0$ sufficiently small there exists $\hat{\alpha} >0$, $\hat{c}>0$ such that 
\begin{align}\label{eqn:a_hat_b_comparison}
\|\hat{a}-\hat{b}\|\leq \hat{\xi} \mbox{ and } \|\theta'-\theta\|\leq\hat{\psi}  \implies \hat{c}\hat{\eta}_{\theta}(\hat{\alpha}\hat{b}) \geq \hat{\eta}_{\theta'}(\hat{a}).    
\end{align}
and $\hat{\alpha} \to 1$, $\hat{c} \to 1$, as $\hat{\xi},\hat{\psi} \to 0$.

Then,~\eqref{eqn:hat_eta_comparison} can be obtained as follows: for any $n \in \bbN$ take
\begin{equation*}
\hat{a}:=\frac{\hat{T}_n(\theta)-\hat{T}_n(\tau)}{\varepsilon_n}, \quad \hat{b}:=\frac{\theta-\tau}{\varepsilon_n}, \quad \hat{\xi}:=\frac{2\| \hat{T}_n - \Id \|_{\Lp{\infty}}}{\varepsilon_n},    
\end{equation*}
\begin{equation*}
\theta'=\hat{T}_n(\theta) \quad \hat{\psi}:=\|\hat{T}_n-\Id\|_{\Lp{\infty}},    
\end{equation*}

To prove the subclaim, let $\hat{\xi},\hat{\psi} > 0$ and assume that $\|\hat{a}-\hat{b}\| \leq \hat{\xi}$, $\|\theta'-\theta\| \leq \hat{\psi}$. Let $\xi=2\hat{\xi},\psi=\hat{\psi}$ and $c=c_{\xi,\psi}$, $\alpha=\alpha_{\xi,\psi}$ be as in Assumption~\ref{item:continuity_assumption_eta_hat}. Then, $\hat{\eta}_\theta(h_1)\geq c \hat{\eta}_{\theta'}(\alpha h_2)$ for any $\|h_1-h_2\| \leq \xi$.

Now, set $h_1=\frac{\hat{b}}{\alpha}$, $h_2=\frac{\hat{a}}{\alpha}$ so that $\|h_1-h_2\|=\frac{1}{\alpha}\|\hat{b}-\hat{a}\|\leq \frac{\hat{\xi}}{\alpha}\leq 2 \hat{\xi}=\xi$, if $\alpha \geq \frac{1}{2}$. 
For $\hat{\xi},\hat{\psi}$ small then $\xi,\psi$ are also small which implies $c,\alpha$ are close to 1.
In particular, for $\hat{\xi},\hat{\psi}$ sufficiently small, we have $\alpha \geq \frac{1}{2}$ and $\hat{\eta}_\theta\left(\frac{\hat{b}}{\alpha}\right) \geq c\hat{\eta}_{\theta'}(\hat{a})$. Plugging $\hat{c}=\frac{1}{c}$ and $\hat{\alpha}=\frac{1}{\alpha}$ completes the claim. 

Hence, it follows for all $n \in \bbN$,
\begin{align}\label{eqn:first_inequality}
\begin{aligned}
& \frac{1}{{\varepsilon}^{p+d}_n}\int_{\mathcal{S}}\int_{\mathcal{S}}\hat{\eta}_{\hat{T}_n(\theta)}\bigg(\frac{\hat{T}_n(\theta)-\hat{T}_n(\tau)}{\varepsilon_n}\bigg)|\hat{f}_n\circ\hat{T}_n(\theta) - \hat{f}_n\circ\hat{T}_n(\tau)|^p\,\dd\bbP_\mathcal{S}(\theta)\,\dd\bbP_\mathcal{S}(\tau)\\
& \leq \frac{\hat{c}_n}{{\varepsilon}^{p+d}_n}\int_{\mathcal{S}}\int_{\mathcal{S}}\hat{\eta}_\theta\bigg(\frac{\hat{\alpha}_n(\theta-\tau)}{\varepsilon_n}\bigg)|\hat{f}_n\circ\hat{T}_n(\theta) - \hat{f}_n\circ\hat{T}_n(\tau)|^p\,\dd\bbP_\mathcal{S}(\theta)\,\dd\bbP_\mathcal{S}(\tau).
\end{aligned}
\end{align}

We conclude that
\begin{align}\label{eqn:part_a_lim_sup_parameter}
\begin{aligned}
\limsup_{n\to\infty}\hat{\mathcal{E}}_{\varepsilon_n,n}(\hat{f}_n)& =\limsup_{n\to\infty}\frac{1}{{\varepsilon}^{p+d}_n}\int_{\mathcal{S}\times\mathcal{S}}\hat{\eta}_{\hat{T}_n(\theta)}\bigg(\frac{\hat{T}_n(\tau)-\hat{T}_n(\theta)}{\varepsilon_n}\bigg)|\hat{f}_n\circ\hat{T}_n(\theta)-\hat{f}_n\circ\hat{T}_n(\tau)|^p\,\dd\bbP_\mathcal{S}(\theta)\,\dd\bbP_\mathcal{S}(\tau)\\    
& \leq \limsup_{n \to \infty}\frac{1}{\hat{\alpha}^{p+d}_{n}\tilde{\varepsilon}^{p+d}_n}\int_{\mathcal{S}\times \mathcal{S}}\hat{\eta}_\theta\bigg(\frac{\theta-\tau}{\tilde{\varepsilon}_n}\bigg)|\hat{f}_n\circ\hat{T}_n(\theta)-\hat{f}_n\circ\hat{T}_n(\tau)|^p\,\dd\bbP_\mathcal{S}(\theta)\,\dd\bbP_\mathcal{S}(\tau) \\
& = \limsup_{n \to \infty}  \hat{\mathcal{E}}_{\tilde{\varepsilon}_n,\infty}(\hat{f}_n\circ\hat{T}_n)\\
\end{aligned}    
\end{align}
where the first inequality is obtained from~\eqref{eqn:first_inequality} and $\frac{\varepsilon_n}{\tilde{\varepsilon}_n}=\hat{\alpha}_n$. 
Finally, using the fact $\hat{\alpha}_n \to 1$ as $n \to \infty$ we get the last equality.

\paragraph{Step 3.}

We have for any $\kappa>0$,
\begin{align}\label{eqn:part_b_lim_sup_proof}
\begin{aligned}
\hat{\mathcal{E}}_{\tilde{\varepsilon}_n,\infty}(\hat{f}_n\circ\hat{T}_n) & = \frac{1}{\tilde{\varepsilon}^{p+d}_n}\int_{\mathcal{S}\times\mathcal{S}}\hat{\eta}_\theta\bigg(\frac{\theta-\tau}{\tilde{\varepsilon}_n}\bigg)|\hat{f}_n\circ\hat{T}_n(\theta)-\hat{f}_n\circ\hat{T}_n(\tau)|^p\rho_\mathcal{S}(\theta)\rho_\mathcal{S}(\tau)\,\dd\theta\,\dd\tau \\
& =\hat{\mathcal{E}}_{\tilde{\varepsilon}_n,\infty}(\hat{f})+ \frac{1}{\tilde{\varepsilon}^{p+d}_n}\int_{\mathcal{S}\times\mathcal{S}}\hat{\eta}_\theta\bigg(\frac{\theta-\tau}{\tilde{\varepsilon}_n}\bigg)\bigg(|\hat{f}_n\circ\hat{T}_n(\theta)-\hat{f}_n\circ\hat{T}_n(\tau)|^p-|\hat{f}(\theta)-\hat{f}(\tau)|^p\bigg)\\
&\qquad \qquad \rho_\mathcal{S}(\theta)\rho_\mathcal{S}(\tau)\,\dd\theta\,\dd\tau \\
& \leq \hat{\mathcal{E}}_{\tilde{\varepsilon}_n,\infty}(\hat{f})+ \frac{1}{\tilde{\varepsilon}^{p+d}_n}\int_{\mathcal{S}\times\mathcal{S}}\hat{\eta}_\theta\bigg(\frac{\theta-\tau}{\tilde{\varepsilon}_n}\bigg)\bigg(C_\kappa\big|\hat{f}_n\circ\hat{T}_n(\theta)-\hat{f}_n\circ\hat{T}_n(\tau)-\hat{f}(\theta)+\hat{f}(\tau)\big|^p\\
&\qquad\qquad+\kappa|\hat{f}(\theta)-\hat{f}(\tau)|^p\bigg)\rho_\mathcal{S}(\theta)\rho_\mathcal{S}(\tau)\,\dd\theta\,\dd\tau \\
& \leq (1+\kappa)\hat{\mathcal{E}}_{\tilde{\varepsilon}_n,\infty}(\hat{f})+\frac{C_\kappa2^{p}\Lip(\hat{f})^p}{\tilde{\varepsilon}^{p+d}_n}\|\hat{T}_n-\Id\|^p_{\Lp{\infty}}\int_{\mathcal{S}\times\mathcal{S}}\hat{\eta}_\theta\bigg(\frac{\theta-\tau}{\tilde{\varepsilon}_n}\bigg)\rho_\mathcal{S}(\theta)\rho_\mathcal{S}(\tau)\,\dd\theta\,\dd\tau  \\
\end{aligned}    
\end{align}
where the inequality in the third line follows from   \eqref{eqn:inequality}.
Hence,
\begin{align}\label{eqn:part_b_lim_sup_paramter}
\limsup_{n\to \infty} \hat{\mathcal{E}}_{\tilde{\varepsilon}_n,\infty}(\hat{f}_n\circ\hat{T}_n)    \leq \limsup_{n \to \infty} (1+\kappa)\hat{\mathcal{E}}_{\tilde{\varepsilon}_n,\infty}(\hat{f})
\end{align}
where the second term in  \eqref{eqn:part_b_lim_sup_proof} tends to 0 as $n \to \infty$ since $\|\hat{T}_n-\Id\|^p_{\Lp{\infty}} \to 0$.
Since this is true for all $\kappa > 0$, we let $\kappa \to 0$.
\paragraph{Step 4.}
Finally, to prove part $(c.)$ of  \eqref{eqn:lim_sup_inequality_parameter} we apply a second-order Taylor expansion of the function \(\hat{f}\) around \(\theta\): for each fixed \(\theta\), there exists \(\theta_\tau\) on the segment joining \(\theta\) and \(\tau\) such that
\[
|\hat{f}(\theta) - \hat{f}(\tau)|^p = \left| (\tau - \theta) \cdot \nabla \hat{f}(\theta) + \frac{1}{2} (\tau - \theta)^T D^2 \hat{f}(\theta_\tau) (\tau - \theta) \right|^p.
\]
Substituting into \(\hat{\mathcal{E}}_{\tilde{\varepsilon}_n,\infty}(\hat{f})\), we get
\[
\hat{\mathcal{E}}_{\tilde{\varepsilon}_n,\infty}(\hat{f}) = \frac{1}{\tilde{\varepsilon}_n^{p+d}} \int_{\mathcal{S}} \int_{\mathcal{S}} \hat{\eta}_\theta \left( \frac{\theta - \tau}{\tilde{\varepsilon}_n} \right) \left| (\tau - \theta) \cdot \nabla \hat{f}(\theta) + \frac{1}{2} (\tau - \theta)^T D^2 \hat{f}(\theta_\tau) (\tau - \theta) \right|^p \rho_{\mathcal{S}}(\theta) \rho_{\mathcal{S}}(\tau) \, \dd\tau \,\dd\theta.
\]

Using the triangle inequality and binomial expansion for \(p \in \mathbb{N}\), we obtain the estimate
\[
|a + b|^p \leq (|a|+|b|)^p = \sum_{k=0}^p \binom{p}{k} |a|^k |b|^{p-k} = |a|^p + \sum_{k=0}^{p-1} \binom{p}{k} |a|^k |b|^{p-k},
\]
which we apply to 
\[
a := (\tau - \theta) \cdot \nabla \hat{f}(\theta), \quad b := \frac{1}{2} (\tau - \theta)^T D^2 \hat{f}(\theta_\tau) (\tau - \theta).
\]
This gives
\[
\begin{aligned}
& \hat{\mathcal{E}}_{\tilde{\varepsilon}_n,\infty}(\hat{f}) 
 \leq \frac{1}{\tilde{\varepsilon}_n^{p+d}} \int_{\mathcal{S}} \int_{\mathcal{S}} \hat{\eta}_\theta \left( \frac{\theta - \tau}{\tilde{\varepsilon}_n} \right) |(\tau - \theta) \cdot \nabla \hat{f}(\theta)|^p \rho_{\mathcal{S}}(\theta) \rho_{\mathcal{S}}(\tau) \, \dd\tau\, \dd\theta\\
& + \sum_{k=0}^{p-1} \binom{p}{k} \frac{1}{\tilde{\varepsilon}_n^{p+d}} \int_{\mathcal{S}} \int_{\mathcal{S}} \hat{\eta}_\theta \left( \frac{\theta - \tau}{\tilde{\varepsilon}_n} \right) |(\tau - \theta) \cdot \nabla \hat{f}(\theta)|^{k} |\frac{1}{2} (\tau - \theta)^T D^2 \hat{f}(\theta_\tau) (\tau - \theta)|^{p-k} \rho_{\mathcal{S}}(\theta) \rho_{\mathcal{S}}(\tau) \, \dd\tau\, \dd\theta.
\end{aligned}
\]
By the change of variables \( z = \frac{\theta - \tau}{\tilde{\varepsilon}_n} \) and using $\rho_\mathcal{S}(\theta+\tilde{\varepsilon}_nz)\leq \rho_\mathcal{S}(\theta)\left(1+\frac{\Lip(\rho_\mathcal{S})\tilde{\varepsilon}_nR}{\inf_{\theta \in \mathcal{S}}\rho_\mathcal{S}(\theta)}\right)$ gives,
\begin{equation*}
\begin{aligned}
\hat{\mathcal{E}}_{\tilde{\varepsilon}_n,\infty}(\hat{f}) 
& \leq \int_{\mathcal{S}} \int_{\mathbb{R}^d} \hat{\eta}_\theta(z) | z \cdot \nabla \hat{f}(\theta) |^p \rho_{\mathcal{S}}(\theta)\rho_{\mathcal{S}}(\theta+\tilde{\varepsilon}_n z) \, \dd z \,\dd\theta \\
& \qquad \qquad + \sum_{k=0}^{p-1} \binom{p}{k} \tilde{\eps}_n^{p-k}\int_{\mathcal{S}} \int_{\mathbb{R}^d} \hat{\eta}_\theta(z) |z \cdot \nabla \hat{f}(\theta) |^k \left| \frac{1}{2} z^T D^2 \hat{f}(\theta_\tau) z \right|^{p-k}  \rho_{\mathcal{S}}(\theta)\rho_{\mathcal{S}}(\theta+\tilde{\varepsilon}_n z) \, \dd z \,\dd\theta \\
& \leq \left(1+C\tilde{\varepsilon}_n\right) \int_{\mathcal{S}} \int_{\mathbb{R}^d} \hat{\eta}_\theta(z) | z \cdot \nabla \hat{f}(\theta) |^p \rho_{\mathcal{S}}^2(\theta) \, \dd z \,\dd\theta \\
& \qquad \qquad + C\sum_{k=0}^{p-1} \binom{p}{k} \tilde{\varepsilon}_n^{p-k} \int_{\mathcal{S}} \int_{\mathbb{R}^d} \hat{\eta}_\theta(z) \|z\|^{2p-k} \|\nabla \hat{f}(\theta)\|^{k} \|D^2 \hat{f}(\theta_\tau)\|^{p-k}  \rho_{\mathcal{S}}^2(\theta) \, \dd z \, \dd\theta \\
& \to \int_{\mathcal{S}} \int_{\mathbb{R}^d} \hat{\eta}_\theta(z) | z \cdot \nabla \hat{f}(\theta) |^p \rho_{\mathcal{S}}^2(\theta) \, \dd z \,\dd\theta = \hat{\cE}_\infty(\hat{f})
\end{aligned}
\end{equation*}
as $n\to\infty$. This completes the proof of the lim-sup inequality.

\end{proof}
\end{document}